\numberwithin{equation}{section}
\newtheorem{assum}{A\hspace{-2pt}}
\newtheorem{theorem}{Theorem}
\crefname{theorem}{theorem}{theorems}
\Crefname{theorem}{Theorem}{Theorems}
\newtheorem{lemma}{Lemma}
\crefname{lemma}{lemma}{lemmas}
\Crefname{lemma}{Lemma}{Lemmas}
\newtheorem{remark}{Remark}
\crefname{remark}{remark}{remarks}
\Crefname{remark}{Remark}{Remarks}
\newtheorem{corollary}{Corollary}
\crefname{corollary}{corollary}{corollaries}
\Crefname{corollary}{Corollary}{Corollaries}
\newtheorem{proposition}{Proposition}
\crefname{proposition}{proposition}{propositions}
\Crefname{proposition}{Proposition}{Propositions}
\crefname{definition}{definition}{definitions}
\Crefname{definition}{Definition}{Definitions}
\crefname{example}{example}{examples}
\Crefname{Example}{Example}{Examples}
\crefname{figure}{figure}{figures}
\Crefname{Figure}{Figure}{Figures}
\crefname{table}{table}{tables}
\Crefname{Table}{Table}{Tables}
\crefname{assum}{A\hspace{-2pt}}{A\hspace{-2pt}}
\crefname{assumb}{B\hspace{-2pt}}{B\hspace{-2pt}}
\crefname{assumUGE}{UGE\hspace{-1pt}}{UGE\hspace{-1pt}}
\crefname{assumID}{IND\hspace{-1pt}}{IND\hspace{-1pt}}
\crefname{assumUE}{UE\hspace{-1pt}}{UE\hspace{-1pt}}
\def\supconsteps{\supnorm{\funnoisew}}
\newcommand{\PE}{\mathbb{E}}
\newcommand{\var}{\operatorname{Var}}
\newcommand{\PP}{\mathbb{P}}
\newcommandx{\genericb}[1][1=]{b_{#1}}
\newcommandx{\Constq}[1][1=]{\tilde{\operatorname{C}}_{#1}}
\newcommandx{\Constros}[1][1=]{\operatorname{C}_{\operatorname{Ros},#1}}
\newcommandx{\Constburk}[1][1=]{\operatorname{C}_{\operatorname{Burk}}}
\newcommandx{\driftW}[1][1=]{W_{#1}}
\newcommandx{\metricd}[1][1=]{\mathsf{d}_{#1}}
\newcommandx\invmeasure[1][1=]{\Pi_{#1}}
\newcommandx{\PPjoint}[1][1=]{\PP^{\MKjoint[#1]}}
\newcommandx{\PEjoint}[1][1=]{\PE^{\MKjoint[#1]}}
\newcommandx{\PEMID}[1][1=\alpha]{\PE^{\MK[#1]}}
\newcommandx{\PPMID}[1][1=\alpha]{\PP^{\MK[#1]}}
\newcommand{\supnorm}[1]{\norm{ #1 }[\infty]}
\newcommandx{\MKjoint}[1][1=]{\bar{\operatorname{P}}_{#1}}
\newcommandx\costw[1][1=]{\mathsf{c}_{#1}}
\newcommandx\Intergrdist[1][1=]{\mathbb{M}_{1}(#1)}
\newcommandx{\mmarkov}[1][1=0]{m^{(\Markov)}_{#1}}
\def\Deltafl{\Delta^{\operatorname{(fl)}}}
\def\Conv{\operatorname{Conv}}
\def\H{\mathcal{H}}
\newcommand{\ConstJ}[1]{\mathsf{C}^{(\operatorname{J},#1)}_{\ref{lem:Jk_ell:p_moment}}}
\newcommand{\ConstH}[1]{\mathsf{C}^{(\operatorname{H},#1)}_{\ref{lem:Jk_ell:p_moment}}}
\newcommand{\ConstJb}[2]{\mathsf{C}^{(\boot,\operatorname{J},#1)}_{\ref{lem:expansion}, #2}}
\newcommand{\ConstHb}[2]{\mathsf{C}^{(\boot,\operatorname{H},#1)}_{\ref{lem:expansion}, #2}}
\def\Xset{\mathsf{X}}
\def\Zset{\mathsf{Z}}
\def\Zsigma{\mathcal{Z}}
\def\rset{\mathbb{R}}
\def\nset{\ensuremath{\mathbb{N}}}
\def\nsets{\ensuremath{\mathbb{N}^*}}
\newcommand{\msi}{\mathsf{I}}
\newcommand{\Mat}[1]{{\bf{#1}}}
\def\MatB{B}
\renewcommand{\S}{\mathcal{S}}
\newcommand{\A}{\mathcal{A}}
\def\PMDP{\MKQ}
\newcommand{\bConst}[1]{\operatorname{C}_{{\bf #1}}}
\newcommandx\sequence[4][2=,3=,4=]
\newcommandx\sequenceD[2][2=]
\newcommandx\sequenceDouble[4][3=,4=]
\newcommandx{\sequencen}[2][2=n\in\nset]{\ensuremath{\{ #1, \eqsp #2 \}}}
\newcommandx\sequencens[2][2=n]
\newcommandx\sequencet[4]
\def\PE{\mathbb{E}}
\def\P{\mathbb{P}}
\def\ProdB{\Gamma}
\newcommandx{\PVar}[1][1=]{\ensuremath{\operatorname{Var}_{#1}}}
\newcommandx\conststab[1][1=p]{\varkappa_{#1}}
\def\noisecov{\Sigma_\varepsilon}
\def\ProdBdet{G}
\newcommandx{\MK}[1][1=\alpha]{\mathrm{P}_{#1}}
\newcommandx\MKK[1][1=\alpha]{\mathrm{K}_{#1}}
\def\MKQ{\mathrm{P}}
\newcommandx{\PEtilde}[1][1=]{\PE^{\mathrm{K}_{#1}}}
\newcommandx{\PPtilde}[1][1=]{\PP^{\mathrm{K}_{#1}}}
\newcommandx{\norm}[2][2=]{\Vert#1 \Vert_{{#2}}}
\newcommandx{\normLigne}[2][2=]{\Vert#1 \Vert_{{#2}}}
\newcommandx{\normLine}[2][2=]{\Vert#1 \Vert_{{#2}}}
\newcommandx{\normop}[2][2=]{\Vert{#1}\Vert_{{#2}}}
\newcommandx{\normopLigne}[2][2=]{\Vert{#1}\Vert_{{#2}}}
\newcommandx{\normopLine}[2][2=]{\Vert{#1}\Vert_{{#2}}}
\newcommandx{\osc}[2][1=]{\mathrm{osc}_{#1}(#2)}
\newcommandx{\normopadapt}[2][2=]{\left\Vert{#1}\right\Vert_{{#2}}}
\newcommandx{\normlip}[2][2=\operatorname{Lip}]{\Vert#1 \Vert_{{#2}}}
\newcommand{\lip}{\operatorname{L}}
\newcommandx{\lipspace}[1]{\lip_{#1}}
\newcommandx{\CPP}[3][1=]
{\ifthenelse{\equal{#1}{}}{{\mathbb P}\left(\left. #2 \, \right| #3 \right)}{{\mathbb P}_{#1}\left(\left. #2 \, \right | #3 \right)}}
\newcommandx{\CPPtilde}[3][1=]
{\ifthenelse{\equal{#1}{}}{{\tilde{\mathbb P}}\left(\left. #2 \, \right| #3 \right)}{{\tilde{\mathbb P}}_{#1}\left(\left. #2 \, \right | #3 \right)}}
\def\iid{i.i.d.}
\newcommandx{\as}[1][1=\PP]{\ensuremath{#1\, -\mathrm{a.s.}}}
\newcommand{\eqsp}{\;}
\newcommand{\Id}{\mathrm{I}}
\def\prtheta{\bar{\theta}}
\def\utheta{\tilde{\theta}^{\sf (tr)}}
\def\vtheta{\tilde{\theta}^{\sf (fl)}}
\newcommand{\ConstC}{\mathsf{C}}
\newcommandx{\boundmetric}[1][1=]{\kappa_{\MKK[#1]}}
\newcommand{\Jnalpha}[2]{J_{#1}^{(#2)}}
\newcommand{\Hnalpha}[2]{H_{#1}^{(#2)}}
\newcommandx{\Nnorm}[2][1=V]{[ #2]_{#1}}
\newcommandx{\lipnorm}[2][1=g]{[ #1]_{#2}}
\newcommandx{\CPE}[3][1=]{{\mathbb E}^{#3}_{#1}\left[#2\right]}
\newcommandx{\CPEext}[3][1=]{\tilde{\mathbb E}^{#3}_{#1}\left[#2\right]}
\newcommandx{\CPEtilde}[3][1=]{{\tilde{\mathbb E}}^{#3}_{#1}\left[#2\right]}
\newcommandx{\CPEs}[3][1=]{{\mathbb E}^{#3}_{#1}[#2]}
\def\thetalim{\theta^\star}
\def\trace{\operatorname{Tr}}
\newcommand{\rme}{\mathrm{e}}
\newcommand{\rmd}{\mathrm{d}}
\def\funcAw{\mathbf{A}}
\newcommand{\funcA}[1]{\funcAw(#1)}
\def\funcbw{\mathbf{b}}
\newcommand{\funcb}[1]{\funcbw(#1)}
\newcommandx{\zmfuncA}[2][1=]{\tilde{\funcAw}^{#1}(#2)}
\newcommandx{\zmfuncAw}[1][1=]{\tilde{\funcAw}_{#1}}
\newcommandx{\zmfuncb}[2][1=]{\tilde{\funcbw}^{#1}(#2)}
\def\funnoisew{\varepsilon}
\newcommand{\funcnoise}[1]{\funnoisew(#1)}
\newcommandx{\funcct}[2][1=]{\funcctilde^{#1}(#2)}
\def\qcond{\kappa_{Q}}
\def\State{Z}
\newcommand{\1}{\boldsymbol{1}}
\newcommandx{\CovC}[1][1=u]{\operatorname{C}_{#1}}
\def\msz{\mathsf{Z}}
\def\plusinfty{+\infty}
\DeclareMathAlphabet{\mathpzc}{OT1}{pzc}{m}{it}
\def\lyapW{\mathpzc{W}}
\newcommandx{\bias}[1][1=\alpha]{\operatorname{B}_{#1}}
\newcommand{\proba}[1]{\mathbb{P}\left( #1 \right)}
\newcommandx\probaMarkovTilde[2][2=]
\newcommand{\expe}[1]{\PE \left[ #1 \right]}
\newcommand{\expep}[2]{\PE^{1/#2} \left[ {#1}^{#2} \right]}
\def\mcf{\mathcal{F}}
\newcommand{\indi}[1]{\1_{#1}}
\def\bA{\bar{\mathbf{A}}}
\def\X{{\bf X}}
\def\Y{{\bf Y}}
\def\thetas{\thetalim}
\def\Am{{\bf A}}
\def\bm{{\bf b}}
\def\funcctilde{\tilde{c}_u}
\def\barb{\bar{\mathbf{b}}}
\newcommandx{\driftb}[1][1=p]{\bar{b}_{#1}}
\def\barA{\bar{A}}
\def\Zbf{\mathbf{Z}}
\def\eps{\varepsilon}
\newcommandx{\boldb}[1][1={q}]{\mathsf{b}_{#1}}
\newcommandx{\ConstGW}[1][1={n,\lyapW}]{\operatorname{G}_{#1}}
\newcommandx{\ConstMW}[1][1={n,\lyapW}]{\operatorname{M}_{#1}}
\Crefname{assumprime}{\textbf{A'}\hspace{-1pt}}{\textbf{A'}\hspace{-1pt}}
\crefname{assumprime}{\textbf{A'}}{\textbf{A'}}
\Crefname{assumTD}{\textbf{TD}\hspace{-1pt}}{\textbf{TD}\hspace{-1pt}}
\crefname{assumTD}{\textbf{TD}}{\textbf{TD}}
\Crefname{assumptionC}{\textbf{C}\hspace{-1pt}}{\textbf{C}\hspace{-1pt}}
\crefname{assumptionC}{\textbf{C}}{\textbf{C}}
\Crefname{assumptionM}{\textbf{UGE}\hspace{-1pt}}{\textbf{UGE}\hspace{-1pt}}
\crefname{assumptionM}{\textbf{UGE}}{\textbf{UGE}}
\def\distance{\mathsf{d}}
\newcommandx{\vartconstwas}[1][1=V]{c_{#1}}
\newcommandx{\deltawas}[1][1=*]{\delta_{#1}}
\newcommandx{\wasser}[4][1=\distance,4=]{\mathbf{W}_{#1}^{#4}\left(#2,#3\right)}
\newcommandx{\covcoeff}[2]{\rho_{#1}^{(#2)}}
\newcommand{\dobrush}{\mathsf{\Delta}}
\newcommandx{\dobru}[3][1=,3=]{\dobrush_{#1}^{#3}( #2)}  
\def\eqdef{:=}
\def\qexponent{q}
\def\ppexponent{p}
\def\Markov{\mathrm{M}}
\def\btheta{\bar{\theta}}
\newcommandx{\dlim}[1]{\ensuremath{\stackrel{#1}{\Longrightarrow}}}
\def\boot{\mathsf{b}}
\newcommand{\PPb}{\mathbb{P}^\boot}
\newcommand{\PEb}{\mathbb{E}^\boot}
\def\kolmogorov{\rho^{\Conv}}
\begin{document}


\title{Improved Central Limit Theorem and Bootstrap Approximations for Linear Stochastic Approximation}

\author{Bogdan Butyrin~\footnote{HSE University, Moscow, Russia, \texttt{bbutyrin@hse.ru}. }, \, Eric Moulines~\footnote{Ecole Polytechnique, Paris, France, and Mohamed bin Zayed University of Artificial Intelligence (MBZUAI), UAE,  \texttt{eric.moulines@polytechnique.edu}.}, \, Alexey Naumov~\footnote{HSE University, Moscow, Russia,  \texttt{anaumov@hse.ru}.}, \, Sergey Samsonov~\footnote{HSE University, Moscow, Russia,  \texttt{svsamsonov@hse.ru}.}, \\ Qi-Man Shao~\footnote{Shenzhen International Center of Mathematics, Southern University of Science and Technology,
Xueyuan Blvd., 518000, Shenzhen, P.R. China, \texttt{shaoqm@sustech.edu.cn}.}, \, Zhuo-Song Zhang~\footnote{Shenzhen International Center of Mathematics, Southern University of Science and Technology,
Xueyuan Blvd., 518000, Shenzhen, P.R. China,  \texttt{zhangzs3@sustech.edu.cn}.}
}

\maketitle

\begin{abstract}
In this paper, we refine the Berry–Esseen bounds for the multivariate normal approximation of Polyak–Ruppert averaged iterates arising from the linear stochastic approximation (LSA) algorithm with decreasing step size. We consider the normal approximation by the Gaussian distribution with covariance matrix predicted by the Polyak-Juditsky central limit theorem and establish the rate up to order $n^{-1/3}$ in convex distance, where $n$ is the number of samples used in the algorithm. We also prove a non-asymptotic validity of the multiplier bootstrap procedure for approximating the distribution of the rescaled error of the averaged LSA estimator. We establish approximation rates of order up to $1/\sqrt{n}$ for the latter distribution, which significantly improves upon the previous results obtained by Samsonov et al. (2024).
\end{abstract}

\section{Introduction}
\label{sec:intro}
In this paper we consider the Linear Stochastic Approximation (LSA) algorithm, a simple yet foundational method with various applications in statistics and machine learning \cite{eweda:macchi:1983, benveniste2012adaptive, guo1994stability, kushner2003stochastic}. The LSA procedure addresses the problem of approximating the unique solution $\thetas$ to a linear system of equations given by
\[
\textstyle 
\bA \thetas = \barb\eqsp,
\]
where $\bA \in \rset^{d \times d}$ is a non-degenerate matrix. This approximation is based on a sequence of observations \(\{(\funcA{Z_k}, \funcb{Z_k})\}_{k \in \nset}\), where \(\Am: \Zset \to \rset^{d \times d}\) and \(\bm: \Zset \to \rset^{d}\) are measurable mappings. The sequence \((Z_k)_{k \in \nset}\) consists of independent and identically distributed (\iid\ ) random variables defined on a measurable space \((\Zset, \Zsigma)\) with distribution \(\pi\), satisfying \(\PE[\funcA{Z_k}] = \bA\) and \(\PE[\funcb{Z_k}] = \barb\). Often in the applications \((Z_k)_{k \in \mathbb{N}}\) are not independent and instead form a Markov chain, see \cite{durmus2022finite,mou2021optimal,wu2025markovchains}. In this paper, we do not consider this setting and postpone it as a direction for a future work. Given a sequence of decreasing step sizes \((\alpha_k)_{k \in \nset}\) and an initialization \(\theta_0 \in \rset^{d}\), we define the iterative estimates \((\theta_k)_{k \in \nset}\) and their Polyak–Ruppert averaged counterparts \((\bar{\theta}_n)_{n \in \nset}\) by
\begin{equation}
\label{eq:lsa}
\theta_k = \theta_{k-1} - \alpha_k \left(\funcA{Z_k} \theta_{k-1} - \funcb{Z_k}\right),\quad k \geq 1, \quad \bar{\theta}_n = n^{-1} \sum_{k=0}^{n-1} \theta_k, \quad n \geq 1.
\end{equation} 
The idea of using averaged estimates $\bar{\theta}_n$ was proposed in the works of Ruppert \cite{ruppert1988efficient} and Polyak and Juditsky \cite{polyak1990new, polyak1992acceleration}. Using the averaged iterates $\bar{\theta}_n$ instead of the last iterate $\theta_n$ has been shown to stabilize stochastic approximation procedures and accelerate their convergence. Moreover, it is known (see \cite{polyak1992acceleration}) that the estimator $\bar{\theta}_n$ is asymptotically normal under appropriate regularity conditions on the step sizes \((\alpha_k)_{k \in \mathbb{N}}\) and the noise observations $(\funcA{Z_k})_{k \in \mathbb{N}}$, that is,
\begin{equation}
\label{eq:CLT_fort_prelim}
\sqrt{n}(\bar{\theta}_n - \thetas) \xrightarrow{d} \mathcal{N}(0, \Sigma_{\infty})\,.
\end{equation}
The expression for $\Sigma_{\infty}$ is given below in \Cref{sec:clt_lsa_pr} and corresponds to the preconditioned version of the sequence $\theta_k$, which uses the optimal preconditioner $\bA^{-1}$, see \cite{fort:clt:markov:2015,polyak1992acceleration}.
\par 
Both asymptotic \cite{polyak1992acceleration,borkar:sa:2008} and non-asymptotic \cite{lakshminarayanan2018linear,srikant:1tsbounds:2019,mou2020linear,durmus2022finite} properties of the averaged LSA errors $\bar{\theta}_n - \thetas$ attained lot of research interest. Many of the mentioned works primarily focus on providing the moment bounds and concentration inequalities for the scaled estimation error \(\sqrt{n}(\bar{\theta}_n - \thetas)\). The primary aim of these concentration bounds is to obtain results with explicit dependence on the number of samples $n$, the problem dimension $d$, and other problem-specific quantities related to $\bA$ and the noise observations $(\funcA{Z_k})_{k \in \nset}$. It is also important to study the rate of convergence in \eqref{eq:CLT_fort_prelim} in a sense of appropriate distance between the probability distributions. Recent papers consider approximation either in Wasserstein distance \cite{srikant2024rates}, class of smooth test functions \cite{pmlr-v99-anastasiou19a}, or in convex distance \cite{shao2022berry,samsonov2024gaussian,wu2024statistical}. The latter type of results can be directly applied when ensuring the non-asymptotic validity of the confidence sets for $\thetas$, and we follow the same direction in our paper. 
\par 
The primary aim of the analysis of the approximation rate in \eqref{eq:CLT_fort_prelim} is the need to construct confidence intervals for $\thetas$. The principal difficulty is the fact that $\Sigma_{\infty}$ is unknown in practice, hence, \eqref{eq:CLT_fort_prelim} can not be applied directly. Classical approaches suggest to approximate $\Sigma_{\infty}$ directly based on either plug-in estimates \cite{chen2020aos,wu2024statistical}, or various modifications of batch-mean approach \cite{chen2020aos,zhu2023online_cov_matr,li2024asymptotics}. Typically these methods constructs an estimator $\hat{\Sigma}_n$ of  $\Sigma_{\infty}$, and often provide non-asymptotic on the closeness between $\hat{\Sigma}_n$ and $\Sigma_{\infty}$. Yet the there are only asymptotic guarantees on coverage probabilities of $\thetas$ with constructed confidence sets. The notable exceptions are recent works \cite{samsonov2024gaussian} and \cite{wu2024statistical}, where the authors provide non-asymptotic error bounds for coverage probabilities. The paper \cite{samsonov2024gaussian} considers general LSA setting and multiplier bootstrap procedure adopted from \cite{JMLR:v19:17-370}, while the authors of \cite{wu2024statistical} considered a plug-in based approach for estimating $\Sigma_\infty$ and focused on the particular setting of the temporal difference (TD) learning algorithm. In this paper we revisit the analysis of \cite{samsonov2024gaussian}, derive the error rates in coverage probabilities of order up to $1/\sqrt{n}$. Our contributions can be summarized as follows:
\begin{itemize}
    \item We refine the high-order moment bounds for $\sqrt{n}(\bar{\theta}_n - \thetas)$, improving the previous results of \cite{mou2020linear} and \cite{durmus2022finite}. Namely, our results yield, for $p \geq 2$, the bound
    \begin{equation}
    \label{eq:bound_trace_sigma_inf_optimized}
    \PE^{1/p}\bigl[\norm{\bar{\theta}_{n} - \thetas}^p\bigr] \lesssim \frac{\sqrt{p}  \sqrt{\trace{\Sigma_\infty}}}{\sqrt{n}} + \frac{p^{3/2}}{n^{5/6}}\eqsp,
    \end{equation}
    provided that the step sizes $\alpha_k$ are appropriately chosen. Note that the leading term of this bound aligns with the moment bound for the Gaussian vector $\mathcal{N}(0,\Sigma_{\infty})$.
    \item We establish a Berry–Esseen bound characterizing the rate of normal approximation in \eqref{eq:CLT_fort_prelim} in a sense of convex distance (see \Cref{sec:clt_lsa_pr}) between distributions. We show the approximation rate in \eqref{eq:CLT_fort_prelim} of order up to $n^{-1/3}$, up to logarithmic factors in $n$. This convergence rate improves the previous rate of order $n^{-1/4}$ obtained in \cite{samsonov2024gaussian} for the general LSA procedure, and aligns with the rate achieved in \cite{wu2024statistical} for the particular setting of the temporal difference (TD) learning algorithm. Similar to \cite{samsonov2024gaussian} and \cite{wu2024statistical}, our proof approach builds upon the techniques developed for nonlinear statistics in \cite{shao2022berry}.

    \item We derive an approximation of the distribution of the scaled Polyak–Ruppert estimator $\sqrt{n}(\bar{\theta}_n - \thetalim)$ based on a multiplier bootstrap procedure. In particular, we show that the coverage probabilities of the true value $\thetas$ under the true distribution $\sqrt{n}(\bar{\theta}_n - \thetalim)$ can be approximated by its bootstrap-based counterpart with a rate approaching $n^{-1/2}$ up to logarithmic factors in $n$. This rate is achieved for step sizes of the form $\alpha_k = c_0 / (k+k_0)^{\gamma}$ when $\gamma \to 1$. Our results provide an improvement over the existing non-asymptotic bounds obtained in \cite{samsonov2024gaussian} for similar procedure. The main reason for this improvement is the observation that the distribution of \(\sqrt{n}(\bar{\theta}_n - \thetalim)\) can be effectively approximated by a normal distribution $\mathcal{N}(0, \Sigma_n )$ with a suitably chosen covariance matrix \(\Sigma_n\), bypassing the direct approximation with $\mathcal{N}(0,\Sigma_\infty)$. The obtained rate is in sharp contrast with \cite{wu2024statistical} and other related works based on direct approximating of the limiting covariance $\Sigma_{\infty}$.
\end{itemize}

\par 
\textbf{Notations.} For matrix $A \in \rset^{d \times d}$ we denote by $\norm{A}$ its operator norm. Given a sequence of matrices $\{A_{\ell}\}_{\ell \in \nset}$, $A_{\ell} \in \rset^{d \times d}$, we use the following convention for matrix products: $\prod_{\ell=m}^{k}A_{\ell} = A_{k} A_{k-1} \ldots A_{m}$, where $m \leq k$. For symmetric and positive-definite matrix $Q = Q^\top \succ 0\eqsp, \eqsp Q \in \rset^{d \times d}$, and $x \in \rset^{d}$ we define the corresponding norm $\|x\|_Q = \sqrt{x^\top Q x}$, and define the respective matrix $Q$-norm of the matrix $B \in \rset^{d \times d}$ by $\normop{B}[Q] = \sup_{x \neq 0} \norm{Bx}[Q]/\norm{x}[Q]$. For sequences $a_n$ and $b_n$, we write $a_n \lesssim b_n$ if there exist a constant $c > 0$ such that $a_n \leq c b_n$ for any $n \in \nset$.  In the present text, the following abbreviations are used: "w.r.t." stands for "with respect to", "\iid\ " - for "independent and identically distributed".

\section{Related works}
\label{sec:related-work}
Asymptotic properties of Linear Stochastic Approximation (LSA) algorithms were studied in \cite{polyak1992acceleration, kushner2003stochastic, borkar:sa:2008, benveniste2012adaptive}. These works established asymptotic normality and almost sure convergence under both i.i.d.\ and Markovian noise. Non-asymptotic analyses of LSA (and of the non-linear setting, corresponding to the SGD algorithm) have been carried out in \cite{rakhlin2012making, nemirovski2009robust, bhandari2018finite, lakshminarayanan2018linear, mou2021optimal}, where mean squared error (MSE) bounds for LSA iterates and their Polyak-Ruppert averaged versions were obtained. Further works \cite{mou2020linear, durmus2021tight, durmus2022finite} establish high-probability bounds (moment bounds or Bernstein-type bounds) for the estimation error $\bar{\theta}_n - \thetalim$. However, the concentration bounds for the LSA error given in \cite{mou2020linear, durmus2021tight, durmus2022finite, mou2021optimal} do not yield convergence rates of the rescaled error $\sqrt{n}(\bar{\theta}_n - \thetalim)$ to the normal distribution in Wasserstein or Kolmogorov distance.
\par 
Non-asymptotic convergence rates towards normality were investigated in \cite{pmlr-v99-anastasiou19a} using Stein’s method and measured in terms of the integral probability metric associated with smooth test functions (smoothed Wasserstein distance). Recent advances include \cite{srikant2024rates}, which studied convergence rates in Wasserstein distance for LSA with Markov observations. The bounds derived in these works exhibit less favorable dependence on the trajectory length $n$ than those presented here. Further, \cite{samsonov2024gaussian} analyzed normal approximation rates for $\sqrt{n}(\bar{\theta}_n - \thetalim)$ and obtained convex distance bounds of order $n^{-1/4}$ for general LSA. This result was later improved by \cite{wu2024statistical} for the specific setting of the temporal difference (TD) learning algorithm. In this paper, we show that the actual rate of normal approximation for LSA is also $n^{-1/3}$ up to logarithmic factors, matching the result of \cite{wu2024statistical}. A detailed comparison with these works is provided in the discussion following \Cref{th:shao2022_berry}.
\par 
The bootstrap approach \cite{efron1992bootstrap} is one of the widely used methods for constructing confidence intervals in parametric models. This method has been extensively studied theoretically; see \cite{Chernozhukov2013,Chernozhukov2015,spokoiny2015,Bernolli2019}. In these works, the validity of the bootstrap relies on Gaussian comparison techniques and anticoncentration results, tailored to particular subclasses of convex sets (spherical or rectangular). Bootstrap validity has also been analyzed in the context of spectral projectors of covariance matrices \cite{PTRF2019, jirak2022quantitative}. At the same time, extending classical bootstrap methods to online learning algorithms poses considerable theoretical and practical challenges. In particular, the iterates \( \{\theta_k\}_{k \in \mathbb{N}} \) generated by the iterative scheme \eqref{eq:lsa} are typically not stored in memory, making standard bootstrap methods inapplicable. Instead, one can employ the multiplier bootstrap technique introduced in \cite{JMLR:v19:17-370}, designed specifically for the iterates of Stochastic Gradient Descent (SGD). A non-asymptotic analysis of this procedure was carried out in \cite{samsonov2024gaussian}, which established approximation rates for the distribution of $\sqrt{n}(\bar{\theta}_n - \thetalim)$ of order up to $n^{-1/4}$ in convex distance. In this paper, we show that the actual approximation rate can be significantly faster, up to $n^{-1/2}$. However, the attempt in \cite{JASA2023} to generalize this procedure to the case of Markovian noise leads to an inconsistent method, as demonstrated in \cite[Proposition~1]{liu2023statistical}. Thus, the question of appropriate generalizations of the multiplier bootstrap approach to stochastic approximation algorithms with Markov data remains, to our knowledge, open.
\par 
Other methods for constructing confidence intervals, not based on the bootstrap approach, rely on the direct estimation of the asymptotic covariance matrix $\Sigma_{\infty}$; see, e.g., \cite{chen2020aos,pmlr-v178-li22b,zhu2023online_cov_matr}. In this approach, the authors typically construct an estimator $\widehat{\Sigma}_n$ of $\Sigma_{\infty}$ and provide bounds on $\PE[\norm{\widehat{\Sigma}_n - \Sigma_{\infty}}]$ with explicit dependence on $n$. To our knowledge, within this approach there are no error bounds for the coverage probabilities of $\thetas$ or error rates for approximating, for example, the distribution of the true statistic $\sqrt{n}(\bar{\theta}_n - \thetas)$ with $\mathcal{N}(0,\widehat{\Sigma}_n)$.

\section{Main results}
\label{sec:independent_case}
We begin this section by specifying the set of assumptions that will be used for the non-asymptotic central limit theorem for LSA iterates. To simplify notation and whenever clarity permits, we write simply \( \funcAw_k = \funcA{\State_k} \) and \( \funcbw_k = \funcb{\State_k} \). Starting from equation \eqref{eq:lsa}, algebraic manipulations yield the recurrence
\begin{equation}
\label{eq:main_recurrence_1_step}
\theta_{k} - \theta^{\star} = (\Id - \alpha_{k} \funcAw_k)(\theta_{k-1} - \theta^{\star}) - \alpha_{k} \funnoisew_{k},
\end{equation}
where we have introduced the noise term \( \funnoisew_k= \funcnoise{\State_k} \), defined by
\begin{equation}
\label{eq:def_center_version_and_noise}
\textstyle
\funcnoise{z} =  \zmfuncA{z} \theta^{\star} - \zmfuncb{z}, \quad \zmfuncA{z}  = \funcA{z} - \bA, \quad \zmfuncb{z} = \funcb{z} - \barb.
\end{equation}
The random variable \( \funcnoise{\State_k} \) corresponds to the noise measured at the solution \( \theta^{\star} \). We introduce the following assumptions on $\{Z_k\}$ and mappings $\funcA{\cdot}, \funcb{\cdot}$:

\begin{assum}
\label{assum:iid}
The sequence \(\{\State_k\}_{k \in \mathbb{N}}\) consists of independent and identically distributed (\iid) random variables defined on a probability space \((\Omega,\mathcal{F},\mathbb{P})\) with common distribution \(\pi\).
\end{assum}

\begin{assum}
\label{assum:noise-level}
$\int_{\Zset}\funcA{z}\rmd \pi(z) = \bA$ and $\int_{\Zset}\funcb{z}\rmd \pi(z) = \barb$, with the matrix $-\bA$ being Hurwitz. Moreover, $\supconsteps = \sup_{z \in \msz}\normop{\funcnoise{z}} < \plusinfty$, and the mapping $z \to \funcA{z}$ is bounded, that is, 
\begin{equation}
\label{eq:a_matr_bounded}
\bConst{A} = \sup_{z \in \msz} \normop{\funcA{z}} \vee \sup_{z \in \msz} \normop{\zmfuncA{z}} < \infty\eqsp.
\end{equation}
Moreover, the smallest eigenvalue of the noise covariance matrix $\noisecov = \int_{\Zset} \funcnoise{z}\funcnoise{z}^\top \rmd \pi(z)$ is bounded away from $0$, that is,
\begin{equation}
\label{eq:eig_sigma_eps}
\textstyle \lambda_{\min}:= \lambda_{\min}(\noisecov) > 0\eqsp.
\end{equation}
\end{assum}
The fact that the matrix $-\bA$ is Hurwitz implies that the linear system $\bA \theta = \barb$ has a unique solution $\thetalim$. Moreover, this fact is sufficient to show that
$\normop{\Id - \alpha \bA}[Q]^2 \leq 1 - \alpha a$ for appropriately chosen matrix $Q = Q^{\top} > 0$ and $a > 0$, provided that $\alpha > 0$ is small enough. Precisely, the following proposition holds:
\begin{proposition}[Proposition 1 in \cite{samsonov2024gaussian}]
\label{prop:hurwitz_stability}
Let $-\bA$ be a Hurwitz matrix. Then for any $P = P^{\top} \succ 0$, there exists a unique matrix $Q = Q^{\top} \succ 0$, satisfying the Lyapunov equation $\bA^\top Q + Q \bA = P$. Moreover, setting
\begin{equation}
\label{eq:alpha_infty_def}
\textstyle 
a = \frac{\lambda_{\min}(P)}{2\normop{Q}}\eqsp, \quad
\text{and} \quad \alpha_\infty = \frac{\lambda_{\min}(P)}{2\qcond \normop{\bA}[Q]^{2}}\wedge \frac{\normop{Q}}{\lambda_{\min}(P)} \eqsp,
\end{equation}
where $\qcond = \lambda_{\max}(Q)/\lambda_{\min}(Q)$, it holds for any $\alpha \in [0, \alpha_{\infty}]$ that $\alpha a \leq 1/2$, and
\begin{equation}
\label{eq:contractin_q_norm}
\normop{\Id - \alpha \bA}[Q]^2 \leq 1 - \alpha a\eqsp.   
\end{equation}
\end{proposition}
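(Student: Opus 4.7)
The plan is to split the proof into two independent pieces: a classical Lyapunov-theoretic construction of $Q$, followed by a direct quadratic expansion that yields the contraction in the $Q$-weighted norm.

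For existence and uniqueness of $Q$, the standard route is the integral representation
\[
Q \eqdef \int_{0}^{\infty} e^{-t\bA^\top}\, P\, e^{-t\bA}\, \rmd t,
\]
which is well defined because $-\bA$ Hurwitz implies that $\|e^{-t\bA}\|$ decays exponentially, so the integrand is integrable. Differentiating inside the integral gives $\frac{\rmd}{\rmd t}\{e^{-t\bA^\top}Pe^{-t\bA}\} = -\bA^\top e^{-t\bA^\top}Pe^{-t\bA} - e^{-t\bA^\top}Pe^{-t\bA}\bA$, and integrating from $0$ to $\infty$ together with the exponential decay yields $\bA^\top Q + Q\bA = P$. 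Positive definiteness of $Q$ follows from $P \succ 0$ and injectivity of $x \mapsto e^{-t\bA}x$ for every $t$. For uniqueness, any homogeneous solution $R$ of $\bA^\top R + R\bA = 0$ makes $t \mapsto e^{-t\bA^\top}R e^{-t\bA}$ constant, so equating the values at $t=0$ and $t\to\infty$ forces $R=0$.

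For the contraction, I would expand the quadratic form and apply the Lyapunov identity directly:
\[
\|(\Id - \alpha\bA)x\|_Q^2 = \|x\|_Q^2 - \alpha\, x^\top(\bA^\top Q + Q\bA)x + \alpha^2\, x^\top \bA^\top Q \bA\, x = \|x\|_Q^2 - \alpha\, x^\top P x + \alpha^2 \|\bA x\|_Q^2.
\]
Two elementary bounds then close the argument. First, $x^\top P x \geq \lambda_{\min}(P)\|x\|^2$ combined with $\|x\|_Q^2 \leq \normop{Q}\|x\|^2$ gives $x^\top P x \geq (\lambda_{\min}(P)/\normop{Q})\|x\|_Q^2 = 2a\,\|x\|_Q^2$. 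Second, by definition of the induced $Q$-operator norm, $\|\bA x\|_Q^2 \leq \normop{\bA}[Q]^2\|x\|_Q^2$. Substituting back,
\[
\|(\Id - \alpha\bA)x\|_Q^2 \leq \bigl(1 - 2\alpha a + \alpha^2\normop{\bA}[Q]^2\bigr)\|x\|_Q^2,
\]
and the bound reduces to $(1-\alpha a)\|x\|_Q^2$ as soon as $\alpha \normop{\bA}[Q]^2 \leq a$, which is ensured by the first branch of $\alpha_\infty$ (the factor $\qcond$ there absorbs a tighter accounting of $\lambda_{\min}(Q)$ vs.\ $\normop{Q}$ in the step $\|x\|_Q^2 \leq \normop{Q}\|x\|^2$). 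The condition $\alpha a \leq 1/2$ is then immediate from the second branch, since $\alpha \leq \normop{Q}/\lambda_{\min}(P) = 1/(2a)$.

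The only place that requires care is bookkeeping of the scalar constants: the natural choice that drops out of my expansion is $\alpha \leq \lambda_{\min}(P)/(2\normop{Q}\normop{\bA}[Q]^2)$, and verifying it coincides with the stated threshold involving $\qcond$ amounts to tracking whether one uses $\lambda_{\max}(Q)$ or $\lambda_{\min}(Q)$ when converting between $\|x\|$ and $\|x\|_Q$. Apart from this scalar reconciliation, the argument is a one-line Lyapunov identity plus two standard eigenvalue inequalities, so no genuine obstacle is present.
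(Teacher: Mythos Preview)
The paper does not prove this proposition; it is quoted verbatim from \cite{samsonov2024gaussian} and used as a black box. Your argument is exactly the standard Lyapunov route one would expect in the cited source: the integral representation for $Q$ and the one-line quadratic expansion $\|(\Id-\alpha\bA)x\|_Q^2 = \|x\|_Q^2 - \alpha x^\top P x + \alpha^2\|\bA x\|_Q^2$, followed by the two eigenvalue bounds, is correct and complete for the contraction claim.

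The only point worth flagging is the one you already raise. Your expansion yields the sufficient condition $\alpha\,\normop{\bA}[Q]^2 \le a$, i.e.\ $\alpha \le \lambda_{\min}(P)/(2\normop{Q}\,\normop{\bA}[Q]^2)$, whereas the stated first branch of $\alpha_\infty$ is $\lambda_{\min}(P)/(2\qcond\,\normop{\bA}[Q]^2)$. Since $\qcond = \normop{Q}/\lambda_{\min}(Q)$, these agree only when $\lambda_{\min}(Q)=1$; otherwise the stated threshold equals $a\lambda_{\min}(Q)/\normop{\bA}[Q]^2$. This is not a flaw in your reasoning but a genuine slack (or normalization convention) in the constants as printed; your derivation actually gives a cleaner threshold. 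For the purposes of this paper nothing downstream depends on which of the two scalar forms is used, so the discrepancy is harmless.
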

\begin{remark}
\label{rem:TD_learning}
One of the important particular examples of the LSA procedure is the setting of the temporal difference (TD) learning algorithm \cite{sutton1988learning,sutton:book:2018}. In the TD algorithm, we consider a discounted MDP (Markov Decision Process) given by a tuple $(\S,\A,\PMDP,r,\gamma)$. Where $\S$ and $\A$ stand for state and action spaces, and $\gamma \in (0,1)$ is a discount factor, and we want to evaluate the value function of a \emph{policy} $\nu(\cdot|s)$, which is the distribution over the action space $\A$ at a fixed state $s \in \S$. Many recent contributions to the analysis of TD learning deal with the linear function approximation when $V^{\nu}(s) \approx \varphi^\top(s) \theta$, where $\theta \in \rset^{d}$ and $\varphi(s): \S \to \rset^{d}$ is a feature mapping. Under these conditions, the problem of finding optimal approximation parameters $\thetas$ is reduced to an instance of a linear stochastic approximation problem by the projected Bellman equation \cite{tsitsiklis:td:1997}. All the results given below in \Cref{sec:moment_bounds_lsa_pr,sec:clt_lsa_pr,sec:bootstrap} apply directly to the TD learning with linear function approximation under the generative model assumptions studied in \cite{samsonov2024gaussian} and \cite{wu2024statistical}. Namely, the assumptions \Cref{assum:iid} and \Cref{assum:noise-level} hold, and \Cref{prop:hurwitz_stability} holds with $Q = \Id$ and $P = \bA + \bA^{\top}$, where $\bA$ is a system matrix corresponding to the projected TD learning equations, see \cite[Section~5]{samsonov2024gaussian}.
\end{remark}

We also consider the family of assumptions on the step sizes $\alpha_{k}$. Namely, for $p \geq 2$ consider the following assumptions \Cref{assum:step-size}($p$): 
\begin{assum}[$p$]
\label{assum:step-size}
The step sizes $\{\alpha_{k}\}_{k \in \nset}$ have a form $\alpha_{k} = \frac{c_0}{(k+k_0)^\gamma}$, where $\gamma \in (1/2;1)$ and $c_{0} \in (0; \alpha_{\infty}]$. Assume additionally that
\[
 k_0 \geq \left(\frac{16}{ac_0}\right)^{1/(1-\gamma)} \vee \left(\frac{2p\qcond \bConst{A}^2}{ac_0}\right)^{1/\gamma} \eqsp.
\]
\end{assum}
In our main results we often apply \Cref{assum:step-size}($p$) with $p = \log{d}$. This particular choice of $p$ imposes a logarithmic dependence of $k_0$ upon the problem dimension $d$. This relaxes the polynomial bounds on $d$, which were previously considered in \cite{mou2021optimal}. At this stage we assume that $k_0$ is a fixed constant that does not depend on time horizon $n$ used in \eqref{eq:lsa}.

\subsection{Moment bounds for Polyak-Ruppert averaged LSA iterates.}
\label{sec:moment_bounds_lsa_pr}
We first present results for the \( p \)-th norm of the averaged  LSA error, that is, \( \mathbb{E}^{1/p}[\|\bar{\theta}_n - \theta^{\star}\|^{p}] \), where $\bar{\theta}_n$ is given in \eqref{eq:lsa}. We first define the product of random matrices
\begin{equation}
\label{eq:prod_rand_matr}
\ProdB_{m:k} = \prod_{\ell = m}^k (\Id - \alpha_\ell \funcAw_\ell)\eqsp,\,  m \leq k\eqsp,\ \text{ and } \ProdB_{m:k} = \Id\eqsp, \quad m > k\eqsp.
\end{equation}
Using the recurrence relation \eqref{eq:main_recurrence_1_step},  we obtain the following decomposition of the LSA error:
\begin{equation}
\label{eq:lsa_error}
\theta_k - \thetas = \utheta_k + \vtheta_k\eqsp, \quad \utheta_k = \ProdB_{1:k}(\theta_0 - \thetas)\eqsp, \quad \vtheta_k = - \sum_{\ell=1}^{k}\alpha_{\ell} \ProdB_{\ell+1:k} \funnoisew_\ell\eqsp.
\end{equation}
The term $\utheta_k$ above is a transient term, which reflects the forgetting of the initial error $\theta_0 - \thetas$, while $\vtheta_k$ is a fluctuation term. Controlling the $p$-th order moments of the transient component $\utheta_k$ is essentially equivalent to bounding the $p$-th moment of the product of random matrices $\ProdB_{m:k}$. For this purpose, we use techniques for proving the stability of products of random matrices from \cite{huang2020matrix} and \cite{durmus2021stability}. We establish the following bound, which is referred to as the \emph{exponential stability} of the product of random matrices:
\begin{lemma}
\label{lem:matr_product_as_bound}
Let $p \geq 2$ and assume \Cref{assum:iid}, \Cref{assum:noise-level} , \Cref{assum:step-size}($p \vee \log{d}$). Then for any $k \leq n$, $1 \leq m \leq k$, it holds that 
\begin{equation}
\label{eq:matrix_pth_moment_bound}
\PE^{1/p}\left[ \normop{\ProdB_{m:k}}^{p} \right]  
\leq \sqrt{\qcond} \rme \prod_{\ell=m}^{k}\bigl(1 - \frac{a \alpha_{\ell}}{2}\bigr) \leq \sqrt{\qcond} \rme \exp\bigl\{-\frac{a}{2} \sum_{\ell=m}^{k}\alpha_\ell\bigr\} \eqsp.
\end{equation}
\end{lemma}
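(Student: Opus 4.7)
The plan is to combine a norm-equivalence reduction, an $\epsilon$-net argument on the $Q$-unit sphere, and an inductive moment bound that exploits both the contractivity from \Cref{prop:hurwitz_stability} and the zero-mean structure of $\zmfuncAw_\ell$.

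First, I would use $\normop{B} \leq \sqrt{\qcond}\,\normop{B}[Q]$ (a direct consequence of $Q\succ 0$) to reduce the problem to bounding $\PE^{1/p}[\normop{\ProdB_{m:k}}[Q]^p] \leq \rme \prod_{\ell=m}^k(1-\alpha_\ell a/2)$. Covering the $Q$-unit sphere $\{x : \norm{x}[Q]=1\}$ by a $1/2$-net $\mathcal{N}$ of cardinality at most $5^d$, the standard inequality $\normop{B}[Q]\leq 2\sup_{x\in\mathcal{N}}\norm{Bx}[Q]$ gives
\[
\PE^{1/p}[\normop{\ProdB_{m:k}}[Q]^p] \leq 2 \cdot 5^{d/p}\,\max_{x\in\mathcal{N}} \PE^{1/p}[\norm{\ProdB_{m:k}x}[Q]^p].
\]
The $5^{d/p}$ prefactor will be absorbed into the constant $\rme$ using \Cref{assum:step-size}($p\vee\log d$), which forces the ``effective moment exponent'' in the recursion below to be at least $\log d$.

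Next, fix $x$ with $\norm{x}[Q]=1$, set $w_\ell := \ProdB_{m:\ell}x$, and write $w_\ell = (\Id - \alpha_\ell\bA)w_{\ell-1} - \alpha_\ell\zmfuncAw_\ell w_{\ell-1}$. By \Cref{assum:iid} and $\PE[\zmfuncAw_\ell]=0$, the cross term vanishes in conditional expectation, yielding
\[
\PE[\norm{w_\ell}[Q]^2 \mid \filtr_{\ell-1}] = \norm{(\Id - \alpha_\ell\bA)w_{\ell-1}}[Q]^2 + \alpha_\ell^2\,\PE[\norm{\zmfuncAw_\ell w_{\ell-1}}[Q]^2\mid\filtr_{\ell-1}].
\]
Combining \Cref{prop:hurwitz_stability} with $\bConst{A}$-boundedness from \Cref{assum:noise-level} and the step-size bound $\alpha_\ell\qcond\bConst{A}^2 \leq a/(2(p\vee\log d))$ from \Cref{assum:step-size}($p\vee\log d$), the right-hand side is at most $(1 - \alpha_\ell a + \alpha_\ell^2\qcond\bConst{A}^2)\norm{w_{\ell-1}}[Q]^2 \leq (1 - \alpha_\ell a/2)\norm{w_{\ell-1}}[Q]^2$.

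To lift this second-moment recursion to a sharp $p$-th moment bound, I would introduce the normalized nonnegative martingale $Y_k := \norm{w_k}[Q]^2 / \prod_{\ell \leq k}\mu_\ell$, where $\mu_\ell \leq 1 - \alpha_\ell a/2$ denotes the conditional multiplier above. Its multiplicative increments $(Y_\ell - Y_{\ell-1})/Y_{\ell-1}$ are of order $\alpha_\ell\sqrt{\qcond}\,\bConst{A}$, and its predictable quadratic variation is controlled by $\sum\alpha_\ell^2\qcond\bConst{A}^2$, which is uniformly bounded because $\gamma > 1/2$ implies $\sum\alpha_\ell^2 < \infty$. A Burkholder-Davis-Gundy-type inequality applied to $\log Y_k$ (or an iterated Rosenthal bound directly on $Y_k$), using the step-size estimate $\alpha_\ell\sqrt{\qcond}\,\bConst{A} \lesssim 1/\sqrt{p\vee\log d}$, then yields $\PE^{1/p}[Y_k^p] \leq C$ for an absolute constant $C$. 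Multiplying by $\prod_\ell\mu_\ell \leq \prod_\ell(1-\alpha_\ell a/2)$ and combining with the net reduction (where $5^{d/p} \leq \rme^{O(1)}$ once $p \vee \log d \gtrsim d$ is replaced by the appropriate bookkeeping) completes the proof with the stated constant $\sqrt{\qcond}\,\rme$.

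The main obstacle will be the sharp $p$-th moment bound for $Y_k$: naive Minkowski/triangle-inequality bounds on $\norm{w_\ell}[Q]$ would produce only a factor $(1-\alpha_\ell a/4)$ (because $\sqrt{1-\alpha_\ell a/2} \leq 1-\alpha_\ell a/4$), so it is essential to work with the \emph{squared} $Q$-norm and exploit the martingale/quadratic-variation structure, ensuring that the noise correction enters at order $\alpha_\ell^2$ rather than $\alpha_\ell$. The step-size assumption with $p\vee\log d$ plays a dual role here: it makes the quadratic correction small relative to the drift, and simultaneously absorbs the dimensional factor from the covering argument.
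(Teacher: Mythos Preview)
Your approach has a genuine gap in the dimensional accounting. The $\epsilon$-net step on the $Q$-unit sphere introduces a factor $5^{d/p}$, and the assumption \Cref{assum:step-size}($p\vee\log d$) only guarantees $p \geq \log d$; with $p = \log d$ you get $5^{d/\log d} = \exp\{(\log 5)\, d/\log d\}$, which is unbounded in $d$. To absorb a covering factor of this type into a universal constant you would need $p \gtrsim d$, not $p \gtrsim \log d$. Your parenthetical remark that ``$5^{d/p}\leq \rme^{O(1)}$ once $p\vee\log d\gtrsim d$ is replaced by the appropriate bookkeeping'' does not rescue this: there is no bookkeeping that turns $5^{d/p}$ into $d^{1/p}$ within an operator-norm/net framework, because the net cardinality on $\mathbb{S}^{d-1}$ really is exponential in $d$.

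The paper avoids this by working with Schatten norms rather than operator norms plus a net. Specifically, it invokes \Cref{th:general_expectation} (a matrix-product moment bound from \cite{huang2020matrix,durmus2022finite} that exploits the uniform smoothness of the Schatten $p$-class), yielding
\[
\PE^{1/p}[\normop{\ProdB_{m:k}}^p] \leq \norm{\ProdB_{m:k}}[p,p] \leq \sqrt{\qcond}\, d^{1/p} \prod_{\ell=m}^k \bigl(1 - a\alpha_\ell + (p-1)b_Q^2\alpha_\ell^2\bigr)^{1/2},
\]
where the dimensional factor is only $d^{1/p}\leq\rme$ once $p\geq\log d$. The step-size condition in \Cref{assum:step-size}($p\vee\log d$) then ensures $(p-1)b_Q^2\alpha_\ell^2 \leq a\alpha_\ell/2$, so each factor is at most $1 - a\alpha_\ell/2$. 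This is the entire proof; there is no separate scalar martingale lift, because the Schatten-norm inequality already delivers the $p$-th moment with the correct $(p-1)$ dependence in the quadratic correction.

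Your second-moment recursion for a fixed vector and the observation that one must work with the \emph{squared} $Q$-norm to get the $(1-\alpha_\ell a/2)$ factor are both correct, but they cannot be upgraded to a dimension-free matrix bound through a covering argument. The Schatten-norm route is essential here.
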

The proof of \Cref{lem:matr_product_as_bound} is given in \Cref{supplement:moment_bounds}. We further decompose $\vtheta_k$ based on the perturbation-expansion approach of \cite{aguech2000perturbation}, see also \cite{durmus2022finite}. Namely, we notice that $\vtheta_k$ satisfies the recurrence
$\vtheta_k = (\Id - \alpha_k \funcAw_{k}) \vtheta_{k-1} - \alpha_{k} \funnoisew_{k}$, with $\vtheta_{0} = 0$.
Extracting its linear part, we represent $\vtheta_{k}$ as 
\begin{equation}
\label{eq:decomp_fluctuation}
\vtheta_{k} = \Jnalpha{k}{0}+ \Hnalpha{k}{0} \eqsp,
\end{equation}
where the latter terms are defined by the following pair of recursions
\begin{align}
\label{eq:jn0_main}
&\Jnalpha{k}{0} =\left(\Id - \alpha_{k} \bA\right) \Jnalpha{k-1}{0} - \alpha_{k} \funnoisew_{k}\eqsp, && \Jnalpha{0}{0}=0\eqsp, \\[.1cm]
\label{eq:hn0_main}
&\Hnalpha{k}{0} =\left( \Id - \alpha_{k} \funcAw_{k} \right) \Hnalpha{k-1}{0} - \alpha_{k} \zmfuncAw[k] \Jnalpha{k-1}{0}\eqsp, && \Hnalpha{0}{0}=0\eqsp.
\end{align}
Here the term $\Jnalpha{k}{0}$ represents the leading (w.r.t. $\alpha_k$) part of the error $\vtheta_k$. Informally, one can show that $\PE^{1/2}[\norm{J_{k}^{(0)}}^{2}] \lesssim \alpha_{k}^{1/2}$, and similarly $\PE^{1/2}[\norm{H_k^{(0)}}^{2}] \lesssim \alpha_{k}$. Thus, $J_{k}^{(0)}$ is a leading term of $\vtheta_{k}$ in terms of its moments, and $H_k^{(0)}$ is a remainder one, a phenomenon, that is referred to as a \emph{separation of scales}.
\par 
The linear part $\Jnalpha{k}{0}$ plays an important role in our further analysis. In particular, we note that the outlined representation of the last iterate error \eqref{eq:lsa_error} implies that
\begin{equation}
\label{eq:pr_error_decomposition}
\sqrt{n}(\bar{\theta}_{n} - \thetas) = \frac{1}{\sqrt{n}} 
\sum_{k=1}^{n-1} J_k^{(0)} +  \frac{1}{\sqrt{n}} \sum_{k=1}^{n-1} H_k^{(0)} + \frac{1}{\sqrt n} \sum_{k = 0}^{n-1} \ProdB_{1:k} (\theta_0 - \thetas)\eqsp.
\end{equation}
The representation \eqref{eq:pr_error_decomposition} plays a key role in our subsequent analysis of both the moment bounds and Gaussian approximation for $\sqrt{n}(\bar{\theta}_{n} - \thetas)$. Indeed, this representation allows us to represent the statistic $\sqrt{n}(\bar{\theta}_{n} - \thetas)$, which is non-linear as a function of $Z_1,\ldots,Z_{n-1}$, as a sum of a linear statistic $\frac{1}{\sqrt{n}} 
\sum_{k=1}^{n-1} J_k^{(0)}$ and a remainder non-linear part, which is of smaller scale. We further denote 
\begin{equation}
\label{eq:sigma_n_definition}
\Sigma_n = \frac{1}{n}\PVar[]\bigl[\sum_{k=1}^{n-1} J_k^{(0)}\bigr] = \frac{1}{n} \sum_{k=1}^{n-1} Q_k \noisecov Q_k^\top \eqsp, \quad Q_\ell = \alpha_\ell \sum_{j=\ell}^{n-1} G_{\ell+1:j}\eqsp,  \quad G_{m:k} = \prod_{\ell=m}^k (\Id - \alpha_\ell \bA)\eqsp.
\end{equation}
We also define the sequence $\varphi_n, n \in \nset$, as follows:
\begin{equation}
\label{eq:varphi_n_def}
\varphi_n = 
\begin{cases}
    \frac{2 c_{0}^{3/2}}{(1-3\gamma/2)n^{3\gamma/2 -1/2}}\eqsp, \quad &1/2 < \gamma < 2/3; \\
    \frac{c_{0}^{3/2} \log{n}}{n^{1/2}}\eqsp, \quad &\gamma = 2/3; \\
    \frac{c_{0}^{3/2}}{(3\gamma/2 - 1) n^{1/2}}\eqsp, \quad &2/3 < \gamma < 1\eqsp.
\end{cases}
\end{equation}
As a first main result of this section, we obtain the following $p$-th moment bound with the leading term given by the trace of the covariance matrix $\Sigma_n$. Precisely, the following bound holds:
\begin{theorem}
\label{th:pth_moment_bound}
Let $p \geq 2$ and assume \Cref{assum:iid}, \Cref{assum:noise-level}, and \Cref{assum:step-size}($p \vee \log{d}$). Then, it holds that 
\begin{equation}
\label{eq:p_th_moment_bound_averaged}
\PE^{1/p}\bigl[\norm{\bar{\theta}_{n} - \thetas}^p\bigr] \leq \frac{\ConstC_{\ref{th:pth_moment_bound}, 1} \sqrt{p}  \sqrt{\trace{\Sigma_n}}}{\sqrt{n}} + \Deltafl(n,p,\gamma) + \frac{\ConstC_{\ref{th:pth_moment_bound}, 5} \norm{\theta_0 - \thetas}  }{n}\eqsp,
\end{equation}
where we set
\begin{equation}
\Deltafl(n,p,\gamma) = \frac{\ConstC_{\ref{th:pth_moment_bound}, 2} p^{3/2}}{n^{1/2+\gamma/2}} + \frac{\ConstC_{\ref{th:pth_moment_bound}, 3} p^{5/2} \varphi_n}{n^{1/2}} + \frac{\ConstC_{\ref{th:pth_moment_bound}, 4} p}{n}\eqsp,
\end{equation}
and the constants $\{\ConstC_{\ref{th:pth_moment_bound}, i}\}_{i=1}^5$, depending on $\gamma, \qcond, a, \bConst{A}, c_0, k_0 \text{ and } \supconsteps$, are given in \Cref{sec:proof:proofs_moments}, see \eqref{def:c_i_thrm1}.
\end{theorem}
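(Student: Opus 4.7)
The plan is to work from the decomposition~\eqref{eq:pr_error_decomposition},
\[
\sqrt{n}(\bar{\theta}_{n}-\thetas) = \underbrace{\tfrac{1}{\sqrt n}\sum_{k=1}^{n-1}\Jnalpha{k}{0}}_{\text{linear}} + \underbrace{\tfrac{1}{\sqrt n}\sum_{k=1}^{n-1}\Hnalpha{k}{0}}_{\text{remainder}} + \underbrace{\tfrac{1}{\sqrt n}\sum_{k=0}^{n-1}\ProdB_{1:k}(\theta_0-\thetas)}_{\text{transient}}\eqsp,
\]
to bound each of the three pieces in $L^p$ via Minkowski, and then to divide by $\sqrt n$ in order to convert into bounds on $\PE^{1/p}[\norm{\bar{\theta}_n-\thetas}^p]$. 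The three pieces map one-to-one onto the three summands in~\eqref{eq:p_th_moment_bound_averaged}.

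The transient piece follows directly from \Cref{lem:matr_product_as_bound}: for $\alpha_k=c_0/(k+k_0)^\gamma$ with $\gamma<1$, the cumulative sum $\sum_{\ell=1}^{k}\alpha_\ell$ grows like $k^{1-\gamma}$, so $\PE^{1/p}[\normop{\ProdB_{1:k}}^p]\leq\sqrt{\qcond}\,\rme\,\exp\{-(a/2)\sum_{\ell=1}^{k}\alpha_\ell\}$ is stretched-exponentially decaying in $k$ and has bounded sum over $k$, yielding the $\ConstC_{\ref{th:pth_moment_bound},5}\norm{\theta_0-\thetas}/n$ term after the $1/\sqrt n$ normalization. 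For the linear piece, iterating~\eqref{eq:jn0_main} and swapping summations gives $\tfrac{1}{\sqrt n}\sum_{k=1}^{n-1}\Jnalpha{k}{0} = -\tfrac{1}{\sqrt n}\sum_{\ell=1}^{n-1}Q_\ell\funnoisew_\ell$, a sum of independent centered vectors with covariance $\Sigma_n$ and uniformly bounded weights $\normop{Q_\ell}=O(1)$ (from \Cref{prop:hurwitz_stability}). A dimension-free vector Rosenthal-type (Pinelis) inequality in Hilbert space then produces a bound of the form $\sqrt{p\trace{\Sigma_n}}+p\supconsteps/\sqrt n$, yielding the leading $\ConstC_{\ref{th:pth_moment_bound},1}\sqrt{p\trace{\Sigma_n}}/\sqrt n$ and the $\ConstC_{\ref{th:pth_moment_bound},4}p/n$ term.

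The heart of the argument is the remainder. Iterating~\eqref{eq:hn0_main} and swapping summations expresses
\[
\sum_{k=1}^{n-1}\Hnalpha{k}{0} = -\sum_{\ell=1}^{n-1}\alpha_\ell\,S_\ell\,\zmfuncAw[\ell]\,\Jnalpha{\ell-1}{0}\eqsp,\quad S_\ell=\sum_{k=\ell}^{n-1}\ProdB_{\ell+1:k}\eqsp.
\]
A naive Minkowski bound is far too lossy (it produces only an $n^{1/2-\gamma/2}$ rate after normalization); instead, one must exploit both the martingale-difference structure $\PE[\zmfuncAw[\ell]\mid\sigma(Z_1,\ldots,Z_{\ell-1})]=0$ and the independence of $S_\ell$ from $\sigma(Z_1,\ldots,Z_\ell)$. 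Using a conditional Burkholder--Rosenthal inequality (in the spirit of the perturbation-expansion argument of~\cite{aguech2000perturbation,durmus2022finite}), and combining $\PE^{1/p}[\normop{S_\ell}^p]\lesssim 1/\alpha_\ell$ (from geometric summation of \Cref{lem:matr_product_as_bound}) with $\PE^{1/p}[\norm{\Jnalpha{\ell-1}{0}}^p]\lesssim\sqrt{p\alpha_\ell}+p\alpha_\ell$ (from Rosenthal on the independent-sum representation of $J$), one obtains an effective summand scale of $\sqrt{p\alpha_\ell}+p\alpha_\ell$. Summing the resulting $\sqrt{p}\alpha_\ell^{3/2}$ and $p\alpha_\ell^2$ terms with $\alpha_k=c_0/(k+k_0)^\gamma$ produces exactly the three regimes in the definition~\eqref{eq:varphi_n_def} of $\varphi_n$ (from the case split around $\gamma=2/3$), yielding after normalization the $p^{3/2}/n^{1/2+\gamma/2}$ and $p^{5/2}\varphi_n/\sqrt n$ contributions to $\Deltafl$.

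The main obstacles are twofold. First, in the remainder analysis, recognizing and exploiting the nested cancellation structure (both the martingale nature of $\zmfuncAw[\ell]$ and the future-independence of $S_\ell$) is essential to escape the loose Minkowski bound, and bookkeeping the $p$-dependencies through the nested applications of Rosenthal inequalities requires care. Second, keeping the leading constant in front of $\sqrt{\trace{\Sigma_n}}$ dimension-free is exactly why \Cref{assum:step-size} is imposed with the exponent $p\vee\log d$: this lets \Cref{lem:matr_product_as_bound} be invoked at a logarithmic exponent, so that the $\sqrt{\qcond}\,\rme$ prefactor in~\eqref{eq:matrix_pth_moment_bound} absorbs the dimensional dependence without polynomial-in-$d$ penalty. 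Combining the three pieces then yields~\eqref{eq:p_th_moment_bound_averaged}.
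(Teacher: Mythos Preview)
Your treatment of the linear and transient pieces matches the paper. The gap is in the remainder. The representation
\[
\sum_{k=1}^{n-1}\Hnalpha{k}{0} = -\sum_{\ell=1}^{n-1}\alpha_\ell\,S_\ell\,\zmfuncAw[\ell]\,\Jnalpha{\ell-1}{0},\qquad S_\ell=\sum_{k=\ell}^{n-1}\ProdB_{\ell+1:k},
\]
with the \emph{random} weights $S_\ell$ does not give a martingale-difference sequence to which Burkholder applies. The summand $U_\ell=\alpha_\ell S_\ell\zmfuncAw[\ell]\Jnalpha{\ell-1}{0}$ is not $\mcf_\ell$-adapted (because $S_\ell$ depends on $Z_{\ell+1},\ldots,Z_{n-1}$), nor is it adapted to any reverse filtration $\mcg_\ell=\sigma(Z_\ell,\ldots,Z_{n-1})$ (because $\Jnalpha{\ell-1}{0}$ depends on $Z_1,\ldots,Z_{\ell-1}$). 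Nor are the $U_\ell$ even orthogonal: for $\ell<m$ the random product $S_\ell$ contains the factor $\Id-\alpha_m\funcAw_m$, so $U_\ell$ is coupled to $\zmfuncAw[m]$ through $Z_m$, and the cross-expectations $\PE[\langle U_\ell,U_m\rangle]$ do not vanish. A ``conditional Burkholder--Rosenthal'' therefore cannot be invoked here, and your subsequent accounting (``summing the resulting $\sqrt{p}\,\alpha_\ell^{3/2}$ and $p\,\alpha_\ell^2$ terms'') has no source in the argument as stated.

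The paper's fix is precisely the perturbation expansion you mention only in passing: one writes $\Hnalpha{k}{0}=\Jnalpha{k}{1}+\Hnalpha{k}{1}$, where $\Jnalpha{k}{1}$ obeys the same recursion as $\Hnalpha{k}{0}$ but with the \emph{deterministic} transition $\Id-\alpha_k\bA$ in place of $\Id-\alpha_k\funcAw_k$. Then $\sum_k\Jnalpha{k}{1}=-\sum_\ell Q_\ell\zmfuncAw[\ell]\Jnalpha{\ell-1}{0}$ with $Q_\ell$ deterministic, so this \emph{is} a genuine $\mcf_\ell$-martingale difference, and Burkholder together with $\PE^{1/p}[\norm{\Jnalpha{\ell-1}{0}}^p]\lesssim p^{1/2}\alpha_\ell^{1/2}$ gives the $p^{3/2}/n^{(1+\gamma)/2}$ term. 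The residual $\Hnalpha{k}{1}$ is then expanded once more as $\Jnalpha{k}{2}+\Hnalpha{k}{2}$, and the pointwise moment bounds of \Cref{lem:Jk_ell:p_moment} ($\lesssim p^{5/2}\alpha_k^{3/2}$) are summed by Minkowski: $n^{-1}\sum_k\alpha_k^{3/2}=\varphi_n/\sqrt n$, which is where $\varphi_n$ actually enters. The $\alpha_k^{3/2}$ scale arises only at this second level of the expansion, not from your direct treatment of $\Hnalpha{k}{0}$.
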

\begin{remark}
\label{rem:th_1}
In order to study the scaling of the bound \eqref{eq:p_th_moment_bound_averaged} with the problem dimension $d$, we assume the natural scaling $\supconsteps \leq \sqrt{d} \ConstC_\eps$, where $\ConstC_\eps$ is dimension-free. Then \Cref{th:pth_moment_bound} implies that 
\[
\PE^{1/p}\bigl[\norm{\bar{\theta}_{n} - \thetas}^p\bigr] \lesssim \frac{\sqrt{p}  \sqrt{\trace{\Sigma_n}}}{\sqrt{n}} + \frac{p^{3/2} \sqrt{d}}{n^{1/2+\gamma/2}} + \frac{p^{5/2} \sqrt{d} \varphi_n}{n^{1/2}} + \frac{p\sqrt{d}}{n}  \eqsp,
\]
where $\lesssim$ stands for constant not depending upon $p,n,$ and $d$.
\end{remark}

The proof of \Cref{th:pth_moment_bound} is provided in \Cref{sec:proof:proofs_moments}. Note that the leading in $n$ term of the above bound appears with the coefficient $\sqrt{\trace{\Sigma_n}}$, where $\Sigma_n$ is the variance of the linear statistic extracted in the representation \eqref{eq:pr_error_decomposition}. It is possible to switch from the bound provided by \Cref{th:pth_moment_bound} to the moment bound with the leading term matching the CLT covariance given by 
\begin{equation}
\label{eq:asympt_cov_matr} 
\Sigma_{\infty} = \bA^{-1} \noisecov \bA^{-\top},
\end{equation}
and $\noisecov$ is defined in \Cref{assum:noise-level}. Precisely, the following bound holds:
\begin{corollary}
\label{cor:pth_moment_bound}
Assume \Cref{assum:iid}, \Cref{assum:noise-level}, \Cref{assum:step-size}($p \vee \log{d}$). Then, it holds that 
\begin{align}
\label{eq:2_nd_moment_sigma_infty_bound}
\PE^{1/p}\bigl[\norm{\bar{\theta}_{n} - \thetas}^p\bigr] \leq \frac{\ConstC_{\ref{th:pth_moment_bound}, 1} \sqrt{p}  \sqrt{\trace{\Sigma_\infty}}}{\sqrt{n}} &+ \frac{\ConstC_{\ref{cor:pth_moment_bound}} d \sqrt{p}}{n^{3/2-\gamma} } + \Deltafl(n,p,\gamma) + \frac{\ConstC_{\ref{th:pth_moment_bound}, 5} \norm{\theta_0 - \thetas}}{n} \eqsp.
\end{align}
 where the constant $\ConstC_{\ref{cor:pth_moment_bound}}$ is defined in \eqref{def:c5}.
\end{corollary}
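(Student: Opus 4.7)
The strategy is to deduce \Cref{cor:pth_moment_bound} directly from \Cref{th:pth_moment_bound} by replacing $\sqrt{\trace{\Sigma_n}}$ with $\sqrt{\trace{\Sigma_\infty}}$ at the cost of a controllable correction. Set $M_\ell = \noisecov^{1/2}Q_\ell^\top$ and $M_\infty = \noisecov^{1/2}\bA^{-\top}$, so that $\trace{\Sigma_n} = n^{-1}\sum_{\ell=1}^{n-1}\frobnorm{M_\ell}^2$ and $\trace{\Sigma_\infty} = \frobnorm{M_\infty}^2$. Minkowski's inequality applied in $\ell^2$ with respect to the index $\ell$ then yields
\begin{equation*}
\sqrt{\trace{\Sigma_n}} \leq \sqrt{\trace{\Sigma_\infty}} + \Bigl(\tfrac{1}{n}\sum_{\ell=1}^{n-1}\trace{(Q_\ell - \bA^{-1})\noisecov(Q_\ell - \bA^{-1})^\top}\Bigr)^{1/2}\eqsp,
\end{equation*}
reducing the task to bounding the averaged squared Frobenius distance between $M_\ell$ and $M_\infty$; multiplied by $\ConstC_{\ref{th:pth_moment_bound}, 1}\sqrt{p}/\sqrt{n}$ inherited from \Cref{th:pth_moment_bound}, this extra contribution will produce the stated remainder $\ConstC_{\ref{cor:pth_moment_bound}}\, d\sqrt{p}/n^{3/2-\gamma}$.

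The crux is a pointwise estimate $\normop{Q_\ell - \bA^{-1}} \lesssim (\ell+k_0)^{\gamma-1}$, which I would establish via summation-by-parts on the deterministic identity $\alpha_{j+1}\bA G_{\ell+1:j} = G_{\ell+1:j} - G_{\ell+1:j+1}$, immediate from the recursive definition of $G_{m:k}$ in \eqref{eq:sigma_n_definition}. Multiplying this identity by $\alpha_\ell/\alpha_{j+1}$, summing $j \in \{\ell,\ldots,n-1\}$, and applying Abel's transformation gives
\begin{equation*}
Q_\ell\bA = \frac{\alpha_\ell}{\alpha_{\ell+1}}\Id - \frac{\alpha_\ell}{\alpha_n}G_{\ell+1:n} + \sum_{j=\ell+1}^{n-1}\alpha_\ell\Bigl(\frac{1}{\alpha_{j+1}} - \frac{1}{\alpha_j}\Bigr)G_{\ell+1:j}\eqsp.
\end{equation*}
The Lyapunov contraction $\normop{G_{\ell+1:j}}[Q]^2 \leq \prod_{m=\ell+1}^j(1 - a\alpha_m)$ coming from \Cref{prop:hurwitz_stability} provides exponential decay, while the mean-value estimates $\alpha_\ell/\alpha_{\ell+1} - 1 = \mathcal{O}(\gamma/(\ell+k_0))$ and $|\alpha_{j+1}^{-1} - \alpha_j^{-1}| \lesssim \gamma c_0^{-1}(j+k_0)^{\gamma-1}$ quantify step-size smoothness. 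Splitting the $j$-sum into the near regime $j - \ell \lesssim \ell$, where $\sum_{m=\ell+1}^j\alpha_m \sim (j-\ell)\alpha_\ell$ produces a geometric series of total mass $\lesssim 1/\alpha_\ell$, and the far regime $j - \ell \gtrsim \ell$, where $\sum_m\alpha_m$ grows polynomially and the exponential factor becomes super-polynomially small, delivers the claimed bound after multiplying on the right by $\bA^{-1}$.

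To conclude, I bound $\trace{(Q_\ell - \bA^{-1})\noisecov(Q_\ell - \bA^{-1})^\top} \leq \normop{Q_\ell - \bA^{-1}}^2\trace{\noisecov}$ and use $\trace{\noisecov} \leq d\normop{\noisecov} \leq d\supconsteps^2$. Since $\gamma > 1/2$, the average $n^{-1}\sum_{\ell=1}^{n-1}(\ell+k_0)^{2(\gamma-1)}$ is of order $n^{2(\gamma-1)}$, so under the natural scaling $\supconsteps \lesssim \sqrt{d}$ of \Cref{rem:th_1} the Minkowski correction is $\lesssim d\,n^{\gamma-1}$; combined with the $\sqrt{p}/\sqrt{n}$ prefactor of \Cref{th:pth_moment_bound}, this yields the announced contribution of order $d\sqrt{p}/n^{3/2-\gamma}$. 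The main technical obstacle is the Abel bookkeeping together with the regime $\gamma \to 1/2$, where the borderline summability of $(\ell+k_0)^{2(\gamma-1)}$ forces the near/far split of $\sum_m\alpha_m$ to be handled carefully to prevent blow-up of the constant $\ConstC_{\ref{cor:pth_moment_bound}}$.
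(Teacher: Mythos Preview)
Your Minkowski decomposition is valid, but the pointwise estimate $\normop{Q_\ell - \bA^{-1}} \lesssim (\ell+k_0)^{\gamma-1}$ is \emph{false} for $\ell$ close to $n$, and this gap ultimately prevents you from recovering the rate stated in the corollary. Your own Abel formula already displays the obstruction: the boundary term $-\tfrac{\alpha_\ell}{\alpha_n}G_{\ell+1:n}$ is of order one when $n-\ell$ is small (indeed, $Q_{n-1}=\alpha_{n-1}\Id$, so $\normop{Q_{n-1}-\bA^{-1}}\asymp\normop{\bA^{-1}}$, not $(n+k_0)^{\gamma-1}$). You only discuss the near/far split for the Abel sum $\sum_j\alpha_\ell(\alpha_{j+1}^{-1}-\alpha_j^{-1})G_{\ell+1:j}$ and never address this boundary piece. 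Once you account for it, the contribution to your correction term is $\tfrac{1}{n}\sum_\ell \tfrac{\alpha_\ell^2}{\alpha_n^2}\normop{G_{\ell+1:n}}^2 \asymp n^{\gamma-1}$, which dominates the $n^{2(\gamma-1)}$ piece you computed. After the square root and the factor $\sqrt{p}/\sqrt{n}$ from \Cref{th:pth_moment_bound}, you obtain $\sqrt{p}/n^{1-\gamma/2}$, strictly slower than the claimed $\sqrt{p}/n^{3/2-\gamma}$ for every $\gamma<1$.

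The paper's proof sidesteps this by using the \emph{linear} inequality $\sqrt{a+b}\le\sqrt{a}+b/(2\sqrt{a})$ with $a=\trace\Sigma_\infty$ and $b=|\trace(\Sigma_n-\Sigma_\infty)|$, and then bounding $|\trace(\Sigma_n-\Sigma_\infty)|\le d\,\normop{\Sigma_n-\Sigma_\infty}\lesssim d\,n^{\gamma-1}$ directly via \Cref{lem:sigma_n_bound}. The point is that the passage is linear in $\Sigma_n-\Sigma_\infty$, so the rate $n^{\gamma-1}$ survives intact; your Minkowski route takes a square root of a quantity that is itself $\asymp n^{\gamma-1}$ and therefore cannot do better than $n^{(\gamma-1)/2}$. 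The loss is not a matter of bookkeeping---it is structural.
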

The proof of \Cref{cor:pth_moment_bound} is provided in \Cref{sec:proof:proofs_moments}. Optimizing the r.h.s. of \eqref{eq:2_nd_moment_sigma_infty_bound} over $\gamma$, we obtain that the optimal value is $\gamma = 2/3$. This choice implies the moment bound
\begin{equation}
\label{eq:bound_trace_sigma_inf_optimized_moment}
\PE^{1/p}\bigl[\norm{\bar{\theta}_{n} - \thetas}^p\bigr] \lesssim \frac{\sqrt{p}  \sqrt{\trace{\Sigma_\infty}}}{\sqrt{n}} + \frac{p^{3/2}}{n^{5/6}}\eqsp,
 \end{equation}
 where $\lesssim$ stands for constant not depending upon $n$ and $p$. The bound \eqref{eq:bound_trace_sigma_inf_optimized_moment} improves upon previous bounds of this type obtained in \cite{mou2020linear} and \cite{durmus2022finite}. Both of these papers considered constant step-size LSA. \cite[Proposition 5]{durmus2022finite} showed a bound of the form \eqref{eq:bound_trace_sigma_inf_optimized_moment} with a residual term of order $\mathcal{O}(p^{2}/n^{3/4})$. The improvement in the dependence on $n$, compared to the latter paper, arises from the fact that the authors used a summation by parts formula applied to $\bar{\theta}_{n} - \thetas$, which yields a counterpart of \eqref{eq:pr_error_decomposition} with a different linear statistic identified as the leading term. \cite[Theorem 2]{mou2020linear} obtained a counterpart of \eqref{eq:bound_trace_sigma_inf_optimized_moment} for one-dimensional projections of the error. Unlike typical results in linear stochastic approximation, where stepsizes often decay as $n^{-\gamma}$ for $\gamma \in (1/2, 1)$, \cite{mou2020linear} requires a slower rate of $n^{-1/3}$, leading to a second-order term of order $\mathcal{O}(n^{-5/6})$, similar to \eqref{eq:bound_trace_sigma_inf_optimized_moment}.

\subsection{Gaussian approximation for Polyak-Ruppert averaged LSA iterates.}
\label{sec:clt_lsa_pr}
In this section, we analyze the rate of Gaussian approximation for the statistic \(\sqrt{n}(\bar{\theta}_{n} - \theta^\star)\). The result of Polyak and Juditsky \cite{polyak1992acceleration} states that, under assumptions \Cref{assum:iid}-\Cref{assum:step-size}, it holds that 
\begin{equation}
\label{eq:CLT_fort} 
\sqrt{n}(\bar{\theta}_{n} - \thetas) \overset{d}{\rightarrow} \mathcal{N}(0,\Sigma_{\infty})\eqsp, 
\end{equation}
where the asymptotic covariance matrix $\Sigma_{\infty}$ is defined in \eqref{eq:asympt_cov_matr}.
We are interested to quantify the rate of convergence in \eqref{eq:CLT_fort} w.r.t. the available sample size $n$ and other problem parameters, such as dimension $d$. To measure the approximation quality, we use the convex distance, defined for a pair of probability measures $\mu, \nu$ on $\rset^{d}$ as
\begin{equation}
\label{eq:berry-esseen}
\kolmogorov(\mu, \nu) = \sup_{B \in \Conv(\rset^{d})}\left|\mu(B) - \nu(B)\right|\,,
\end{equation}
where \(\Conv(\rset^{d})\) denotes the collection of all convex sets in \(\rset^{d}\). With a slight abuse of notation, we write $\kolmogorov(X, Y)$ for random vectors $X$ and $Y$ defined on the same probability space $(\Omega,\mathcal{F},\PP)$ instead of their distributions under $\PP$ whenever there is no risk of confusion.
\par 
\noindent \textbf{Gaussian approximation with randomized concentration inequalities.} To establish the Gaussian approximation for  \(\sqrt{n}(\bar{\theta}_{n} - \theta^\star)\), we consider it as a non-linear statistic of independent random variables $\State_k$ outlined in \eqref{eq:lsa}. Then we consider this statistic as a sum of a linear term and a remainder term of smaller order in $n$. This framework is presented in \cite{shao2022berry}, and we summarize below the key results that will be later used to establish our findings. In this paragraph, we present all results for statistics defined in terms of the random variables $X_1, \dots, X_n$, rather than $\State_1, \ldots, \State_n$ as used in the remainder of the paper.
\par 
Let \(X_1, \dots, X_n\) be independent random variables taking values in a measurable space \(\Xset\), and consider a \(d\)-dimensional statistic \(T = T(X_1, \dots, X_n)\), which admits the decomposition \(T = W + D\), where
\begin{equation}
\label{eq:W-D-decomposition} 
W = \sum_{\ell = 1}^n \xi_\ell, \quad D: = D(X_1, \ldots, X_n) = T - W\eqsp.
\end{equation}
Here \(\xi_\ell = h_\ell(X_\ell)\), and \(h_\ell: \Xset \to \mathbb{R}^d\) are measurable functions. The term \(D\) represents a potentially nonlinear component of the statistic $T$, which is assumed to be small compared to \(W\) in an appropriate sense. Assume that \(\mathbb{E}[\xi_\ell] = 0\) and \(\sum_{\ell=1}^n \mathbb{E}[\xi_\ell \xi_\ell^\top] = \mathrm{I}_d\). Define \(\Upsilon_n = \sum_{\ell=1}^n \mathbb{E}[\|\xi_\ell\|^3]\). Then, letting \(\eta \sim \mathcal{N}(0,\mathrm{I}_d)\), \cite[Theorem~2.1]{shao2022berry} yields that
\begin{equation}
\label{eq:shao_zhang_bound}
\kolmogorov(T, \eta) \le 259 d^{1/2} \Upsilon_n + 2 \PE[\|W\| \|D\|] + 2 \sum_{\ell=1}^n \PE[\|\xi_\ell\| \|D - D^{(\ell)}\|],
\end{equation}
where $D^{(\ell)} = D(X_1, \ldots, X_{\ell-1}, X_{\ell}^{\prime}, X_{\ell+1}, \ldots, X_n)$, and $(X_1^{\prime}, \dots, X_n^{\prime})$ is an independent copy of $(X_1, \dots, X_n)$. One can modify the bound \eqref{eq:shao_zhang_bound} for the setting when $\sum_{\ell=1}^n \PE[\xi_\ell \xi_\ell^\top] = \Sigma \succ 0$, see \cite[Corollary~2.3]{shao2022berry}. 
\par 
\noindent\textbf{Gaussian approximation for the LSA algorithm}. In the setting of linear stochastic approximation we use the decomposition \eqref{eq:W-D-decomposition}, identify $T = T(\State_1,\ldots,\State_{n-1}) = \sqrt{n} \Sigma_{n}^{-1/2} (\bar{\theta}_{n} - \theta^\star)$, and write
\begin{equation}
\label{eq:linear and nonlinear terms}
W =  \frac{1}{\sqrt{n}} \Sigma_{n}^{-1/2} \sum_{k=1}^{n-1} J_k^{(0)}, \eqsp \quad D =  \frac{1}{\sqrt{n}} \Sigma_{n}^{-1/2} \sum_{k=1}^{n-1} H_k^{(0)} + \frac{1}{\sqrt n} \Sigma_{n}^{-1/2} \sum_{k = 0}^{n-1} \ProdB_{1:k} (\theta_0 - \thetas)\,.
\end{equation}
Changing the order of summation, we get with $Q_{\ell}$ defined in \eqref{eq:sigma_n_definition}, that 
\begin{equation}
\label{eq: linear part}
W = - \frac{1}{\sqrt{n}}
\sum_{\ell=1}^{n-1} \Sigma_{n}^{-1/2} Q_\ell \funnoisew_\ell\eqsp,
\end{equation}
i.e. $W$ is a weighted sum of i.i.d. random vectors with mean zero and $\PE[W W^{\top}] = \Id_{d}$. The decomposition \eqref{eq:linear and nonlinear terms} and \eqref{eq: linear part} allows to apply the general Gaussian approximation result of \eqref{eq:shao_zhang_bound}. Application of the above result requires that the matrix $\Sigma_n$ is non-degenerate, which is guaranteed by the following lemma:
\begin{lemma}
\label{lem:sigma_n_bound}
Let $p \geq 2$ and assume \Cref{assum:iid}, \Cref{assum:noise-level}, \Cref{assum:step-size}($p$). Let also $n \geq k_0 + 1$. Then it holds that
\begin{align}
\label{bound:norm_of_bL_minus_Lstar}
\norm{\Sigma_n - \Sigma_{\infty}} \leq \ConstC_{\ref{lem:sigma_n_bound}} n^{\gamma-1}\eqsp,
\end{align}
where the constant $\ConstC_{\ref{lem:sigma_n_bound}}$ is given in \eqref{eq:def_mathcal_c_inf}.
\end{lemma}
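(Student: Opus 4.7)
The plan is to bound $\norm{\Sigma_n - \Sigma_\infty}$ by first analyzing $Q_\ell - \bA^{-1}$ uniformly in $\ell$ and then averaging. The starting point is the telescoping identity
\[
\sum_{j=\ell}^{n-1} \alpha_{j+1}\, G_{\ell+1:j} \;=\; \bA^{-1}(\Id - G_{\ell+1:n}),
\]
which follows from $G_{\ell+1:j} - G_{\ell+1:j+1} = \alpha_{j+1}\, \bA\, G_{\ell+1:j}$ (here $\bA$ commutes with every factor of $G_{\ell+1:j}$). Subtracting this identity from $Q_\ell = \alpha_\ell \sum_{j=\ell}^{n-1} G_{\ell+1:j}$ gives the key decomposition
\[
Q_\ell - \bA^{-1} \;=\; \sum_{j=\ell}^{n-1} (\alpha_\ell - \alpha_{j+1})\, G_{\ell+1:j} \;-\; \bA^{-1}\, G_{\ell+1:n},
\]
which isolates a ``nonstationarity'' term driven by the step-size differences $\alpha_\ell - \alpha_{j+1}$ and a truncation boundary term.

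Next, I would estimate each term pointwise in $\ell$. Writing $\alpha_\ell - \alpha_{j+1} = \sum_{m=\ell}^{j}(\alpha_m - \alpha_{m+1})$ and swapping the order of summation yields
\[
\sum_{j=\ell}^{n-1}(\alpha_\ell - \alpha_{j+1})\, G_{\ell+1:j} \;=\; \sum_{m=\ell}^{n-1}(\alpha_m - \alpha_{m+1})\, G_{\ell+1:m} \sum_{j=m}^{n-1} G_{m+1:j},
\]
using the factorization $G_{\ell+1:j} = G_{m+1:j}\, G_{\ell+1:m}$ for $j \geq m$. The mean value theorem applied to $x \mapsto c_0/x^\gamma$ gives $\alpha_m - \alpha_{m+1} \leq c_0\gamma/(m+k_0)^{\gamma+1}$, while the exponential stability $\normop{G_{\ell+1:j}}[Q] \leq \exp\bigl(-\tfrac{a}{2}\sum_{m=\ell+1}^j \alpha_m\bigr)$ from \Cref{prop:hurwitz_stability} applied to the deterministic matrices $\Id - \alpha_j\bA$ yields, via a standard geometric-series argument, the uniform bound $\alpha_m \sum_{j=m}^{n-1}\normop{G_{m+1:j}}[Q] \lesssim 1$. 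Combining these reduces the estimate to $\sum_{m=\ell}^{n-1}\tfrac{1}{m+k_0}\normop{G_{\ell+1:m}}[Q]$, which is of order $(\ell+k_0)^{\gamma-1}$ since the exponential decay of $\normop{G_{\ell+1:m}}[Q]$ becomes effective once $m-\ell$ exceeds $(\ell+k_0)^\gamma$. The net pointwise bound is
\[
\normop{Q_\ell - \bA^{-1}}[Q] \;\lesssim\; (\ell+k_0)^{\gamma-1} + \exp\Bigl(-\tfrac{a}{2}\sum_{m=\ell+1}^n \alpha_m\Bigr).
\]

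Finally, I would pass from $Q_\ell - \bA^{-1}$ to $\Sigma_n - \Sigma_\infty$ via the splitting $Q_\ell \noisecov Q_\ell^\top - \bA^{-1}\noisecov\bA^{-\top} = (Q_\ell - \bA^{-1})\noisecov\, Q_\ell^\top + \bA^{-1}\noisecov\, (Q_\ell - \bA^{-1})^\top$, noting that $\normop{Q_\ell}[Q]$ is uniformly bounded by the same geometric-series computation. This yields
\[
\norm{\Sigma_n - \Sigma_\infty} \;\lesssim\; \frac{1}{n}\sum_{\ell=1}^{n-1}\normop{Q_\ell - \bA^{-1}}[Q] + \frac{1}{n}\norm{\Sigma_\infty}.
\]
The polynomial contribution is $\frac{1}{n}\sum_{\ell=1}^{n-1}(\ell+k_0)^{\gamma-1} \lesssim n^{\gamma-1}/\gamma$; the boundary exponential contribution, after the substitution $u = n - \ell$ and using $\sum_{m=\ell+1}^n \alpha_m \gtrsim u/(n+k_0)^\gamma$ for small $u$, becomes a geometric sum of total size $O(n^\gamma)$ and likewise contributes $O(n^{\gamma-1})$ after division by $n$. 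The main obstacle is precisely this boundary regime where $\ell$ is close to $n$: there $Q_\ell$ is a short truncated sum that is not at all close to $\bA^{-1}$, so a naive uniform bound yields only $O(1)$. The rate $n^{\gamma-1}$ emerges because the number of such ``bad'' indices is itself $O(n^\gamma)$ under the schedule $\alpha_k = c_0/(k+k_0)^\gamma$, so their average contribution is sharply controlled.
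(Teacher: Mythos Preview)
Your approach is correct and reaches the same $n^{\gamma-1}$ rate, but it differs from the paper's in two structural respects worth noting.

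First, for the linear cross term, the paper does not average the pointwise bound on $Q_\ell-\bA^{-1}$; instead it exploits the exact summation identity $\sum_{t=1}^{n-1}(Q_t-\bA^{-1}) = -\bA^{-1}\sum_{j=1}^{n-1}G_{1:j}$ (see \eqref{repr:sumqtminusa}), which collapses $D_1$ to a single bounded sum $\sum_j G_{1:j}$ of size $O((1+k_0)^\gamma)$. This bypasses the need to sum either the polynomial $(\ell+k_0)^{\gamma-1}$ tail or the exponential boundary term separately. Your approach works too, but the paper's identity is sharper and avoids the ``bad indices near $n$'' discussion entirely for $D_1$.

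Second, the paper keeps the symmetric decomposition into $D_1$ (linear in $Q_t-\bA^{-1}$) and $D_2$ (quadratic), then expands $D_2$ via $Q_t-\bA^{-1}=S_t-\bA^{-1}G_{t:n}$ into four sub-terms $D_{21},\dots,D_{24}$, each controlled through \Cref{lem:st_bound}, which establishes $\normop{S_t}\lesssim (t+k_0)^{\gamma-1}$. Your asymmetric splitting $(Q_\ell-\bA^{-1})\noisecov Q_\ell^\top + \bA^{-1}\noisecov(Q_\ell-\bA^{-1})^\top$ absorbs the quadratic part into the uniformly bounded factor $\normop{Q_\ell}$, which is simpler and avoids the four-term expansion at the cost of slightly looser constants. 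Your pointwise bound on the ``nonstationarity'' piece $\sum_j(\alpha_\ell-\alpha_{j+1})G_{\ell+1:j}$ via the double-telescoping and the observation $\tfrac{1}{m+k_0}\le \tfrac{\alpha_m}{c_0}(\ell+k_0)^{\gamma-1}$ is essentially an alternative proof of \Cref{lem:st_bound}, arguably more transparent than the paper's change-of-variables argument.

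In short: both routes are valid. The paper's use of the summed identity \eqref{repr:sumqtminusa} is a cleaner shortcut for $D_1$, while your asymmetric splitting and telescoping treatment of $S_t$ are more economical than the paper's handling of $D_2$.
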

The proof of \Cref{lem:sigma_n_bound} is given in \Cref{appendix:proof_sigma_n_bound}. With Lidskii’s inequality, we obtain that 
\[
\lambda_{\min}(\Sigma_n) \geq \lambda_{\min}(\Sigma_\infty) - \normop{\Sigma_\infty-\Sigma_n}\eqsp. 
\]
Therefore, using \Cref{lem:sigma_n_bound}, we can lower bound $\lambda_{\min}(\Sigma_n)$, provided that $n$ is large enough. This is formalized in the following assumption:
\begin{assum}
\label{assum:sample_size}
The sample size $n$ satisfies the conditions $n \geq k_0 + 1$ and  
$n^{1-\gamma} \geq 2 \ConstC_{\ref{lem:sigma_n_bound}}/\lambda_{\min}(\Sigma_\infty)$.
\end{assum}

With the assumptions above, we obtain the following Gaussian approximation result. 

\begin{theorem}
\label{theo:GAR}
Assume \Cref{assum:iid}, \Cref{assum:noise-level}, \Cref{assum:step-size}($2 \vee \log d$),  \Cref{assum:sample_size}. Then, with $\eta \sim \mathcal{N}(0,\Id)$, 
\begin{align}
\kolmogorov(\sqrt{n}(\bar{\theta}_{n} - \thetas), \Sigma_n^{1/2} \eta) \leq \frac{\ConstC_{\ref{theo:GAR}, 1}}{\sqrt{n}} + \frac{\ConstC_{\ref{theo:GAR}, 2}}{n^{\gamma/2}} + \ConstC_{\ref{theo:GAR}, 3} \varphi_{n} + \frac{\ConstC_{\ref{theo:GAR}, 4} \norm{\theta_0 - \thetas}}{n} \eqsp,
\end{align}
where $\varphi_{n}$ is defined in \eqref{eq:varphi_n_def} and $\ConstC_{\ref{theo:GAR}, 1}$, $\ConstC_{\ref{theo:GAR}, 2}$, $\ConstC_{\ref{theo:GAR}, 3}$, $\ConstC_{\ref{theo:GAR}, 4}$ are constants defined in \eqref{eq:constants_th_2_def}.
\end{theorem}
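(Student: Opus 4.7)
The plan is to apply the randomized concentration inequality \eqref{eq:shao_zhang_bound} to the standardized statistic $T=\Sigma_n^{-1/2}\sqrt{n}(\bar\theta_n-\thetas)$. By \Cref{assum:sample_size} combined with \Cref{lem:sigma_n_bound}, the matrix $\Sigma_n$ is invertible, so the convex distance is invariant under the nonsingular linear map $\Sigma_n^{1/2}$ and $\kolmogorov(\sqrt{n}(\bar\theta_n-\thetas),\Sigma_n^{1/2}\eta)=\kolmogorov(T,\eta)$. We use the decomposition $T=W+D$ from \eqref{eq:linear and nonlinear terms}--\eqref{eq: linear part}, in which $W=\sum_{\ell=1}^{n-1}\xi_\ell$ with $\xi_\ell=-n^{-1/2}\Sigma_n^{-1/2}Q_\ell\funnoisew_\ell$ is a sum of i.i.d.\ centered vectors satisfying $\sum_\ell\PE[\xi_\ell\xi_\ell^\top]=\Id_d$, while $D$ collects the nonlinear contribution $n^{-1/2}\Sigma_n^{-1/2}\sum_k \Hnalpha{k}{0}$ and the transient contribution $n^{-1/2}\Sigma_n^{-1/2}\sum_k\ProdB_{1:k}(\theta_0-\thetas)$. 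It then remains to bound each of the three summands on the right-hand side of \eqref{eq:shao_zhang_bound}.

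For the first summand $259\,d^{1/2}\Upsilon_n$, the bound $\|\xi_\ell\|\le n^{-1/2}\|\Sigma_n^{-1/2}\|\|Q_\ell\|\supconsteps$ together with a standard uniform estimate on $\|Q_\ell\|$ (Abel summation plus the geometric decay of $G_{m:k}$ from \Cref{prop:hurwitz_stability}) and the identity $\sum_\ell\PE[\|\xi_\ell\|^2]=d$ to save one power of $\|\xi_\ell\|$ gives $\Upsilon_n$ of order $d/\sqrt{n}$, producing the $\ConstC_{\ref{theo:GAR},1}/\sqrt{n}$ term. For the second summand, Cauchy--Schwarz reduces $\PE[\|W\|\|D\|]$ to $\sqrt{d}\,\PE^{1/2}[\|D\|^2]$, and the two pieces of $D$ are handled separately: the transient piece is controlled by the exponential stability of \Cref{lem:matr_product_as_bound}, yielding the $\|\theta_0-\thetas\|/n$ contribution, while the fluctuation piece $n^{-1/2}\sum_k \Hnalpha{k}{0}$ is controlled by the same $H$-moment bounds developed in the proof of \Cref{th:pth_moment_bound}, yielding the $n^{-\gamma/2}$ and $\varphi_n$ contributions.

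The remaining summand $\sum_{\ell=1}^{n-1}\PE[\|\xi_\ell\|\|D-D^{(\ell)}\|]$ is the most delicate, where $D^{(\ell)}$ is obtained from $D$ by replacing $\State_\ell$ with an independent copy $\State_\ell'$. A perturbation at time $\ell$ modifies only $\funcAw_\ell$ and $\funnoisew_\ell$ and propagates forward through the recursions for $\Jnalpha{k}{0}$ and $\Hnalpha{k}{0}$ and through the products $\ProdB_{1:k}$. Expressing $D-D^{(\ell)}$ as a telescoping forward propagation of these perturbations and applying \Cref{lem:matr_product_as_bound} together with sharp $L^2$ control on $\Jnalpha{k-1}{0}$ yields, after summation over $k$ and a further Cauchy--Schwarz over $\ell$, bounds of the same orders $n^{-1/2}$, $n^{-\gamma/2}$ and $\varphi_n$. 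The main obstacle is carrying out this coupling analysis with enough sharpness to preserve the leading $n^{-1/2}$ rate; crude per-coordinate bounds on $\|D-D^{(\ell)}\|$ would inflate it by a factor growing with $n$, and the recovery relies crucially on the separation-of-scales between the linear part $\Jnalpha{k}{0}$ and the nonlinear remainder $\Hnalpha{k}{0}$ that already powered the proof of \Cref{th:pth_moment_bound}.
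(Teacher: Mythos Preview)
Your proposal is correct and follows essentially the same route as the paper: apply the Shao--Zhang randomized concentration bound \eqref{eq:shao_zhang_bound} to $T=\Sigma_n^{-1/2}\sqrt{n}(\bar\theta_n-\thetas)$ via the decomposition \eqref{eq:linear and nonlinear terms}--\eqref{eq: linear part}, bound $\Upsilon_n$ by combining the uniform estimate $\|Q_\ell\|\le\ConstC_{\ref{prop:Qell:bound}}$ with $\sum_\ell\PE\|\xi_\ell\|^2=d$, bound $\PE[\|W\|\|D\|]$ by Cauchy--Schwarz plus the $H^{(0)}=J^{(1)}+J^{(2)}+H^{(2)}$ expansion and \Cref{lem:Jk_ell:p_moment}, and bound the copy-difference term by propagating the single-index perturbation through the recursions (packaged in the paper as \Cref{lem:gamma_copies_difference} and \Cref{lem:Hk:copy_difference}). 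Two minor remarks: the $\xi_\ell$ are independent but not identically distributed (the $Q_\ell$ differ), and the third summand contributes only the $n^{-\gamma/2}$ and $\|\theta_0-\thetas\|/n$ scales, not $\varphi_n$ or $n^{-1/2}$; neither point affects the validity of your outline.
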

The constants $\ConstC_{\ref{theo:GAR}, 2} - \ConstC_{\ref{theo:GAR}, 4}$ contain factors that scale as $1/(1-\gamma)$, and the result in the stated form is not valid when setting $\gamma = 1$. At the same time, following the technique of Shao and Zhang \cite[Theorem~3.4]{shao2022berry}, it is possible to show that a counterpart of \Cref{theo:GAR} holds when $\gamma = 1$, at the cost of additional $\log{n}$ factors arising in the r.h.s. of the bound and under additional constraints on the constant $c_0$, which cannot be chosen too small in this case.
\begin{remark}
\label{rem:th_2}
Under a natural scaling $\supconsteps \leq \sqrt{d} \ConstC_\eps$, where $\ConstC_\eps$ is dimension-free, \Cref{theo:GAR} implies that
\begin{align}
    \kolmogorov(\sqrt{n}(\bar{\theta}_{n} - \thetas), \Sigma_n^{1/2} \eta) \lesssim \frac{d^2}{\sqrt{n}} + \frac{d^{3/2}}{n^{\gamma/2}} + d  \varphi_{n} + \frac{d \log(d) \norm{\theta_0 - \thetas}}{n}\eqsp,
\end{align}
where $\lesssim$ stands for inequality up to a constant not depending upon $n$ and $d$.
\end{remark}
The proof of \Cref{theo:GAR} is provided in \Cref{sec:proof:GAR}. Note that the term $\frac{\ConstC_{\ref{theo:GAR}, 1}}{\sqrt{n}}$ above corresponds to the summand $\Upsilon_n$ from \eqref{eq:shao_zhang_bound}, which is related with the sum of third moments of random vectors forming the linear statistic $W$. The result of \Cref{theo:GAR} shows, that the rate of normal approximation of $\sqrt{n}(\bar{\theta}_{n} - \thetas)$ by $\mathcal{N}(0,\Sigma_n)$ improves when the step sizes $\alpha_k$ are chosen to be less aggressive, that is, when the power $\gamma$ approaches $1$ in \Cref{assum:step-size}. As already mentioned, constants $\ConstC_{\ref{theo:GAR}, 2} - \ConstC_{\ref{theo:GAR}, 4}$ scales with $1/(1-\gamma)$, so the latter conclusion applies when the available number of observations $n$ is large. This aligns with the phenomenon, previously observed for the SGD algorithm \cite{shao2022berry}, \cite{sheshukova2025gaussian} and TD learning \cite{wu2024statistical}. 
\par 
Given the result of \Cref{theo:GAR} and \Cref{lem:sigma_n_bound}, it is possible to quantify the rate of convergence in \eqref{eq:CLT_fort}. Precisely, the following result holds.

\begin{theorem}
\label{th:shao2022_berry} 
Assume \Cref{assum:iid}, \Cref{assum:noise-level}, \Cref{assum:step-size}($2 \vee \log d$),  \Cref{assum:sample_size}. Then, with $\eta \sim \mathcal{N}(0,\Id)$, 
\begin{align}
\label{eq:kolmogorov_bound_non_optimized}
\kolmogorov(\sqrt{n}(\bar{\theta}_{n} - \thetas), \Sigma_{\infty}^{1/2} \eta) \leq \frac{\ConstC_{\ref{theo:GAR}, 1}}{\sqrt{n}} + \frac{\ConstC_{\ref{theo:GAR}, 2}}{n^{\gamma/2}} + \ConstC_{\ref{theo:GAR}, 3} \varphi_{n} + \frac{\ConstC_{\ref{theo:GAR}, 4} \norm{\theta_0 - \thetas}}{n} + \frac{\ConstC_{\ref{th:shao2022_berry}}}{n^{1-\gamma}} \eqsp,
\end{align}
where the constant $\ConstC_{\ref{th:shao2022_berry}}$ is given in \eqref{eq:const_C_10_def}. 
\end{theorem}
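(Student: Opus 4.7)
My approach is to decompose the target convex distance via the triangle inequality
\[
\kolmogorov\bigl(\sqrt{n}(\bar{\theta}_n - \thetas),\; \Sigma_\infty^{1/2}\eta\bigr) \le \kolmogorov\bigl(\sqrt{n}(\bar{\theta}_n - \thetas),\; \Sigma_n^{1/2}\eta\bigr) + \kolmogorov\bigl(\Sigma_n^{1/2}\eta,\; \Sigma_\infty^{1/2}\eta\bigr).
\]
The first term is precisely what \Cref{theo:GAR} controls, contributing the first four summands of \eqref{eq:kolmogorov_bound_non_optimized}. Everything reduces, therefore, to showing that the convex-distance gap between the two centered Gaussians with covariances $\Sigma_n$ and $\Sigma_\infty$ is of order $n^{\gamma-1}$.

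For this Gaussian-vs-Gaussian comparison, the first step is to pass from the covariance gap of \Cref{lem:sigma_n_bound} to a gap for the square roots. Using an operator-Lipschitz bound for the matrix square root together with \Cref{assum:sample_size} (which forces $\lambda_{\min}(\Sigma_n) \geq \lambda_{\min}(\Sigma_\infty)/2$), I would obtain $\normop{\Sigma_n^{1/2} - \Sigma_\infty^{1/2}} \lesssim \normop{\Sigma_n - \Sigma_\infty}/\sqrt{\lambda_{\min}(\Sigma_\infty)} \lesssim n^{\gamma-1}$. Coupling both Gaussians through the same standard normal $\eta \sim \mathcal{N}(0,\Id)$, for any convex set $B$ I would write
\[
\bigl|\proba{\Sigma_n^{1/2}\eta \in B} - \proba{\Sigma_\infty^{1/2}\eta \in B}\bigr| \le \proba{\Sigma_\infty^{1/2}\eta \in \partial_{\|R\|} B},
\]
where $R = (\Sigma_n^{1/2} - \Sigma_\infty^{1/2})\eta$ and $\partial_r B$ denotes the $r$-neighborhood of $\partial B$. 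Truncating on the event $\{\|\eta\| \le c\sqrt{d + \log n}\}$ (whose complement is super-polynomially small by standard Gaussian tail estimates) and invoking Nazarov's anti-concentration inequality for the measure of convex-set boundaries under $\mathcal{N}(0,\Sigma_\infty)$, the right-hand side is bounded uniformly in $B$ by a quantity of order $n^{\gamma-1}$, up to a logarithmic factor and a polynomial dependence on $d$ absorbed into $\ConstC_{\ref{th:shao2022_berry}}$.

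The main obstacle is this last Gaussian-vs-Gaussian comparison step: passing from an operator-norm bound on $\Sigma_n - \Sigma_\infty$ to a convex-distance bound between the corresponding laws genuinely requires anti-concentration on convex sets, so a softer argument (Pinsker, say, which would give total variation) would not produce the correct $n^{\gamma-1}$ dependence. Once this ingredient is in place, summing the two pieces produced by the triangle inequality immediately yields \eqref{eq:kolmogorov_bound_non_optimized} with the residual term $\ConstC_{\ref{th:shao2022_berry}}/n^{1-\gamma}$, and the explicit value of $\ConstC_{\ref{th:shao2022_berry}}$ is inherited from the constant $\ConstC_{\ref{lem:sigma_n_bound}}$ together with the Nazarov constant and $\lambda_{\min}(\Sigma_\infty)^{-1/2}$.
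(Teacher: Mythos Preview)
Your triangle-inequality split and the invocation of \Cref{theo:GAR} for the first piece match the paper exactly. The divergence is in the Gaussian-vs-Gaussian step, and there your dismissal of the total-variation route is the one genuine mistake. The paper bounds $\kolmogorov(\Sigma_n^{1/2}\eta,\Sigma_\infty^{1/2}\eta)$ by the TV distance and applies the sharp Gaussian comparison inequality of Devroye et al.\ \cite{Devroye2018} (see also \cite{BarUly86}):
\[
\|\mathcal{N}(0,\Sigma_n)-\mathcal{N}(0,\Sigma_\infty)\|_{\mathsf{TV}}
\;\le\;\tfrac{3}{2}\,\bigl\|\Sigma_\infty^{-1/2}\Sigma_n\Sigma_\infty^{-1/2}-\Id\bigr\|_F
\;\le\;\frac{3\sqrt{d}}{2\lambda_{\min}(\Sigma_\infty)}\,\normop{\Sigma_n-\Sigma_\infty},
\]
and then \Cref{lem:sigma_n_bound} gives exactly $\ConstC_{\ref{th:shao2022_berry}}/n^{1-\gamma}$ with the stated constant $\ConstC_{\ref{th:shao2022_berry}}=\tfrac{3\sqrt{d}\,\ConstC_{\ref{lem:sigma_n_bound}}}{2\lambda_{\min}(\Sigma_\infty)}$. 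This \emph{is} morally Pinsker: the KL divergence between two centered Gaussians with close covariances is quadratic in the relative covariance gap, so after the square root one recovers a rate linear in $\normop{\Sigma_n-\Sigma_\infty}$, i.e.\ $n^{\gamma-1}$, with no logarithmic overhead.

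Your alternative route---operator-Lipschitz bound for the square root, coupling through a common $\eta$, and anti-concentration on convex shells---is a legitimate argument, but it is strictly heavier and, because of the truncation $\{\|\eta\|\le c\sqrt{d+\log n}\}$, produces an extra $\sqrt{\log n}$ factor that is not present in the constant \eqref{eq:const_C_10_def}. So your plan proves a slightly weaker bound than the theorem actually states. (A minor terminological point: the shell estimate for \emph{general} convex sets under a Gaussian is Ball's $d^{1/4}$ surface-area bound; Nazarov's inequality is specific to axis-aligned rectangles.)
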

The proof of \Cref{th:shao2022_berry} is provided in \Cref{sec:proof:GAR}. 
\paragraph{Discussion.} 
The bound established in \Cref{th:shao2022_berry} achieves the optimal normal approximation error rate of $n^{-1/3}$ for Polyak-Ruppert averaged estimates. This optimal rate is attained using step sizes $\alpha_{k} = c_{0}/(k+k_0)^{2/3}$, corresponding to the decay exponent $\gamma = 2/3$ in \eqref{eq:kolmogorov_bound_non_optimized}. 
\par 
This $n^{-1/3}$ rate aligns with recent results for policy evaluation in reinforcement learning. Wu et al.~\cite{wu2024statistical} established the same convergence rate for the temporal difference (TD) learning algorithm. Their analysis employs step sizes scaling as $c_0/k^{2/3}$, which is consistent with the optimal choice predicted by \Cref{th:shao2022_berry}. Related work Wu et al.~\cite{wu2025markovchains} studies TD learning under Markov noise, achieving a slightly slower rate of $n^{-1/4}$ (up to logarithmic factors) in convex distance. Another relevant contribution is provided by Srikant~\cite{srikant2024rates}, who analyzed temporal-difference learning with Markov noise and established a convergence rate of $n^{-1/6}$ in Wasserstein distance for the step sizes $\alpha_k = c_0/k^{2/3}$. Applying the relation between convex distance and Wasserstein distance \cite[Eq.~(3)]{nourdin2022multivariate}, this bound translates to a convergence rate of order $n^{-1/12}$ in convex distance.
\par 
The fastest known rate for $\kolmogorov(\sqrt{n}(\bar{\theta}_{n} - \thetas), \Sigma_{\infty}^{1/2} \eta)$ in the general LSA problem is $n^{-1/4}$ and is due to \cite{samsonov2024gaussian}. Our rate improvement compared to this work is achieved through a tighter analysis of the normal approximation with $\mathcal{N}(0,\Sigma_n)$, which is carried out in \Cref{theo:GAR}. We then estimate $\kolmogorov(\mathcal{N}(0,\Sigma_n),\mathcal{N}(0,\Sigma_\infty))$ using the Gaussian comparison inequality \cite{BarUly86,Devroye2018}. The authors of \cite{samsonov2024gaussian} used a different error decomposition for the statistic $\sqrt{n}(\bar{\theta}_{n} - \thetas)$ based on the summation by parts representation \cite{mou2020linear,durmus2022finite}, with a linear statistic with covariance matrix $\Sigma_{\infty}$. This approach avoids the Gaussian comparison step but induces a slower approximation rate compared to \Cref{th:shao2022_berry}.
\par 
Several related studies \cite{pmlr-v99-anastasiou19a,shao2022berry,sheshukova2025gaussian} have investigated the normal approximation problem \eqref{eq:CLT_fort} for stochastic gradient descent (SGD) algorithms targeting strongly convex objective functions. We provide a comparative analysis of these results relative to our LSA framework. Anastasiou et al.~\cite{pmlr-v99-anastasiou19a} studied both SGD setting and LSA with symmetric positive-definite system matrix $\barA = \barA^{\top} \succ 0$, achieving normal approximation rates of order $n^{-1/2}$ for integral probability metrics $\metricd[{[2]}]$ induced by twice-differentiable test functions. Precise definition of $\metricd[{[2]}]$ is given in \Cref{appendix:proofs}. This result has two important limitations. First, the relation between the Kolmogorov distance and $\metricd[{[2]}]$ metric (see e.g. \cite[Proposition~2.1]{gaunt2023bounding}) suggests that the rate $n^{-1/2}$ translates to the one of $n^{-1/6}$ when considering the Kolmogorov distance. Hence, the implied rate in convex distance is not faster than $n^{-1/6}$, which is substantially slower than the $n^{-1/3}$ rate achieved in \eqref{eq:kolmogorov_bound_non_optimized}. Second, a detailed examination of \cite[Theorem~4]{pmlr-v99-anastasiou19a} reveals that their bound depends on a quantity $\rho(\eta,t)$ which scales, in notations of the current paper, with the sample size $n$ and the step-size exponent $\gamma$. It is not clear that this term can be uniformly bounded independently of $n$, suggesting that the convergence rate in a sense of $\metricd[{[2]}]$ is actually slower than $n^{-1/2}$, depending upon $\gamma$. 
\par 
Shao and Zhang~\cite{shao2022berry} developed the SGD counterpart of our result of \Cref{theo:GAR}. Their analysis focused on Gaussian approximation with the normal distribution $\mathcal{N}(0,\Sigma_n)$ from \eqref{eq:sigma_n_definition}, rather than $\mathcal{N}(0,\Sigma_\infty)$. These results were further developed in Sheshukova et al.~\cite{sheshukova2025gaussian}, where the authors shown a counterpart of \Cref{th:shao2022_berry} with a convergence rate of order $n^{-1/4}$ when setting $\gamma = 3/4$. This rate is slower than the one corresponding to the LSA setting. This gap arises from the nonlinearity of SGD recursions, which introduces an additional error term in the r.h.s. of \eqref{eq:shao_zhang_bound}. 

\subsection{Lower bounds for the LSA algorithm.}
Lower bounds for the convex distance $\kolmogorov(\sqrt{n}(\bar{\theta}_{n} - \thetas), \Sigma_{\infty}^{1/2} \eta)$ were studied in \cite{sheshukova2025gaussian} for the setting of SGD algorithm. The particular instance of this algorithm, which covers also to the LSA setting, can be written as follows. Consider the simplest $1$-dimensional LSA problem with $\bA = 1$, $\barb = 0$, that is, simply the equation
\[
\textstyle 
\theta = 0\eqsp.
\]
and $\funcAw_k = 1$, $\funcbw_{k} \sim \mathcal{N}(0,1)$ for any $k \in \nset$. Here $\thetas = 0$. The corresponding sequence of LSA updates can be written as follows:
\begin{equation}
\label{eq:lower_bound_iter}
\textstyle
\theta_{k+1} = \theta_k - \alpha_k (\theta_k +\xi_{k+1}), \eqsp k \geq 0 \eqsp,
\end{equation}
where $\theta_0 \in \rset$, $\alpha_k = c_0 (1+k)^{-\gamma}$, $1/2 < \gamma < 1$, and $\xi_k = -\funcbw_{k}$ are i.i.d. standard gaussian random variables. Then \cite[Proposition 1]{sheshukova2025gaussian} shows that for large enough $n$ it holds that 
\begin{align}
    \label{eq:lower_bound_kolmogorov}
    \kolmogorov(\sqrt{n} (\btheta_n - \thetas), \mathcal{N}(0, 1)) > \frac{C(\gamma, c_0)}{n^{1-\gamma}} \eqsp,
\end{align}
where $C(\gamma, c_0)$ is a constant that depends upon $\gamma, c_0$. This result implies that the rate of convergence in \Cref{th:shao2022_berry} is optimal for $\gamma \in [2/3, 1)$, since the term $\frac{\ConstC_{\ref{th:shao2022_berry}}}{n^{1-\gamma}}$ dominates the r.h.s. in this regime. Similar result for TD learning was shown in \cite{wu2024statistical}. However, to the best of our knowledge, there is no matching lower bound for the setting when $\gamma \in (1/2, 2/3)$. 

\section{Multiplier bootstrap for LSA}
\label{sec:bootstrap}
To perform statistical inference with the Polyak-Ruppert estimator $\bar{\theta}_{n}$, we propose an online bootstrap procedure that recursively updates the LSA estimate and a set of randomly perturbed LSA trajectories using the same set of noise variables $Z_k$. The proposed method follows the procedure outlined in \cite{JMLR:v19:17-370}. This approach does not rely on the asymptotic distribution of the error $\sqrt{n}(\bar{\theta}_{n} - \thetas)$ and does not require approximation of the covariance matrix $\Sigma_{\infty}$, which is known to be computationally expensive \cite{chen2020aos}. 
\par 
We describe the suggested procedure as follows. We assume that on the same probability space $(\Omega,\mathcal{F},\PP)$ where the sequence $\{Z_k\}_{k \in \nset}$ is defined, we can construct $M \in \nset$ sequences of \iid\ random variables $\{w_{k}^{\ell}\}$, $1 \leq k \leq n$ and $1 \leq \ell \leq M$, which are independent of $\{Z_k\}_{k \in \nset}$. We assume that $\PE[w_{k}^{\ell}] = 1$, $\var[w_{k}^{\ell}] = 1$, and $\PE[|w_{k}^{\ell}-1|^3] = m_3 < \infty$. Using these weight sequences, we recursively update $M$ randomly perturbed LSA estimates according to:
\begin{equation}
\label{eq:lsa_bootstrap}
\begin{split}
\textstyle \theta_{k}^{\boot,\ell} 
&= \textstyle \theta_{k-1}^{\boot,\ell} - \alpha_{k} w_k^{\ell}\{ \funcA{Z_k} \theta_{k-1}^{\boot,\ell} - \funcb{Z_k} \} \eqsp,~~ k \geq 1 \eqsp, ~~ \theta_{0}^{\boot,\ell} = \theta_{0} \eqsp, \\
\textstyle \prtheta_{n}^{\boot,\ell} 
&= \textstyle n^{-1}     \sum_{k=0}^{n-1} \theta_k^{\boot,\ell} \eqsp, ~~n \geq 1 \eqsp.
\end{split}
\end{equation}
These weights add additional random perturbations to the LSA process \eqref{eq:lsa}. We set $\mathcal{Z}^{n-1} = \{Z_{\ell}\}_{1 \leq \ell \leq n-1}$ and use the notation $\PPb = \PP(\cdot | \mathcal{Z}^{n-1})$ and $\PEb = \PE(\cdot | \mathcal{Z}^{n-1})$ for the corresponding conditional probability and expectation. We refer to them as the "bootstrap world" probability and expectation, respectively. We adopt the shorthand notation $\bar{\theta}_{n}^\boot$ for $\bar{\theta}_{n}^{\boot,1}$.
\par 
The fundamental principle behind \eqref{eq:lsa_bootstrap} is that the conditional distribution of the perturbed bootstrap samples $\sqrt{n} (\bar{\theta}_{n}^\boot - \bar{\theta}_n)$ given the observed data $\mathcal{Z}^{n-1}$ (the "bootstrap world" distribution) approximates the distribution of the target quantity $\sqrt{n} (\bar{\theta}_n - \thetas)$. Specifically, \cite{JMLR:v19:17-370} established that
\begin{equation}
\label{eq:boot_validity_supremum}
\sup_{B \in \Conv(\rset^{d})} |\PPb(\sqrt n (\bar{\theta}_{n}^\boot - \bar{\theta}_n) \in B ) - \PP(\sqrt n (\bar{\theta}_n - \thetas) \in B)| \to 0
\end{equation}
in $\PP$-probability as $n \to \infty$. We refer to this result as the asymptotic validity of the procedure \eqref{eq:lsa_bootstrap} and aim to quantify the rate in \eqref{eq:boot_validity_supremum}. While no closed-form expression exists for $\PPb(\sqrt{n}(\bar{\theta}_{n}^{\boot}-\bar{\theta}_n)\in B)$, this probability can be approximated numerically via \eqref{eq:lsa_bootstrap} by simulating a sufficiently large number $M$ of perturbed trajectories. Standard Monte Carlo theory (see, e.g.,~\cite[Section~5.1]{shao2003mathematical}) indicates that this approximation achieves accuracy of order $M^{-1/2}$. Consider the following assumption:
\begin{assum} 
\label{assum:step-size-bootstrap} 
The step size offset $k_0$ satisfies
\begin{equation}
\label{eq:k0:conditions}
\textstyle 
k_0^\gamma \geq \max\bigl\{2h(n) \bConst{A} \sqrt{\qcond}, \; \frac{c_0 h(n)}{\min\{1, \alpha_\infty\}}, \; \frac{8 \bConst{A}^2 c_0 \sqrt{\qcond} \rme h(n)}{a(2-2^\gamma)}, \; \frac{c_0 \log^2(5n)}{\min\{1, \; a\}}\bigr\}\eqsp, 
\end{equation}
where $h(n)$ is defined as
\begin{equation} 
\label{eq:block_size_constraint}
\textstyle 
h(n) := \biggl\lceil \biggl(\frac{8\bConst{A} \sqrt{\qcond} (1+2\log(10n^3d))}{a(2-2^\gamma)}\biggr)^2 \biggr\rceil\eqsp.
\end{equation} 
Additionally, the sample size $n$ must be sufficiently large such that 
\begin{equation}
\label{eq:sample_size_constraint_A5}
\lambda_{\min}(\Sigma_\infty) \geq \frac{8\sqrt{2} \supconsteps^2 \ConstC_{\ref{prop:Qell:bound}}^2 \sqrt{\log{(10 d n)}}}{\sqrt{n}} + \frac{8 \supconsteps^2 \ConstC_{\ref{prop:Qell:bound}}^2 \log{(10 d n)}}{3n} \eqsp.
\end{equation}
\end{assum} 

The condition \Cref{assum:step-size-bootstrap} ensures that the initial step sizes are not too large, which is crucial for the bootstrap approximation to be valid. We now present the main theoretical result of this section. Our analysis focuses on polynomially decaying step sizes $\gamma_n = c_0/(k_0 + n)^\gamma$ with decay exponent $\gamma \in (1/2, 1)$. 
\begin{theorem} 
\label{th:bootstrap_validity}
Assume \Cref{assum:iid}, \Cref{assum:noise-level}, \Cref{assum:step-size}($\log (5n^3) \vee \log d$), \Cref{assum:sample_size}, \Cref{assum:step-size-bootstrap}. Then with $\PP$ -- probability at least $1 - 1/n$ it holds that
\begin{align}
\sup_{B \in \Conv(\rset^{d})} |\PPb(\sqrt n (\bar{\theta}_{n}^\boot - \bar{\theta}_n) \in B ) - \PP(\sqrt n (\bar{\theta}_n - \thetas) \in B)| \leq \frac{\ConstC_{\ref{th:bootstrap_validity}} \norm{\theta_0 - \thetas} + \Delta_{\ref{th:bootstrap_validity}, 1} }{\sqrt{n}} + \frac{\Delta_{\ref{th:bootstrap_validity}, 2}}{n^{\gamma/2}} + \Delta_{\ref{th:bootstrap_validity}, 3} \varphi_n + \frac{\Delta_{\ref{th:bootstrap_validity}, 4}}{n}  \eqsp, 
\end{align}
where $\ConstC_{\ref{th:bootstrap_validity}}$ is a constant and $\{\Delta_{\ref{th:bootstrap_validity}, i}\}_{i=1}^4$ are polynomials in $\log (n)$ that are defined in \Cref{sec:proof:boot_validity}, see \eqref{eq:delta_th_boot_validity}. 
\end{theorem}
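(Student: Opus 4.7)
The strategy is a triangle-inequality decomposition passing through a data-dependent "bootstrap covariance" $\Sigma_n^\boot$, measurable with respect to $\mathcal{Z}^{n-1}$, that plays the role of $\Sigma_n$ in the bootstrap world. With $\eta \sim \mathcal{N}(0,\Id)$, we use
\begin{align*}
\sup_{B \in \Conv(\rset^d)} &|\PPb(\sqrt n(\bar{\theta}_n^\boot - \bar{\theta}_n) \in B) - \PP(\sqrt n(\bar{\theta}_n - \thetas) \in B)| \\
&\leq \kolmogorov(\sqrt n(\bar{\theta}_n - \thetas), \Sigma_n^{1/2}\eta) + \kolmogorov(\Sigma_n^{1/2}\eta, (\Sigma_n^\boot)^{1/2}\eta) + \kolmogorov_{\PPb}(\sqrt n(\bar{\theta}_n^\boot - \bar{\theta}_n), (\Sigma_n^\boot)^{1/2}\eta).
\end{align*}
The first term is controlled directly by \Cref{theo:GAR}. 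The remaining two terms require a bootstrap-world analog of \Cref{theo:GAR} and a Gaussian covariance comparison. Throughout, we work on a favorable event of $\PP$-probability at least $1 - 1/n$ on which the iterates $\theta_{\ell-1} - \thetas$, products $\ProdB_{m:k}$, and the empirical covariance $\Sigma_n^\boot$ are all well-behaved; this event is constructed by applying Markov's inequality with $p \asymp \log n$ to the moment bounds of \Cref{cor:pth_moment_bound} and to counterparts of \Cref{lem:matr_product_as_bound}, which explains the $\log n$ factors hidden in $\Delta_{\ref{th:bootstrap_validity}, i}$.

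\textbf{Bootstrap recursion and linear extraction.} Subtracting \eqref{eq:lsa} from \eqref{eq:lsa_bootstrap} yields the error recursion
\[
\theta_k^\boot - \theta_k = (\Id - \alpha_k w_k \funcAw_k)(\theta_{k-1}^\boot - \theta_{k-1}) - \alpha_k (w_k - 1) \hfunnoisew_k,\quad \hfunnoisew_k \eqdef \funcAw_k \theta_{k-1} - \funcbw_k.
\]
Conditional on $\mathcal{Z}^{n-1}$, the quantity $\hfunnoisew_k$ is deterministic and the only randomness is $w_k - 1$, which has mean $0$, variance $1$, and bounded third moment. The perturbation-expansion of \eqref{eq:decomp_fluctuation} in the bootstrap world identifies a linear piece driven by $w_\ell - 1$ with coefficients $\alpha_\ell G_{\ell+1:k}$, and a nonlinear remainder. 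Summing and exchanging the order of summation via \eqref{eq:sigma_n_definition} yields that the linear piece has conditional covariance
\[
\textstyle \Sigma_n^\boot \eqdef n^{-1} \sum_{\ell=1}^{n-1} Q_\ell \hfunnoisew_\ell \hfunnoisew_\ell^\top Q_\ell^\top,
\]
with $Q_\ell$ as in \eqref{eq:sigma_n_definition}. One can now apply the Shao–Zhang bound \eqref{eq:shao_zhang_bound} conditionally on $\mathcal{Z}^{n-1}$, exactly mimicking the proof of \Cref{theo:GAR}: the $\Upsilon_n$-type term is controlled using $\PEb[|w_\ell - 1|^3] = m_3$ together with $\supnorm{\hfunnoisew_\ell} \leq \supconsteps + \bConst{A}\|\theta_{\ell - 1} - \thetas\|$ (finite on the favorable event), and the remainder terms $\PEb[\|W\|\|D\|]$ and $\PEb[\|\xi_\ell\|\|D-D^{(\ell)}\|]$ are handled by a perturbation expansion of the bootstrap nonlinear part, using stability of the products $\prod_\ell (\Id - \alpha_\ell w_\ell \funcAw_\ell)$ (stable since $\PE[w_\ell] = 1$). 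This controls the third term in the decomposition by the same shape of bound as in \Cref{theo:GAR}.

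\textbf{Covariance comparison.} For the middle term we apply a Gaussian comparison inequality (\cite{BarUly86, Devroye2018}) together with \Cref{assum:sample_size} ensuring $\lambda_{\min}(\Sigma_n)$ is bounded below. It remains to show $\|\Sigma_n^\boot - \Sigma_n\|$ is small with high probability. Writing $\hfunnoisew_\ell = \funnoisew_\ell + \funcAw_\ell (\theta_{\ell-1} - \thetas)$, we decompose
\[
\textstyle \Sigma_n^\boot - \Sigma_n = n^{-1} \sum_\ell Q_\ell (\funnoisew_\ell \funnoisew_\ell^\top - \noisecov) Q_\ell^\top + \text{(cross and quadratic terms in } \theta_{\ell-1} - \thetas).
\]
The first piece is a sum of independent centered matrices and is controlled by a matrix Bernstein inequality, yielding $\log(n)^{\mathcal{O}(1)}/\sqrt n$ with high probability. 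The cross and quadratic terms are controlled by Cauchy–Schwarz using the high-probability bounds on $\|\theta_{\ell-1} - \thetas\|$ coming from \Cref{cor:pth_moment_bound}, together with the operator-norm control of $\sum_\ell Q_\ell Q_\ell^\top$ available from the proof of \Cref{lem:sigma_n_bound}.

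\textbf{Main obstacle.} The hardest step is the bootstrap-world Gaussian approximation, because the conditional noises $\hfunnoisew_\ell$ are data-dependent, not identically distributed, and encode the entire history of the iterates. The perturbation-expansion analysis of \Cref{sec:moment_bounds_lsa_pr} must be carried out uniformly on a single event where the products $\prod_\ell(\Id - \alpha_\ell w_\ell \funcAw_\ell)$ remain exponentially stable under the joint randomness of $(w_\ell, Z_\ell)$, and simultaneously all conditional third-moment quantities in the Shao–Zhang bound are finite. The appearance of the term $\ConstC_{\ref{th:bootstrap_validity}}\|\theta_0 - \thetas\|/\sqrt n$ (slower than the $1/n$ of \Cref{theo:GAR}) is traceable to this step: the bootstrap noise $\hfunnoisew_\ell$ retains the transient bias through $\theta_{\ell-1}$, and, after averaging with weights $Q_\ell$, this bias propagates with only a $1/\sqrt n$ decay rather than $1/n$.
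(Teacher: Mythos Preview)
Your proposal follows essentially the same architecture as the paper's proof: the three–term triangle inequality through $\Sigma_n$ and a data-dependent $\Sigma_n^\boot$, Gaussian comparison for the middle term, and a conditional (bootstrap-world) application of the Shao--Zhang bound \eqref{eq:shao_zhang_bound} for the third term, all carried out on a high-probability event built from $p\asymp\log n$ moment bounds. The overall strategy is correct.

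There is one structural difference worth noting. You take as linear part $W^\boot=-n^{-1/2}\sum_\ell(w_\ell-1)Q_\ell\hfunnoisew_\ell$ with the \emph{full} bootstrap noise $\hfunnoisew_\ell=\funnoisew_\ell+\funcAw_\ell(\theta_{\ell-1}-\thetas)$, so that $\Sigma_n^\boot=n^{-1}\sum_\ell Q_\ell\hfunnoisew_\ell\hfunnoisew_\ell^\top Q_\ell^\top$. The paper instead peels off the $\funcAw_\ell(\theta_{\ell-1}-\thetas)$ piece and puts it into $D^\boot$, keeping only $\funnoisew_\ell$ in the linear part; its $\Sigma_n^\boot=n^{-1}\sum_\ell Q_\ell\funnoisew_\ell\funnoisew_\ell^\top Q_\ell^\top$ satisfies $\PE[\Sigma_n^\boot]=\Sigma_n$ exactly, so the covariance comparison reduces to a single clean matrix-Bernstein step on i.i.d.\ centered summands. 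Your version requires the extra cross and quadratic terms you mention; this is doable but genuinely more work, and you then need a high-probability analogue of \Cref{lem:gamma_deviation_bound} with $\hfunnoisew_\ell$ in place of $\funnoisew_\ell$ to control the corresponding piece of $D^\boot$. Either route reaches the stated rates.

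The one point where your sketch is too thin is the line ``stability of the products $\prod_\ell(\Id-\alpha_\ell w_\ell\funcAw_\ell)$ (stable since $\PE[w_\ell]=1$)''. This is the hardest technical ingredient of the paper. Under $\PPb$ the matrices $\Id-\alpha_\ell w_\ell\funcAw_\ell$ have $\funcAw_\ell$ frozen but $w_\ell$ random and \emph{unbounded}, so the paper's Lemma~\ref{lem:matr_product_as_bound}/\Cref{cor:norm_Gamma_m_n} machinery (which needs a.s.\ bounded increments) does not apply directly. The paper obtains the required conditional exponential stability via a blocking argument (\Cref{prop:product_random_matrix_bootstrap}, \Cref{lem:product_ramdom_matrix_aux}): on an event $\Omega_4$ defined by a matrix-Bernstein bound on $\sum_\ell\alpha_\ell(\funcAw_\ell-\bA)$ over blocks of length $h(n)$ from \eqref{eq:block_size_constraint}, one shows a one-block contraction in $Q$-norm and then multiplies blocks. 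This is precisely what forces the additional structural assumption \Cref{assum:step-size-bootstrap}. Without this step, the higher-order bootstrap expansion terms $H_k^{\boot,L}$ and the quantities $\PEb[\|\xi_\ell^\boot\|\|D^\boot-D^{\boot,\ell}\|]$ cannot be bounded, so you should treat this as a genuine lemma to be proved rather than an observation.
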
 
\begin{proof}
\; We provide here a high-level overview of the proof and refer the reader to \Cref{sec:proof:boot_validity} for a detailed exposition. The main ingredient of the proof is a Gaussian approximation via the randomized concentration inequalities approach \cite{shao2022berry}. The latter is carried out both for $\sqrt{n}(\bar{\theta}_n - \thetas)$ under $\PP$ and for $\sqrt{n}(\bar{\theta}_n^\boot - \bar{\theta}_n)$ under $\PPb$. These two results are then combined using a suitable Gaussian comparison inequality. The main steps of the proof are outlined in the diagram presented below:
\begin{figure}[htbp]
\label{fig:high-level-sketch-of-proof}
\centering
\begin{tikzcd}[column sep=140pt, row sep=40pt]
\text{Real world:\quad} \sqrt{n}(\bar{\theta}_n - \thetas)
  \arrow[<->]{r}{\parbox{4cm}{\centering Gaussian approximation\\under $\mathbb{P}$}}
  &\mathcal{N}(0, \Sigma)
  \arrow[<->]{d}{\parbox{3cm}{\centering Gaussian\\comparison}} \\
\text{Bootstrap world: } \sqrt{n}(\bar{\theta}_n^\boot - \bar{\theta}_n)
  \arrow[<->]{r}{\parbox{4cm}{\centering Gaussian approximation\\under $\mathbb{P}^\boot$}}
  &\mathcal{N}(0, \Sigma^\boot)
\end{tikzcd}
\end{figure}
The principal question that arises here is related with the choice of the approximating normal distribution $\Sigma$ and its bootstrap counterpart $\Sigma^{\boot}$. In the earlier work \cite{samsonov2024gaussian}, the authors used $\Sigma = \Sigma_{\infty}$. As indicated by \Cref{theo:GAR} and \Cref{th:shao2022_berry}, this does not appear to be an optimal choice, as it fails to provide an approximation rate faster than $n^{-1/3}$—at least when $\gamma \in (2/3; 1)$—due to the lower bound \eqref{eq:lower_bound_kolmogorov}. At the same time, $\sqrt{n}(\bar{\theta}_n - \thetas)$ can be approximated by $\mathcal{N}(0, \Sigma_n)$ at a rate approaching $1/\sqrt{n}$. This is the reason why we use $\Sigma = \Sigma_n$ in the present paper. The second principal difficulty in the proof is more technical and is related to the fact that applying a randomized concentration approach under $\PPb$ requires a representation
\begin{equation}
\label{eq:linear_part_proof}
\sqrt{n}(\bar{\theta}_n^\boot - \bar{\theta}_n) = W^{\boot} + D^{\boot}\eqsp,
\end{equation}
where $W^{\boot}$ is a linear statistic with $\PEb[W^{\boot} \{W^{\boot}\}^{\top}] =: \Sigma_n^\boot$. Since we aim to prove Gaussian approximation under $\PPb$, by "linear statistic" we mean linearity in the bootstrap weights $w_{\ell}$. In addition to \eqref{eq:linear_part_proof}, we need to ensure that $\Sigma_n^\boot$ is "close" to $\Sigma_n$ in an appropriate sense. We provide a detailed exposition, together with the definition of the statistics $W^{\boot}$, $D^{\boot}$, and $\Sigma_n^\boot$, in \Cref{subsec:linear_part_def}.
\end{proof}

The terms $\{\Delta_{\ref{th:bootstrap_validity}, i}\}_{i=1}^4$ exhibit similar behavior as the constants from \Cref{theo:GAR} and scale with the factor $1/(1-\gamma)$. Thus the special setting of $\gamma = 1$ requires separate treatment and is not covered by \Cref{th:bootstrap_validity} in the present form. Similarly to \Cref{theo:GAR}, we expect that in the particular setting of $\gamma = 1$, the conclusion of \Cref{th:bootstrap_validity} is still valid, probably with additional $\log{n}$ factors appearing in the r.h.s. and given additional constraints on $c_0$. 

\begin{remark}
    \label{remark_th_7}
    Assuming a natural scaling $\supconsteps \leq \sqrt{d} \ConstC_\eps$, where $\ConstC_\eps$ is dimension-free, \Cref{th:bootstrap_validity} writes as
    \begin{multline}
        \sup_{B \in \Conv(\rset^{d})} |\PPb(\sqrt n (\bar{\theta}_{n}^\boot - \bar{\theta}_n) \in B ) - \PP(\sqrt n (\bar{\theta}_n - \thetas) \in B)| \lesssim \frac{d^2 + d^{3/2} \sqrt{\log(dn)}}{\sqrt{n}} \\ + \frac{d^{3/2} \log n + \sqrt{d} \log^{2\gamma} n}{n^{\gamma/2}}  + d \varphi_n + \frac{d^{3/2} \log (dn)}{n} \eqsp.
    \end{multline}
\end{remark}

\paragraph{Discussion.} 

The direct counterpart of \Cref{th:bootstrap_validity} with the slower approximation rate (with order up to order $n^{-1/4}$ up to logarithmic factors in $n$) was obtained in \cite{samsonov2024gaussian}. The main reason for improvement in the current paper is the choice of the approximating matrix $\Sigma$ in \Cref{fig:high-level-sketch-of-proof}. The authors in \cite{samsonov2024gaussian} used $\Sigma = \Sigma_n$ contrary to the choice $\Sigma = \Sigma_\infty$ employed in \Cref{th:bootstrap_validity}. To our knowledge, the closest result to ours is the one of \cite{wu2024statistical}. In this paper within the plug-in methods the authors obtain an estimator $\widehat{\Sigma}_n$ of the asymptotic covariance $\Sigma_{\infty}$ and provide high-probability error bounds 
\[
\kolmogorov(\sqrt{n}(\bar{\theta}_{n} - \thetas),\mathcal{N}(0,\widehat{\Sigma}_n)) \lesssim \frac{1}{n^{1/3}}\eqsp,
\]
which is attained when the step size exponent $\gamma = 2/3$. \Cref{th:bootstrap_validity} provides approximation of order up to $1/\sqrt{n}$ when $\gamma \to 1$.

\section{Proofs}
\label{sec:proofs}

\subsection{Proofs of \Cref{sec:moment_bounds_lsa_pr}}
\label{sec:proof:proofs_moments}
In this section we provide additional details on the perturbation-expansion technique \cite{aguech2000perturbation,durmus2022finite}. Recall that we can represent the fluctuation component of the error $\vtheta_{k}$ defined in \eqref{eq:lsa_error} as $\vtheta_{k} = \Jnalpha{k}{0}+ \Hnalpha{k}{0}$
where the terms $\Jnalpha{k}{0}$ and $\Hnalpha{k}{0}$ are given in \eqref{eq:jn0_main} and \eqref{eq:hn0_main}, respectively. The term $\Hnalpha{k}{0}$ can be further expanded. One can check with simple algebra that for any $L \in \nset$ the term $\Hnalpha{k}{0}$ can be decomposed as 
\begin{equation}
\label{eq:error_decomposition_LSA}
\Hnalpha{k}{0} = \sum_{\ell=1}^{L}\Jnalpha{k}{\ell} + \Hnalpha{k}{L}\eqsp,
\end{equation}
where the terms $\Jnalpha{k}{\ell}$ and $\Hnalpha{k}{\ell}$ are given by the following recurrences:
\begin{equation}
\label{eq:jn_allexpansion_main}
\begin{aligned}
&\Jnalpha{k}{\ell} =\left(\Id - \alpha_{k} \bA\right) \Jnalpha{k-1}{\ell} - \alpha_{k} \zmfuncA{\State_{k}} \Jnalpha{k-1}{\ell-1}\eqsp,
&& \Jnalpha{0}{\ell}=0  \eqsp, \\
& \Hnalpha{k}{\ell} =\left( \Id - \alpha_{k} \funcA{\State_{k}} \right) \Hnalpha{k-1}{\ell} - \alpha_{k} \zmfuncA{\State_{k}} \Jnalpha{k-1}{\ell} \eqsp, && \Hnalpha{0}{\ell}=0 \eqsp.
\end{aligned}
\end{equation}
It is possible to show that, under assumptions \Cref{assum:iid}, \Cref{assum:noise-level}, and \Cref{assum:step-size}, it holds that $\PE^{1/p}[\norm{\Jnalpha{k}{\ell}}^{p}] \leq c_{\ell} \alpha_{k}^{(\ell+1)/2}$, and similarly $\PE^{1/p}[\norm{\Hnalpha{k}{\ell}}^{p}] \leq c_{\ell} \alpha_{k}^{(\ell+1)/2}$, where the constant $c_{\ell}$ can depend upon $d,p$, and problem-related quantities, but not upon $k$. Thus the expansion depth $L$ in \eqref{eq:error_decomposition_LSA} controls the desired approximation accuracy. Our analysis of the $p$-th moment of the last iterate error $\theta_k - \thetas$ will not require the expansion \eqref{eq:decomp_fluctuation}. At the same time, more delicate bounds for $\PE^{1/p}[\norm{\bar{\theta}_{n} - \thetas}^p]$ will require to use \eqref{eq:decomp_fluctuation} and \eqref{eq:error_decomposition_LSA} with $L = 2$. We recall the $p$-th moment bound of last iterate, adapted from \cite[Proposition~4]{samsonov2024gaussian}.
\begin{proposition}
\label{lem:last_moment_bound}
Let $p \geq 2$ and assume \Cref{assum:iid}, \Cref{assum:noise-level}, and \Cref{assum:step-size}($p \vee \log d$). Then for any $1 \leq k \leq n-1$, it holds that  
\begin{align}
\label{eq:last_iter_bound}
\PE^{1/p}[\norm{\theta_k - \thetas}^p] \leq \sqrt{\qcond} \rme \norm{\theta_0 - \thetas} \prod_{\ell=1}^k (1 - \frac{a}{2} \alpha_\ell)  + \ConstC_{\ref{lem:last_moment_bound}} p \sqrt{\alpha_{k}} \eqsp, \text{ where } \ConstC_{\ref{lem:last_moment_bound}} = \sqrt{6} \rme \supconsteps \sqrt{\qcond/a}\eqsp.
\end{align}
\end{proposition}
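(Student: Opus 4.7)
The plan is to start from the decomposition \eqref{eq:lsa_error}, $\theta_k - \thetas = \utheta_k + \vtheta_k$, and to bound the transient and fluctuation components separately via Minkowski's inequality. The transient part is essentially immediate: since $\utheta_k = \ProdB_{1:k}(\theta_0 - \thetas)$, the elementary estimate $\norm{\utheta_k} \leq \normop{\ProdB_{1:k}}\,\norm{\theta_0 - \thetas}$ combined with \Cref{lem:matr_product_as_bound} yields
\[
\PE^{1/p}[\norm{\utheta_k}^p] \leq \sqrt{\qcond}\,\rme\,\norm{\theta_0 - \thetas}\,\prod_{\ell=1}^{k}(1 - a\alpha_\ell/2)\eqsp,
\]
which matches the first summand of \eqref{eq:last_iter_bound}.

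For the fluctuation part $\vtheta_k$, the plan is to work in the $Q$-norm with $Q \succ 0$ from \Cref{prop:hurwitz_stability} and to exploit the martingale-type recursion $\vtheta_k = (\Id - \alpha_k \bA)\vtheta_{k-1} - \alpha_k \xi_k$, where the zero-mean conditional increment
\[
\xi_k = \zmfuncA{Z_k}\vtheta_{k-1} + \funnoisew_k
\]
is $\PP$-a.s.\ bounded by $\bConst{A}\norm{\vtheta_{k-1}} + \supconsteps$ thanks to \Cref{assum:noise-level}. Expanding $\norm{\vtheta_k}_Q^{2p}$, taking conditional expectation given $\mathcal{F}_{k-1}$, and combining the contraction $\normop{\Id - \alpha_k \bA}[Q]^{2} \leq 1 - a\alpha_k$ with a Pinelis/Rosenthal-style conditional moment inequality leads to a scalar recursion of the form
\[
\PE[\norm{\vtheta_k}_Q^{2p}] \leq (1 - a\alpha_k)\,\PE[\norm{\vtheta_{k-1}}_Q^{2p}] + C_1\,p\,\alpha_k^{2}\,\qcond\,\supconsteps^{2}\,\PE^{(p-1)/p}[\norm{\vtheta_{k-1}}_Q^{2p}] + C_2\,(p\alpha_k^{2}\qcond\supconsteps^{2})^{p}\eqsp.
\]
The step-size condition \Cref{assum:step-size}($p \vee \log d$) is calibrated precisely so that $2p\,\qcond\,\bConst{A}^{2}\alpha_k \leq a$, which keeps the cross term subdominant with respect to the contraction. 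Unrolling the recursion from $\vtheta_0 = 0$ and summing the geometric-type series $\sum_{\ell=1}^{k} \alpha_\ell^{2}\,\exp\{-a\sum_{j=\ell+1}^{k}\alpha_j\} \lesssim \alpha_k/a$ then produces $\PE^{1/p}[\norm{\vtheta_k}^p] \leq \sqrt{6}\,\rme\,\supconsteps\,\sqrt{\qcond/a}\, p\sqrt{\alpha_k}$, with the extra $\sqrt{\qcond}$ coming from translating $\norm{\cdot}_Q$ back to the Euclidean norm already absorbed in the constant.

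The principal obstacle, I expect, is preserving the sharp $p\sqrt{\alpha_k}$ scaling with the stated explicit constant. A naive Minkowski bound on the closed-form representation $\vtheta_k = -\sum_\ell \alpha_\ell \ProdB_{\ell+1:k}\funnoisew_\ell$ ignores the orthogonality of the noises $\funnoisew_\ell$ and only yields an $\alpha_k$-independent constant of order $\supconsteps/a$, which is off the target by a factor $1/\sqrt{\alpha_k}$. Applying Burkholder–Davis–Gundy directly produces a cleaner $\sqrt{p\alpha_k}$ main term but with a jump contribution that is only controllable under the $p$-dependent step-size condition built into \Cref{assum:step-size}($p \vee \log d$). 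The delicate interplay between this condition, the $Q$-norm contraction from \Cref{prop:hurwitz_stability}, and the conditional moment inequality is what ultimately delivers the constant $\sqrt{6}\,\rme\,\supconsteps\,\sqrt{\qcond/a}$.
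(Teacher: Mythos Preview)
Your treatment of the transient part is identical to the paper's. The fluctuation part, however, is handled in the paper by a much shorter and more direct argument than your recursive $Q$-norm scheme: it applies Burkholder's inequality \cite[Theorem~8.6]{osekowski:2012} straight to the closed-form sum $\vtheta_k = -\sum_{j=1}^k \alpha_j \ProdB_{j+1:k}\funnoisew_j$ (which is a reverse martingale-difference sum with respect to $\sigma(Z_j,\ldots,Z_k)$), yielding
\[
\PE^{1/p}[\norm{\vtheta_k}^p] \leq p\Bigl(\sum_{j=1}^k \alpha_j^2\,\PE^{2/p}[\norm{\ProdB_{j+1:k}\funnoisew_j}^p]\Bigr)^{1/2}.
\]
Each $\PE^{2/p}[\normop{\ProdB_{j+1:k}}^p]$ is then bounded via \Cref{lem:matr_product_as_bound} and the resulting sum $\sum_j \alpha_j^2 \prod_{\ell=j+1}^k(1-a\alpha_\ell/2)$ is handled by \Cref{lem:summ_alpha_k}-\ref{lem:summ_alpha_k_p_item}. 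The step-size condition \Cref{assum:step-size}($p\vee\log d$) enters only through the matrix-product stability lemma, not through a cross-term cancellation inside a scalar recursion.

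Your last paragraph partly mischaracterizes this direct route. Burkholder on the closed form does \emph{not} produce ``a cleaner $\sqrt{p\alpha_k}$ main term with a jump contribution''; the Osekowski version already delivers the multiplicative factor $p$, matching the statement exactly. Your recursive Lyapunov argument is a legitimate alternative and is used elsewhere in the SA literature, but it is considerably more work here: the recursion you sketch is only schematic (at the level of $2p$-th moments the contraction factor should be $(1-a\alpha_k)^p$, and the cross term must also carry the $\bConst{A}^2$ contribution from $\zmfuncA{Z_k}\vtheta_{k-1}$), and unrolling it to the explicit constant $\sqrt{6}\,\rme\,\supconsteps\sqrt{\qcond/a}$ would require careful bookkeeping that the paper's three-line proof simply avoids.
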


Now we provide moment bounds for the terms $\Jnalpha{k}{\ell},\Hnalpha{k}{\ell}$, $\ell \in \{0,\ldots,L\}$. 
\begin{lemma}
\label{lem:Jk_ell:p_moment}
Let $p \geq 2$. Assume \Cref{assum:iid}, \Cref{assum:noise-level}, and \Cref{assum:step-size}($p \vee \log d$). Then for any $\ell \in \{0, 1, 2\}$ it holds that 
\begin{subequations}
\begin{equation}
\label{eq:J_0_moment_bound}
\PE^{1/p}[\norm{\Jnalpha{k}{\ell}}^p] \leq \ConstJ{\ell} p^{\ell+1/2} \alpha_k^{(\ell+1)/2}\eqsp, 
\end{equation}
\begin{equation}
\label{eq:H_k_bound}
\PE^{1/p}[\norm{H_k^{(\ell)}}^p] \leq \ConstH{\ell} p^{\ell+1/2} \alpha_{k}^{(\ell+1)/2}\eqsp,
\end{equation}
\end{subequations}
where the constants $\ConstJ{\ell}$, $\ConstH{\ell}$ satisfy the recurrence 
\begin{align}
\label{eq:const_J_0_def}
\ConstJ{0} = \frac{4\sqrt{3} \qcond^{1/2} \supconsteps}{a^{1/2}}\eqsp, \eqsp
\ConstJ{\ell} = \frac{2 \sqrt{6} \qcond^{1/2} \bConst{A}}{a^{1/2}} \ConstJ{\ell-1}\eqsp, \eqsp
\ConstH{\ell} = \frac{12 \qcond^{1/2} \rme}{a} \ConstJ{\ell} \eqsp. 
\end{align}
\end{lemma}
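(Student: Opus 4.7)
My plan is to establish both bounds simultaneously by induction on $\ell \in \{0,1,2\}$, relying on the explicit solutions to the linear recurrences in \eqref{eq:jn0_main}--\eqref{eq:jn_allexpansion_main}. Unrolling these gives
\begin{equation*}
J_k^{(0)} = -\sum_{m=1}^{k}\alpha_m G_{m+1:k}\,\varepsilon_m\eqsp,\qquad J_k^{(\ell)} = -\sum_{m=1}^{k}\alpha_m G_{m+1:k}\,\tilde A_m J_{m-1}^{(\ell-1)}\eqsp,
\end{equation*}
and $H_k^{(\ell)} = -\sum_{m=1}^{k}\alpha_m \Gamma_{m+1:k}\,\tilde A_m J_{m-1}^{(\ell)}$, with $G_{m:k}$ and $\Gamma_{m:k}$ as in \eqref{eq:sigma_n_definition} and \eqref{eq:prod_rand_matr}. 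Since $\varepsilon_m$ and $\tilde A_m$ are both centered and independent of $\mathcal F_{m-1} = \sigma(Z_1,\ldots,Z_{m-1})$, and $G_{m+1:k}$ is deterministic, the partial sums defining $J_k^{(\ell)}$ form an $\rset^d$-valued $\mathcal F$-martingale. By contrast $H_k^{(\ell)}$ is not a martingale, because $\Gamma_{m+1:k}$ depends on future samples; however $\Gamma_{m+1:k}$ is independent of the pair $(\tilde A_m, J_{m-1}^{(\ell)})$, and this is enough to decouple it.

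For the $J$-bounds I would invoke Pinelis's martingale inequality in $2$-smooth Banach spaces combined with Minkowski's inequality in $L^{p/2}$, yielding
$\PE^{1/p}[\|J_k^{(\ell)}\|^p] \leq C\sqrt{p}\,\bigl(\sum_{m=1}^{k}\PE^{2/p}[\|d_m\|^p]\bigr)^{1/2}$
with $d_m = -\alpha_m G_{m+1:k}\varepsilon_m$ or $-\alpha_m G_{m+1:k}\tilde A_m J_{m-1}^{(\ell-1)}$. Using \Cref{prop:hurwitz_stability} in the form $\normop{G_{m+1:k}} \leq \sqrt{\qcond}\prod_{j=m+1}^{k}(1-a\alpha_j)^{1/2}$, the a.s.\ bound $\normop{\tilde A_m} \leq \bConst{A}$, the independence of $\tilde A_m$ from $J_{m-1}^{(\ell-1)}$, and the inductive bound on $\PE^{1/p}[\|J_{m-1}^{(\ell-1)}\|^p]$, the computation reduces to estimating the scalar sum $\sum_{m=1}^{k}\alpha_m^{\ell+2}\prod_{j=m+1}^{k}(1-a\alpha_j)$. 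For the $H$-bounds I would instead apply Minkowski's inequality directly to the summation representation and factor $\PE^{1/p}[\|\Gamma_{m+1:k}\tilde A_m J_{m-1}^{(\ell)}\|^p]$ as a product by independence; the $\Gamma$-factor is controlled by \Cref{lem:matr_product_as_bound}, $\tilde A_m$ by $\bConst{A}$, and $J_{m-1}^{(\ell)}$ by the just-proved bound, reducing $H_k^{(\ell)}$ to the scalar sum $\sum_{m=1}^{k}\alpha_m^{1+(\ell+1)/2}\prod_{j=m+1}^{k}(1-a\alpha_j/2)$.

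The main technical obstacle is thus the deterministic summation estimate
\begin{equation*}
\sum_{m=1}^{k}\alpha_m^r \prod_{j=m+1}^{k}(1-c\alpha_j) \leq \kappa_{r,c}\,\alpha_k^{r-1}
\end{equation*}
for the various values $r \in \{3/2,\,2,\,5/2,\,3,\,4\}$ arising above, with an explicit $\kappa_{r,c}$ of order $1/c$. Under \Cref{assum:step-size}($p$) this follows from the ratio bound $\alpha_{m-1}/\alpha_m \leq 2$ together with $\prod_{j=m+1}^{k}(1-c\alpha_j)\leq\exp\bigl(-c\sum_{j=m+1}^{k}\alpha_j\bigr)$, after a routine case split on $r$. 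Once this lemma is in place, propagating the constants through the induction yields exactly the recurrences $\ConstJ{\ell} = (2\sqrt{6}\sqrt{\qcond}\bConst{A}/\sqrt{a})\ConstJ{\ell-1}$ with $\ConstJ{0} = 4\sqrt{3}\sqrt{\qcond}\supconsteps/\sqrt{a}$, and $\ConstH{\ell} = (12\sqrt{\qcond}\rme/a)\ConstJ{\ell}$, matching \eqref{eq:const_J_0_def}.
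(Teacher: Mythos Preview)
Your proposal is correct and follows essentially the same route as the paper: martingale (Burkholder/Pinelis) inequalities for the $J_k^{(\ell)}$ terms by induction on $\ell$, Minkowski plus the independence of $\Gamma_{m+1:k}$ from $(\tilde A_m,J_{m-1}^{(\ell)})$ for $H_k^{(\ell)}$, and the deterministic estimate $\sum_{m=1}^{k}\alpha_m^r\prod_{j=m+1}^{k}(1-c\alpha_j)\lesssim \alpha_k^{r-1}/c$ (which is exactly \Cref{lem:summ_alpha_k}-\ref{lem:summ_alpha_k_p_item} in the paper) to close each step. The only cosmetic difference is the base case: the paper handles $J_k^{(0)}$ via Pinelis's sub-Gaussian tail bound for bounded-increment martingales and then converts to moments, whereas you apply the Burkholder-type moment inequality directly; both give the same $\sqrt{p}\sqrt{\alpha_k}$ scaling and the same constant $\ConstJ{0}$.
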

We also state here the lemma, which is instrumental for our further results and bounds $\normop{Q_\ell}$ for matrices $Q_\ell$ defined in \eqref{eq:sigma_n_definition}. 
\begin{lemma}
\label{prop:Qell:bound}
Assume \Cref{assum:iid}, \Cref{assum:noise-level},  and \Cref{assum:step-size}($2 \vee \log(d)$). Then, for any $\ell \in \{1,\ldots,n-1\}$, 
\begin{equation}
\label{def:Lq}
\normop{Q_\ell} \leq \alpha_\ell \sum_{j=\ell}^{n-1} \normop{G_{\ell+1:j}} \leq \ConstC_{\ref{prop:Qell:bound}} \eqsp, \text{ where } \ConstC_{\ref{prop:Qell:bound}} = \qcond^{1/2} \bigl(c_0 + \frac{2}{a (1-\gamma)}\bigr)\eqsp. 
\end{equation}
Moreover, $\sum_{j=1}^{n-1} \norm{G_{1:j}} \leq (1+k_0)^\gamma \ConstC_{\ref{prop:Qell:bound}}/c_0 $.
\end{lemma}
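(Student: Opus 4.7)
The plan is to reduce everything to a controlled sum-of-products estimate driven by the Lyapunov contraction from \Cref{prop:hurwitz_stability}. First, the leftmost inequality $\normop{Q_\ell} \leq \alpha_\ell \sum_{j=\ell}^{n-1} \normop{G_{\ell+1:j}}$ is immediate from the triangle inequality applied to the definition of $Q_\ell$ in \eqref{eq:sigma_n_definition}, so all the work lies in bounding the right-hand sum. I would control each $\normop{G_{\ell+1:j}}$ by passing to the $Q$-norm: using $\normop{M} \leq \sqrt{\qcond}\normop{M}[Q]$ together with the contraction $\normop{\Id - \alpha_k \bA}[Q] \leq \sqrt{1-a\alpha_k} \leq 1-a\alpha_k/2$ (valid because \Cref{assum:step-size} forces $\alpha_k \leq \alpha_\infty$), giving
\[
\normop{G_{\ell+1:j}} \;\leq\; \sqrt{\qcond}\prod_{k=\ell+1}^{j}\bigl(1-\tfrac{a\alpha_k}{2}\bigr) \;\leq\; \sqrt{\qcond}\exp\Bigl(-\tfrac{a}{2}\sum_{k=\ell+1}^{j}\alpha_k\Bigr).
\]

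Next, I would split $\alpha_\ell \sum_{j=\ell}^{n-1}$ into the $j=\ell$ term and the tail. The $j=\ell$ term contributes $\alpha_\ell \leq c_0/k_0^\gamma \leq c_0$, producing the $\sqrt{\qcond}\,c_0$ piece of $\ConstC_{\ref{prop:Qell:bound}}$. For the tail, I would lower-bound the discrete step-size sum by an integral,
\[
\sum_{k=\ell+1}^{j}\alpha_k \;\geq\; \tfrac{c_0}{1-\gamma}\bigl[(j{+}1{+}k_0)^{1-\gamma}-(\ell{+}1{+}k_0)^{1-\gamma}\bigr],
\]
then upper-bound the remaining exponential sum by its continuous analog $\alpha_\ell\int_\ell^{\infty} \exp\bigl(-\tfrac{a}{2}\int_\ell^{t}\alpha(s)\,\rmd s\bigr)\rmd t$. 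The change of variables $u = \int_\ell^{t}\alpha(s)\,\rmd s$ converts this to $\int_0^{\infty} e^{-au/2}(1+\theta u)^{\gamma/(1-\gamma)}\,\rmd u$ where $\theta = (1-\gamma)/(c_0(\ell+k_0)^{1-\gamma})$. This is precisely the place where the factor $2/(a(1-\gamma))$ in $\ConstC_{\ref{prop:Qell:bound}}$ must emerge.

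The main obstacle is exactly this last integral evaluation: the naive bound $(1+\theta u)^{\gamma/(1-\gamma)} \leq e^{\theta u\gamma/(1-\gamma)}$ combined with the condition $k_0^{1-\gamma} \geq 16/(ac_0)$ from \Cref{assum:step-size} gives uniform control of order $\bigO(1/a)$ but not the clean $1/(1-\gamma)$ dependence. Recovering the stated constant requires a sharper handling — e.g.\ splitting the integration range at $\theta u = 1$ and estimating the two pieces separately (the first region contributing $K^{\gamma/(1-\gamma)}/\beta$, which after multiplying by $\alpha_\ell$ produces a dimension-free quantity of size $\bigO(1/a)$, while the second region produces the tail gamma integral whose leading scaling in $\gamma \to 1$ is exactly $1/(1-\gamma)$). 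Careful bookkeeping of these two contributions, absorbed together with the $j=\ell$ piece, yields $\alpha_\ell \sum_{j=\ell}^{n-1}\normop{G_{\ell+1:j}} \leq \sqrt{\qcond}(c_0 + 2/(a(1-\gamma)))$.

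Finally, for the supplementary bound $\sum_{j=1}^{n-1}\norm{G_{1:j}} \leq (1+k_0)^\gamma \ConstC_{\ref{prop:Qell:bound}}/c_0$, I would run the same argument with the product starting at $k=1$ (formally the ``$\ell=0$'' version of the main estimate, with $\alpha_1 = c_0/(1+k_0)^\gamma$ in place of $\alpha_\ell$): the identical integral analysis yields $\alpha_1 \sum_{j=1}^{n-1}\norm{G_{1:j}} \leq \ConstC_{\ref{prop:Qell:bound}}$, and dividing by $\alpha_1$ gives the claimed inequality directly.
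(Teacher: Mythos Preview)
Your first two steps — the triangle inequality on the definition of $Q_\ell$, followed by the $Q$-norm contraction $\normop{G_{\ell+1:j}} \leq \sqrt{\qcond}\prod_{k=\ell+1}^{j}(1-a\alpha_k/2)$ — are exactly what the paper does. The divergence comes in the final estimate. You pass to the exponential bound and then attempt an integral comparison with a change of variables; as you yourself note, this runs into trouble because the Jacobian factor $(1+\theta u)^{\gamma/(1-\gamma)}$ does not obviously produce the clean $2/(a(1-\gamma))$ scaling, and your proposed fix (splitting at $\theta u=1$) still leaves a factor like $2^{\gamma/(1-\gamma)}$ on one piece, which blows up as $\gamma\to 1$ rather than giving the stated constant.

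The paper avoids this entirely by staying with the \emph{product} form and invoking \Cref{lem:summ_alpha_k}\ref{lem:sum_as_Qell_item} directly: with $b=a/2$, that lemma gives
\[
\sum_{k=\ell}^{n-1}\alpha_\ell\prod_{j=\ell+1}^{k}(1-b\alpha_j)\;\leq\; c_0+\frac{1}{b(1-\gamma)}\;=\;c_0+\frac{2}{a(1-\gamma)},
\]
which, multiplied by $\sqrt{\qcond}$, is exactly $\ConstC_{\ref{prop:Qell:bound}}$. The point is that the product structure, combined with the step-size condition $k_0^{1-\gamma}\geq 1/(bc_0)$, admits an elementary telescoping/summation argument that yields the constant without any integral analysis. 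Your exponential route is not wrong in spirit, but it discards information that makes the bound immediate; you would save yourself the ``main obstacle'' by keeping the products. Your treatment of the supplementary claim $\sum_{j=1}^{n-1}\norm{G_{1:j}}\leq (1+k_0)^\gamma\ConstC_{\ref{prop:Qell:bound}}/c_0$ via the ``$\ell=0$'' instance and division by $\alpha_1=c_0/(1+k_0)^\gamma$ is fine and matches the paper.
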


\begin{proof}[Proof of \Cref{th:pth_moment_bound}]
We first define the constants outlined in the statement of the theorem:
\begin{align}
\label{def:c_i_thrm1}
 &\ConstC_{\ref{th:pth_moment_bound}, 1} = 60\rme \eqsp, \, \ConstC_{\ref{th:pth_moment_bound}, 2} = \frac{\sqrt{c_0} \ConstJ{0} \ConstC_{\ref{prop:Qell:bound}} \bConst{A}}{\sqrt{1-\gamma}}\eqsp, \, \ConstC_{\ref{th:pth_moment_bound}, 3} = \ConstJ{2} + \ConstH{2}\eqsp, \, \\
&\ConstC_{\ref{th:pth_moment_bound}, 4} = 60 \ConstC_{\ref{prop:Qell:bound}} \supconsteps\eqsp, \, \ConstC_{\ref{th:pth_moment_bound}, 5} = 1 + \frac{\sqrt{\qcond} \rme (1+k_0)^\gamma}{c_0} (c_0+\frac{2}{a(1-\gamma)}) \eqsp.
\end{align}
Combining the representations \eqref{eq:lsa_error} and \eqref{eq:decomp_fluctuation}, we get 
\begin{equation}
\label{repr:teta_n:minus:teta_star}
\bar{\theta}_{n} - \thetas = n^{-1}\sum_{k=1}^{n-1} \Gamma_{1:k} (\theta_0 - \thetas) + n^{-1}\sum_{k=1}^{n-1}\Jnalpha{k}{0} + n^{-1} \sum_{k=1}^{n-1}\Hnalpha{k}{0}\eqsp.  
\end{equation}
Now we proceed with different terms in \eqref{repr:teta_n:minus:teta_star} separately. Applying \Cref{lem:summ_alpha_k}, we obtain that 
\begin{equation}
\label{eq:moment_bound_init_cond}
\PE^{1/p}\bigl[\norm{n^{-1}\sum_{k=0}^{n-1} \Gamma_{1:k} (\theta_0 - \thetas)}^p\bigr] \leq \frac{\norm{\theta_0 - \thetas} \ConstC_{\ref{th:pth_moment_bound}, 5}}{n}\eqsp.
\end{equation}
Now we proceed with the term $n^{-1}\sum_{k=1}^{n-1}\Jnalpha{k}{0}= -n^{-1} \sum_{\ell=1}^{n-1} Q_{\ell} \funnoisew_{\ell}$,
where $Q_{\ell}$ is defined in \eqref{eq:sigma_n_definition}. Applying the version of Rosenthal inequality due to Pinelis \cite[Theorem~4.1]{pinelis_1994}, we get 
\begin{equation}
\PE^{1/p}\norm{n^{-1}\sum_{\ell=1}^{n-1} Q_{\ell} \funnoisew_{\ell}}^{p} \leq \frac{\Constros[1] p^{1/2} \{\trace{\Sigma_n}\}^{1/2}}{n^{1/2}} + \frac{\Constros[2] p \PE^{1/p}[\max_{1 \leq \ell \leq n} \norm{Q_{\ell} \funnoisew_{\ell}}^{p}]}{n}\eqsp,
\end{equation}
where $\Constros[1] = 60\rme$ and $\Constros[2] = 60$ are constants from \cite{pinelis_1994}. Applying \Cref{prop:Qell:bound}, we get that $\norm{Q_{\ell}} \leq \ConstC_{\ref{prop:Qell:bound}}$, where $\ConstC_{\ref{prop:Qell:bound}}$ is defined in \eqref{def:Lq}. Hence,  
\begin{equation}
\PE^{1/p}\norm{n^{-1}\sum_{\ell=1}^{n-1} Q_{\ell} \funnoisew_{\ell}}^{p} \leq \frac{\Constros[1] p^{1/2} \{\trace{\Sigma_n}\}^{1/2}}{n^{1/2}} + \frac{\Constros[2] p \ConstC_{\ref{prop:Qell:bound}} \supconsteps}{n} \eqsp. 
\end{equation}
Now we proceed with the next-order terms in $n$ corresponding to $n^{-1} \sum_{k=1}^{n-1}\Hnalpha{k}{0}$. 
Note that $\Hnalpha{k}{0} = \Jnalpha{k}{1} + \Hnalpha{k}{1}$ where $\Jnalpha{k}{1}$ and $\Hnalpha{k}{1}$ 
are given by $\Jnalpha{k}{1} = -\sum_{\ell=1}^{k} \alpha_{\ell} \ProdBdet_{\ell+1:k} \zmfuncAw[\ell] \Jnalpha{\ell-1}{0}$ and $\Hnalpha{k}{1} = -\sum_{m=1}^k \alpha_m \Gamma_{m+1:k} \Jnalpha{m}{1}$.
Applying \Cref{lem:Jk_ell:p_moment} and Minkowski's inequality, we obtain that
\begin{align}
\PE^{1/p}[\norm{\Hnalpha{k}{1}}^p] \leq \PE^{1/p}[\norm{\Jnalpha{k}{2}}^p] + \PE^{1/p}[\norm{\Hnalpha{k}{2}}^p] \leq (\ConstJ{2} + \ConstH{2}) p^{5/2} \alpha_{k}^{3/2}\eqsp.
\end{align}
which implies that 
\begin{align}
n^{-1}\PE^{1/p}[\norm{\sum_{k=1}^{n-1}\Hnalpha{k}{1}}^p] \leq n^{-1} (\ConstJ{2} + \ConstH{2}) p^{5/2} \sum_{k=1}^{n-1} \alpha_{k}^{3/2} \leq \frac{\varphi_n}{n^{1/2}} (\ConstJ{2} + \ConstH{2}) p^{5/2}\eqsp,
\end{align}
where the sequence $\varphi_n$ is defined in \eqref{eq:varphi_n_def}. It remains to proceed with $\sum_{k=1}^{n-1}\Jnalpha{k}{1}$. Note that 
\begin{align}
\sum_{k=1}^{n-1} \Jnalpha{k}{1} = -\sum_{k=1}^{n-1} \sum_{\ell=1}^{k} \alpha_{\ell} \ProdBdet_{\ell+1:k} \zmfuncAw[\ell] \Jnalpha{\ell-1}{0} = -\sum_{\ell=1}^{n-1} Q_{\ell} \zmfuncAw[\ell] \Jnalpha{\ell-1}{0}\eqsp. 
\end{align}
Using the fact that $Q_{\ell} \zmfuncAw[\ell] \Jnalpha{\ell-1}{0}$ is a martingale-difference w.r.t. $\mcf_{\ell-1} = \sigma(Z_s : s \leq \ell-1)$, we obtain, applying Burkholder's inequality \cite[Theorem 8.6]{osekowski:2012} and \Cref{lem:Jk_ell:p_moment}, 
\begin{align}
\frac{1}{n}\PE^{1/p}\bigl[\norm{\sum_{k=1}^{n} \Jnalpha{k}{1}}^p\bigr] \leq \frac{p}{n} \biggl(\sum_{\ell=1}^{n-1} \PE^{2/p} \normop{Q_\ell \tilde\funcAw_\ell \Jnalpha{\ell-1}{0}}^p\biggr)^{1/2} 
&\leq  \frac{\ConstJ{0} \ConstC_{\ref{prop:Qell:bound}} \bConst{A} p^{3/2} \bigl\{\sum_{\ell=1}^{n-1} \alpha_\ell \bigr\}^{1/2}}{n}  \\
&\leq  \frac{\sqrt{c_0} \ConstJ{0} \ConstC_{\ref{prop:Qell:bound}} \bConst{A}}{\sqrt{1-\gamma}} \frac{p^{3/2}}{n^{(1+\gamma)/2}}\eqsp.
\end{align}
It remains to combine the above bounds in \eqref{repr:teta_n:minus:teta_star}.
\end{proof}

\begin{proof}[Proof of \Cref{cor:pth_moment_bound}]
\; First, we define the constant $\ConstC_{\ref{cor:pth_moment_bound}}$ outlined in the statement:
\begin{align}
\label{def:c5}
    \ConstC_{\ref{cor:pth_moment_bound}} = \frac{ \ConstC_{\ref{th:pth_moment_bound}, 1} \ConstC_{\ref{lem:sigma_n_bound}}}{2\sqrt{\trace{(\Sigma_\infty)}}} \eqsp,
\end{align}
where the constant $\ConstC_{\ref{lem:sigma_n_bound}}$ is introduced in \Cref{lem:sigma_n_bound}.
Using that for $a > 0$, $b \geq 0$, $\sqrt{a+b} = \sqrt{a(1+b/a)} \leq \sqrt{a} + b/(2\sqrt{a})$ and \Cref{lem:sigma_n_bound}, we get
\begin{align}
    \sqrt{\trace{(\Sigma_n)}} \leq \sqrt{\trace{(\Sigma_\infty)}} + \frac{|\trace{(\Sigma_n -\Sigma_\infty)|}}{2\sqrt{\trace{(\Sigma_\infty)}}} \leq \sqrt{\trace{(\Sigma_\infty)}} + \frac{d\ConstC_{\ref{lem:sigma_n_bound}} n^{\gamma-1}}{2\sqrt{\trace{(\Sigma_\infty)}}}  \eqsp.  
\end{align}
The proof is concluded using \Cref{th:pth_moment_bound}.
\end{proof}

\subsection{Proofs of \Cref{sec:clt_lsa_pr}}
\label{sec:proof:GAR}
We first define the constants outlined in the statement of the theorem:
\begin{align}
   \label{eq:constants_th_2_def}
   &\ConstC_{\ref{theo:GAR}, 1} = 259 d^{3/2} \ConstC_{\ref{lem:sigma_n_labmda_min_lowerbound}}^{-1/2} \supconsteps \ConstC_{\ref{prop:Qell:bound}} \eqsp, \\ 
   &\ConstC_{\ref{theo:GAR}, 2} = d^{1/2} \frac{2^{11/2} \bConst{A} \ConstC_{\ref{prop:Qell:bound}} \ConstJ{0} \sqrt{c_0} \supconsteps}{\ConstC_{\ref{lem:sigma_n_labmda_min_lowerbound}}^{1/2} \sqrt{1-\gamma}}   + 2d^{1/2} \ConstC_{\ref{lem:sigma_n_labmda_min_lowerbound}}^{-1} \supconsteps \ConstC_{\ref{prop:Qell:bound}} \frac{\sqrt{c_0}}{1-\gamma} \ConstC_{\ref{lem:Hk:copy_difference}} \eqsp, \\ 
   &\ConstC_{\ref{theo:GAR}, 3} = \sqrt{d} (2^{7/2} \ConstC_{\ref{lem:sigma_n_labmda_min_lowerbound}}^{-1/2} \ConstJ{2} + 2^{7/2} \ConstC_{\ref{lem:sigma_n_labmda_min_lowerbound}}^{-1/2} \ConstH{2}) \eqsp, \\ &\ConstC_{\ref{theo:GAR}, 4} = d^{1/2} \ConstC_{\ref{lem:sigma_n_labmda_min_lowerbound}}^{-1/2} \ConstC_{\ref{th:pth_moment_bound}, 5} + 2d^{1/2} \frac{2(1+k_0)^\gamma}{\ConstC_{\ref{lem:sigma_n_labmda_min_lowerbound}}} \supconsteps\ConstC_{\ref{lem:gamma_copies_difference}} (1 + \frac{4}{ac_0(1-\gamma)}) \eqsp,
\end{align}
\begin{proof}[Proof of \Cref{theo:GAR}]
Recall that we use the representation 
$\sqrt{n}\Sigma_{n}^{-1/2}(\bar{\theta}_{n} - \thetas) = W + D\eqsp,$
where
\begin{equation}
\label{eq:linear_and_nonlinear_terms_appendix}
W =  \frac{1}{\sqrt{n}} 
\Sigma_{n}^{-1/2} \sum_{k=1}^{n-1} J_k^{(0)}, \eqsp D =  \frac{1}{\sqrt{n}} \Sigma_{n}^{-1/2} \sum_{k=1}^{n-1} H_k^{(0)} + \frac{1}{\sqrt n} \Sigma_{n}^{-1/2} \sum_{k = 0}^{n-1} \ProdB_{1:k} (\theta_0 - \thetas)\,.
\end{equation}
Alternative representation for $W$ is given in \eqref{eq: linear part}. Recall that we write $\eta$ for a random vector with standard normal distribution $\eta \sim \mathcal{N}(0,\Id_d)$ under $\PP$. Then, setting 
\begin{equation}
\label{eq:xi_ell_def}
\xi_k = \frac{1}{\sqrt{n}} (\Sigma_n)^{-1/2} Q_k \eps_k\eqsp, \quad \Upsilon_n = \sum_{k=1}^{n-1}\PE[\norm{\xi_k}^3]\eqsp,
\end{equation}
we obtain from \cite[Theorem 2.1]{shao2022berry}:
\begin{equation}
\label{eq:shao_zhang_bound_normal_approx}
\kolmogorov(\sqrt{n}\Sigma_{n}^{-1/2}(\bar{\theta}_{n} - \thetas), \eta) \leq 259 d^{1/2} \Upsilon_n + 2 \PE[\|W\| \|D\|] + 2 \sum_{\ell=1}^{n-1} \PE[\|\xi_\ell\| \|D - D^{(\ell)}\|].
\end{equation}
Note that \Cref{lem:sigma_n_labmda_min_lowerbound} and \Cref{prop:Qell:bound} imply $\normop{\xi_k} \leq \frac{1}{\sqrt{n}} \ConstC_{\ref{lem:sigma_n_labmda_min_lowerbound}}^{-1/2} \supconsteps \ConstC_{\ref{prop:Qell:bound}}$.
Hence, 
\begin{align}
\label{eq:gar_upsilon_bound}
\Upsilon_n \leq \frac{1}{\sqrt{n}} \ConstC_{\ref{lem:sigma_n_labmda_min_lowerbound}}^{-1/2} \supconsteps \ConstC_{\ref{prop:Qell:bound}} \sum_{k=1}^{n-1} \PE[\norm{\xi_k}^2] = \frac{d}{\sqrt{n}} \ConstC_{\ref{lem:sigma_n_labmda_min_lowerbound}}^{-1/2} \supconsteps \ConstC_{\ref{prop:Qell:bound}}\eqsp.  
\end{align}
To proceed with the second term in \eqref{eq:shao_zhang_bound_normal_approx}, we use the representation for the statistic $D$ from \eqref{eq:linear and nonlinear terms}:
\begin{align}
D = \frac{1}{\sqrt{n}} \Sigma_n^{-1/2} \sum_{k=1}^{n-1} \Hnalpha{k}{0} + \frac{1}{\sqrt{n}} \Sigma_n^{-1/2} \sum_{k=0}^{n-1} \Gamma_{1:k} (\theta_0 - \thetas)\eqsp.
\end{align}   
\Cref{lem:sum_Gamma_1:k} implies that
\begin{align}
\frac{\ConstC_{\ref{lem:sigma_n_labmda_min_lowerbound}}^{-1/2}}{\sqrt{n}} \PE\bigl[\norm{\sum_{k=0}^{n-1} \Gamma_{1:k} (\theta_0 - \thetas)}^2\bigr] \leq \frac{\ConstC_{\ref{lem:sigma_n_labmda_min_lowerbound}}^{-1/2} \normop{\theta_0 - \thetas}  \ConstC_{\ref{th:pth_moment_bound}, 5}}{\sqrt{n}} \eqsp.
\end{align}
Using the representation $\Hnalpha{k}{0} = \Jnalpha{k}{1} + \Jnalpha{k}{2} + \Hnalpha{k}{2}$ and Minkowski's inequality, 
\begin{align}
\PE^{1/2}\bigl[\normop{\sum_{k=1}^{n-1} \Hnalpha{k}{0}}^2\bigr] \leq \PE\bigl[\normop{\sum_{k=1}^{n-1} \Jnalpha{k}{1}}^2\bigr] + \PE\bigl[\normop{\sum_{k=1}^{n-1} \Jnalpha{k}{2}}^2\bigr] + \PE\bigl[\normop{\sum_{k=1}^{n-1} \Hnalpha{k}{2}}^2\bigr]\eqsp.
\end{align}
Applying \Cref{lem:Jk_ell:p_moment} with $p=2$ and Minkowski's inequality, we get
\begin{align}
n^{-1/2} \PE^{1/2}\bigl[\normop{\sum_{k=1}^{n-1} \Jnalpha{k}{2}}^2\bigr] 
\leq n^{-1/2} \sum_{k=1}^{n-1} \ConstJ{2} 2^{5/2} \alpha_k^{3/2} 
\leq 2^{5/2} \ConstJ{2} \varphi_n\eqsp,
\end{align}
where the function $\varphi_n$ is defined in \eqref{eq:varphi_n_def}. Similarly, it holds that
\begin{align}
n^{-1/2} \PE^{1/2}\bigl[\norm{\sum_{k=1}^{n-1} \Hnalpha{k}{2}}^2\bigr] \leq n^{-1/2} \sum_{k=1}^{n-1} 2^{5/2} \ConstH{2}  \alpha_k^{3/2} \leq 2^{5/2} \ConstH{2} \varphi_n\eqsp.
\end{align}
Rewrite the sum of $\Jnalpha{k}{1}$:
\begin{align}
    n^{-1/2} \PE^{1/2}\bigl[\norm{\sum_{k=1}^{n-1} \Jnalpha{k}{1}}^2\bigr] &= n^{-1/2} \PE^{1/2}\bigl[\norm{\sum_{k=1}^{n-1} \sum_{\ell=1}^{k} \alpha_\ell G_{\ell+1:k} \zmfuncAw[\ell] \Jnalpha{\ell-1}{0}}^2\bigr] = n^{-1/2} \PE^{1/2}\bigl[\norm{\sum_{\ell=1}^{n-1} Q_\ell \zmfuncAw[\ell] \Jnalpha{\ell-1}{0}}^2\bigr] \eqsp. 
\end{align}
Since $Q_\ell \zmfuncAw[\ell] \Jnalpha{\ell-1}{0}$ is a martingale-difference sequence, \Cref{prop:Qell:bound} and Burkholder's inequality \cite[Theorem 9.1]{osekowski:2012} imply
\begin{align}
n^{-1/2} \PE^{1/2}\bigl[\norm{\sum_{\ell=1}^{n-1} Q_\ell \zmfuncAw[\ell] \Jnalpha{\ell-1}{0}}^2\bigr]
 &\leq 2n^{-1/2}\ConstC_{\ref{prop:Qell:bound}} \bigl(\sum_{\ell=1}^{n-1} \PE[\normop{\zmfuncAw[\ell] \Jnalpha{\ell-1}{0}}^2]\bigr)^{1/2} \leq 2n^{-1/2}\bConst{A} \ConstC_{\ref{prop:Qell:bound}} \bigl(\sum_{\ell=1}^{n-1} \PE[\normop{\Jnalpha{\ell-1}{0}}^2]\bigr)^{1/2}\eqsp.
\end{align}
Now we use \Cref{lem:Jk_ell:p_moment} and get:
\begin{align}
n^{-1/2} \norm{\PE^{1/2}\bigl[\sum_{k=1}^{n-1} \Jnalpha{k}{1}}^2\bigr] &\leq 2^{9/2} n^{-1/2}\bConst{A} \ConstC_{\ref{prop:Qell:bound}} \ConstJ{0} \sqrt{c_0 \sum_{\ell=1}^{n-2} \ell^{-\gamma}} \leq \frac{2^{9/2} \bConst{A} \ConstC_{\ref{prop:Qell:bound}} \ConstJ{0} \sqrt{c_0}}{\sqrt{1-\gamma}} n^{-\gamma/2}\eqsp. 
\end{align}
Combining the above bounds with $\PE[\norm{W}^2] = d$ we obtain: 
\begin{align}
\label{eq:gar_wd_bound}
\PE\bigl[\norm{W}\norm{D}\bigr] \leq  \frac{2^{9/2} d^{1/2} \bConst{A} \ConstC_{\ref{prop:Qell:bound}} \ConstJ{0} \sqrt{c_0} \supconsteps}{n^{\gamma/2}\ConstC_{\ref{lem:sigma_n_labmda_min_lowerbound}}^{1/2} \sqrt{1-\gamma}} &+ d^{1/2} 2^{5/2} \ConstC_{\ref{lem:sigma_n_labmda_min_lowerbound}}^{-1/2} \varphi_{n} (\ConstJ{2} + \ConstH{2}) + \frac{d^{1/2} \normop{\theta_0 - \thetas}}{\ConstC_{\ref{lem:sigma_n_labmda_min_lowerbound}}^{1/2}n^{1/2}}  \ConstC_{\ref{th:pth_moment_bound}, 5}  \eqsp,
\end{align}
To derive a bound for the third term in \eqref{eq:shao_zhang_bound_normal_approx}, we introduce the following notations:
    \begin{equation}
        D_1^{(i)} = n^{-1/2}\sum_{k=1}^{n-1} (\Hnalpha{k}{0} - \Hnalpha{k}{0, i})\eqsp, \quad  D_2^{(i)} = n^{-1/2}\sum_{k=1}^{n-1} (\Gamma_{1:k} - \Gamma_{1:k}^{(i)})(\theta_0 - \thetas)\eqsp.
    \end{equation}
    Hence, one can check that
    \begin{equation}
        D - D^{(i)} = \Sigma_n^{-1/2}(D_1^{(i)} + D_2^{(i)})\eqsp.
    \end{equation}
    Thus, combining \Cref{prop:Qell:bound}, \Cref{lem:gamma_copies_difference}, \Cref{lem:Hk:copy_difference} with Minkowski's inequality, we get
    \begin{align}
        \PE\bigl[\norm{\xi_i} \norm{D - D^{(i)}}\bigr] &\leq n^{-1/2} \ConstC_{\ref{lem:sigma_n_labmda_min_lowerbound}}^{-1} \supconsteps \ConstC_{\ref{prop:Qell:bound}} \left(\PE^{1/2}\bigl[\norm{D_1^{(i)}}^2\bigr] + \PE^{1/2}\bigl[\norm{D_2^{(i)}}^2\bigr]\right) \\
        &\leq \ConstC_{\ref{lem:sigma_n_labmda_min_lowerbound}}^{-1} \supconsteps  \biggl(\frac{\norm{\theta_0-\thetas}}{n} \ConstC_{\ref{lem:gamma_copies_difference}} \prod_{m=1}^{i-1} (1-\frac{a\alpha_m}{2}) + \frac{1}{n} \sqrt{\alpha_i} \ConstC_{\ref{lem:Hk:copy_difference}}\biggr) \eqsp. 
    \end{align}
    Hence, using \Cref{lem:summ_alpha_k}, we finish the proof: 
    \begin{align}
        \label{eq:gar:xi_d_bound}
         \sum_{i=1}^{n-1} \PE[\normop{\xi_i} \normop{D - D^{(i)}}] &\leq \frac{ \supconsteps \ConstC_{\ref{prop:Qell:bound}} \frac{\sqrt{c_0}}{1-\gamma} \ConstC_{\ref{lem:Hk:copy_difference}}}{\ConstC_{\ref{lem:sigma_n_labmda_min_lowerbound}} n^{\gamma/2}}  +  \frac{\norm{\theta_0-\thetas}}{n} \frac{\supconsteps\ConstC_{\ref{lem:gamma_copies_difference}}}{\ConstC_{\ref{lem:sigma_n_labmda_min_lowerbound}}}  (c_0 + \frac{2}{a(1-\gamma)}) \frac{(1+k_0)^\gamma}{c_0} \eqsp.
    \end{align}
    The proof follows from \eqref{eq:gar_upsilon_bound}, \eqref{eq:gar_wd_bound}, \eqref{eq:gar:xi_d_bound} by rearranging the terms.
\end{proof}
We now state the technical lemmas that we use in the proof of \Cref{th:shao2022_berry}.
\begin{lemma}
\label{lem:sum_Gamma_1:k}
Let $p \geq 2$. Assume \Cref{assum:iid}, \Cref{assum:noise-level}, \Cref{assum:step-size}($p \vee \log d$), \Cref{assum:sample_size}. Then, it holds that
\begin{align}
\PE^{1/p}\bigl[\norm{\sum\nolimits_{k=0}^{n-1} \Gamma_{1:k} (\theta_0 - \thetas)}^p\bigr] \leq \norm{\theta_0 - \thetas}  \ConstC_{\ref{th:pth_moment_bound}, 5} \eqsp,
\end{align}
where the constant $ \ConstC_{\ref{th:pth_moment_bound}, 5}$ is given in \eqref{def:c_i_thrm1}.
\end{lemma}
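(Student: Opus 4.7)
The plan is to reduce the moment bound to a deterministic summation estimate on the product of contractions, and then to invoke the exponential stability bound from \Cref{lem:matr_product_as_bound}.

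First, I would apply Minkowski's inequality to pull the finite sum outside, so that it suffices to bound
\begin{equation*}
\sum_{k=0}^{n-1}\PE^{1/p}\bigl[\normop{\Gamma_{1:k}}^p\bigr] \leq \ConstC_{\ref{th:pth_moment_bound}, 5}.
\end{equation*}
The $k=0$ summand equals $1$ since $\Gamma_{1:0} = \Id$ by convention. For $k \geq 1$, I would invoke \Cref{lem:matr_product_as_bound} to get $\PE^{1/p}[\normop{\Gamma_{1:k}}^p] \leq \sqrt{\qcond}\rme \prod_{\ell=1}^k (1 - \alpha_\ell a/2)$. Comparing with the target constant $\ConstC_{\ref{th:pth_moment_bound}, 5} = 1 + \frac{\sqrt{\qcond}\rme(1+k_0)^\gamma}{c_0}(c_0 + \frac{2}{a(1-\gamma)})$, the claim reduces to the purely deterministic estimate
\begin{equation*}
\sum_{k=1}^{n-1}\prod_{\ell=1}^k (1 - \alpha_\ell a/2) \leq \frac{(1+k_0)^\gamma}{c_0}\Bigl(c_0 + \frac{2}{a(1-\gamma)}\Bigr) = \frac{1}{\alpha_1}\Bigl(c_0 + \frac{2}{a(1-\gamma)}\Bigr).
\end{equation*}

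For this deterministic bound, I would combine $1-x \leq \rme^{-x}$ with the integral lower bound $\sum_{\ell=1}^k \alpha_\ell \geq \frac{c_0}{1-\gamma}[(k+k_0+1)^{1-\gamma} - (1+k_0)^{1-\gamma}]$ arising from the polynomial step-size form in \Cref{assum:step-size}. This forces each product to decay at least as $\rme^{-\beta[(k+k_0+1)^{1-\gamma}-(1+k_0)^{1-\gamma}]}$ with $\beta = ac_0/(2(1-\gamma))$, and an integral comparison on the tail sum yields the claimed constant, with the factor $(1+k_0)^\gamma/c_0 = 1/\alpha_1$ coming from the initial stepsize. Notably, this is essentially the same deterministic estimate underlying the bound $\sum_{j=1}^{n-1}\normop{G_{1:j}} \leq (1+k_0)^\gamma \ConstC_{\ref{prop:Qell:bound}}/c_0$ in \Cref{prop:Qell:bound}, since $\normop{G_{1:j}} \leq \sqrt{\qcond}\prod_{\ell=1}^j(1-\alpha_\ell a/2)$ is proved via exactly this route, so the argument can be imported from there.

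The main obstacle is matching the target constant exactly in this deterministic step. A naive Abel summation via $\alpha_k p_{k-1} = (2/a)(p_{k-1}-p_k)$ combined with the crude lower bound $\alpha_k \geq \alpha_{n-1}$ only yields $\sum p_{k-1} \leq 2/(a\alpha_{n-1})$, which blows up with $n$; obtaining an $n$-free constant requires genuine use of the super-polynomial decay of $\prod(1-\alpha_\ell a/2)$ for $\gamma < 1$ via the integral comparison above, and the prefactor $c_0 + 2/(a(1-\gamma))$ has to be traced carefully through the change of variables so that it agrees with the expression in $\ConstC_{\ref{th:pth_moment_bound}, 5}$.
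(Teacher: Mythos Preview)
Your approach is essentially identical to the paper's: Minkowski's inequality followed by the exponential stability bound \Cref{lem:matr_product_as_bound} (the paper cites \Cref{cor:norm_Gamma_m_n}, which is the same thing), and then the deterministic estimate $\sum_{k=1}^{n-1}\prod_{\ell=1}^k(1-a\alpha_\ell/2)\leq \frac{(1+k_0)^\gamma}{c_0}\bigl(c_0+\frac{2}{a(1-\gamma)}\bigr)$. The paper simply packages this last step as an application of \Cref{lem:summ_alpha_k}(iii) with $\ell=1$ and $b=a/2$, which is exactly the elementary integral-comparison bound you describe and correctly identify as the content behind \Cref{prop:Qell:bound}.
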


\begin{lemma}
    \label{lem:sigma_n_labmda_min_lowerbound}
    Let $p \geq 2$. Assume \Cref{assum:iid}, \Cref{assum:noise-level}, \Cref{assum:step-size}($p \vee \log d$), \Cref{assum:sample_size}. Then it holds that 
    \begin{equation}
        \label{eq:lambda_min_sigma_n_boot}
        \lambda_{\min}(\Sigma_{n}) \geq \ConstC_{\ref{lem:sigma_n_labmda_min_lowerbound}} \eqsp, \text{ where } \ConstC_{\ref{lem:sigma_n_labmda_min_lowerbound}} = \frac{\lambda_{\min}(\Sigma_{\infty})}{2}\eqsp.
    \end{equation}
\end{lemma}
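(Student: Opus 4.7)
The plan is to prove \Cref{lem:sigma_n_labmda_min_lowerbound} as a direct consequence of \Cref{lem:sigma_n_bound} combined with \Cref{assum:sample_size}, using Weyl/Lidskii-type eigenvalue perturbation. Since both $\Sigma_n$ and $\Sigma_\infty$ are symmetric positive semi-definite matrices, a standard perturbation inequality yields
\[
\lambda_{\min}(\Sigma_n) \;\geq\; \lambda_{\min}(\Sigma_\infty) - \normop{\Sigma_\infty - \Sigma_n}\eqsp,
\]
which is noted in the paragraph immediately following \Cref{lem:sigma_n_bound}.

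Next, \Cref{lem:sigma_n_bound} gives the operator norm bound $\normop{\Sigma_n - \Sigma_\infty} \leq \ConstC_{\ref{lem:sigma_n_bound}} n^{\gamma - 1}$, which is applicable because \Cref{assum:step-size}($p \vee \log d$) is in force (with $p \geq 2$) and the condition $n \geq k_0 + 1$ required by that lemma is part of \Cref{assum:sample_size}. The second clause of \Cref{assum:sample_size}, namely $n^{1-\gamma} \geq 2 \ConstC_{\ref{lem:sigma_n_bound}} / \lambda_{\min}(\Sigma_\infty)$, rearranges exactly to
\[
\ConstC_{\ref{lem:sigma_n_bound}} n^{\gamma - 1} \;\leq\; \tfrac{1}{2}\lambda_{\min}(\Sigma_\infty)\eqsp.
\]

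Substituting this into the Weyl bound yields $\lambda_{\min}(\Sigma_n) \geq \lambda_{\min}(\Sigma_\infty) - \tfrac{1}{2}\lambda_{\min}(\Sigma_\infty) = \tfrac{1}{2}\lambda_{\min}(\Sigma_\infty) = \ConstC_{\ref{lem:sigma_n_labmda_min_lowerbound}}$, which is the desired claim. There is no real obstacle here: the lemma is purely a bookkeeping corollary that isolates the minimal eigenvalue lower bound needed as an input to the Gaussian approximation argument of \Cref{theo:GAR}. The only thing to be careful about is citing Weyl's inequality in the correct symmetric form (equivalently, the $1$-Lipschitz property of $\lambda_{\min}$ in operator norm on symmetric matrices), and verifying that all invocations of \Cref{lem:sigma_n_bound} and \Cref{assum:sample_size} are consistent — but these are all immediate from the hypotheses listed in the statement.
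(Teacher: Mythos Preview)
Your proof is correct and follows essentially the same approach as the paper: apply Lidskii's (Weyl's) inequality to get $\lambda_{\min}(\Sigma_n) \geq \lambda_{\min}(\Sigma_\infty) - \normop{\Sigma_n - \Sigma_\infty}$, invoke \Cref{lem:sigma_n_bound} for the operator norm bound, and use the sample-size condition in \Cref{assum:sample_size} to conclude. There is nothing to add.
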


Now we introduce the vector $(Z_{1}^{\prime},\ldots,Z_{n-1}^{\prime})$ an independent copy of $(Z_1, \ldots, Z_{n-1})$, and introduce the following notation for $\ell \leq m$:
\begin{align}
    &\Gamma^{(i)}_{\ell:m} = \begin{cases}
        \Gamma_{\ell:m}, &\text{if } i \not\in [\ell, m], \\
        \Gamma_{\ell:m}(Z_\ell, \ldots, Z_{i-1}, Z_{i}', Z_{i+1}, \ldots, Z_m), &\text{if } i \in [\ell, m]\eqsp;
    \end{cases} \\
    &\Jnalpha{k}{\ell, i} = \begin{cases}
        \Jnalpha{k}{\ell}, &\text{if } k <i, \\
        \Jnalpha{k}{\ell}(Z_1, \ldots, Z_{i-1}, Z_{i}', Z_{i+1}, \ldots, Z_k), &\text{if } k \geq i\eqsp;
    \end{cases} \\
    &\Hnalpha{k}{\ell, i} = \begin{cases}
        \Hnalpha{k}{\ell}, &\text{if } k <i, \\
        \Hnalpha{k}{\ell}(Z_1, \ldots, Z_{i-1}, Z_{i}', Z_{i+1}, \ldots, Z_k), &\text{if } k \geq i\eqsp;
    \end{cases} \\
    &D^{(i)} = n^{-1/2} \Sigma_n^{-1/2} \sum_{k=1}^{n-1} \Hnalpha{k}{0, i} + n^{-1/2} \Sigma_n^{-1/2} \sum_{k=1}^{n-1} \Gamma_{1:k}^{(i)} (\theta_0 - \thetas)\eqsp.
\end{align}
Here $D^{(i)}$, $1 \leq i \leq n-1$, is a counterpart of $D$ with $Z_i$ substituted with $Z'_i$. In order to control $\PE[\norm{D - D^{(i)}}^2]$, we use the following two auxiliary lemmas. 
\begin{lemma}    
\label{lem:gamma_copies_difference}
    Assume \Cref{assum:iid}, \Cref{assum:noise-level}, \Cref{assum:step-size}($2 \vee \log d$) and \Cref{assum:sample_size}. Then it holds that
    \begin{align}
        \PE^{1/2}\bigl[\normop{\frac{1}{\sqrt{n}} \sum_{k=1}^{n-1} (\Gamma_{1:k}-\Gamma_{1:k}^{(i)}) (\theta_0 - \thetas)}^2\bigr] \leq \frac{ \normop{\theta_0 -  \thetas}}{\sqrt{n}} \ConstC_{\ref{lem:gamma_copies_difference}} \prod_{m=1}^{i-1} (1-\frac{a\alpha_m}{2})\eqsp, 
    \end{align}
where $\ConstC_{\ref{lem:gamma_copies_difference}} = \bConst{A} \qcond \rme^2 (c_0 + 2/((1-\gamma)a))$. 
\end{lemma}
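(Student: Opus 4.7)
The plan is to exploit the fact that the two iterates $\Gamma_{1:k}$ and $\Gamma_{1:k}^{(i)}$ differ only through the $i$-th factor, so that the difference telescopes and factors into three mutually independent pieces. After that, the estimate will follow from a Minkowski-type bound together with the moment estimate for random-matrix products in \Cref{lem:matr_product_as_bound} and a summation inequality analogous to the one used in \Cref{prop:Qell:bound}.

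First I would observe that for $k<i$ the variable $Z_i$ does not enter $\Gamma_{1:k}$, hence $\Gamma_{1:k}-\Gamma_{1:k}^{(i)}=0$. For $k\geq i$, using the multiplicative structure $\Gamma_{1:k}=\Gamma_{i+1:k}(\Id-\alpha_i\funcAw_i)\Gamma_{1:i-1}$ and the analogous identity for $\Gamma_{1:k}^{(i)}$, one obtains
\begin{equation*}
\Gamma_{1:k}-\Gamma_{1:k}^{(i)}= -\alpha_i\,\Gamma_{i+1:k}\bigl(\funcAw_i-\funcAw'_i\bigr)\Gamma_{1:i-1}\eqsp.
\end{equation*}
Summing over $k\in\{i,\ldots,n-1\}$ and setting $S=\sum_{k=i}^{n-1}\Gamma_{i+1:k}$, $Y=\funcAw_i-\funcAw'_i$ and $X=\Gamma_{1:i-1}(\theta_0-\thetas)$, the key identity becomes
\begin{equation*}
\sum_{k=1}^{n-1}\bigl(\Gamma_{1:k}-\Gamma_{1:k}^{(i)}\bigr)(\theta_0-\thetas)=-\alpha_i\,S\,Y\,X\eqsp.
\end{equation*}

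The crucial structural observation is that $X$, $Y$ and $S$ depend on disjoint blocks of the randomness ($\{Z_1,\ldots,Z_{i-1}\}$, $\{Z_i,Z'_i\}$ and $\{Z_{i+1},\ldots,Z_{n-1}\}$ respectively under \Cref{assum:iid}), and are therefore mutually independent. Combined with the submultiplicativity $\normop{SYX}\leq\normop{S}\,\normop{Y}\,\normop{X}$, this yields
\begin{equation*}
\PE^{1/2}\bigl[\normop{SYX}^2\bigr]\leq \PE^{1/2}\bigl[\normop{S}^2\bigr]\,\PE^{1/2}\bigl[\normop{Y}^2\bigr]\,\PE^{1/2}\bigl[\normop{X}^2\bigr]\eqsp.
\end{equation*}
The three factors are then bounded separately: \Cref{lem:matr_product_as_bound} gives $\PE^{1/2}[\normop{X}^2]\leq \sqrt{\qcond}\,\rme\,\norm{\theta_0-\thetas}\prod_{\ell=1}^{i-1}(1-a\alpha_\ell/2)$, while $\PE^{1/2}[\normop{Y}^2]\leq 2\bConst{A}$ follows from \eqref{eq:a_matr_bounded} and a triangle inequality.

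The main technical step is the bound on $\PE^{1/2}[\normop{S}^2]$. I will apply Minkowski's inequality followed by \Cref{lem:matr_product_as_bound} to obtain
\begin{equation*}
\PE^{1/2}\bigl[\normop{S}^2\bigr]\leq \sum_{k=i}^{n-1}\PE^{1/2}\bigl[\normop{\Gamma_{i+1:k}}^2\bigr]\leq \sqrt{\qcond}\,\rme\sum_{k=i}^{n-1}\prod_{\ell=i+1}^{k}\bigl(1-a\alpha_\ell/2\bigr)\eqsp,
\end{equation*}
and then multiply by $\alpha_i$ to use the summation estimate
\begin{equation*}
\alpha_i\sum_{k=i}^{n-1}\prod_{\ell=i+1}^{k}\bigl(1-a\alpha_\ell/2\bigr)\leq c_0+\frac{2}{a(1-\gamma)}\eqsp,
\end{equation*}
which is obtained by exactly the same integral-comparison argument used to derive $\normop{Q_\ell}\leq \ConstC_{\ref{prop:Qell:bound}}$ in \Cref{prop:Qell:bound} (one separates the $k=i$ term and bounds the tail by the integral $\int_i^{\infty}\exp(-\tfrac{a}{2}\int_i^t c_0 s^{-\gamma}ds)\,dt$, or invokes \Cref{lem:summ_alpha_k}). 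Gathering the three factors, dividing by $\sqrt n$ and absorbing the factor $2$ into $\rme^2$ produces the advertised constant $\ConstC_{\ref{lem:gamma_copies_difference}}=\bConst{A}\qcond\rme^{2}\bigl(c_0+2/((1-\gamma)a)\bigr)$ and the desired geometric contraction factor $\prod_{m=1}^{i-1}(1-a\alpha_m/2)$.

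I do not foresee a serious obstacle; the only mildly delicate point is ensuring that the summation inequality in the last display is uniform in $i$ so that the right-hand side depends on $i$ only through the "forgetting" factor $\prod_{m=1}^{i-1}(1-a\alpha_m/2)$ coming from $X$. This is exactly the same computation performed in \Cref{prop:Qell:bound}, so it should be a matter of quoting that lemma.
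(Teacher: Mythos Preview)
Your proposal is correct and follows essentially the same route as the paper: the paper also observes that the difference vanishes for $k<i$, factors $\Gamma_{1:k}-\Gamma_{1:k}^{(i)}$ as $-\alpha_i\,\Gamma_{i+1:k}(\funcAw_i-\funcAw'_i)\Gamma_{1:i-1}$, exploits the independence of the three blocks, bounds $\Gamma_{1:i-1}$ via \Cref{lem:matr_product_as_bound}, and controls $\alpha_i\sum_{k=i}^{n-1}\Gamma_{i+1:k}$ with Minkowski plus \Cref{lem:summ_alpha_k} exactly as in \Cref{prop:Qell:bound}. The one cosmetic slip is the phrase ``absorbing the factor $2$ into $\rme^2$''---that is not a valid absorption, so your bound actually yields $2\ConstC_{\ref{lem:gamma_copies_difference}}$; the paper's proof has the same apparent factor-of-two looseness in bounding $\normop{\funcAw_i-\funcAw'_i}$, so this is a harmless constant-tracking issue rather than a gap.
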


\begin{lemma}
\label{lem:Hk:copy_difference}
Assume \Cref{assum:iid}, \Cref{assum:noise-level}, \Cref{assum:step-size}($2 \vee \log d$) and \Cref{assum:sample_size}. Then it holds that
\begin{align}
n^{-1/2}\PE^{1/2}\bigl[\norm{\sum_{k=1}^{n-1} (\Hnalpha{k}{0}-\Hnalpha{k}{0, i})}^2\bigr] \leq \frac{\ConstC_{\ref{lem:Hk:copy_difference}}}{\sqrt{n}} \sqrt{\alpha_i} \eqsp, 
\end{align}
where $\ConstC_{\ref{lem:Hk:copy_difference}}$ is given by
\begin{align}
    \label{def:RH_const_def}
    \ConstC_{\ref{lem:Hk:copy_difference}} = \qcond \rme^2 \supconsteps \bConst{A}  \biggl(c_0 + \frac{2}{a(1-\gamma)}\biggr)^{3/2} + 2\bConst{A} \sqrt{\qcond} \rme  (\ConstJ{0} + \ConstH{0}) \biggl(c_0 + \frac{2}{a(1-\gamma)}\biggr) \eqsp.
\end{align}
\end{lemma}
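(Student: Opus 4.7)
}
The plan is to (i) solve the coupling recursion to obtain an explicit representation of $\sum_{k=1}^{n-1}(\Hnalpha{k}{0}-\Hnalpha{k}{0,i})$, (ii) bound the resulting initial-condition term by independence together with the available moment bounds, and (iii) handle the ``forcing'' term via a \emph{backward} martingale argument; step (iii) is the main technical obstacle. Set $\Delta_k := \Hnalpha{k}{0}-\Hnalpha{k}{0,i}$ and write $\zmfuncAw[i]' := \funcA{Z_i'}-\bA$, $\funnoisew_i' := \funcnoise{Z_i'}$.

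Since $\Hnalpha{k}{0}$ depends only on $Z_1,\dots,Z_k$, $\Delta_k=0$ for $k<i$. Subtracting the two $H$-recursions at $k=i$ and using that $\Hnalpha{i-1}{0}$ and $\Jnalpha{i-1}{0}$ do not depend on $Z_i$ gives $\Delta_i = -\alpha_i(\zmfuncAw[i]-\zmfuncAw[i]')(\Hnalpha{i-1}{0}+\Jnalpha{i-1}{0})$. For $k>i$ the recursion $\Delta_k=(\Id-\alpha_k\funcAw_k)\Delta_{k-1}-\alpha_k\zmfuncAw[k](\Jnalpha{k-1}{0}-\Jnalpha{k-1}{0,i})$ becomes linear in $\funnoisew_i-\funnoisew_i'$ once we use the closed form $\Jnalpha{k-1}{0}=-\sum_{\ell=1}^{k-1}\alpha_\ell G_{\ell+1:k-1}\funnoisew_\ell$, which yields $\Jnalpha{k-1}{0}-\Jnalpha{k-1}{0,i}=-\alpha_i G_{i+1:k-1}(\funnoisew_i-\funnoisew_i')$. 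Solving the recursion, summing $k=i,\dots,n-1$, swapping the summation order, and introducing the random analogue $\tilde Q_m:=\alpha_m\sum_{k=m}^{n-1}\ProdB_{m+1:k}$ of $Q_m$, I arrive at
\begin{equation}
\label{eq:plan_H_decomp}
\sum_{k=1}^{n-1}\Delta_k = -\tilde Q_i(\zmfuncAw[i]-\zmfuncAw[i]')(\Hnalpha{i-1}{0}+\Jnalpha{i-1}{0}) + \alpha_i\sum_{m=i+1}^{n-1}\tilde Q_m\zmfuncAw[m]G_{i+1:m-1}(\funnoisew_i-\funnoisew_i').
\end{equation}

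For the first summand in \eqref{eq:plan_H_decomp}, the three factors $\tilde Q_i$, $\zmfuncAw[i]-\zmfuncAw[i]'$, and $\Hnalpha{i-1}{0}+\Jnalpha{i-1}{0}$ depend on disjoint blocks of the $Z$-sequence and are therefore mutually independent, so the $L^2$ norm factorizes. \Cref{lem:matr_product_as_bound} combined with the summation argument of \Cref{prop:Qell:bound} gives $\PE^{1/2}[\norm{\tilde Q_i}^2]\leq\sqrt{\qcond}\rme(c_0+2/(a(1-\gamma)))$; \Cref{assum:noise-level} gives $\norm{\zmfuncAw[i]-\zmfuncAw[i]'}\leq 2\bConst{A}$ almost surely; and \Cref{lem:Jk_ell:p_moment} with $p=2$, together with $\alpha_{i-1}/\alpha_i\leq 2$, yields $\PE^{1/2}[\norm{\Hnalpha{i-1}{0}}^2]+\PE^{1/2}[\norm{\Jnalpha{i-1}{0}}^2]\lesssim(\ConstJ{0}+\ConstH{0})\sqrt{\alpha_i}$. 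Their product reproduces the second summand of $\ConstC_{\ref{lem:Hk:copy_difference}}$.

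The second summand in \eqref{eq:plan_H_decomp} is the main hurdle: the naive triangle bound $\alpha_i\sum_m\norm{\tilde Q_m}\bConst{A}\norm{G_{i+1:m-1}}\cdot 2\supconsteps$ is only $O(1)$, because $\sum_m\norm{G_{i+1:m-1}}=O(1/\alpha_i)$ (by \Cref{prop:Qell:bound}) consumes the prefactor $\alpha_i$ entirely. To recover the $\sqrt{\alpha_i}$ rate I exploit $\PE[\zmfuncAw[m]]=0$ through a \emph{backward} martingale. Writing the summand as $\alpha_i\sum_m\tilde Q_m w_m$ with $w_m:=\zmfuncAw[m]G_{i+1:m-1}(\funnoisew_i-\funnoisew_i')$, and taking the decreasing filtration $\mcg_m:=\sigma(Z_i,Z_i',Z_m,Z_{m+1},\dots,Z_{n-1})$, observe that $\tilde Q_m$ is $\mcg_{m+1}$-measurable, $w_m$ is $\mcg_m$-measurable, and independence of $Z_m$ from $\mcg_{m+1}$ combined with $\PE[\zmfuncAw[m]]=0$ gives $\PE[\tilde Q_m w_m\mid\mcg_{m+1}]=\tilde Q_m\PE[w_m\mid\mcg_{m+1}]=0$. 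Hence $\{\tilde Q_m w_m\}_{m>i}$ is a backward-martingale-difference sequence, its terms are $L^2$-orthogonal, and $\PE\norm{\sum_m\tilde Q_m w_m}^2=\sum_m\PE\norm{\tilde Q_m w_m}^2$. Since moreover $\tilde Q_m$ and $w_m$ depend on disjoint $Z$-coordinates and are independent, I bound $\PE\norm{\tilde Q_m w_m}^2\leq\PE\norm{\tilde Q_m}^2\PE\norm{w_m}^2\leq \qcond\rme^2(c_0+2/(a(1-\gamma)))^2\cdot 4\bConst{A}^2\supconsteps^2\norm{G_{i+1:m-1}}^2$. The squared-exponential counterpart $\sum_m\norm{G_{i+1:m-1}}^2\leq\qcond(c_0+2/(a(1-\gamma)))/\alpha_i$ of \Cref{prop:Qell:bound} then gives $\PE^{1/2}\norm{\sum_m\tilde Q_m w_m}^2\lesssim\qcond\rme\bConst{A}\supconsteps(c_0+2/(a(1-\gamma)))^{3/2}/\sqrt{\alpha_i}$; multiplying by the external $\alpha_i$ restores the desired $\sqrt{\alpha_i}$ factor and reproduces the first summand of $\ConstC_{\ref{lem:Hk:copy_difference}}$. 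Combining the two bounds and regrouping constants finishes the proof.
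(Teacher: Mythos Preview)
Your proposal is correct and follows essentially the same approach as the paper's proof. The paper arrives at the identical decomposition \eqref{eq:plan_H_decomp} (writing the two summands as $\sum_k T_1^{(k)}$ and $\sum_k T_2^{(k)}=\sum_j U_j$), bounds the first via independence and \Cref{lem:Jk_ell:p_moment} just as you do, and handles the second using exactly your backward-martingale argument with the same decreasing filtration $\mcg_m=\sigma(Z_i,Z_i',Z_m,\dots,Z_{n-1})$ to obtain $L^2$-orthogonality of the summands.
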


\paragraph{Proof of \Cref{th:shao2022_berry}}
\label{appendix:proof_of_shao2022_berry}
We first introduce the constant 
\begin{equation}
\label{eq:const_C_10_def}
\ConstC_{\ref{th:shao2022_berry}} = \frac{3\sqrt{d}\ConstC_{\ref{lem:sigma_n_bound}}}{2\lambda_{\min}(\Sigma_\infty)}\eqsp.
\end{equation}
Applying the triangle inequality, 
\begin{equation}
\label{eq:conex_distance_triangle_inequality}
\kolmogorov(\sqrt{n}(\bar{\theta}_{n} - \thetas), \Sigma_{\infty}^{1/2} \eta) \leq \kolmogorov(\sqrt{n}(\bar{\theta}_{n} - \thetas), \Sigma_{n}^{1/2} \zeta) + \kolmogorov(\Sigma_{n}^{1/2} \zeta,\Sigma_{\infty}^{1/2} \eta)\eqsp,
\end{equation}
where $\eta,\zeta \sim \mathcal{N}(0,\Id_d)$. Then the first term in r.h.s. is controlled with \Cref{theo:GAR}, and it remains to upper bound $\kolmogorov(\Sigma_{\infty}^{1/2} \eta, \Sigma_{n}^{1/2} \zeta)$. Towards this aim, we apply the Gaussian comparison inequality of \cite[Theorem 1.1]{Devroye2018}, see also \cite{BarUly86}. Assumption \Cref{assum:sample_size} and \Cref{lem:sigma_n_bound} imply that
\begin{align}
\normop{\Sigma_{\infty}^{-1/2}\Sigma_n\Sigma_{\infty}^{-1/2} - \Id_d} \leq \normop{\Sigma_{\infty}^{-1}} \normop{\Sigma_n - \Sigma_\infty} \leq \frac{\ConstC_{\ref{lem:sigma_n_bound}}}{\lambda_{\min}(\Sigma_\infty) n^{1-\gamma}} \leq \frac{1}{2} \eqsp.
\end{align}
On the other hand, the following bound holds:
\begin{align}
\trace{(\Sigma_{\infty}^{-1/2}\Sigma_n\Sigma_{\infty}^{-1/2} - \Id_d)^2} \leq \frac{\ConstC_{\ref{lem:sigma_n_bound}}^2 d}{\lambda^2_{\min}(\Sigma_\infty) n^{2(1-\gamma)}} \eqsp.
\end{align}
Hence, applying \cite[Theorem 1.1]{Devroye2018}, we get 
\begin{align}
\kolmogorov(\Sigma_{n}^{1/2} \zeta,\Sigma_{\infty}^{1/2} \eta) \leq \frac{3\ConstC_{\ref{lem:sigma_n_bound}}}{2\lambda_{\min}(\Sigma_\infty)} \frac{\sqrt{d}}{n^{1-\gamma}} \eqsp,
\end{align}
and it remains to substitute this bound into \eqref{eq:conex_distance_triangle_inequality}.

\subsection{Proofs of \Cref{sec:bootstrap}}
\label{sec:proof:boot_validity}
\label{sec:bootstrap_validity}
\subsubsection{Preliminary steps for Gaussian approximation under $\PPb$}
\label{subsec:linear_part_def}
We first identify the linear ($W^{\boot}$) and non-linear ($D^{\boot}$) parts of the error decomposition \eqref{eq:linear_part_proof}. We start from the decomposition 
\begin{equation}
\label{eq:one_step_expand}
\theta_{k}^\boot - \theta_{k} = (\Id - \alpha_{k} w_k \funcAw_k)(\theta_{k-1}^\boot - \theta_{k-1})  - \alpha_{k} (w_k-1) \tilde{\funnoisew}_{k},
\end{equation}
where we have set
\begin{equation}
\label{eq:definition-tilde-funnoisew}
\tilde \funnoisew_{k} =  \funcAw_{k} (\theta_{k-1} - \thetas) + \funnoisew_{k}\eqsp.
\end{equation}
To simplify the notation, we omit the bootstrap replication index, which is implicit in the sequel.
Expanding the recurrence above till $k = 0$, and using the fact that $\theta_{0}^\boot = \theta_{0}$, we obtain from \eqref{eq:one_step_expand} that 
\[
\theta_{k}^\boot - \theta_{k} = -\sum_{\ell=1}^k \alpha_\ell (w_\ell - 1)\ProdB^{\boot}_{\ell+1:k} \tilde \funnoisew_{\ell}\eqsp.
\]
where we have defined, similarly to \eqref{eq:prod_rand_matr}, the product of random matrices
\begin{equation}
\label{eq:prod_rand_matr-bootstrap}
\ProdB^{\boot}_{m:k} = \prod_{\ell = m}^k (\Id - \alpha_\ell w_\ell \funcAw_\ell)\eqsp,\,  m \leq k\eqsp,\ \text{ and } \ProdB^{\boot}_{m:k} = \Id\eqsp, \quad m > k\eqsp.
\end{equation}
Proceeding as in \eqref{eq:jn0_main}, we consider the decomposition  $\theta_{k}^\boot - \theta_{k}= \Jnalpha{k}{\boot,0} + \Hnalpha{k}{\boot,0} $, where we have set
\begin{align}
\label{eq:jn0_bootstrap}
&\Jnalpha{k}{\boot,0} =\left(\Id - \alpha_{k} \funcAw_k \right) \Jnalpha{k-1}{\boot,0} - \alpha_{k}(w_k - 1)\tilde{\funnoisew_{k}}\eqsp, && \Jnalpha{0}{\boot,0}=0\eqsp, \\[.1cm]
\label{eq:hn0_bootstrap}
&\Hnalpha{k}{\boot,0} =\left( \Id - \alpha_{k} 
 w_{k} \funcAw_k \right) \Hnalpha{k-1}{\boot,0} - \alpha_{k} (w_k - 1) \funcAw_k \Jnalpha{k-1}{\boot,0}\eqsp, && \Hnalpha{0}{\boot,0}=0\eqsp.
\end{align}
The idea of the decomposition \eqref{eq:jn0_bootstrap}-\eqref{eq:hn0_bootstrap} is similar to the one outlined before in \eqref{eq:jn0_main}-\eqref{eq:hn0_main}, since the statistic $\Jnalpha{k}{\boot,0}$ is linear when considered under $\PP^{\boot}$ (that is, when we consider only the randomness due to the bootstrap weights $(w_{k})$). With the decomposition \eqref{eq:jn0_bootstrap}-\eqref{eq:hn0_bootstrap}, we get by averaging the iterates
\begin{align}
\label{eq:bootstrap_world_shao_decomposition}
\sqrt{n}(\btheta^\boot_n - \btheta_{n}) = \frac{1}{\sqrt{n}} \sum_{k=1}^{n-1} \Jnalpha{k}{\boot,0} +  \frac{1}{\sqrt{n}} \sum_{k=1}^{n-1} \Hnalpha{k}{\boot, 0} \eqsp.
\end{align}
Unfortunately, the representation \eqref{eq:bootstrap_world_shao_decomposition} does not exactly match the one for $\sqrt{n}(\bar{\theta}_{n} - \thetas)$ outlined in \eqref{eq:linear and nonlinear terms}. Indeed, the latter one shows that $\sqrt{n}(\bar{\theta}_{n} - \theta^\star) = \Sigma_n^{1/2}W + \Sigma_n^{1/2} D$, and $\PE[\Sigma_n^{1/2}W W^{\top}\{\Sigma_n^{1/2}\}^{\top}] = \Sigma_n$. At the same time, simple calculations show that 
\[
\PE\bigl[\var^\boot \bigl[\frac{1}{\sqrt{n}} \sum_{k=1}^{n-1} \Jnalpha{k, 0}{\boot, 0}\bigr]\bigr] \neq \Sigma_n\eqsp.
\]
This issue is due to additional term $\funcAw_{\ell} (\theta_{\ell-1} - \thetas)$ arising in the definition of $\tilde{\funnoisew_{k}}$ in \eqref{eq:definition-tilde-funnoisew}. In order to overcome this problem, we further represent $\Jnalpha{k}{\boot,0} = \sum_{i=0}^{2}\Jnalpha{k, i}{\boot,0}$, where
\begin{equation}
\label{eq:decomposition-J-k-0-boot}
\begin{split}
\Jnalpha{k, 0}{\boot, 0} &= -\sum_{\ell=1}^k \alpha_\ell (w_\ell-1) G_{\ell+1:k} \eps_\ell ,  \quad 
\Jnalpha{k, 1}{\boot, 0} = -\sum_{\ell=1}^k \alpha_\ell (w_\ell-1) (\Gamma_{\ell+1:k} - G_{\ell+1:k}) \eps_\ell, \\
\Jnalpha{k, 2}{\boot, 0} &= -\sum_{\ell=1}^k \alpha_\ell (w_\ell-1) \Gamma_{\ell+1:k} \funcAw_\ell (\theta_{\ell-1}-\thetas),
\end{split}
\end{equation} 
It is easily seen that $\sum_{k=1}^{n-1} \Jnalpha{k,0}{\boot,0}= \sum_{\ell=1}^{n-1}  (w_\ell-1) Q_\ell \eps_\ell$, where $(Q_\ell)$ is defined in \eqref{eq:sigma_n_definition}, moreover, 
\[
\Sigma_n^\boot := \var^\boot \bigl[\frac{1}{\sqrt{n}} \sum_{k=1}^{n-1} \Jnalpha{k, 0}{\boot, 0}\bigr]= \frac{1}{n} \sum_{\ell=1}^{n-1} Q_\ell \eps_{\ell} \eps_{\ell}^\top Q_\ell^\top \eqsp, \text{ and } \PE[\Sigma_n^\boot] = \Sigma_n\eqsp.
\]
Later we show that $\Jnalpha{k, i}{\boot, 0}$, $i=1,2$  are negligible relative to the leading term $\Jnalpha{k, 0}{\boot, 0}$. Now we rely on the "bootstrap-world" decomposition 
\begin{equation}
\label{eq:shao_bootstrap_world_proof}
\sqrt{n}(\btheta^\boot_n - \btheta_{n}) = (\Sigma_n^\boot)^{1/2} W^\boot + (\Sigma_n^\boot)^{1/2} D^\boot\eqsp,
\end{equation}
where we have set 
\begin{align}
\label{eq:definition-W-boot}
W^\boot &= n^{-1/2} (\Sigma_n^\boot)^{-1/2} \sum_{k=1}^{n-1} \Jnalpha{k,0}{\boot, 0} =: \sum_{k=1}^{n-1} \xi_k^\boot , \quad \text{where 
$\xi_k^\boot=  n^{-1/2} (\Sigma_n^\boot)^{-1/2} (w_k-1) Q_k \eps_k$}, \\
\label{eq:definition-D-boot}
D^\boot &=  (\Sigma_n^\boot)^{-1/2} \biggl(\frac{1}{\sqrt{n}} \sum_{k=1}^{n-1} \Jnalpha{k, 1}{\boot, 0} + \frac{1}{\sqrt{n}} \sum_{k=1}^{n-1} \Jnalpha{k, 2}{\boot, 0}+  \frac{1}{\sqrt{n}} \sum_{k=1}^{n-1} \Hnalpha{k}{\boot, 0}\biggr) \eqsp.
\end{align}
In this decomposition, $W^{\boot}$ is the linear part whereas $D^{\boot}$ is the nonlinear part. 
With these notations and preliminary results, we are in a position to provide the proof of \Cref{th:bootstrap_validity}. Some proofs of technical lemmas are postponed to the appendix.

The following decomposition allows us to formalize the structure outlined in the sketch of proof given in \Cref{fig:high-level-sketch-of-proof}:
\begin{equation}
\label{eq:decomposition}
\sup_{B \in \Conv(\rset^{d})} |\PPb(\sqrt n (\bar{\theta}_{n}^\boot - \bar{\theta}_n) \in B ) - \PP(\sqrt n (\bar{\theta}_n - \thetas) \in B)| \leq T_1+ T_2+T_3\eqsp,
\end{equation}
where
\begin{equation}
\label{eq:T_1_till_T_3_def}
\begin{split}
T_1 &\eqdef \sup_{B \in \Conv(\rset^{d})} \big| \P\bigl(\sqrt{n}(\bar{\theta}_{n} - \thetas) \in B\bigr) - \P(\Sigma_n^{1/2} \eta \in B) \big|\eqsp, \\
T_2 &\eqdef \sup_{B \in \Conv(\rset^{d})} \big| \PP(\Sigma_n^{1/2} \eta \in B) - \PPb((\Sigma_n^{\boot} )^{1/2} \eta \in B) \big| \eqsp, \\
T_3 &\eqdef \sup_{B \in \Conv(\rset^{d})} \big| \PPb(\sqrt n (\bar{\theta}_{n}^\boot - \bar{\theta}_n) \in B) - \PPb((\Sigma_n^{\boot} )^{1/2} \eta \in B)\big| \eqsp, 
\end{split}
\end{equation}
and $\eta \sim \mathcal{N}(0, \Id)$ under $\PP$ and $\PP^{\boot}$. 
Our next objective is to obtain bounds on these three terms.  For the term $T_1$, it suffices to apply \Cref{theo:GAR}. Consider now $T_2$. In this case, we are comparing two centered Gaussian distributions that differ in their covariance matrices. We begin by applying Pinsker’s inequality to bound the total variation distance, which itself serves as an upper bound for the convex distance, using \cite[Theorem~1.1]{Devroye2018}:
\begin{equation}
\label{eq: Pinsker}
\| \mathcal N(0, \Sigma_1) - \mathcal N(0,\Sigma_2)\|_{\mathsf{TV}} \le \frac{3 \sqrt{d}}{2} \normop{\Sigma_1^{-1/2} \Sigma_2 \Sigma_1^{-1/2} - \Id} \,.
\end{equation}
Applying the inequality above yields  
\[
T_2 \le \frac{3 \sqrt{d}}{2 \, \lambda_{\min}(\Sigma_n)} \, \normop{\Sigma_n^\boot - \Sigma_n}.
\]  
Bounding $T_2$ therefore boils down to obtain a high-probability bound for $\normop{\Sigma_n^\boot - \Sigma_n}$. Such bound 
follow from the matrix Bernstein inequalities, developed in \cite{tropp2015introduction}. Detailed argument is given below.

The main technical challenge arises in controlling the term $T_3$, which requires decomposing the quantity $\Hnalpha{k}{\boot,0}$ in a manner analogous to the decomposition in \eqref{eq:error_decomposition_LSA}. It is worth noting, however, that once again, the quantities introduced and the method used to derive the bounds are markedly different than the ones used in \Cref{sec:proof:proofs_moments}. Along the lines of \eqref{eq:jn_allexpansion_main}, we 
expand $\Hnalpha{k}{\boot,0}$ as follows:
\begin{equation}
\label{eq:decomposition_error_bewtween_b_and_r_worlds}
\theta_{k}^\boot - \theta_{k} = J_k^{\boot, 0} + \sum_{j=1}^L J_k^{\boot, j}  + H_k^{\boot, L}, 
\end{equation}
where 
\begin{equation}
\label{eq:higher-order-expansion-H-boot}
\begin{split}
   \Jnalpha{k}{\boot, 0} &= - \sum_{\ell=1}^k \alpha_\ell (w_\ell - 1)\ProdB_{\ell+1:k} \tilde \funnoisew_{\ell}, \quad \Jnalpha{k}{\boot, j} = - \sum_{\ell=1}^k \alpha_\ell (w_\ell - 1)\ProdB_{\ell+1:k} A_\ell \Jnalpha{\ell-1}{\boot, j-1}, \quad j \in [1, L] \\
   \label{eq:H_k_boot_def}
   \Hnalpha{k}{\boot, L} & = -\sum_{\ell=n+1}^k \alpha_\ell (w_\ell - 1)\ProdB^{\boot}_{\ell+1:k} A_\ell \Jnalpha{\ell-1}{\boot, L}\eqsp,
\end{split}
\end{equation}
Similar to \Cref{sec:proof:proofs_moments}, we will establish upper bounds on the $p$-th moments under $\PPb$ of $\Jnalpha{k}{\boot,j}$, $\Hnalpha{k}{\boot,j}$, $j \in [0;L]$ and $\tilde \funnoisew_\ell$. 
However, the proofs differ significantly from the previous case, which relied heavily on the exponential stability of products of random matrices $\ProdB_{m:k}$ (see \Cref{lem:matr_product_as_bound}).
The proof strategy consists of two steps. First, we define certain events in the 'original world' under which the various quantities of interest can be controlled. Second, we show that these events occur with high probability—specifically, of order $1- \iota/n$, for an appropriately chosen $\iota > 0$. We define first:
\begin{equation}
\label{eq:definition-Omega-1}
\Omega_{1} = \bigcap_{k=1}^{n-1} \bigl\{\norm{\theta_k - \thetas} \leq g(k,\norm{\theta_0-\thetas},n)\eqsp
\bigr\} \eqsp,
\end{equation}
where we have set 
\[
g(k,t,n) = \sqrt{\qcond} \rme^2 t \prod_{\ell=1}^k (1 - \frac{a}{2} \alpha_\ell) + 2\rme \log(5n) \ConstC_{\ref{lem:last_moment_bound}} \sqrt{\alpha_{k}}\eqsp.
\]
Applying \Cref{lem:last_moment_bound} with for $2 \leq p \leq \log(5n^2)$ and  then  \Cref{lem:markov_inequality}, we get that for every fixed $k \in [1;n-1]$, 
\[
\PP \left(\norm{\theta_k - \thetas} \geq g(k,\norm{\theta_0-\thetas},n) \right) \leq \frac{1}{5n^2}\eqsp.
\]
By the union bound, we obtain that $\PP(\Omega_1) \geq 1 - 1/(5n)$. We may show that
\begin{lemma}
    \label{lem:tilde_eps_boot_bound}
    Under the assumptions of \Cref{th:bootstrap_validity}, on the event $\Omega_1$, it holds for any $\ell \geq 1$ that
    \begin{align}
        \norm{\tilde\eps_\ell} \leq \ConstC_{\ref{lem:tilde_eps_boot_bound}}, \text{ where } \ConstC_{\ref{lem:tilde_eps_boot_bound}} = \supconsteps + 2 \rme \bConst{A} \ConstC_{\ref{lem:last_moment_bound}} + \sqrt{\qcond} \rme^3 \bConst{A} \norm{\theta_0 - \thetas} \eqsp.
    \end{align}
\end{lemma}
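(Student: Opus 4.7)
\textbf{Proof plan for \Cref{lem:tilde_eps_boot_bound}.} The plan is to start from the definition
\[
\tilde\funnoisew_\ell = \funcAw_\ell(\theta_{\ell-1} - \thetas) + \funnoisew_\ell,
\]
given in \eqref{eq:definition-tilde-funnoisew}, apply the triangle inequality, and then use the operator-norm bound $\normop{\funcAw_\ell} \leq \bConst{A}$ from \Cref{assum:noise-level} together with $\supnorm{\funnoisew} \leq \supconsteps$. This reduces the problem to uniformly bounding $\norm{\theta_{\ell-1} - \thetas}$ on $\Omega_1$ for every $\ell \in \{1,\dots,n\}$. For $\ell = 1$ the bound is trivially $\norm{\theta_0 - \thetas}$, and for $\ell \geq 2$ the defining event $\Omega_1$ in \eqref{eq:definition-Omega-1} yields $\norm{\theta_{\ell-1} - \thetas} \leq g(\ell-1,\norm{\theta_0-\thetas},n)$.

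Next, I would bound the two terms of $g(k,\norm{\theta_0-\thetas},n)$ separately. The contracting factor $\prod_{\ell=1}^k(1-a\alpha_\ell/2) \leq 1$ gives immediately
\[
\sqrt{\qcond}\,\rme^2\norm{\theta_0-\thetas}\prod_{\ell=1}^{k}(1-a\alpha_\ell/2) \leq \sqrt{\qcond}\,\rme^2\norm{\theta_0 - \thetas}.
\]
For the fluctuation term $2\rme\log(5n)\ConstC_{\ref{lem:last_moment_bound}}\sqrt{\alpha_k}$, the key observation is that the constraint $k_0^\gamma \geq c_0\log^2(5n)/\min\{1,a\}$ imposed in \Cref{assum:step-size-bootstrap} forces $\alpha_k \leq c_0/k_0^\gamma \leq 1/\log^2(5n)$ for all $k \geq 0$. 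Hence $\log(5n)\sqrt{\alpha_k} \leq 1$, so this term is uniformly bounded by $2\rme\ConstC_{\ref{lem:last_moment_bound}}$.

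Combining these ingredients yields, on $\Omega_1$ and for any $\ell \geq 2$,
\[
\norm{\tilde\eps_\ell} \leq \supconsteps + \bConst{A}\bigl(\sqrt{\qcond}\,\rme^2\norm{\theta_0-\thetas} + 2\rme\ConstC_{\ref{lem:last_moment_bound}}\bigr) \leq \supconsteps + 2\rme\bConst{A}\ConstC_{\ref{lem:last_moment_bound}} + \sqrt{\qcond}\,\rme^3\bConst{A}\norm{\theta_0-\thetas},
\]
and for $\ell=1$ the even sharper bound $\norm{\tilde\eps_1} \leq \supconsteps + \bConst{A}\norm{\theta_0-\thetas}$ is absorbed in the same expression since $\bConst{A} \leq \sqrt{\qcond}\,\rme^3\bConst{A}$.

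There is no real obstacle here: the argument is essentially a uniform-in-$\ell$ application of \Cref{lem:last_moment_bound} (already packaged into the definition of $\Omega_1$), combined with the step-size calibration in \Cref{assum:step-size-bootstrap}. The only non-obvious point, and the reason the lemma is stated at this location, is the use of the $\log^2(5n)$ condition on $k_0$ to kill the $\log(5n)$ factor coming from the high-probability bound on the last iterate; without this calibration, $\tilde\eps_\ell$ would only be controlled up to a logarithmic factor, which would propagate unfavorably in the Bernstein-type bounds used later to control $\normop{\Sigma_n^\boot - \Sigma_n}$ and the bootstrap counterparts of $\Jnalpha{k}{\boot,j},\Hnalpha{k}{\boot,j}$.
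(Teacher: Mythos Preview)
Your proposal is correct and follows exactly the approach the paper takes: write $\tilde\eps_\ell = \eps_\ell + \funcAw_\ell(\theta_{\ell-1}-\thetas)$, invoke the definition of $\Omega_1$, and use the step-size calibration from \Cref{assum:step-size-bootstrap} (specifically the condition $k_0^\gamma \geq c_0\log^2(5n)/\min\{1,a\}$) to absorb the $\log(5n)$ factor. The paper's own proof is a two-line pointer to these ingredients; you have simply spelled out the arithmetic.
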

\noindent Similarly, we introduce the following  event  
\begin{equation}
\label{eq:definition-Omega-2}
\Omega_{2} = \bigcap_{1 \le m \le k \le n} \bigl\{\|\ProdB_{m:k}\| \le \ConstC_{\ref{lem:matr_product_as_bound}} \prod_{j = m}^{k} \bigl(1 - \frac{a \alpha_{j}}{2}\bigr) \bigr\}\eqsp.
\end{equation}
Using the exponential stability of $\ProdB_{m:k}$  (see \Cref{lem:matr_product_as_bound}) 
with $p=\log(5 n^3)$  and \Cref{lem:markov_inequality}, we get that with probability at least $1-1/(5n^3)$,
\begin{equation}
\label{eq:matrix_hpd_bound}
\normop{\ProdB_{m:k}} 
\leq \ConstC_{\ref{lem:matr_product_as_bound}} \prod_{\ell=m}^{k}\bigl(1 - \frac{a \alpha_{\ell}}{2}\bigr) \leq  \ConstC_{\ref{lem:matr_product_as_bound}} \exp\bigl\{-\frac{a}{2} \sum_{\ell=m}^{k}\alpha_\ell\bigr\} \quad \text{where } \ConstC_{\ref{lem:matr_product_as_bound}} = \sqrt{\qcond} \rme^2\eqsp.
\end{equation}
By the union bound, we get $\PP(\Omega_2) \geq 1-1/(5n)$. It is also required to consider
\begin{equation}
\label{eq:definition-Omega-3}
\Omega_{3} = \bigcap_{\ell=1}^n \bigl\{\norm{\alpha_\ell \sum_{k=\ell}^{n-1} (\Gamma_{\ell+1:k} - G_{\ell+1:k}) \eps_\ell} \leq  \ConstC_{\ref{lem:gamma_deviation_bound}}\sqrt{\alpha_\ell} \log(5n) \bigr\}\eqsp, 
\end{equation} 
Here again, we may show that $\PP(\Omega_3) \geq 1 - 1/(5n)$. A detailed proof is given in \Cref{lem:proof-Omega-Matrix-Bernstein-0}. On the event $\bigcap_{i=1}^3 \Omega_i$, we derive below bounds for $J_{k,i}^{\boot,0}$, $i = 1, 2$, defined in \eqref{eq:decomposition-J-k-0-boot}.
\begin{lemma}
\label{lemma:EbJk1:bound}
Under the assumptions of \Cref{th:bootstrap_validity}, on the event $\Omega_3$, it holds that 
\begin{align}
\bigl\{\mathbb{E}^\boot \bigl[\norm{n^{-1/2} \sum\nolimits_{k=1}^{n-1} J_{k,1}^{\boot, 0}}^2 \bigr] \bigr\}^{1/2} \leq \ConstC_{\ref{lemma:EbJk1:bound}} \frac{\log{(5n)}}{n^{\gamma/2}}\eqsp, \text{ where } \ConstC_{\ref{lemma:EbJk1:bound}} = \frac{\supconsteps \sqrt{c_0} \ConstC_{\ref{lem:gamma_deviation_bound}}}{\sqrt{1-\gamma}}\eqsp.
\end{align}
\end{lemma}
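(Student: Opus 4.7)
The plan is to exploit the crucial fact that, under $\PPb$, the multiplier weights $(w_\ell)_{\ell \geq 1}$ are independent of the data $\mathcal{Z}^{n-1}$ and of each other, with $\PEb[w_\ell - 1] = 0$ and $\PEb[(w_\ell-1)(w_m-1)] = \indin{\ell = m}$. This reduces the $\PEb$-variance computation to a one-line Parseval-type identity. First I would interchange the order of summation in the definition of $\Jnalpha{k, 1}{\boot, 0}$ from \eqref{eq:decomposition-J-k-0-boot} via Fubini:
\begin{equation*}
\sum_{k=1}^{n-1} \Jnalpha{k, 1}{\boot, 0} = -\sum_{\ell=1}^{n-1} (w_\ell - 1)\, v_\ell, \quad \text{where} \quad v_\ell := \alpha_\ell \sum_{k=\ell}^{n-1} (\Gamma_{\ell+1:k} - G_{\ell+1:k})\, \eps_\ell.
\end{equation*}
Each $v_\ell$ is $\sigma(\mathcal{Z}^{n-1})$-measurable, hence deterministic under $\PPb$.

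Expanding the squared Euclidean norm and taking $\PEb$, all cross terms vanish thanks to the orthogonality of the centered weights, yielding
\begin{equation*}
\PEb \bigl[\norm{\sum_{\ell=1}^{n-1} (w_\ell - 1)\, v_\ell}^2\bigr] = \sum_{\ell=1}^{n-1} \norm{v_\ell}^2.
\end{equation*}
On the event $\Omega_3$ defined in \eqref{eq:definition-Omega-3}, the bound $\norm{v_\ell} \leq \ConstC_{\ref{lem:gamma_deviation_bound}} \sqrt{\alpha_\ell} \log(5n)$ is in force for every $\ell \in \{1,\ldots,n-1\}$; the factor $\supconsteps$ appearing in the stated constant can be absorbed either directly from the form of $\Omega_3$ or, if one prefers an operator-norm version of \Cref{lem:gamma_deviation_bound} for the matrix $\alpha_\ell \sum_k (\Gamma_{\ell+1:k} - G_{\ell+1:k})$, by combining it with the pointwise bound $\norm{\eps_\ell} \leq \supconsteps$.

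Finally, I would sum using the elementary estimate $\sum_{\ell=1}^{n-1} \alpha_\ell \leq c_0\, n^{1-\gamma}/(1-\gamma)$, valid under \Cref{assum:step-size} with $\gamma \in (1/2, 1)$. Combining the three displays and multiplying by $n^{-1}$ gives
\begin{equation*}
\PEb\bigl[\norm{n^{-1/2} \sum_{k=1}^{n-1} \Jnalpha{k, 1}{\boot, 0}}^2\bigr] \leq \frac{c_0 \, \ConstC_{\ref{lem:gamma_deviation_bound}}^2 \log^2(5n)}{(1-\gamma)\, n^\gamma},
\end{equation*}
and the announced bound follows by taking square roots. There is no substantive obstacle: the high-probability control of the random matrix perturbation $\alpha_\ell \sum_k (\Gamma_{\ell+1:k} - G_{\ell+1:k})\, \eps_\ell$, which is the technical heart of the argument, has already been absorbed into the definition of $\Omega_3$ via \Cref{lem:gamma_deviation_bound}.
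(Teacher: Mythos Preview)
Your proposal is correct and follows essentially the same argument as the paper: interchange the order of summation to write $\sum_k J_{k,1}^{\boot,0}$ as $-\sum_\ell (w_\ell-1)v_\ell$, use orthogonality of the centered bootstrap weights under $\PEb$ to reduce to $\sum_\ell \|v_\ell\|^2$, invoke the definition of $\Omega_3$ for the pointwise bound on $\|v_\ell\|$, and sum $\sum_\ell \alpha_\ell \leq c_0 n^{1-\gamma}/(1-\gamma)$. Your remark about the extra $\supconsteps$ in the stated constant is well taken; it is a bookkeeping artifact (the paper's proof writes $\ConstC_{\ref{lem:gamma_deviation_bound}}^2 \supconsteps^2$ even though $\ConstC_{\ref{lem:gamma_deviation_bound}}$ already absorbs $\supconsteps$), not a flaw in your reasoning.
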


\begin{lemma}
\label{lemma:EbJk2:bound}
Under the assumptions of \Cref{th:bootstrap_validity}, on the event $\Omega_1 \cap \Omega_2$, it holds
    \begin{align}
        \bigl\{\mathbb{E}^\boot \bigl[\norm{n^{-1/2} \sum\nolimits_{k=1}^{n-1} J_{k,2}^{\boot, 0}}^2\bigr] \bigr\}^{1/2} \leq    \frac{\ConstC_{\ref{lemma:EbJk2:bound},1} \log(5n)}{n^{\gamma/2}} +  \frac{\ConstC_{\ref{lemma:EbJk2:bound},2} (1+k_0)^{\gamma/2}\norm{\theta_0-\thetas}}{\sqrt{n}}\eqsp,
    \end{align}
    where we have defined
    \begin{align}
\ConstC_{\ref{lemma:EbJk2:bound},1} =  \sqrt{2} \bConst{A} \ConstC_{\ref{lem:matr_product_as_bound}} \rme \bigl(c_0 + \frac{2}{a(1-\gamma)}\bigr) \ConstC_{\ref{lem:last_moment_bound}} \frac{\sqrt{c_0}}{\sqrt{1-\gamma}}\eqsp, \quad \ConstC_{\ref{lemma:EbJk2:bound},2} = 2\sqrt{2} c_0^{-1/2} \bConst{A} \ConstC_{\ref{lem:matr_product_as_bound}} \qcond^{1/2} \rme^2 \bigl(c_0 + \frac{2}{a(1-\gamma)}\bigr)^{3/2} \eqsp.
    \end{align}
\end{lemma}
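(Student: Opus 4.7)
The strategy is to exploit the fact that, conditionally on $\mathcal{Z}^{n-1}$, the statistic $\sum_{k=1}^{n-1} J_{k,2}^{\boot,0}$ is linear in the bootstrap weights $(w_\ell - 1)$, which are i.i.d.\ with mean $0$ and variance $1$ under $\PPb$. The first step is to exchange the order of summation in the definition \eqref{eq:decomposition-J-k-0-boot} to write
\begin{equation*}
\sum_{k=1}^{n-1} J_{k,2}^{\boot,0} = - \sum_{\ell=1}^{n-1} (w_\ell - 1)\, v_\ell\eqsp, \quad \text{where } v_\ell = \tilde{Q}_\ell \funcAw_\ell (\theta_{\ell-1}-\thetas),\ \tilde Q_\ell = \alpha_\ell \sum_{k=\ell}^{n-1} \Gamma_{\ell+1:k}\eqsp.
\end{equation*}
Since $v_\ell$ is $\mathcal{Z}^{n-1}$-measurable and the $(w_\ell-1)$ are i.i.d.\ centered with unit variance and independent of $\mathcal{Z}^{n-1}$, we obtain
\begin{equation*}
\PEb\bigl[\norm{n^{-1/2}\sum\nolimits_{k=1}^{n-1} J_{k,2}^{\boot,0}}^2\bigr] = n^{-1} \sum_{\ell=1}^{n-1} \norm{v_\ell}^2 \leq n^{-1} \bConst{A}^2 \sum_{\ell=1}^{n-1} \normop{\tilde Q_\ell}^2\, \norm{\theta_{\ell-1}-\thetas}^2\eqsp,
\end{equation*}
where we used $\normop{\funcAw_\ell} \leq \bConst{A}$ from \Cref{assum:noise-level}.

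Next, I control $\normop{\tilde Q_\ell}$ on $\Omega_2$ and $\norm{\theta_{\ell-1}-\thetas}$ on $\Omega_1$. On $\Omega_2$, the bound \eqref{eq:matrix_hpd_bound} on $\normop{\ProdB_{\ell+1:k}}$ is of exactly the same exponential-stability form as the deterministic bound on $\normop{G_{\ell+1:k}}$ used in \Cref{prop:Qell:bound}, up to replacing $\qcond^{1/2}$ by $\ConstC_{\ref{lem:matr_product_as_bound}} = \sqrt{\qcond}\rme^2$. Reproducing the computation of \Cref{prop:Qell:bound} therefore yields
\begin{equation*}
\normop{\tilde Q_\ell} \leq \alpha_\ell \sum_{k=\ell}^{n-1}\normop{\ProdB_{\ell+1:k}} \leq \ConstC_{\ref{lem:matr_product_as_bound}} \bigl(c_0 + \tfrac{2}{a(1-\gamma)}\bigr) \quad \text{on } \Omega_2\eqsp.
\end{equation*}
On $\Omega_1$, we have $\norm{\theta_{\ell-1}-\thetas} \leq g(\ell-1,\norm{\theta_0-\thetas},n)$, and using $(u+v)^2 \leq 2u^2 + 2v^2$ we split
\begin{equation*}
\norm{\theta_{\ell-1}-\thetas}^2 \leq 2\qcond \rme^4 \norm{\theta_0-\thetas}^2 \prod_{j=1}^{\ell-1}(1-a\alpha_j/2)^2 + 8\rme^2\log^2(5n)\, \ConstC_{\ref{lem:last_moment_bound}}^2\, \alpha_{\ell-1}\eqsp.
\end{equation*}

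Substituting and summing over $\ell$ produces two contributions that match the two terms in the target bound. For the transient part, I use $(1-a\alpha_j/2)^2 \leq (1-a\alpha_j/2)$ and then the bound $\sum_{\ell} \prod_{j=1}^{\ell-1}(1-a\alpha_j/2) \leq (1+k_0)^\gamma \bigl(c_0 + \tfrac{2}{a(1-\gamma)}\bigr)/c_0$ (which is the second assertion of \Cref{prop:Qell:bound} up to the factor $\qcond^{1/2}$). For the fluctuation part, I bound $\sum_{\ell=1}^{n-1}\alpha_{\ell-1} \leq c_0 n^{1-\gamma}/(1-\gamma)$. Collecting constants, taking square roots and using $\sqrt{a+b}\leq \sqrt a + \sqrt b$ produces the two summands with prefactors $\ConstC_{\ref{lemma:EbJk2:bound},1}$ and $\ConstC_{\ref{lemma:EbJk2:bound},2}$. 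The only nontrivial step is keeping track of constants; there is no real obstacle, since linearity in the weights has reduced the problem to a deterministic computation handled by the high-probability events $\Omega_1, \Omega_2$ and a direct invocation of \Cref{prop:Qell:bound}-type bounds.
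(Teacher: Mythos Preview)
Your proposal is correct and follows essentially the same approach as the paper's proof: exchange the order of summation to exhibit linearity in the i.i.d.\ centered weights $(w_\ell-1)$, compute the bootstrap variance as $n^{-1}\sum_\ell \norm{v_\ell}^2$, bound $\normop{\tilde Q_\ell}$ via the exponential stability of $\ProdB_{\ell+1:k}$ on $\Omega_2$ together with \Cref{lem:summ_alpha_k}, bound $\norm{\theta_{\ell-1}-\thetas}^2$ on $\Omega_1$ by $(u+v)^2\le 2u^2+2v^2$, and then sum the transient and fluctuation contributions separately. The paper's argument is identical in structure and uses the same ingredients.
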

\noindent
We introduce an additional event, which is essential for establishing exponential stability of the \emph{bootstrap world} random matrix product $\ProdB^{\boot}_{m:k}$ defined in \eqref{eq:prod_rand_matr-bootstrap}.
\begin{equation}
\label{eq:definition-Omega-4}
\Omega_{4} = \bigcap_{h=1}^n \bigcap_{m=0}^{n-h-1} \bigl\{ \norm{\sum_{\ell = m+1}^{m+h} \alpha_\ell (\Am_\ell - \bA)} \leq  2 \bConst{A} \bigl\{\sum_{\ell=m+1}^{m+h}\alpha_\ell^{2}\bigr\}^{1/2} \log\left(10n^3 d\right)\bigr\}\eqsp, 
\end{equation}
Here again, we may show that $\PP(\Omega_4) \geq 1 - 1/(5n)$. The proof is a straightforward application of matrix Bernstein inequality; details are given in \Cref{lem:proof-Omega-Matrix-Bernstein}.

Under the event $\Omega_1 \cap \Omega_2 \cap \Omega_4$, we can provide bounds to the terms appearing in the expansion of $\Hnalpha{j}{\boot,0}$, given in \eqref{eq:higher-order-expansion-H-boot}. 
\begin{lemma}
\label{lem:expansion}
Under the assumptions of \Cref{th:bootstrap_validity}, on the event $\Omega_1 \cap \Omega_2 \cap \Omega_4$, for $j, L \in \{0, 1, 2\}$ it holds that
\begin{align}
\label{eq:J_k_boot_bound}
\{\PEb[\|J_k^{\boot, j}\|^2 ]\}^{1/2} \leq \ConstJb{j}{1} \alpha_{k}^{(j+1)/2} \eqsp, \quad \{\PEb[\|H_k^{\boot, L}\|^2 ]\}^{1/2} \leq \ConstHb{L}{1} \alpha_{k}^{(L+1)/2} \eqsp.
\end{align}
where $\ConstJb{j}{1}$, $\ConstHb{L}{1}$ satisfy the recurrence 
\begin{align}
    \label{def:JH_boot_constants}
    &\ConstJb{0}{1} = \frac{2\sqrt{3} \ConstC_{\ref{lem:tilde_eps_boot_bound}} \ConstC_{\ref{lem:matr_product_as_bound}}}{\sqrt{a}} \eqsp, \eqsp \ConstJb{j}{1} = \frac{2\sqrt{3}}{\sqrt{a}} \ConstJb{j-1}{1} \bConst{A} \ConstC_{\ref{lem:matr_product_as_bound}} \eqsp, \eqsp \ConstHb{L}{1} = \frac{4 \sqrt{3}}{\sqrt{a}}\ConstJb{L}{1} \bConst{A} \ConstC_{\ref{prop:product_random_matrix_bootstrap}}  \eqsp.
\end{align}
\end{lemma}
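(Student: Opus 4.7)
The plan is to prove the bounds by induction on $j$ (resp.\ $L$), exploiting the fact that $J_k^{\boot,j}$ is \emph{linear} in the bootstrap weights $w_\ell$, with summands that become pairwise orthogonal under $\PEb$ because the transition factors $\ProdB_{\ell+1:k}$ are measurable with respect to $\mathcal{Z}^{n-1}$. The extra event $\Omega_4$ is saved for handling $H_k^{\boot,L}$, whose transition factors are $\ProdB^{\boot}_{\ell+1:k}$ and therefore do mix bootstrap randomness with data randomness.

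For the base case $j = 0$, I would observe that in $J_k^{\boot,0} = -\sum_{\ell=1}^k \alpha_\ell (w_\ell - 1)\ProdB_{\ell+1:k}\tilde{\funnoisew}_\ell$, the vector $\ProdB_{\ell+1:k}\tilde{\funnoisew}_\ell$ is $\mathcal{Z}^{n-1}$-measurable, so the summands are orthogonal under $\PEb$. Using $\PEb[(w_\ell-1)^2]=1$, the bound $\|\tilde{\funnoisew}_\ell\| \leq \ConstC_{\ref{lem:tilde_eps_boot_bound}}$ on $\Omega_1$ (via \Cref{lem:tilde_eps_boot_bound}), and the pointwise stability $\|\ProdB_{\ell+1:k}\| \leq \ConstC_{\ref{lem:matr_product_as_bound}}\prod_{j=\ell+1}^k(1-a\alpha_j/2)$ on $\Omega_2$, one gets
\[
\PEb\bigl[\|J_k^{\boot,0}\|^2\bigr] \leq \ConstC_{\ref{lem:matr_product_as_bound}}^2 \ConstC_{\ref{lem:tilde_eps_boot_bound}}^2 \sum_{\ell=1}^k \alpha_\ell^2 \prod_{j=\ell+1}^k(1-a\alpha_j/2)^2,
\]
and the standard stepsize-stability lemma (bounding this sum by $\lesssim \alpha_k/a$) yields the target with $\ConstJb{0}{1}$. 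For the inductive step $j \geq 1$, in $J_k^{\boot,j} = -\sum_{\ell=1}^k \alpha_\ell(w_\ell-1)\ProdB_{\ell+1:k}\Am_\ell J_{\ell-1}^{\boot,j-1}$, the quantity $J_{\ell-1}^{\boot,j-1}$ depends only on $w_1, \dots, w_{\ell-1}$ and the data, so $(w_\ell-1)$ is independent of it. Hence summands remain $\PEb$-orthogonal, and
\[
\PEb\bigl[\|J_k^{\boot,j}\|^2\bigr] \leq \bConst{A}^2 \ConstC_{\ref{lem:matr_product_as_bound}}^2 \sum_{\ell=1}^k \alpha_\ell^2 \prod_{m=\ell+1}^k(1-a\alpha_m/2)^2\,\PEb\bigl[\|J_{\ell-1}^{\boot,j-1}\|^2\bigr].
\]
Plugging in the induction hypothesis and invoking the stability lemma closes the recursion, giving the recurrence $\ConstJb{j}{1} = (2\sqrt{3}/\sqrt{a})\bConst{A}\ConstC_{\ref{lem:matr_product_as_bound}}\ConstJb{j-1}{1}$ stated in \eqref{def:JH_boot_constants}.

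For $H_k^{\boot,L} = -\sum_{\ell=1}^k \alpha_\ell (w_\ell - 1)\ProdB^{\boot}_{\ell+1:k}\Am_\ell J_{\ell-1}^{\boot,L}$, orthogonality breaks down because $\ProdB^{\boot}_{\ell+1:k}$ depends on $w_{\ell+1},\dots,w_k$. I would apply Minkowski's inequality in $L^2(\PPb)$ and then factorize using the mutual independence of $w_\ell$, $\{w_j\}_{j<\ell}$ and $\{w_j\}_{j>\ell}$:
\[
\bigl\{\PEb[\|H_k^{\boot,L}\|^2]\bigr\}^{1/2} \leq \bConst{A} \sum_{\ell=1}^k \alpha_\ell \,\bigl\{\PEb[\|\ProdB^{\boot}_{\ell+1:k}\|^2]\bigr\}^{1/2} \bigl\{\PEb[\|J_{\ell-1}^{\boot,L}\|^2]\bigr\}^{1/2}.
\]
The first factor is controlled by the bootstrap-world stability result \Cref{prop:product_random_matrix_bootstrap}, whose validity hinges on the event $\Omega_4$ controlling the deviations of partial sums of $\alpha_\ell(\Am_\ell - \bA)$. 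The second factor is handled by the already-proved bound for $J_{\ell-1}^{\boot,L}$. Combining and applying the stability lemma once more yields the $\alpha_k^{(L+1)/2}$ order with constant $\ConstHb{L}{1}$ as defined in \eqref{def:JH_boot_constants}.

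The main obstacle is precisely the $H$-term: losing $\PEb$-orthogonality forces a Minkowski-type estimate which is in principle lossier, and it requires a \emph{conditional} (on $\Omega_4$) stability bound for the bootstrap product $\ProdB^{\boot}$ of a different nature than the pointwise one on $\Omega_2$. Bookkeeping the combined factor $\alpha_\ell \alpha_{\ell-1}^{(L+1)/2}\prod_{j=\ell+1}^k(1-a\alpha_j/2)$ to recover the scaling $\alpha_k^{(L+1)/2}$ is routine given the stepsize-stability lemma, but the linkage to \Cref{prop:product_random_matrix_bootstrap} is the essential technical ingredient that distinguishes this proof from the analogous \Cref{lem:Jk_ell:p_moment} in the real world.
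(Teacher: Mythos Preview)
Your proposal is correct and mirrors the paper's proof essentially step for step: orthogonality under $\PEb$ plus the pointwise bounds on $\Omega_1,\Omega_2$ for the $J$-terms, then Minkowski with the independence factorization and \Cref{prop:product_random_matrix_bootstrap} (on $\Omega_4$) for the $H$-term, each closed via \Cref{lem:summ_alpha_k}. The only minor imprecision is that the bootstrap-product stability from \Cref{prop:product_random_matrix_bootstrap} yields the contraction rate $a/4$ (hence factors $(1-a\alpha_j/8)$) rather than $a/2$, but this does not affect the argument or the resulting order in $\alpha_k$.
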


We may now proceed to the proof of the theorem. 
\begin{proof}[Proof of \Cref{th:bootstrap_validity}]
We start with $T_2$. 
We first show that the bootstrap word covariance $\Sigma_n^\boot$  approximates $\Sigma_n$. More precisely, set $\ConstC_{\ref{lem:matrix_bernstein}} = 2 \supconsteps^2 \ConstC_{\ref{prop:Qell:bound}}^2$ and consider the event
\begin{equation}
\label{eq:definition-Omega-5}
\Omega_{5} = \bigl\{ \normop{\Sigma_n^\boot - \Sigma_n} \leq \frac{\sqrt{2} \ConstC_{\ref{lem:matrix_bernstein}} \sqrt{\log{(10 d n)}}}{\sqrt{n}} + \frac{\ConstC_{\ref{lem:matrix_bernstein}} \log{(10 d n)}}{3n} \bigr\}\eqsp, 
\end{equation}
It is shown in \Cref{lem:matrix_bernstein} that $\PP(\Omega_5) \geq 1-1/5n$ and 
in \Cref{lem:sigma_n_boot_labmda_min_lowerbound}, that $\lambda_{\min}(\Sigma_n) \geq \lambda_{\min}(\Sigma_\infty)/2$.
Combining these two results, we then get that on the event $\Omega_5$, 
\begin{equation}
\label{eq:bound-T-2}
T_2 \leq \frac{3 \sqrt d}{2 \lambda_{\min}(\Sigma_n)} \left(\frac{\sqrt{2} \ConstC_{\ref{lem:matrix_bernstein}} \sqrt{\log{(10 d n)}}}{\sqrt{n}} + \frac{\ConstC_{\ref{lem:matrix_bernstein}} \log{(10 d n)}}{3n} \right)
\end{equation}
Finally, we consider $T_3$. As emphasized in the preliminaries of the proof, we again invoke the approach of \cite{shao2022berry}, where $W^b$ (defined in \eqref{eq:definition-W-boot}) plays the role of the linear term and $D^b$ (defined in \eqref{eq:definition-D-boot}) that of the nonlinear remainder. 
\cite[Theorem 2.1]{shao2022berry} shows that
\begin{align}
\label{eq:shao_zhang_bound_appendix}
T_3 \leq 259 d^{1/2} \Upsilon^\boot + 2 \PEb[\|W^\boot\| \|D^\boot\|] + 2 \sum_{\ell=1}^n \PEb[\|\xi_\ell^\boot\| \|D^\boot - D^{\boot, \ell}\|],
\end{align}
where $\Upsilon^\boot = \sum_{i=1}^{n-1} \PE[\norm{\xi_i^\boot}^3]$, and $\xi_\ell^\boot$ is defined in \eqref{eq:definition-W-boot}. It follows \Cref{prop:Qell:bound} and \Cref{lem:sigma_n_boot_labmda_min_lowerbound} that
\[
\norm{\xi_\ell^\boot} \leq n^{-1/2} \{\lambda_{\min}(\Sigma_n^\boot)\}^{-1/2} |w_\ell -1 | \normop{Q_\ell} \supconsteps \leq |w_\ell-1| \supconsteps \ConstC_{\ref{prop:Qell:bound}}/(\sqrt{\ConstC_{\ref{lem:sigma_n_boot_labmda_min_lowerbound}}} \sqrt{n}).
\]
Since by construction $\PE^\boot [\norm{W^\boot}^2] = \sum_{\ell=1}^n \PEb[\|\xi_\ell^\boot\|^2]= d$ and $\PE^\boot[|w_\ell - 1|^3] = m_3$ for all $\ell$, we get:
\begin{align}
    \label{eq:boot_upsilon_bound}
    \Upsilon^\boot \leq \frac{m_3 \supconsteps \ConstC_{\ref{prop:Qell:bound}}}{\sqrt{n}\sqrt{\ConstC_{\ref{lem:sigma_n_boot_labmda_min_lowerbound}}}} \sum_{i=1}^{n-1} \PE^\boot[\norm{\xi_\ell^\boot}^2]  \leq \frac{d}{\sqrt{n}} \frac{m_3\supconsteps \ConstC_{\ref{prop:Qell:bound}}}{\sqrt{\ConstC_{\ref{lem:sigma_n_boot_labmda_min_lowerbound}}}} \eqsp.
\end{align}    
To proceed with the second term in \eqref{eq:shao_zhang_bound_appendix}, first note that
    \begin{align}
        \PE^\boot \bigl[\norm{W^\boot} \norm{D^\boot}\bigr] \leq  \{\PE^\boot \bigl[\norm{W^\boot}^2\bigr]\}^{1/2} \{\PE^\boot \bigl[\norm{D^\boot}^2\bigr]\}^{1/2} = d^{1/2} \{\PE^\boot \bigl[\norm{D^\boot}^2\bigr]\}^{1/2} \eqsp .
    \end{align}
    We use the expression of $D^\boot$ given in \eqref{eq:definition-D-boot} and further expand $\Hnalpha{k}{\boot,0}= \sum_{j=1}^2 \Jnalpha{k}{\boot,j} + \Hnalpha{k}{\boot,2}$, using \eqref{eq:decomposition_error_bewtween_b_and_r_worlds} with $L=2$. \Cref{lemma:EbJk1:bound,lemma:EbJk2:bound} provide bounds for $\bigl\{\mathbb{E}^\boot \bigl[\norm{n^{-1/2} \sum\nolimits_{k=1}^{n-1} J_{k,j}^{\boot, 0}}^2 \bigr] \bigr\}^{1/2}$, $j=1,2$. \Cref{lem:jkb1:bound} give the bound for $\bigl\{\PE^\boot \bigl[\norm{\frac{1}{\sqrt{n}} \sum_{k=1}^{n-1} J_{k}^{\boot, 1}}^2\bigr]\bigr\}^{1/2}$. Finally, \Cref{lem:expansion} show that
    \begin{align}
        \label{eq:sum_j_h_k_b2_bound}
       \bigl\{\PE^\boot \bigl[\norm{\frac{1}{\sqrt{n}} \sum_{k=1}^{n-1} J_{k}^{\boot, 2}}^2\bigr]\bigr\}^{1/2} &\leq \frac{1}{\sqrt{n}} \sum_{k=1}^{n-1} \ConstJb{2}{1} \alpha_{k}^{3/2} \leq \ConstJb{2}{1} \varphi_n \eqsp, \\
       \bigl\{\PE^\boot \bigl[\norm{\frac{1}{\sqrt{n}} \sum_{k=1}^{n-1} H_{k}^{\boot, 2}}^2\bigr]\bigr\}^{1/2} &\leq \frac{1}{\sqrt{n}} \sum_{k=1}^{n-1}  \ConstHb{2}{1} \alpha_k^{3/2} \leq \ConstHb{2}{1} \varphi_n \eqsp.
    \end{align}
    By combining the inequalities above, we  obtain
    \begin{multline}
        \label{eq:boot_wd_bound}
        \PE^\boot \bigl[\norm{W^\boot} \norm{D^\boot}\bigr] \leq \frac{d^{1/2} \log(5n)}{ n^{\gamma/2}\ConstC_{\ref{lem:sigma_n_boot_labmda_min_lowerbound}}^{1/2}} (\ConstC_{\ref{lemma:EbJk1:bound}}  + \ConstC_{\ref{lemma:EbJk2:bound},1} + \ConstC_{\ref{lem:jkb1:bound}})  \\+ \frac{d^{1/2} (1+k_0)^{\gamma/2} \norm{\theta_0 - \thetas} \ConstC_{\ref{lemma:EbJk2:bound},2}}{n^{1/2} \ConstC_{\ref{lem:sigma_n_boot_labmda_min_lowerbound}}^{1/2} } + \frac{d^{1/2}}{\ConstC_{\ref{lem:sigma_n_boot_labmda_min_lowerbound}}^{1/2}}  (\ConstJb{2}{1} + \ConstHb{2}{1}) \varphi_n \eqsp.
    \end{multline}
    Cauchy-Schwarz inequality and \Cref{lem:D_boot_copies_diff_bound} imply the bound for the third term in \eqref{eq:shao_zhang_bound_appendix}:
    \begin{align}
    \PE^\boot[\norm{\xi^\boot_i} \norm{D^\boot - D^{\boot, i}}] \leq \{\PE^\boot[\norm{\xi^\boot_i}^2]\}^{1/2} \{\PE^\boot[\norm{D^\boot - D^{\boot, i}}^2]\}^{1/2} \leq \frac{1}{n} \frac{1}{\sqrt{\ConstC_{\ref{lem:sigma_n_boot_labmda_min_lowerbound}}}} \ConstC_{\ref{prop:Qell:bound}} \supconsteps \sqrt{\alpha_i} \log(5n) \ConstC_{\ref{lem:D_boot_copies_diff_bound}} \eqsp.
    \end{align}
    Thus, it holds that
    \begin{multline}
        \label{eq:boot_xid_bound}
        \sum_{i=1}^{n-1} \PE^\boot[\norm{\xi^\boot_i} \norm{D^\boot - D^{\boot, i}}] \leq n^{-\gamma/2} \log(n) \frac{1}{\sqrt{\ConstC_{\ref{lem:sigma_n_boot_labmda_min_lowerbound}}}} \ConstC_{\ref{prop:Qell:bound}} \supconsteps \frac{\sqrt{c_0}}{1-\gamma/2}  \ConstC_{\ref{lem:D_boot_copies_diff_bound}, 1} \\+ n^{-\gamma} \frac{\norm{\theta_0 - \thetas}}{\sqrt{\ConstC_{\ref{lem:sigma_n_boot_labmda_min_lowerbound}}}} \ConstC_{\ref{prop:Qell:bound}} \supconsteps \frac{c_0}{1-\gamma/2}  \ConstC_{\ref{lem:D_boot_copies_diff_bound}, 2} \eqsp.
    \end{multline}
By collecting the inequalities derived above, we ultimately obtain the final bound.
\begin{align}
T_3 \leq \ConstC_{\ref{th:bootstrap_validity}, 1} \frac{\log(5n)}{n^{\gamma/2}} +\ConstC_{\ref{th:bootstrap_validity}, 2} \frac{(1+k_0)^{\gamma/2}}{n^{\gamma/2}} +  \ConstC_{\ref{th:bootstrap_validity}, 3}  \varphi_n + \ConstC_{\ref{th:bootstrap_validity}, 4} \frac{1}{\sqrt{n}} + \ConstC_{\ref{th:bootstrap_validity}, 5} \frac{1}{n^{\gamma}} \eqsp,
\end{align}
where the constants $\ConstC_{\ref{th:bootstrap_validity}, 1}$, $\ConstC_{\ref{th:bootstrap_validity}, 2}$, $\ConstC_{\ref{th:bootstrap_validity}, 3}$,  are given by:
\begin{align}
    &\ConstC_{\ref{th:bootstrap_validity}, 1} = 2d^{1/2} \ConstC_{\ref{lem:sigma_n_boot_labmda_min_lowerbound}}^{-1/2} (\ConstC_{\ref{lemma:EbJk1:bound}}  + \ConstC_{\ref{lemma:EbJk2:bound},1} + \ConstC_{\ref{lem:jkb1:bound}}) + 2 \ConstC_{\ref{lem:sigma_n_boot_labmda_min_lowerbound}}^{-1/2} \ConstC_{\ref{prop:Qell:bound}} \supconsteps \frac{\sqrt{c_0}}{1-\gamma/2}  \ConstC_{\ref{lem:D_boot_copies_diff_bound}} \eqsp, \quad \ConstC_{\ref{th:bootstrap_validity}, 2} = 2d^{1/2} \ConstC_{\ref{lem:sigma_n_boot_labmda_min_lowerbound}}^{-1/2} \ConstC_{\ref{lemma:EbJk2:bound},2}  \eqsp,\\ &\ConstC_{\ref{th:bootstrap_validity}, 3} = 2d^{1/2} \ConstC_{\ref{lem:sigma_n_boot_labmda_min_lowerbound}}^{-1/2} (\ConstJb{2}{1} + \ConstHb{2}{1}) \eqsp, \eqsp \ConstC_{\ref{th:bootstrap_validity}, 4} = 259 m_3 \ConstC_{\ref{lem:sigma_n_boot_labmda_min_lowerbound}}^{-1/2} d^{3/2} \supconsteps \ConstC_{\ref{prop:Qell:bound}} \eqsp, \eqsp \ConstC_{\ref{th:bootstrap_validity}, 5} =  \frac{2}{\sqrt{\ConstC_{\ref{lem:sigma_n_boot_labmda_min_lowerbound}}}} \ConstC_{\ref{prop:Qell:bound}} \supconsteps \frac{c_0}{1-\gamma/2}  \ConstC_{\ref{lem:D_boot_copies_diff_bound}, 2} \eqsp.
\end{align}
Rearranging the terms above yields the statement with the expressions $\Delta_{\ref{th:bootstrap_validity}, 1}$ to $\Delta_{\ref{th:bootstrap_validity}, 4}$ and $\ConstC_{\ref{th:bootstrap_validity}}$ given by  
\begin{align}
    \label{eq:delta_th_boot_validity}
    &\ConstC_{\ref{th:bootstrap_validity}} = \ConstC_{\ref{theo:GAR}, 4} + \frac{2}{\sqrt{\ConstC_{\ref{lem:sigma_n_boot_labmda_min_lowerbound}}}} \ConstC_{\ref{prop:Qell:bound}} \supconsteps \frac{c_0}{1-\gamma/2}  \ConstC_{\ref{lem:D_boot_copies_diff_bound}, 2} \eqsp, \\
    &\Delta_{\ref{th:bootstrap_validity}, 1} = \ConstC_{\ref{theo:GAR}, 1} + \frac{3\sqrt{2}\sqrt{d} \ConstC_{\ref{lem:matrix_bernstein}} \sqrt{\log{(10 d n)}}}{2\ConstC_{\ref{lem:sigma_n_labmda_min_lowerbound}}} + \ConstC_{\ref{th:bootstrap_validity}, 4}  \eqsp, \\
    &\Delta_{\ref{th:bootstrap_validity}, 2} = \ConstC_{\ref{theo:GAR}, 2} + \ConstC_{\ref{th:bootstrap_validity}, 1} \log(5n) + \ConstC_{\ref{th:bootstrap_validity}, 2}(1 + k_0)^{\gamma/2} \eqsp, \\
    &\Delta_{\ref{th:bootstrap_validity}, 3} = \ConstC_{\ref{theo:GAR}, 3} + \ConstC_{\ref{th:bootstrap_validity}, 3} \eqsp, \\
    &\Delta_{\ref{th:bootstrap_validity}, 4} = \frac{3\sqrt{d} \ConstC_{\ref{lem:matrix_bernstein}} \log{(10 d n)}}{6\ConstC_{\ref{lem:sigma_n_labmda_min_lowerbound}}} \eqsp.
\end{align}
\end{proof} 

\begin{lemma}
\label{lem:proof-Omega-Matrix-Bernstein-0}
Under the assumptions of \Cref{th:bootstrap_validity}, $\PP(\Omega_3) \geq 1-1/(5n)$
\end{lemma}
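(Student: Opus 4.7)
The plan is to expose a martingale structure inside $X_\ell := \alpha_\ell\sum_{k=\ell}^{n-1}(\Gamma_{\ell+1:k} - G_{\ell+1:k})\eps_\ell$, bound its $p$-th moment by the Hilbert-space Burkholder inequality, and conclude by Markov's inequality together with a union bound over $\ell \in \{1,\ldots,n\}$. Writing $P_i := \Id - \alpha_i\funcAw_i$, $D_i := \Id - \alpha_i\bA$ and invoking the standard telescoping $\prod_i P_i - \prod_i D_i = \sum_j (\prod_{i<j}D_i)(P_j-D_j)(\prod_{i>j}P_i)$ (with the indexing reversed to match the paper's convention), one first rewrites
\begin{equation*}
\Gamma_{\ell+1:k} - G_{\ell+1:k} = -\sum_{j=\ell+1}^k \alpha_j G_{j+1:k}\zmfuncAw[j]\Gamma_{\ell+1:j-1}\eqsp.
\end{equation*}
Summing this over $k \in \{\ell,\ldots,n-1\}$ and swapping the order of summation yields, with $Q_j$ the deterministic matrix from \eqref{eq:sigma_n_definition},
\begin{equation*}
X_\ell = -\alpha_\ell\sum_{j=\ell+1}^{n-1} Q_j\zmfuncAw[j]\Gamma_{\ell+1:j-1}\eps_\ell\eqsp.
\end{equation*}
Since $\eps_\ell$ and $\Gamma_{\ell+1:j-1}$ are $\filtr_{j-1}$-measurable while $\zmfuncAw[j]$ is independent of $\filtr_{j-1}$ with zero mean, the summands form a martingale difference sequence in $j$ with respect to $(\filtr_j)_{j\geq \ell}$.

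Second, the Hilbert-space Burkholder inequality followed by Minkowski's inequality gives, for every $p \geq 2$,
\begin{equation*}
\PE^{1/p}[\|X_\ell\|^p] \leq \Constburk \sqrt{p}\,\alpha_\ell\biggl(\sum_{j=\ell+1}^{n-1}\PE^{2/p}\bigl[\|Q_j\zmfuncAw[j]\Gamma_{\ell+1:j-1}\eps_\ell\|^p\bigr]\biggr)^{1/2}\eqsp.
\end{equation*}
Plugging in $\|Q_j\| \leq \ConstC_{\ref{prop:Qell:bound}}$, $\|\zmfuncAw[j]\| \leq \bConst{A}$, $\|\eps_\ell\| \leq \supconsteps$ and the moment bound $\PE^{1/p}[\|\Gamma_{\ell+1:j-1}\|^p] \leq \sqrt{\qcond}\rme \prod_{i=\ell+1}^{j-1}(1-a\alpha_i/2)$ from \Cref{lem:matr_product_as_bound} (applicable because \Cref{th:bootstrap_validity} enforces \Cref{assum:step-size}($\log(5n^3)\vee\log d$)), then using $(1-x)^2 \leq 1-x$ on $[0,1]$ together with the deterministic estimate $\alpha_\ell\sum_{j=\ell+1}^{n-1}\prod_{i=\ell+1}^{j-1}(1-a\alpha_i/2) \leq c_0 + 2/(a(1-\gamma))$ which underlies \Cref{prop:Qell:bound}, one arrives at $\PE^{1/p}[\|X_\ell\|^p] \leq C'\sqrt{p\alpha_\ell}$ for a constant $C'$ depending on $\qcond,a,\bConst{A},\supconsteps,\ConstC_{\ref{prop:Qell:bound}}$ and the universal Burkholder constant.

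The final step will be to take $p = \log(5n^2)$ (allowed by \Cref{assum:step-size}($\log(5n^3)\vee\log d$)) and invoke Markov's inequality, which yields $\PP(\|X_\ell\| \geq \rme C'\sqrt{\alpha_\ell\log(5n^2)}) \leq \rme^{-p} = 1/(5n^2)$. Using $\sqrt{\log(5n^2)} \leq \log(5n)$ for $n \geq 2$ and absorbing universal constants into $\ConstC_{\ref{lem:gamma_deviation_bound}}$, a union bound over $\ell \in \{1,\ldots,n\}$ delivers the desired $\PP(\Omega_3) \geq 1 - 1/(5n)$.

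The main obstacle will be the \emph{correct} choice of the telescoping direction: keeping the deterministic factor $G_{j+1:k}$ to the left of $\zmfuncAw[j]$ and the random factor $\Gamma_{\ell+1:j-1}$ to its right is what allows the sum over $k$ to condense into the deterministic matrix $Q_j$ and, crucially, preserves $(\filtr_j)$-adaptedness of the summands; the opposite telescoping places a factor $\alpha_j\sum_k\Gamma_{j+1:k}$ depending on the future noises $\{Z_i\}_{i>j}$ in front of $\zmfuncAw[j]$, destroying the martingale structure. A secondary technical point is that a naive Abel argument for $\sum_{j}\prod_{i=\ell+1}^{j-1}(1-a\alpha_i/2)$ gives only the loose bound $2/(a\alpha_{n-1})$; the sharper bound of order $1/\alpha_\ell$, specific to polynomially decaying step sizes and embedded in the proof of \Cref{prop:Qell:bound}, is what produces the $\sqrt{\alpha_\ell}$ factor appearing in the statement rather than a weaker $\sqrt{\alpha_\ell/\alpha_{n-1}}$-type scaling.
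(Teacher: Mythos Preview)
Your proposal is correct and follows essentially the same route as the paper: the paper's one-line proof defers to \Cref{lem:gamma_deviation_bound} plus a union bound, and the proof of \Cref{lem:gamma_deviation_bound} uses exactly your telescoping identity to write $X_\ell$ as $\alpha_\ell\sum_j Q_j\zmfuncAw[j]\Gamma_{\ell+1:j-1}\eps_\ell$, applies Burkholder's inequality together with the stability bound of \Cref{lem:matr_product_as_bound} and the $Q_\ell$-type summation estimate, and concludes via \Cref{lem:markov_inequality} with $p=\log(5n^2)$. The only minor discrepancy is that the paper invokes Burkholder with constant $p$ (from \cite[Theorem~8.6]{osekowski:2012}) rather than your $\Constburk\sqrt{p}$, but this does not affect the conclusion since either scaling is absorbed by the $\log(5n)$ factor in the definition of $\Omega_3$.
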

\begin{proof}
\, The proof follows from \Cref{lem:gamma_deviation_bound} and union bound.
\end{proof}

\begin{lemma}
\label{lem:proof-Omega-Matrix-Bernstein}
Under the assumptions of \Cref{th:bootstrap_validity}, $\PP(\Omega_4) \geq 1-1/(5n)$
\end{lemma}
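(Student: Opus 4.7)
The plan is to apply the matrix Bernstein inequality \cite[Theorem~6.1.1]{tropp2015introduction} to each of the at most $n(n-1)/2$ events in the intersection defining $\Omega_4$, and combine the resulting deviation estimates by a union bound, aiming for probability at most $1/(5n^3)$ per event.

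For fixed $h\in\{1,\ldots,n\}$ and $m\in\{0,\ldots,n-h-1\}$, I would look at $S_{h,m} := \sum_{\ell=m+1}^{m+h}\alpha_\ell(\Am_\ell-\bA)$. Under \Cref{assum:iid} the summands are independent, and they are centered since $\PE[\Am_\ell]=\bA$ by \Cref{assum:noise-level}. By \eqref{eq:a_matr_bounded}, each summand satisfies $\normop{\alpha_\ell(\Am_\ell-\bA)}\leq \alpha_{m+1}\bConst{A}$, and the matrix variance proxy is controlled by
\[
\max\Bigl\{\normop{\sum\nolimits_{\ell=m+1}^{m+h}\PE[\alpha_\ell^2(\Am_\ell-\bA)(\Am_\ell-\bA)^\top]},\ \normop{\sum\nolimits_{\ell=m+1}^{m+h}\PE[\alpha_\ell^2(\Am_\ell-\bA)^\top(\Am_\ell-\bA)]}\Bigr\}\leq \bConst{A}^2\sigma_{h,m}^2,
\]
where $\sigma_{h,m}^2 := \sum_{\ell=m+1}^{m+h}\alpha_\ell^2$. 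Matrix Bernstein then yields, for any $t\geq 0$,
\[
\PP(\normop{S_{h,m}}\geq t)\leq 2d\exp\Bigl(-\frac{t^2/2}{\bConst{A}^2\sigma_{h,m}^2+\alpha_{m+1}\bConst{A}t/3}\Bigr).
\]

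I would then set $t_{h,m}=2\bConst{A}\sigma_{h,m}\log(10n^3d)$, so that the event $\{\normop{S_{h,m}}\leq t_{h,m}\}$ exactly matches the one appearing in the definition of $\Omega_4$. Using $\alpha_{m+1}\leq\sigma_{h,m}$ together with the elementary inequality $\log(10n^3d)\geq 3/4$, valid for all $n,d\geq 1$, a direct computation bounds the Bernstein exponent by at most $-\log(10n^3d)$, giving $\PP(\normop{S_{h,m}}\geq t_{h,m})\leq 2d/(10n^3d)=1/(5n^3)$.

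A union bound over the $n(n-1)/2$ valid pairs $(h,m)$ then delivers $\PP(\Omega_4^c)\leq 1/(10n)\leq 1/(5n)$. I do not expect any substantive obstacle here: the only step requiring care is choosing the threshold proportional to $\sigma_{h,m}$ rather than to the uniform bound $\alpha_{m+1}\bConst{A}$, which places the estimate in the sub-Gaussian regime of Bernstein uniformly in $(h,m)$ thanks to $\alpha_{m+1}\leq\sigma_{h,m}$.
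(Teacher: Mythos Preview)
Your proposal is correct and follows essentially the same approach as the paper: apply the matrix Bernstein inequality to each partial sum $S_{h,m}$ and conclude by a union bound over the $n(n-1)/2$ pairs $(h,m)$. The only cosmetic difference is that you absorb the Bernstein linear term directly into the sub-Gaussian part via $\alpha_{m+1}\le\sigma_{h,m}$, whereas the paper states the two-term high-probability inequality and then sets $\delta=1/(5n)$; both routes yield the same per-event failure probability $1/(5n^3)$.
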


\begin{proof}
\, To show that $\PP(\Omega_4) \geq 1-1/(5n)$, we fix $h \in [1, n]$ and $m \in [0, n - h - 1]$, and define the random variable
\[
T_n = \norm{\sum_{\ell = m+1}^{m+h} \alpha_\ell (\Am_\ell - \bA)}.
\]
We first control its variance. By standard matrix inequalities, we have:
\begin{align}
\max\Bigg( 
\norm{\sum_{\ell = m+1}^{m+h} \alpha_{\ell}^2 \, \PE\big[(\Am_\ell - \bA)(\Am_\ell - \bA)^\top\big]},\ 
\norm{\sum_{\ell = m+1}^{m+h} \alpha_{\ell}^2 \, \PE\big[(\Am_\ell - \bA)^\top (\Am_\ell - \bA)\big]}
\Bigg)
\leq \bConst{A}^2 \sum_{\ell = m+1}^{m+h} \alpha_\ell^2 \eqsp,
\end{align}
and note that for each $\ell$, the operator norm satisfies $\norm{(\Am_\ell - \bA)(\Am_\ell - \bA)^\top} \leq \bConst{A}^2$. Applying the matrix Bernstein inequality \cite{tropp2015introduction}, we obtain that with probability at least $1 - \delta / n^2$,
\begin{align}
T_n \leq \bConst{A} \sqrt{2 \sum_{\ell = m+1}^{m+h} \alpha_\ell^2 \log\left(\frac{2n^2 d}{\delta}\right)}
+ \frac{\alpha_{m+1} \bConst{A}}{3} \log\left(\frac{2n^2 d}{\delta}\right) \eqsp.
\end{align}
The remainder of the proof follows by setting $\delta = 1/(5n)$ and applying a union bound over all valid pairs $(h, m)$, along with the inequality $\norm{B}[Q]^2 \leq \qcond \norm{B}^2$, which holds for any matrix $B \in \rset^{d \times d}$.
\end{proof}

\begin{lemma}
\label{lem:matrix_bernstein}
Under the assumption of \Cref{th:bootstrap_validity}, $\PP(\Omega_5) \geq 1 - 1/(5n)$, where $\ConstC_{\ref{lem:matrix_bernstein}} = 2 \supconsteps^2 \ConstC_{\ref{prop:Qell:bound}}^2$.  
\end{lemma}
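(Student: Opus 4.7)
The plan is to recognize $\Sigma_n^\boot - \Sigma_n$ as a sum of independent, centered, self-adjoint random matrices and apply the matrix Bernstein inequality from \cite{tropp2015introduction}. Since each $Q_\ell$ is deterministic and $\PE[\eps_\ell \eps_\ell^\top] = \noisecov$, I would write
\[
\Sigma_n^\boot - \Sigma_n = \sum_{\ell=1}^{n-1} Y_\ell, \qquad Y_\ell := n^{-1}\bigl(Q_\ell \eps_\ell \eps_\ell^\top Q_\ell^\top - Q_\ell \noisecov Q_\ell^\top\bigr),
\]
where the $Y_\ell$ are independent by \Cref{assum:iid}, symmetric, and mean-zero.

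Two quantities need to be controlled. For the uniform bound, the matrix $X_\ell := Q_\ell \eps_\ell \eps_\ell^\top Q_\ell^\top$ is positive semidefinite and, by \Cref{prop:Qell:bound} and \Cref{assum:noise-level}, satisfies $0 \preceq X_\ell \preceq \normop{Q_\ell}^2 \norm{\eps_\ell}^2 \Id \preceq \supconsteps^2 \ConstC_{\ref{prop:Qell:bound}}^2 \Id$. Centering a random positive semidefinite matrix that is almost surely dominated by $c\Id$ yields an operator norm at most $c$, so
\[
\normop{Y_\ell} \leq \frac{\supconsteps^2 \ConstC_{\ref{prop:Qell:bound}}^2}{n} = \frac{\ConstC_{\ref{lem:matrix_bernstein}}}{2n} =: L_n.
\]
For the variance, I would use the factoring identity $X_\ell^2 = (\eps_\ell^\top Q_\ell^\top Q_\ell \eps_\ell)\,X_\ell \preceq \supconsteps^2 \ConstC_{\ref{prop:Qell:bound}}^2\, X_\ell$, which after taking expectations and summing gives
\[
\sum_{\ell=1}^{n-1} \PE[Y_\ell^2] \preceq \frac{\supconsteps^2 \ConstC_{\ref{prop:Qell:bound}}^2}{n^2} \sum_{\ell=1}^{n-1} Q_\ell \noisecov Q_\ell^\top = \frac{\supconsteps^2 \ConstC_{\ref{prop:Qell:bound}}^2}{n}\Sigma_n,
\]
so combining with the straightforward bound $\normop{\Sigma_n} \leq \supconsteps^2 \ConstC_{\ref{prop:Qell:bound}}^2$ (triangle inequality applied to the definition of $\Sigma_n$ together with $\normop{\noisecov} \leq \supconsteps^2$), one obtains
\[
v_n := \normop{\sum_{\ell=1}^{n-1} \PE[Y_\ell^2]} \leq \frac{\ConstC_{\ref{lem:matrix_bernstein}}^2}{4n}.
\]

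Finally, applying matrix Bernstein with failure probability $\delta = 1/(5n)$, so that $2d/\delta = 10 d n$, yields with $\PP$-probability at least $1-1/(5n)$
\[
\normop{\Sigma_n^\boot - \Sigma_n} \leq \sqrt{2 v_n \log(10dn)} + \frac{2 L_n \log(10dn)}{3} \leq \frac{\sqrt{2}\,\ConstC_{\ref{lem:matrix_bernstein}} \sqrt{\log(10dn)}}{\sqrt{n}} + \frac{\ConstC_{\ref{lem:matrix_bernstein}} \log(10dn)}{3n},
\]
which is exactly the defining inequality of $\Omega_5$. The only subtle point is the centering refinement used to bound $\normop{Y_\ell}$: the naive triangle inequality applied to $Y_\ell$ yields a bound twice as large, and the factor-of-two improvement exploiting $X_\ell \succeq 0$ is precisely what makes the deviation constant $\ConstC_{\ref{lem:matrix_bernstein}}/(3n)$ in the statement match, rather than being off by a factor of two.
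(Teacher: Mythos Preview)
Your proposal is correct and follows the same approach as the paper: express $\Sigma_n^\boot - \Sigma_n$ as a sum of independent centered symmetric matrices and apply the matrix Bernstein inequality from \cite{tropp2015introduction} with $\delta = 1/(5n)$. The paper actually uses the cruder triangle-inequality bound $\normop{U_\ell} \leq (\supconsteps^2 + \normop{\noisecov})\ConstC_{\ref{prop:Qell:bound}}^2 \leq \ConstC_{\ref{lem:matrix_bernstein}}$ (and a correspondingly looser variance proxy) rather than your PSD-based refinement, so the factor-of-two sharpening you highlight is not in fact required to land inside $\Omega_5$.
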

\begin{proof}
    \,
Introduce a random matrix $U_\ell = Q_\ell (\eps_\ell \eps_\ell^\top - \noisecov) Q_\ell^\top$. Note that $\expe{U_\ell} = 0$ and $\Sigma_n^\boot - \Sigma_n = \frac{1}{n}\sum_{\ell=1}^{n-1} U_\ell$. Moreover, \Cref{prop:Qell:bound}, \Cref{assum:iid}, and \Cref{assum:noise-level} imply that 
    \begin{equation}
        \label{lemma:Uell_bound}
        \normop{U_\ell} \leq (\supconsteps^2 + \normop{\noisecov}) \ConstC_{\ref{prop:Qell:bound}}^2 \leq 2 \supconsteps^2 \ConstC_{\ref{prop:Qell:bound}}^2 = \ConstC_{\ref{lem:matrix_bernstein}} \eqsp.
    \end{equation}
    Hence, the matrix Bernstein inequality \cite[Theorem 6.1.1]{tropp2015introduction} implies
    \begin{align}
    \proba{\normop{\Sigma_n^\boot - \Sigma_n} \geq t} = \PP\bigl(\norm{\sum_{\ell=1}^{n-1} U_\ell} \geq n t\bigr) \leq 2 d  \exp\biggl(- \frac{n t^2}{2\ConstC_{\ref{lem:matrix_bernstein}}^2 + 2\ConstC_{\ref{lem:matrix_bernstein}} t/3}\biggr) \eqsp.
    \end{align}
    Equivalently (see e.g. \cite[Theorem 2.10]{blm:2013}), with probability at least $1-\delta$, it holds that  
    \[
    \normop{\Sigma_n^\boot - \Sigma_n} \leq \frac{\sqrt{2} \ConstC_{\ref{lem:matrix_bernstein}} \sqrt{\log{(2d/\delta)}}}{\sqrt{n}} + \frac{\ConstC_{\ref{lem:matrix_bernstein}} \log{(2d/\delta)}}{3n}\eqsp.
    \]
    To complete the proof it remains to take $\delta = 1/(5n)$. 
\end{proof}

\begin{lemma}
\label{lem:sigma_n_boot_labmda_min_lowerbound}
    Under the assumption of \Cref{th:bootstrap_validity}, on the event $\Omega_5$ it holds that
    \begin{equation}
        \label{eq:lambda_min_boot_bound}
        \lambda_{\min}(\Sigma_{n}^\boot) \geq \ConstC_{\ref{lem:sigma_n_boot_labmda_min_lowerbound}} \eqsp, \text{ where } \ConstC_{\ref{lem:sigma_n_boot_labmda_min_lowerbound}} =  \frac{\lambda_{\min}(\Sigma_\infty)}{4}\eqsp.
    \end{equation}
\end{lemma}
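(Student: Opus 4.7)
The plan is to apply Weyl's inequality to transfer the lower bound on the smallest eigenvalue from $\Sigma_\infty$ to $\Sigma_n^\boot$ by controlling the perturbation $\Sigma_n^\boot - \Sigma_\infty$ in operator norm. Concretely, I would begin from the elementary inequality
\[
\lambda_{\min}(\Sigma_n^\boot) \geq \lambda_{\min}(\Sigma_\infty) - \normop{\Sigma_n^\boot - \Sigma_\infty} \geq \lambda_{\min}(\Sigma_\infty) - \normop{\Sigma_n^\boot - \Sigma_n} - \normop{\Sigma_n - \Sigma_\infty}\eqsp,
\]
splitting the perturbation via triangle inequality into a deterministic part $\Sigma_n - \Sigma_\infty$ and a stochastic part $\Sigma_n^\boot - \Sigma_n$.

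For the deterministic part, \Cref{lem:sigma_n_bound} gives $\normop{\Sigma_n - \Sigma_\infty} \leq \ConstC_{\ref{lem:sigma_n_bound}} n^{\gamma-1}$, and the second condition in \Cref{assum:sample_size}, namely $n^{1-\gamma} \geq 2 \ConstC_{\ref{lem:sigma_n_bound}}/\lambda_{\min}(\Sigma_\infty)$, immediately implies
\[
\normop{\Sigma_n - \Sigma_\infty} \leq \lambda_{\min}(\Sigma_\infty)/2\eqsp.
\]
For the stochastic part, the definition of the event $\Omega_5$ in \eqref{eq:definition-Omega-5} together with $\ConstC_{\ref{lem:matrix_bernstein}} = 2 \supconsteps^2 \ConstC_{\ref{prop:Qell:bound}}^2$ gives on $\Omega_5$
\[
\normop{\Sigma_n^\boot - \Sigma_n} \leq \frac{\sqrt{2}\, \ConstC_{\ref{lem:matrix_bernstein}} \sqrt{\log(10dn)}}{\sqrt{n}} + \frac{\ConstC_{\ref{lem:matrix_bernstein}} \log(10dn)}{3n}\eqsp.
\]
The sample-size condition \eqref{eq:sample_size_constraint_A5} in \Cref{assum:step-size-bootstrap}, rewritten using $\ConstC_{\ref{lem:matrix_bernstein}} = 2 \supconsteps^2 \ConstC_{\ref{prop:Qell:bound}}^2$, is precisely
\[
\lambda_{\min}(\Sigma_\infty) \geq 4 \bigg(\frac{\sqrt{2}\, \ConstC_{\ref{lem:matrix_bernstein}} \sqrt{\log(10dn)}}{\sqrt{n}} + \frac{\ConstC_{\ref{lem:matrix_bernstein}} \log(10dn)}{3n}\bigg)\eqsp,
\]
so $\normop{\Sigma_n^\boot - \Sigma_n} \leq \lambda_{\min}(\Sigma_\infty)/4$ on $\Omega_5$.

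Combining the two bounds yields
\[
\lambda_{\min}(\Sigma_n^\boot) \geq \lambda_{\min}(\Sigma_\infty) - \lambda_{\min}(\Sigma_\infty)/2 - \lambda_{\min}(\Sigma_\infty)/4 = \lambda_{\min}(\Sigma_\infty)/4\eqsp,
\]
which is the claimed bound with $\ConstC_{\ref{lem:sigma_n_boot_labmda_min_lowerbound}} = \lambda_{\min}(\Sigma_\infty)/4$. There is no real obstacle here: the entire argument reduces to checking that the thresholds imposed in \Cref{assum:sample_size} and \Cref{assum:step-size-bootstrap} are calibrated exactly so that both perturbation contributions can be absorbed into fractions of $\lambda_{\min}(\Sigma_\infty)$; the only care required is matching the constants $\ConstC_{\ref{lem:matrix_bernstein}}$ and $2\supconsteps^2 \ConstC_{\ref{prop:Qell:bound}}^2$ when invoking \eqref{eq:sample_size_constraint_A5}.
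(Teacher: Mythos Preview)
Your proposal is correct and follows essentially the same approach as the paper's proof: both apply Weyl/Lidskii to pass from $\lambda_{\min}(\Sigma_\infty)$ to $\lambda_{\min}(\Sigma_n^\boot)$ via the two perturbations $\Sigma_n - \Sigma_\infty$ and $\Sigma_n^\boot - \Sigma_n$, invoking \Cref{assum:sample_size} for the former and the definition of $\Omega_5$ together with \eqref{eq:sample_size_constraint_A5} for the latter. The only cosmetic difference is that the paper first bounds $\lambda_{\min}(\Sigma_n^\boot) \geq \lambda_{\min}(\Sigma_n) - \lambda_{\min}(\Sigma_\infty)/4$ and then separately records $\lambda_{\min}(\Sigma_n) \geq \lambda_{\min}(\Sigma_\infty)/2$, whereas you combine both steps in a single triangle-inequality chain; the content is identical.
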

\begin{proof}
\, Using Lidskii's inequality for Hermitian matrices, we get that 
\[
\lambda_{\min}(\Sigma_n^\boot) \geq \lambda_{\min}(\Sigma_n) + \lambda_{\min}(\Sigma_n^{\boot} - \Sigma_n)
\geq 
\lambda_{\min}(\Sigma_n) - \normop{\Sigma_n^{\boot} - \Sigma_n}.
\]
Hence, on the event $\Omega_5$, we get that
\[
\lambda_{\min}(\Sigma_n^\boot) \geq \lambda_{\min}(\Sigma_n) 
- \left(\frac{\sqrt{2} \ConstC_{\ref{lem:matrix_bernstein}} \sqrt{\log{(10 d n)}}}{\sqrt{n}} + \frac{\ConstC_{\ref{lem:matrix_bernstein}} \log{(10 d n)}}{3n} \right).
\]
Under \Cref{assum:step-size-bootstrap}, the sample size $n$ is chosen large enough so that 
\[
\lambda_{\min}(\Sigma_n^\boot) \geq \lambda_{\min}(\Sigma_n) 
- \lambda_{\min}(\Sigma_\infty)/4.
\]
From \eqref{bound:norm_of_bL_minus_Lstar}, using again Lidskii's inequality this time with $\Sigma_n$ and $\Sigma_\infty$, we know that under \Cref{assum:step-size}, $\lambda_{\min}(\Sigma_n) \geq \lambda_{\min}(\Sigma_\infty)/2$. The proof follows.
\end{proof}

\begin{lemma}
\label{lem:gamma_deviation_bound}
Under the assumptions of \Cref{th:bootstrap_validity}, For each $\ell \in \{1,\ldots,n-1\}$, it holds that 
\begin{equation}
\label{eq:Jk2_terms_bound_high_proba}
\PP\biggl(\norm{\alpha_\ell \sum_{k=\ell}^{n-1} (\Gamma_{\ell+1:k} - G_{\ell+1:k}) \eps_\ell} \geq \log(5n) \ConstC_{\ref{lem:gamma_deviation_bound}} \sqrt{\alpha_\ell}\biggr) \leq \frac{1}{5n^2}\eqsp.
\end{equation}
where we have defined 
\begin{equation}
\label{eq:const_c_gamma_deviation_bound}
\ConstC_{\ref{lem:gamma_deviation_bound}} = 2 (\sqrt{8}/\sqrt{7}) \rme^2 \bConst{A} \supconsteps \ConstC_{\ref{prop:Qell:bound}} \qcond^{1/2} \biggl(c_0 + \frac{2}{a(1-\gamma)}\biggr)^{1/2}\eqsp. 
\end{equation}

\end{lemma}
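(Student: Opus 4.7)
The plan is to expose the martingale structure hidden in the difference $\Gamma_{\ell+1:k} - G_{\ell+1:k}$, apply a vector-valued Burkholder/Rosenthal inequality to obtain a $p$-th moment bound, and then convert this to a high-probability statement via Markov's inequality with $p \sim \log(5 n^{2})$. The hard step is bounding the resulting quadratic variation sum by a quantity that scales as $C/\alpha_\ell$; once this is in hand, everything else is bookkeeping.

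First I would derive the telescoping identity
\[
\Gamma_{\ell+1:k} - G_{\ell+1:k} = -\sum_{j=\ell+1}^{k}\alpha_j\, G_{j+1:k}\,\zmfuncAw_j\,\Gamma_{\ell+1:j-1},
\]
obtained by subtracting the one-step recurrences for $\Gamma_{\ell+1:k}$ and $G_{\ell+1:k}$ and iterating in $k$ (cf.\ the derivation of $\Jnalpha{k}{1}$ in the paragraphs following \eqref{eq:jn_allexpansion_main}). Substituting and switching the order of summation exactly as in the proof of \Cref{th:pth_moment_bound}, and using the definition $Q_j = \alpha_j \sum_{k=j}^{n-1} G_{j+1:k}$ from \eqref{eq:sigma_n_definition}, we obtain
\[
\sum_{k=\ell}^{n-1}(\Gamma_{\ell+1:k}-G_{\ell+1:k})\,\eps_\ell = -\sum_{j=\ell+1}^{n-1} Q_j\, \zmfuncAw_j\, \Gamma_{\ell+1:j-1}\,\eps_\ell =: -S_\ell .
\]
Setting $d_j = Q_j\zmfuncAw_j\Gamma_{\ell+1:j-1}\eps_\ell$ and $\mathcal{F}_j = \sigma(Z_1,\ldots,Z_j)$, the sequence $\{d_j\}_{j=\ell+1}^{n-1}$ is a Hilbert-space martingale difference w.r.t.\ $\{\mathcal{F}_j\}$ because $Z_j$ is independent of $\mathcal{F}_{j-1}$, hence $\PE[\zmfuncAw_j \mid \mathcal{F}_{j-1}] = 0$, while $Q_j$ is deterministic and $\Gamma_{\ell+1:j-1}, \eps_\ell$ are $\mathcal{F}_{j-1}$-measurable.

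Next I would apply the Burkholder/Rosenthal inequality for Hilbert-space martingales (\cite[Theorem 8.6 or 9.1]{osekowski:2012}, used elsewhere in the paper) together with Minkowski's inequality, to conclude that for $p\geq 2$,
\[
\PE^{1/p}\|S_\ell\|^p \leq \Constburk\sqrt{p}\Bigl(\sum_{j=\ell+1}^{n-1}\PE^{2/p}\|d_j\|^p\Bigr)^{1/2}.
\]
Now \Cref{prop:Qell:bound} gives $\normop{Q_j}\leq \ConstC_{\ref{prop:Qell:bound}}$; \Cref{assum:noise-level} gives $\normop{\zmfuncAw_j}\leq \bConst{A}$ and $\|\eps_\ell\|\leq \supconsteps$; and \Cref{lem:matr_product_as_bound}, valid for $p\leq \log(5n^2)$ under \Cref{assum:step-size}$(\log(5n^3)\vee\log d)$, yields
\[
\PE^{1/p}\normop{\Gamma_{\ell+1:j-1}}^p \leq \sqrt{\qcond}\,\rme\prod_{m=\ell+1}^{j-1}\bigl(1-\tfrac{a\alpha_m}{2}\bigr).
\]
Squaring and using $(1-a\alpha_m/2)^2\leq 1-\tfrac{7}{8}a\alpha_m$ (valid whenever $a\alpha_m\leq 1/2$, which holds under \Cref{assum:step-size}), the task reduces to bounding
\[
\sum_{j=\ell+1}^{n-1}\prod_{m=\ell+1}^{j-1}\bigl(1-\tfrac{7a\alpha_m}{8}\bigr).
\]
This is the main technical obstacle. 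I would handle it by mirroring the argument used in \Cref{prop:Qell:bound}: comparing the product to $\exp(-\tfrac{7a}{8}\sum_m\alpha_m)$, splitting the sum at the index $j^\star$ where the cumulative step size reaches a constant threshold, and exploiting that for $j\leq j^\star$ one has $\alpha_j\geq \alpha_\ell/3^\gamma$, whereas for $j>j^\star$ the exponential tail gives a geometric series. The outcome is a bound of the form $(8/7)(c_0+2/(a(1-\gamma)))/\alpha_\ell$, explaining the $\sqrt{8/7}$ and the factor $(c_0+2/(a(1-\gamma)))^{1/2}$ appearing in $\ConstC_{\ref{lem:gamma_deviation_bound}}$.

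Combining these bounds gives $\PE^{1/p}\|S_\ell\|^p \leq C_1\sqrt{p}/\sqrt{\alpha_\ell}$ with $C_1 = \sqrt{8/7}\,\rme\,\bConst{A}\,\supconsteps\,\ConstC_{\ref{prop:Qell:bound}}\,\qcond^{1/2}(c_0+2/(a(1-\gamma)))^{1/2}$ (up to the Burkholder constant). I would then invoke \Cref{lem:markov_inequality} with the choice $p = \log(5 n^2)$ to convert this into the claimed high-probability bound, noting that $\|\alpha_\ell S_\ell\| = \alpha_\ell\|S_\ell\|$ is bounded by $\sqrt{\alpha_\ell}\cdot\log(5n)\cdot\ConstC_{\ref{lem:gamma_deviation_bound}}$ after absorbing a factor $\sqrt{2}$ from $\sqrt{\log(5n^2)}\leq\sqrt{2}\sqrt{\log(5n)}$ and the Burkholder constant into the prefactor $2$ appearing in $\ConstC_{\ref{lem:gamma_deviation_bound}}$.
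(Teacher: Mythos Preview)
Your approach is essentially identical to the paper's: the same telescoping identity for $\Gamma_{\ell+1:k}-G_{\ell+1:k}$ (the paper writes it via the matrix-product identity \eqref{eq:matrix_products_identity}), the same swap of summation producing $\sum_{j}Q_j\zmfuncAw_j\Gamma_{\ell+1:j-1}\eps_\ell$, the same martingale-difference observation, Burkholder, the bounds from \Cref{prop:Qell:bound} and \Cref{lem:matr_product_as_bound}, then \Cref{lem:markov_inequality} with $p=\log(5n^2)$.

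One correctable slip: the Burkholder inequality you cite (\cite[Theorem~8.6]{osekowski:2012}) gives the prefactor $p$, not $\sqrt{p}$, so the moment bound is $\PE^{1/p}\|S_\ell\|^p \leq C p/\sqrt{\alpha_\ell}$ rather than $C\sqrt{p}/\sqrt{\alpha_\ell}$; this is exactly what produces the $\log(5n)$ (not $\sqrt{\log(5n)}$) in the statement. A $\sqrt{p}$ bound would require uniformly bounded increments, but $\Gamma_{\ell+1:j-1}$ is only controlled in $L^p$ via \Cref{lem:matr_product_as_bound}. The paper also handles the squared product slightly differently: instead of your $(1-a\alpha_m/2)^2\leq 1-7a\alpha_m/8$, it uses $(1-a\alpha_m/2)^2\leq 1-a\alpha_m/2$ and then an index shift $\prod_{t=\ell+1}^{j-1}\to\prod_{t=\ell+1}^{j}$ costing the factor $8/7$; both routes are fine.
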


\begin{lemma}
    \label{lem:jkb1:bound}
Under the assumptions of \Cref{th:bootstrap_validity},  conditionally on the event $\Omega_0$, it holds
    \begin{align}
        \bigl\{\PE^\boot \bigl[\norm{\frac{1}{\sqrt{n}} \sum_{k=1}^{n-1} J_{k}^{\boot, 1}}^2\bigr]\bigr\}^{1/2} \leq  \frac{\ConstC_{\ref{lem:jkb1:bound}}}{n^{\gamma/2}} \eqsp,
    \end{align}
    where the constant $\ConstC_{\ref{lem:jkb1:bound}}$ is defined as follows
    \begin{align}
        \ConstC_{\ref{lem:jkb1:bound}} = \frac{2\sqrt{3}}{\sqrt{a}} \bConst{A}  \bigl(c_0 + \frac{2}{a(1-\gamma)}\bigr) \sqrt{\frac{c_0}{1-\gamma}} \ConstC_{\ref{lem:matr_product_as_bound}} \ConstC_{\ref{lem:tilde_eps_boot_bound}}   \eqsp.
    \end{align}
\end{lemma}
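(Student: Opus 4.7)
Plan: The proof reduces the second bootstrap moment to a single-index sum by swapping the order of summation, exploiting orthogonality of bootstrap-martingale increments. Starting from the definition of $J_k^{\boot,1}$ in \eqref{eq:higher-order-expansion-H-boot}, interchange the $k$- and $\ell$-sums and factor out the quantities independent of $k$:
\[
\sum_{k=1}^{n-1} J_k^{\boot,1} \;=\; -\sum_{\ell=1}^{n-1} (w_\ell - 1)\, M_\ell, \qquad M_\ell \;=\; \alpha_\ell \Bigl(\sum_{k=\ell}^{n-1} \Gamma_{\ell+1:k}\Bigr) A_\ell\, J_{\ell-1}^{\boot,0}.
\]
Under $\PPb$ the matrices $\Gamma_{\ell+1:k}$ and $A_\ell$ are deterministic, and by the recursion \eqref{eq:jn0_bootstrap} the vector $J_{\ell-1}^{\boot,0}$ depends only on $w_1,\ldots,w_{\ell-1}$. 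Since the multipliers are i.i.d.\ with $\PEb[w_\ell - 1] = 0$ and $\PEb[(w_\ell-1)^2]=1$, conditioning on $w_1,\ldots,w_{\ell'-1}$ for $\ell<\ell'$ eliminates the cross-products, yielding
\[
\PEb\Bigl[\Bigl\|\sum_{\ell=1}^{n-1}(w_\ell-1)M_\ell\Bigr\|^2\Bigr] \;=\; \sum_{\ell=1}^{n-1} \PEb[\|M_\ell\|^2].
\]

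Next I would bound $\|M_\ell\|$ pathwise on the event $\Omega_2$ from \eqref{eq:definition-Omega-2}. There $\|\Gamma_{\ell+1:k}\| \leq \ConstC_{\ref{lem:matr_product_as_bound}} \prod_{j=\ell+1}^{k}(1-a\alpha_j/2)$, and the same telescoping series estimate that underlies \Cref{prop:Qell:bound} gives
\[
\alpha_\ell \sum_{k=\ell}^{n-1}\prod_{j=\ell+1}^{k}\Bigl(1-\tfrac{a\alpha_j}{2}\Bigr) \;\leq\; c_0 + \frac{2}{a(1-\gamma)}.
\]
Combined with $\|A_\ell\|\leq \bConst{A}$ from \Cref{assum:noise-level}, this produces
\[
\|M_\ell\| \;\leq\; \bConst{A}\,\ConstC_{\ref{lem:matr_product_as_bound}}\Bigl(c_0 + \tfrac{2}{a(1-\gamma)}\Bigr)\,\|J_{\ell-1}^{\boot,0}\|.
\]
Taking bootstrap second moments and invoking the bound $\PEb[\|J_{\ell-1}^{\boot,0}\|^2] \leq (\ConstJb{0}{1})^2 \alpha_{\ell-1}$ from \Cref{lem:expansion}, I obtain
\[
\PEb[\|M_\ell\|^2] \;\leq\; \bConst{A}^2\,\ConstC_{\ref{lem:matr_product_as_bound}}^2\,\Bigl(c_0 + \tfrac{2}{a(1-\gamma)}\Bigr)^2 (\ConstJb{0}{1})^2\,\alpha_{\ell-1}.
\]

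The final step is to sum $\sum_{\ell=1}^{n-1}\alpha_{\ell-1} \leq c_0\, n^{1-\gamma}/(1-\gamma)$ by a routine integral comparison, divide by $n$, take square roots, and substitute the explicit value of $\ConstJb{0}{1}$ from \eqref{def:JH_boot_constants} to recover the advertised bound of order $n^{-\gamma/2}$ together with the constant $\ConstC_{\ref{lem:jkb1:bound}}$. The only delicate point is the martingale orthogonality under $\PPb$: one must carefully verify that $M_\ell$ depends on $w_\ell$ only through factors that are $\PPb$-deterministic (the $\Gamma$'s and $A_\ell$) and on bootstrap randomness only through $w_1,\ldots,w_{\ell-1}$ (via $J_{\ell-1}^{\boot,0}$). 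Once that adaptedness is in place, the remainder is a mechanical combination of \Cref{lem:expansion} with the summation estimate underlying \Cref{prop:Qell:bound}, and no new stochastic inequality is required.
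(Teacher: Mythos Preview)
Your proposal is correct and follows essentially the same route as the paper's proof: both swap the order of summation, use the martingale-difference orthogonality of $(w_\ell-1)M_\ell$ under $\PPb$, bound $\alpha_\ell\bigl\|\sum_{k\ge\ell}\Gamma_{\ell+1:k}\bigr\|$ on $\Omega_2$ via the geometric sum in \Cref{lem:summ_alpha_k}, and then control $\PEb[\|J_{\ell-1}^{\boot,0}\|^2]$ by $\mathcal{O}(\alpha_\ell)$. The only cosmetic difference is that you invoke \Cref{lem:expansion} for the last bound whereas the paper re-derives it in place; the resulting constants coincide (up to the harmless $\alpha_{\ell-1}$ vs.\ $\alpha_\ell$ shift).
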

\noindent Let $w'_i$ be a copy of $w_i$ independent from $w_1, \ldots, w_{n-1}$. Introduce the following notation for $\ell \leq m$:
\begin{align}
    \label{...}
    &\Gamma^{\boot, i}_{\ell:m} = \begin{cases}
        \Gamma_{\ell:m}^\boot, &\text{if } i \not\in [\ell, m], \\
        \Gamma_{\ell:m}^\boot(w_\ell, \ldots, w_{i-1}, w_{i}', w_{i+1}, \ldots, w_m), &\text{if } i \in [\ell, m]
    \end{cases} \\
    &\Jnalpha{k, m}{\boot, \ell, i} = \begin{cases}
        \Jnalpha{k, m}{\boot, \ell}, &\text{if } k <i, \\
        \Jnalpha{k, m}{\boot, \ell}(w_1, \ldots, w_{i-1}, w_{i}', w_{i+1}, \ldots, w_k), &\text{if } k \geq i
    \end{cases} \\
    &\Hnalpha{k}{\boot, \ell, i} = \begin{cases}
        \Hnalpha{k}{\boot, \ell}, &\text{if } k <i, \\
        \Hnalpha{k}{\boot, \ell}(w_1, \ldots, w_{i-1}, w_{i}', w_{i+1}, \ldots, w_k), &\text{if } k \geq i
    \end{cases} \\
    &D^{\boot, i} = \frac{1}{\sqrt{n}} \sum_{k=1}^{n-1} \Jnalpha{k, 1}{\boot, 0, i} + \frac{1}{\sqrt{n}} \sum_{k=1}^{n-1} \Jnalpha{k, 2}{\boot, 0, i}+  \frac{1}{\sqrt{n}} \sum_{k=1}^{n-1} \Hnalpha{k}{\boot, 0, i}
\end{align}
For simplicity we introduce the following constants:
\begin{align}
    \label{def:bootstrap_constants}
    & \ConstC_{\ref{def:bootstrap_constants}}^{(1)} = c_0 + \frac{16}{a(1-\gamma)} \eqsp, \eqsp \ConstC_{\ref{def:bootstrap_constants}}^{(2)} = \frac{1}{1-ac_0} (1+ \frac{16}{a c_0(1-\gamma)}) \eqsp.
\end{align}

\begin{lemma}
\label{lem:Hk0b_copies_sum_bound}
Under the assumptions of \Cref{th:bootstrap_validity},  let $w'_i$ be a copy of $w_i$ independent from $w_1, w_2, \ldots, w_{n-1}$. Then on the event $\Omega_0$ it holds that
\begin{align}
\bigl\{\PE^\boot\bigl[\norm{\frac{1}{\sqrt{n}} \sum_{k=1}^{n-1} \Hnalpha{k}{\boot, 0} - \frac{1}{\sqrt{n}} \sum_{k=1}^{n-1} \Hnalpha{k}{\boot, 0, i}}^2\bigr]\bigr\}^{1/2} \leq n^{-1/2} \sqrt{\alpha_i} \ConstC_{\ref{lem:Hk0b_copies_sum_bound}}
\end{align}
where the constant $\ConstC_{\ref{lem:Hk0b_copies_sum_bound}}$ is given by
\begin{align}
    \ConstC_{\ref{lem:Hk0b_copies_sum_bound}} = \bConst{A} \ConstC_{\ref{prop:product_random_matrix_bootstrap}}  (\ConstJb{0}{1} + \ConstHb{0}{1}) \ConstC_{\ref{def:bootstrap_constants}}^{(1)} + \bConst{A} (\ConstC_{\ref{def:bootstrap_constants}}^{(1)})^{3/2} \ConstC_{\ref{prop:product_random_matrix_bootstrap}}  \ConstC_{\ref{lem:matr_product_as_bound}} \ConstC_{\ref{lem:tilde_eps_boot_bound}}  \eqsp.
\end{align}
\end{lemma}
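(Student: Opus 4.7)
The key is to unroll the recursion in \eqref{eq:hn0_bootstrap} into the closed form
\[
\Hnalpha{k}{\boot,0} \;=\; -\sum_{\ell=1}^k \alpha_\ell(w_\ell-1)\,\ProdB^{\boot}_{\ell+1:k}\,\funcAw_\ell\,\Jnalpha{\ell-1}{\boot,0},
\]
and to isolate the three sources of $w_i$-dependence. For $k<i$ the difference $\Hnalpha{k}{\boot,0}-\Hnalpha{k}{\boot,0,i}$ vanishes; for $k\ge i$, $w_i$ enters through the scalar factor $(w_\ell-1)$ at $\ell=i$, the bootstrap product $\ProdB^{\boot}_{\ell+1:k}$ whenever $\ell<i$, and the inner statistic $\Jnalpha{\ell-1}{\boot,0}$ whenever $\ell>i$. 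I will apply the first-order identity
\[
\ProdB^{\boot}_{\ell+1:k}(w)-\ProdB^{\boot}_{\ell+1:k}(w^{(i)}) = -\alpha_i(w_i-w'_i)\,\ProdB^{\boot}_{i+1:k}\,\funcAw_i\,\ProdB^{\boot}_{\ell+1:i-1}
\]
together with the analogous identity $\Jnalpha{\ell-1}{\boot,0}(w)-\Jnalpha{\ell-1}{\boot,0,i}=-\alpha_i(w_i-w'_i)\,\ProdB_{i+1:\ell-1}\,\tilde{\funnoisew}_i$ obtained by unrolling \eqref{eq:jn0_bootstrap}. Setting $\delta:=w_i-w'_i$, a pleasant simplification occurs when summing the first two contributions over $k$: after interchanging $\sum_k$ with the inner $\sum_\ell$, the inner sum rebuilds exactly $-\Hnalpha{i-1}{\boot,0}$, so these two pieces collapse into $-\alpha_i\delta\,S^{\boot}_i\,\funcAw_i\,(\theta^{\boot}_{i-1}-\theta_{i-1})$, where $S^{\boot}_i:=\sum_{k=i}^{n-1}\ProdB^{\boot}_{i+1:k}$ and $\theta^{\boot}_{i-1}-\theta_{i-1}=\Jnalpha{i-1}{\boot,0}+\Hnalpha{i-1}{\boot,0}$.

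To bound this combined piece in $L^2(\PPb)$, observe that $S^{\boot}_i$ is $\sigma(w_{i+1:n-1})$-measurable, $\theta^{\boot}_{i-1}-\theta_{i-1}$ is $\sigma(w_{1:i-1})$-measurable, and $\delta$ is independent of both with $\PEb[\delta^2]=2$. This yields the factorization
\[
\PEb\bigl\|S^{\boot}_i\,\funcAw_i\,(\theta^{\boot}_{i-1}-\theta_{i-1})\bigr\|^2 \le \bConst{A}^2\,\PEb\|S^{\boot}_i\|^2\,\bigl(\PEb\|\Jnalpha{i-1}{\boot,0}\|^2+\PEb\|\Hnalpha{i-1}{\boot,0}\|^2\bigr).
\]
Minkowski's inequality together with the $L^2(\PPb)$-exponential stability of $\ProdB^{\boot}$ available on $\Omega_0$ (\Cref{prop:product_random_matrix_bootstrap}) gives $\alpha_i\{\PEb\|S^{\boot}_i\|^2\}^{1/2}\le \ConstC_{\ref{prop:product_random_matrix_bootstrap}}\ConstC_{\ref{def:bootstrap_constants}}^{(1)}$, and substituting the $L^2(\PPb)$-moment bounds for $\Jnalpha{i-1}{\boot,0}$ and $\Hnalpha{i-1}{\boot,0}$ from \Cref{lem:expansion} reproduces the first summand of $\ConstC_{\ref{lem:Hk0b_copies_sum_bound}}$ with the required $\sqrt{\alpha_i}$ scaling.

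For the third source, interchanging the order of summation yields
\[
\alpha_i\delta\sum_{\ell=i+1}^{n-1}\alpha_\ell(w_\ell-1)\,S^{\boot}_\ell\,\funcAw_\ell\,\ProdB_{i+1:\ell-1}\,\tilde{\funnoisew}_i.
\]
Conditioning on $(w_i,w'_i)$ pulls out $2\alpha_i^2$. The decisive observation is that $S^{\boot}_\ell$ is $\sigma(w_{\ell+1:n-1})$-measurable, hence independent of $w_\ell$; conditioning on $\{w_j\}_{j\ne\ell}$ in the expansion of the squared norm annihilates every cross term, leaving a diagonal sum. Combining $\alpha_\ell\{\PEb\|S^{\boot}_\ell\|^2\}^{1/2}\le\ConstC_{\ref{prop:product_random_matrix_bootstrap}}\ConstC_{\ref{def:bootstrap_constants}}^{(1)}$ as before, the deterministic exponential stability $\|\ProdB_{i+1:\ell-1}\|\le\ConstC_{\ref{lem:matr_product_as_bound}}\prod_{j=i+1}^{\ell-1}(1-a\alpha_j/2)$ valid on $\Omega_2$ (\Cref{lem:matr_product_as_bound}), the deterministic bound $\|\tilde{\funnoisew}_i\|\le\ConstC_{\ref{lem:tilde_eps_boot_bound}}$ valid on $\Omega_1$ (\Cref{lem:tilde_eps_boot_bound}), and writing $\|\ProdB_{i+1:\ell-1}\|^2\le\ConstC_{\ref{lem:matr_product_as_bound}}\|\ProdB_{i+1:\ell-1}\|$ so that the deterministic counterpart $\alpha_i\sum_\ell\|\ProdB_{i+1:\ell-1}\|\lesssim\ConstC_{\ref{def:bootstrap_constants}}^{(1)}$ can be applied, one recovers the second summand of $\ConstC_{\ref{lem:Hk0b_copies_sum_bound}}$. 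Dividing the combined bound by $\sqrt n$ completes the proof.

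\textbf{Main obstacle.} The principal difficulty is the bookkeeping: every factor must be identified with the correct $\sigma$-algebra so that, on the one hand, the cross terms in the third-source sum are killed by conditioning on $\{w_j\}_{j\ne\ell}$ under $\PPb$, and, on the other hand, the combined first-and-second-source term factors along $\sigma(w_{1:i-1})$ versus $\sigma(w_{i+1:n-1})$. The decisive algebraic step is the cancellation that merges the first two sources into a single term containing $\theta^{\boot}_{i-1}-\theta_{i-1}$; without it, the clean $\ConstJb{0}{1}+\ConstHb{0}{1}$ structure appearing in the stated constant would not emerge.
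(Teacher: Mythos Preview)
Your proposal is correct and arrives at exactly the same two-piece decomposition as the paper. The paper obtains the split more directly by applying the recursion \eqref{eq:hn0_bootstrap} at step $i$, which immediately gives $H_i^{\boot,0}-H_i^{\boot,0,i}=-\alpha_i(w_i-w_i')\funcAw_i(J_{i-1}^{\boot,0}+H_{i-1}^{\boot,0})$ and hence $\sum_k T_1^{(k)}=-\alpha_i\delta\,S_i^{\boot}\funcAw_i(\theta_{i-1}^{\boot}-\theta_{i-1})$ without the unrolling-and-recombination step you describe; your ``cancellation'' is simply this recursive identity seen from the closed-form side. From that point on the two arguments coincide: the paper treats your third-source term as a reverse martingale-difference sum $\{U_\ell\}$ to obtain the diagonal bound (equivalent to your conditioning on $\{w_j\}_{j\ne\ell}$), and both routes invoke \Cref{prop:product_random_matrix_bootstrap}, \Cref{lem:expansion}, \Cref{lem:tilde_eps_boot_bound}, and the sum bound yielding $\ConstC_{\ref{def:bootstrap_constants}}^{(1)}$ in the same places. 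One small slip: your displayed inequality $\PEb\|S_i^{\boot}\funcAw_i(\theta_{i-1}^{\boot}-\theta_{i-1})\|^2\le\bConst{A}^2\,\PEb\|S_i^{\boot}\|^2\,(\PEb\|J_{i-1}^{\boot,0}\|^2+\PEb\|H_{i-1}^{\boot,0}\|^2)$ is not literally correct; use Minkowski on $\{\PEb\|\theta_{i-1}^{\boot}-\theta_{i-1}\|^2\}^{1/2}$ to produce the sum $\ConstJb{0}{1}+\ConstHb{0}{1}$ that appears in the constant.
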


\begin{lemma}
\label{lem:D_boot_copies_diff_bound}
Under the assumptions of \Cref{th:bootstrap_validity}, conditionally on $\Omega_0$, it holds that
    \begin{align}
        \{\PE^\boot[\norm{D^\boot - D^{\boot, i}}^2]\}^{1/2} \leq \frac{1}{\sqrt{n}} \sqrt{\alpha_i} \log(5n) \ConstC_{\ref{lem:D_boot_copies_diff_bound}, 1} + \ConstC_{\ref{lem:D_boot_copies_diff_bound}, 2} \frac{\alpha_i \norm{\theta_0 - \thetas}}{\sqrt{n}} \eqsp,
    \end{align}
    where the constant $\ConstC_{\ref{lem:D_boot_copies_diff_bound}}$ is given by
    \begin{align}
        \ConstC_{\ref{lem:D_boot_copies_diff_bound}, 1} = \frac{ 2 \sqrt{2}  \ConstC_{\ref{lem:gamma_deviation_bound}}}{\sqrt{\ConstC_{\ref{lem:sigma_n_boot_labmda_min_lowerbound}}}} & + \frac{\ConstC_{\ref{lem:Hk0b_copies_sum_bound}}}{\sqrt{\ConstC_{\ref{lem:sigma_n_boot_labmda_min_lowerbound}}}} 
        + \frac{2\sqrt{2} \rme \ConstC_{\ref{lem:matr_product_as_bound}}  \bConst{A}  \qcond^{1/2} \supconsteps \ConstC_{\ref{def:bootstrap_constants}}^{(1)}}{\sqrt{\ConstC_{\ref{lem:sigma_n_boot_labmda_min_lowerbound}}}} \eqsp, \eqsp \ConstC_{\ref{lem:D_boot_copies_diff_bound}, 2} = \frac{\sqrt{2} \qcond^{1/2} \rme^2 \ConstC_{\ref{lem:matr_product_as_bound}}  \bConst{A} \ConstC_{\ref{def:bootstrap_constants}}^{(2)}}{\sqrt{\ConstC_{\ref{lem:sigma_n_boot_labmda_min_lowerbound}}}(1-a/2)^2} \eqsp.
    \end{align}
\end{lemma}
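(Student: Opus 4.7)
The plan is to decompose $D^{\boot} - D^{\boot,i}$ according to the three blocks of \eqref{eq:definition-D-boot} and to bound each piece in the $L^{2}(\PP^{\boot})$-norm by Minkowski's inequality. Since $\Sigma_n^{\boot}$ is a measurable function of $(Z_\ell)_{\ell\le n-1}$ alone and does not involve the bootstrap weights, the factor $(\Sigma_n^{\boot})^{-1/2}$ is unaffected by the swap $w_i \mapsto w_i'$, and on $\Omega_5$ Lemma~\ref{lem:sigma_n_boot_labmda_min_lowerbound} supplies $\normop{(\Sigma_n^{\boot})^{-1/2}} \leq 1/\sqrt{\ConstC_{\ref{lem:sigma_n_boot_labmda_min_lowerbound}}}$. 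Writing $\Delta_{i,j}$ for the swap-induced difference in the $j$-th summand, $j\in\{1,2,3\}$, it therefore suffices to control $\{\PE^{\boot}[\norm{\Delta_{i,j}}^{2}]\}^{1/2}$ separately.

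The key observation is that $\Jnalpha{k,1}{\boot,0}$ and $\Jnalpha{k,2}{\boot,0}$ from \eqref{eq:decomposition-J-k-0-boot} are linear in the centered weights $(w_\ell-1)$, hence only the summand $\ell = i$ is affected by the swap. This yields the closed forms
\begin{align*}
\Delta_{i,1} &= -\tfrac{\alpha_i}{\sqrt n}(w_i - w_i')\sum_{k=i}^{n-1}(\Gamma_{i+1:k} - G_{i+1:k})\eps_i, \\
\Delta_{i,2} &= -\tfrac{\alpha_i}{\sqrt n}(w_i - w_i')\sum_{k=i}^{n-1}\Gamma_{i+1:k}\funcAw_i(\theta_{i-1}-\thetas),
\end{align*}
and since $\PE^{\boot}[(w_i-w_i')^{2}] = 2$ with $w_i - w_i'$ independent of everything else under $\PPb$, the bootstrap variance reduces to a deterministic quantity to be estimated on the good events.

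For $\Delta_{i,1}$, event $\Omega_3$ gives directly $\alpha_i\norm{\sum_{k=i}^{n-1}(\Gamma_{i+1:k} - G_{i+1:k})\eps_i} \leq \ConstC_{\ref{lem:gamma_deviation_bound}}\sqrt{\alpha_i}\log(5n)$, producing the first summand of $\ConstC_{\ref{lem:D_boot_copies_diff_bound},1}$. For $\Delta_{i,2}$, on $\Omega_1$ we use the pointwise bound $\norm{\theta_{i-1}-\thetas} \leq \sqrt{\qcond}\rme^{2}\norm{\theta_0-\thetas}\prod_{\ell=1}^{i-1}(1-a\alpha_\ell/2) + 2\rme\log(5n)\ConstC_{\ref{lem:last_moment_bound}}\sqrt{\alpha_{i-1}}$, combine with $\normop{\funcAw_i}\leq \bConst{A}$, and employ on $\Omega_2$ the exponential-stability estimate $\alpha_i\sum_{k=i}^{n-1}\normop{\Gamma_{i+1:k}} \leq \ConstC_{\ref{lem:matr_product_as_bound}}\ConstC_{\ref{def:bootstrap_constants}}^{(1)}$, obtained as in Lemma~\ref{prop:Qell:bound} but starting from the high-probability bound \eqref{eq:matrix_hpd_bound} instead of the $p$-th moment bound of Lemma~\ref{lem:matr_product_as_bound}. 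The fluctuation part of $\theta_{i-1}-\thetas$ then contributes a $\sqrt{\alpha_i}\log(5n)$ term with prefactor involving $\supconsteps\bConst{A}\qcond^{1/2}$ (the third summand of $\ConstC_{\ref{lem:D_boot_copies_diff_bound},1}$), while the transient part produces the $\alpha_i\norm{\theta_0-\thetas}$ contribution with constant $\ConstC_{\ref{lem:D_boot_copies_diff_bound},2}$, the product $\prod_{\ell=1}^{i-1}(1-a\alpha_\ell/2)$ being absorbed into $\ConstC_{\ref{def:bootstrap_constants}}^{(2)}$ via a summation identity in the spirit of Lemma~\ref{lem:summ_alpha_k}.

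Finally, $\Delta_{i,3}$ is bounded directly by Lemma~\ref{lem:Hk0b_copies_sum_bound}, which furnishes the second summand of $\ConstC_{\ref{lem:D_boot_copies_diff_bound},1}$. Collecting the three contributions and multiplying through by $1/\sqrt{\ConstC_{\ref{lem:sigma_n_boot_labmda_min_lowerbound}}}$ yields the stated bound. The main obstacle is the $\Delta_{i,2}$ analysis, where one has to combine the exponential stability of $\Gamma_{i+1:k}$ on $\Omega_2$ with the transient/fluctuation decomposition of $\theta_{i-1}-\thetas$ on $\Omega_1$ in a way that keeps the $\alpha_i$ and $\sqrt{\alpha_i}$ scalings sharp; a careless estimate there would loosen the transient piece to a quantity larger than $\alpha_i\norm{\theta_0-\thetas}$ and break the advertised rate.
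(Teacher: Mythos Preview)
Your proposal is correct and follows essentially the same approach as the paper's own proof: both decompose $D^{\boot}-D^{\boot,i}$ via Minkowski into the three pieces from \eqref{eq:definition-D-boot}, exploit linearity of $\Jnalpha{k,1}{\boot,0}$ and $\Jnalpha{k,2}{\boot,0}$ in the centred weights to isolate the single summand $\ell=i$, bound $\Delta_{i,1}$ on $\Omega_3$, bound $\Delta_{i,2}$ on $\Omega_1\cap\Omega_2$ by splitting $\theta_{i-1}-\thetas$ into transient and fluctuation parts, and invoke Lemma~\ref{lem:Hk0b_copies_sum_bound} for $\Delta_{i,3}$. Your identification of the events is in fact more accurate than the paper's write-up, which refers to $\Omega_5$ for the $\Delta_{i,1}$ step when $\Omega_3$ is what is actually needed.
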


\subsection{Proofs on products of random matrices}
\label{appendix:sigmas_difference_bound}
We first introduce some notations and definitions. For a matrix $\MatB \in \rset^{d \times d}$ we denote by $(\sigma_\ell(\MatB))_{\ell=1}^d$ its singular values. For $\qexponent \geq 1$, the Shatten $\qexponent$-norm of $B$ is denoted by $\norm{\MatB}[\qexponent] = \{\sum_{\ell=1}^d \sigma_\ell^\qexponent (\MatB)\}^{1/\qexponent}$. For $\qexponent, \ppexponent \geq 1$ and a random matrix $\X$ we write $\norm{\X}[\qexponent,\ppexponent] = \{ \PE[\norm{\X}[\qexponent]^\ppexponent] \}^{1/\ppexponent}$. We use a result of \cite{huang2020matrix}, sharpened in \cite{durmus2022finite}.

\begin{lemma}[Proposition~15 in \cite{durmus2022finite}]
\label{th:general_expectation}
Let $\sequence{\Y}[\ell][\nset]$ be an independent sequence and $P$ be a positive definite matrix. Assume that for each $\ell \in \nset$ there exist $m_\ell \in (0,1)$  and $\sigma_{\ell} > 0$ such that \(\norm{\PE[\Y_\ell]}[P]^2  \leq 1 - m_\ell\) and \(\norm{\Y_\ell - \PE[\Y_\ell]}[P] \leq \sigma_{\ell}\) almost surely.  Define $\Zbf_k = \prod_{\ell = 0}^k \Y_\ell= \Y_k \Zbf_{k-1}$, for $k \geq 1$ and starting from $\Zbf_0$. Then, for any $2 \le q \leq p$ and $k \geq 1$,
\begin{equation} 
\label{eq:gen_expectation}
\norm{\Zbf_k}[p,q]^2 \leq \kappa_P \prod_{\ell=1}^{k} (1- m_\ell + (p-1)\sigma_{\ell}^2) \norm{P^{1/2}\Zbf_0 P^{-1/2}}[p, q]^2 \eqsp,
\end{equation}
where  $\kappa_P = \lambda_{\sf min}^{-1}( P )\lambda_{\sf max}( P )$.
\end{lemma}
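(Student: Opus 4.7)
The plan is to proceed by induction on $k$, after first eliminating the preconditioner $P$ via a similarity transformation. The single nontrivial technical ingredient is a one-step Schatten-norm contraction with a Rosenthal-type constant.

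First, I would set $\tilde{\Y}_\ell := P^{1/2} \Y_\ell P^{-1/2}$ and $\tilde{\Zbf}_k := P^{1/2} \Zbf_k P^{-1/2}$, so that $\tilde{\Zbf}_k = \tilde{\Y}_k \tilde{\Zbf}_{k-1}$ with $\tilde{\Zbf}_0 = P^{1/2} \Zbf_0 P^{-1/2}$. Because $\norm{\Y_\ell}[P]$ coincides with the ordinary operator norm of $\tilde{\Y}_\ell$, the hypotheses transfer verbatim: $\normop{\PE[\tilde{\Y}_\ell]}^2 \leq 1 - m_\ell$ and $\normop{\tilde{\Y}_\ell - \PE[\tilde{\Y}_\ell]} \leq \sigma_\ell$ almost surely. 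Hölder's inequality in Schatten norms then yields $\norm{\Zbf_k}[p] \leq \kappa_P^{1/2}\,\norm{\tilde{\Zbf}_k}[p]$, which accounts for the $\kappa_P$ factor in \eqref{eq:gen_expectation}. Thus it suffices to prove the bound with $P = \Id$ and $\Zbf_0$ replaced by $\tilde{\Zbf}_0$.

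Next, I would argue by induction on $k$. The base case $k = 0$ is trivial. For the inductive step, the key lemma to establish is a \emph{one-step contraction}: for any random matrix $\Y$ with $\normop{\PE\Y}^2 \leq 1 - m$ and $\normop{\Y - \PE\Y} \leq \sigma$ almost surely, and any random matrix $M$ independent of $\Y$,
\begin{equation*}
\PE\bigl[\,\norm{\Y M}[p]^q\,\bigr]^{2/q} \leq \bigl(1 - m + (p-1)\sigma^2\bigr)\,\PE\bigl[\,\norm{M}[p]^q\,\bigr]^{2/q}\,.
\end{equation*}
Conditioning on $\mathcal{F}_{\ell-1} := \sigma(\tilde{\Y}_1, \dots, \tilde{\Y}_{\ell-1})$ and applying this with $\Y = \tilde{\Y}_\ell$, $M = \tilde{\Zbf}_{\ell-1}$ gives $\norm{\tilde{\Zbf}_\ell}[p,q]^2 \leq (1 - m_\ell + (p-1)\sigma_\ell^2)\,\norm{\tilde{\Zbf}_{\ell-1}}[p,q]^2$, which iterates to the claimed bound.

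For the one-step contraction itself, I would rewrite $\norm{\Y M}[p]^2 = \norm{M^{\transpose} \Y^{\transpose} \Y M}[p/2]$ and split $\Y^{\transpose}\Y = \PE[\Y^{\transpose}\Y] + (\Y^{\transpose}\Y - \PE[\Y^{\transpose}\Y])$. The deterministic part satisfies $\normop{\PE[\Y^{\transpose}\Y]} \leq 1 - m + \sigma^2$ via the decomposition $\Y = \PE\Y + (\Y - \PE\Y)$ and independence of its summands under expectation, and by Hölder in Schatten norms contributes at most $(1 - m + \sigma^2)\norm{M}[p]^2$; the zero-mean fluctuation $\Y^{\transpose}\Y - \PE[\Y^{\transpose}\Y]$, which is bounded in operator norm by a constant times $\sigma^2$, is handled by a noncommutative Burkholder / Rosenthal inequality in Schatten $p/2$-norm that adds an extra factor $(p-2)\sigma^2$, and the two contributions combine to $1 - m + (p-1)\sigma^2$. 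This is the main obstacle: a naive operator-norm estimate produces only the factor $1 - m + \sigma^2$, with no $p$-dependence in the variance term, which would destroy the final moment scaling in $p$. Recovering the linear-in-$p$ Rosenthal constant requires a genuinely noncommutative moment inequality, and the cleanest implementation is the Lieb concavity and bootstrap technique of Huang--Tropp, sharpened and adapted to weighted Schatten norms as in \cite[Proposition~15]{durmus2022finite} (the reference cited in the statement); I would follow that computation rather than redevelop the delicate operator-convexity estimates from scratch.
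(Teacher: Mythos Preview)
The paper does not prove this lemma at all; it is quoted verbatim as Proposition~15 of \cite{durmus2022finite}, so there is nothing in the paper to compare against beyond the citation itself. Your deferral to that reference at the end is therefore exactly what the paper does.

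Your high-level outline (similarity transform $\tilde{\Y}_\ell = P^{1/2}\Y_\ell P^{-1/2}$ to reduce to $P=\Id$, then induction via a one-step contraction) is the correct skeleton and matches the argument in the cited works. One technical imprecision worth flagging: the standard proof of the one-step contraction does not pass through $\Y^\transpose\Y$ and a Rosenthal inequality (there is no sum of independent terms to which Rosenthal applies). Instead one splits $\Y M = (\PE\Y)M + (\Y-\PE\Y)M$ directly and invokes the uniform-smoothness inequality for Schatten $p$-classes: for $p\ge 2$, $2\le q\le p$, and $\PE[B\mid A]=0$,
\[
\bigl(\PE\norm{A+B}[p]^q\bigr)^{2/q} \le \bigl(\PE\norm{A}[p]^q\bigr)^{2/q} + (p-1)\bigl(\PE\norm{B}[p]^q\bigr)^{2/q}\,,
\]
which is where the $(p-1)$ factor originates. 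Combined with $\norm{(\PE\Y)M}[p]\le \normop{\PE\Y}\,\norm{M}[p]$ and $\norm{(\Y-\PE\Y)M}[p]\le\sigma\,\norm{M}[p]$, this gives the recursion immediately. Your route via $\norm{M^\transpose\Y^\transpose\Y M}[p/2]$ can be made to work but is a detour, and labeling the fluctuation control as ``Burkholder/Rosenthal'' is a misnomer for what is really a two-term smoothness bound.
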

Now we aim to bound $\ProdB_{m:k}$ defined in \eqref{eq:prod_rand_matr} using \Cref{th:general_expectation}. 
Set $\Y_\ell = \Id - \alpha_\ell \Am_\ell, \ell \geq 1$, and $\Y_0 = \Id$. Applying the bound \eqref{eq:contractin_q_norm}, we get $\norm{\PE[\Y_\ell]}[Q]^2 = \norm{\Id - \alpha_{\ell} \bA }[Q]^2 \leq 1 - a \alpha_{\ell}$. Further, assumption \Cref{assum:noise-level} implies that almost surely,
\[
\norm{\Y_\ell - \PE[\Y_\ell]}[Q] =  \alpha_\ell \norm{ \Am_\ell- \bA}[Q] \leq   \alpha_\ell \sqrt{\qcond} \bConst{A}  = b_{Q} \alpha_\ell \eqsp.
\]
Therefore, \eqref{eq:gen_expectation} holds with $m_\ell = a \alpha_\ell$ and  $\sigma_{\ell} = b_Q \alpha_\ell$. As $\norm{\Id}[p] = d^{1/p}$, we obtain the following corollary.

\begin{corollary}
\label{cor:norm_Gamma_m_n}
Assume \Cref{assum:iid} and \Cref{assum:noise-level}. Then, for any $\alpha_\ell \in [0, \alpha_{\infty}]$, $2 \le q \le p$, and $1 \leq m \leq k$, it holds
\begin{equation}
\label{eq:concentration iid}
\PE^{1/q}\left[ \normop{\ProdB_{m:k}}^{q} \right]  
\leq  \norm{\ProdB_{m:k}}[p,q] 
\leq \sqrt{\qcond} d^{1/p} \prod_{\ell=m}^{k}(1 - a \alpha_\ell + (p-1) b_Q^2 \alpha_\ell^2) \eqsp,
\end{equation}
where $\alpha_\infty$ was defined in \eqref{eq:alpha_infty_def}, and $b_{Q} =  \sqrt{\qcond} \bConst{A}$. 
\end{corollary}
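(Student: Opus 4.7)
The plan is to verify the hypotheses of \Cref{th:general_expectation} for the sequence $\Y_\ell = \Id - \alpha_\ell \Am_\ell$ taken with respect to the matrix $P = Q$ supplied by \Cref{prop:hurwitz_stability}, and then read off the desired bound. To align indices with the statement of \Cref{th:general_expectation}, I would set $\Y_0 = \Id$ and regard $\ProdB_{m:k}$ as $\Zbf_{k-m}$ associated to the shifted independent sequence $\Y'_j = \Id - \alpha_{m+j-1}\Am_{m+j-1}$, $j \geq 1$, $\Y'_0 = \Id$; independence is immediate from \Cref{assum:iid}.

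Next I would check the two quantitative hypotheses. For the mean, \Cref{prop:hurwitz_stability} combined with the assumption $\alpha_\ell \in [0,\alpha_{\infty}]$ directly yields
\[
\norm{\PE[\Y_\ell]}[Q]^{2} = \normop{\Id - \alpha_{\ell}\bA}[Q]^{2} \leq 1 - a\alpha_\ell\eqsp,
\]
so one can take $m_\ell = a\alpha_\ell \in (0,1)$. For the fluctuation term, I would use the elementary inequality $\normop{B}[Q] \leq \sqrt{\qcond}\normop{B}$ together with \Cref{assum:noise-level} (specifically the uniform bound \eqref{eq:a_matr_bounded}) to obtain almost surely
\[
\norm{\Y_\ell - \PE[\Y_\ell]}[Q] = \alpha_\ell \norm{\Am_\ell - \bA}[Q] \leq \alpha_\ell \sqrt{\qcond}\bConst{A} = b_Q \alpha_\ell\eqsp,
\]
hence $\sigma_\ell = b_Q \alpha_\ell$.

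With these choices, and noting that $\Zbf_0 = \Id$ so that $\norm{Q^{1/2}\Zbf_0 Q^{-1/2}}[p,q] = \norm{\Id}[p] = d^{1/p}$, the conclusion \eqref{eq:gen_expectation} of \Cref{th:general_expectation} specializes to
\[
\norm{\ProdB_{m:k}}[p,q]^{2} \leq \qcond\,d^{2/p}\prod_{\ell=m}^{k}\bigl(1 - a\alpha_\ell + (p-1)b_Q^{2}\alpha_\ell^{2}\bigr)\eqsp.
\]
Finally, to recover the left inequality in \eqref{eq:concentration iid}, I would invoke the standard fact that $\normop{B} = \sigma_{\max}(B) \leq \bigl(\sum_i \sigma_i(B)^{p}\bigr)^{1/p} = \norm{B}[p]$, which gives $\PE^{1/q}[\normop{\ProdB_{m:k}}^{q}] \leq \norm{\ProdB_{m:k}}[p,q]$, and then take the square root to match the stated form.

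This is a direct, essentially mechanical, application of \Cref{th:general_expectation}; there is no real obstacle, and the only point that warrants care is the translation from the operator norm to the Schatten $p$-norm and the correct handling of the $Q$-norm constants $\qcond$ and $b_Q$.
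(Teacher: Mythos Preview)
Your proposal is correct and follows essentially the same route as the paper: the paper's argument (given inline immediately before the corollary) sets $\Y_\ell = \Id - \alpha_\ell \Am_\ell$, verifies $m_\ell = a\alpha_\ell$ via \Cref{prop:hurwitz_stability} and $\sigma_\ell = b_Q\alpha_\ell$ via \Cref{assum:noise-level}, and then invokes \Cref{th:general_expectation} with $\Zbf_0 = \Id$ and $\norm{\Id}[p] = d^{1/p}$, exactly as you do. Your added remarks on the index shift and on the Schatten-versus-operator norm inequality for the left bound are correct elaborations of details the paper leaves implicit.
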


\begin{proposition}
\label{prop:product_random_matrix_bootstrap}
Assume \Cref{assum:iid}, \Cref{assum:noise-level}, \Cref{assum:step-size}($\log (5n^2) \vee \log d$), \Cref{assum:sample_size}, \Cref{assum:step-size-bootstrap}. Then on the set $\Omega_4$ defined in \eqref{eq:definition-Omega-4}, it holds for any $0 \leq m \leq k \leq n$, that 
\begin{equation}
\label{eq:stability_bound_matrix_products_bootstrap}
\bigl\{\PEb[\norm{\ProdB^\boot_{m+1:k}}^2] \bigr\}^{1/2} \leq \ConstC_{\ref{prop:product_random_matrix_bootstrap}} \exp\bigl\{-\frac{a}{4}\sum_{\ell=m+1}^{k}\alpha_{\ell} \bigr\}\eqsp, \eqsp \ConstC_{\ref{prop:product_random_matrix_bootstrap}} = \qcond^{3/2} \rme^{9/8} \eqsp.
\end{equation}
\end{proposition}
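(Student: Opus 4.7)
The plan is to adapt the block-based exponential stability argument of \Cref{lem:matr_product_as_bound} to the bootstrap world, replacing the matrix concentration step by a direct second-moment computation that uses only the properties $\PEb[w_\ell] = 1$ and $\PEb[(w_\ell-1)^2]=1$. First, partition the index set $\{m+1,\ldots,k\}$ into consecutive blocks $B_1,\ldots,B_J$ of length $h = h(n)$ given by \eqref{eq:block_size_constraint} (with a possibly shorter terminal block), and let $Y_j = \prod_{\ell\in B_j}(\Id - \alpha_\ell w_\ell \funcAw_\ell)$, so that $\ProdB^{\boot}_{m+1:k} = Y_J\cdots Y_1$. Because the weights $\{w_\ell\}$ are independent conditionally on $\mcz^{n-1}$, the factors $\{Y_j\}$ are independent under $\PPb$, and the plan is to apply a second-moment variant of \Cref{th:general_expectation} to these block factors in the $\norm{\cdot}[Q]$-norm.

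The key ingredient is a block-level contraction for the $\PPb$-deterministic matrix $\PEb[Y_j] = \prod_{\ell\in B_j}(\Id - \alpha_\ell \funcAw_\ell)$. A telescoping expansion yields
\[
\PEb[Y_j] - \bigl(\Id - A_{B_j}\bA\bigr) = -\txts\sum_{\ell\in B_j}\alpha_\ell(\funcAw_\ell - \bA) + R_j, \qquad A_{B_j} = \txts\sum_{\ell\in B_j}\alpha_\ell,
\]
with a higher-order remainder $R_j$ of order $\bConst{A}^2 A_{B_j}^2$. On the event $\Omega_4$ defined in \eqref{eq:definition-Omega-4}, the first sum is controlled by $2\bConst{A}\bigl(\sum_{\ell\in B_j}\alpha_\ell^2\bigr)^{1/2}\log(10n^3 d)$. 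Combined with the deterministic contraction $\norm{\Id - A_{B_j}\bA}[Q]^2 \leq 1 - a A_{B_j}$ from \Cref{prop:hurwitz_stability}, the calibration of $h(n)$ and $k_0$ in \Cref{assum:step-size-bootstrap} is tailored exactly so that both error contributions are absorbed into the leading contraction, giving $\norm{\PEb[Y_j]}[Q]^2 \leq 1 - \tfrac{a}{2} A_{B_j} =: 1 - m_j$ on $\Omega_4$.

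The second ingredient is the bootstrap fluctuation $\sigma_j^2 := \PEb\bigl[\norm{Y_j - \PEb[Y_j]}[Q]^2\bigr]$. Since $w_\ell - 1$ is not almost surely bounded, \Cref{th:general_expectation} cannot be applied verbatim; however, inspection of its proof shows that the almost-sure hypothesis enters only through variance estimates, so the conclusion persists when $\sigma_j$ is interpreted as a bound on the block variance. Expanding $Y_j - \PEb[Y_j]$ as a sum of single-step perturbations in $(w_\ell - 1)$ and exploiting the independence and unit variance of the weights, together with the uniform bound $\norm{\funcAw_\ell}\leq \bConst{A}$ and the control of deterministic partial products afforded by $\Omega_4$, gives $\sigma_j^2 \lesssim \qcond \bConst{A}^2 \sum_{\ell\in B_j}\alpha_\ell^2$, which is again absorbed by $m_j/2$ under \Cref{assum:step-size-bootstrap}.

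Feeding these two bounds into the second-moment version of \Cref{th:general_expectation} with $p=q=2$, and using $1 - m_j + \sigma_j^2 \leq 1 - \tfrac{a}{4} A_{B_j}$, produces
\[
\PEb\bigl[\norm{\ProdB^{\boot}_{m+1:k}}[Q]^2\bigr] \leq \qcond \prod_{j=1}^{J}\bigl(1 - \tfrac{a}{4} A_{B_j}\bigr) \leq \qcond \exp\bigl\{-\tfrac{a}{4}\txts\sum_{\ell=m+1}^k \alpha_\ell\bigr\};
\]
switching from $\norm{\cdot}[Q]$ to the operator norm via $\norm{\cdot}\leq \sqrt{\qcond}\,\norm{\cdot}[Q]$ and absorbing a residual $\rme^{9/8}$ (arising from the possibly shorter terminal block and from bounding $1 - m_j/2$ by an exponential) yields $\ConstC_{\ref{prop:product_random_matrix_bootstrap}} = \qcond^{3/2}\rme^{9/8}$. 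The main obstacle is precisely the block-level contraction step: neither $\Id - \alpha_\ell \funcAw_\ell$ nor its $\PPb$-expectation is contractive individually, so the block length $h(n)$ must be chosen large enough for the $\Omega_4$-averaged noise to fit inside $a A_{B_j}/2$, yet small enough to keep the second-order Taylor remainder $\bConst{A}^2 A_{B_j}^2$ and the bootstrap variance $\sigma_j^2$ under control — this balance is exactly what drives the quantitative choice \eqref{eq:block_size_constraint}.
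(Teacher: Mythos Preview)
Your high-level plan matches the paper's: both partition $\{m+1,\ldots,k\}$ into blocks of length $h=h(n)$, establish a per-block contraction in the $Q$-norm on $\Omega_4$, and combine blocks using independence of the bootstrap weights. The paper isolates the block step as \Cref{lem:product_ramdom_matrix_aux}, but proceeds differently at the block level: instead of splitting $Y_j$ into $\PEb[Y_j]$ plus a fluctuation and invoking \Cref{th:general_expectation}, it expands $\ProdB^\boot_{m+1:m+h}$ algebraically as
\[
\bigl(\Id - \txts\sum_\ell \alpha_\ell\bA\bigr) - \txts\sum_\ell \alpha_\ell(\funcAw_\ell-\bA) - \mathbf{S} + \mathbf{R},
\]
where $\mathbf{S}=\sum_\ell\alpha_\ell(w_\ell-1)\funcAw_\ell$ is linear in the weights and $\mathbf{R}$ collects all higher-order cross terms, and then bounds $\{\PEb[\norm{\ProdB^\boot_{m+1:m+h}}[Q]^2]\}^{1/2}$ directly by Minkowski's inequality in $L^2(\PPb)$. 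Blocks are multiplied via the elementary submultiplicativity bound $\PEb[\norm{\prod_j Y_j}[Q]^2]\le\prod_j\PEb[\norm{Y_j}[Q]^2]$, and the incomplete terminal block is handled by a crude estimate that produces both the extra $\qcond$ and the $\rme^{9/8}$ in the constant.

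Your route through a ``second-moment variant of \Cref{th:general_expectation}'' is where the proposal is shakiest. That lemma is stated for Schatten norms and carries a $d^{1/p}$ factor; more to the point, the formula $1-m_j+\sigma_j^2$ you write down does \emph{not} follow for the operator $Q$-norm from the mean/fluctuation split alone: Minkowski gives $\{\PEb[\norm{Y_j}[Q]^2]\}^{1/2}\le\sqrt{1-m_j}+\sigma_j$, i.e.\ a bound $1-m_j+2\sigma_j+\sigma_j^2$ with a first-order cross term $2\sigma_j$ that must also be absorbed into the contraction. This is precisely why the paper's calibration \eqref{eq:init_step_size_constr} balances $\bigl(\sum_\ell\alpha_\ell^2\bigr)^{1/2}$ (not $\sum_\ell\alpha_\ell^2$) against $\sum_\ell\alpha_\ell$. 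With that correction your argument goes through and is equivalent to the paper's; the direct expansion simply avoids the detour and the need to justify the ``variant''.
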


\begin{lemma}
\label{lem:product_ramdom_matrix_aux}
Assume \Cref{assum:iid}, \Cref{assum:noise-level}, \Cref{assum:step-size}($\log (5n^2) \vee \log d$), \Cref{assum:sample_size}, \Cref{assum:step-size-bootstrap}. Then on the event $\Omega_4$, defined in \eqref{eq:definition-Omega-4}, with $h = h(n)$ defined in \eqref{eq:block_size_constraint}, it holds for any $m \in [0;n-h-1]$, that 
$$
\bigl\{\PEb[\norm{\ProdB^\boot_{m+1:m+h}}[Q]^2] \bigr\}^{1/2} \leq  \exp\bigl\{-\frac{a}{4}\sum_{\ell=m+1}^{m+h}\alpha_{\ell} \bigr\} \eqsp,
$$
where $\ProdB^\boot_{m+1:m+h}$ is defined in \eqref{eq:prod_rand_matr-bootstrap}.
\end{lemma}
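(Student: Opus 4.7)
The plan is to apply \Cref{th:general_expectation} in the bootstrap probability space $\PPb$ with $P = Q$ and $\Y_\ell = \Id - \alpha_\ell w_\ell \funcAw_\ell$ for $\ell \in \{m+1, \ldots, m+h\}$. Under $\PPb$, which conditions on $(Z_k)_{k \geq 1}$, the matrices $\funcAw_\ell$ are deterministic and the bootstrap weights $(w_\ell)$ are i.i.d.\ with $\PEb[w_\ell] = 1$ and $\var^{\boot}[w_\ell] = 1$. Consequently $\PEb[\Y_\ell] = \Id - \alpha_\ell \funcAw_\ell$, while the zero-mean fluctuation $\Y_\ell - \PEb[\Y_\ell] = -\alpha_\ell(w_\ell-1)\funcAw_\ell$ has $Q$-norm a.s.\ dominated by $\alpha_\ell |w_\ell - 1|\sqrt{\qcond}\bConst{A}$ and hence second moment under $\PPb$ bounded by $\alpha_\ell^2 \qcond \bConst{A}^2$.

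The delicate point, in contrast to the proof of \Cref{lem:matr_product_as_bound}, is that $\normop{\PEb[\Y_\ell]}[Q]^2 \leq 1 - a\alpha_\ell$ does not hold step-wise, since $\funcAw_\ell$ differs from $\bA$. To recover contraction at the block level, I would write the $\PPb$-deterministic product via the telescoping identity
\[
\prod_{\ell=m+1}^{m+h}(\Id - \alpha_\ell \funcAw_\ell) = G_{m+1:m+h} + \sum_{k=m+1}^{m+h} \Bigl[\prod_{\ell=k+1}^{m+h}(\Id - \alpha_\ell \funcAw_\ell)\Bigr] (-\alpha_k)(\funcAw_k - \bA)\, G_{m+1:k-1},
\]
bound the leading $G_{m+1:m+h}$ by $\sqrt{\qcond}\prod_{\ell}(1 - a\alpha_\ell/2)$ via \Cref{prop:hurwitz_stability}, and control the remainder on $\Omega_4$ using summation by parts together with the estimate $\|\sum_{\ell=m+1}^{m+h}\alpha_\ell(\funcAw_\ell - \bA)\| \leq 2\bConst{A}(\sum_\ell \alpha_\ell^2)^{1/2}\log(10n^3 d)$. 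The step-size condition $k_0^\gamma \geq 2 h(n) \bConst{A}\sqrt{\qcond}$ in \Cref{assum:step-size-bootstrap} is precisely calibrated so that this remainder is negligible relative to the leading block-level contraction.

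Combining the contraction of the $\PPb$-deterministic block product with the i.i.d.\ bootstrap fluctuations via \Cref{th:general_expectation} (applied with the block interpreted as a single random matrix, or per-step once the deterministic contraction has been absorbed into the base product) yields a bound of the form $\sqrt{\qcond}\prod_{\ell}\bigl(1 - \tfrac{a}{2}\alpha_\ell + (p-1)\alpha_\ell^2 \qcond \bConst{A}^2\bigr)^{1/2}$ with $p = \log(5n^2) \vee \log d$. The condition $k_0^\gamma \geq \tfrac{8\bConst{A}^2 c_0 \sqrt{\qcond}\rme h(n)}{a(2-2^\gamma)}$ from \Cref{assum:step-size-bootstrap} ensures $(p-1)\alpha_\ell^2 \qcond\bConst{A}^2 \leq a\alpha_\ell/4$ uniformly over the block, so the Rosenthal-type variance term is absorbed into the contraction rate, producing the claimed $\exp\bigl(-\tfrac{a}{4}\sum_{\ell=m+1}^{m+h} \alpha_\ell\bigr)$.

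The main obstacle will be reconciling the sum-level control afforded by $\Omega_4$ with the per-step contraction structure required by \Cref{th:general_expectation}: the aggregate $\sum_\ell \alpha_\ell(\funcAw_\ell - \bA)$ is small on $\Omega_4$, but individual terms $\alpha_\ell(\funcAw_\ell - \bA)$ are not. I expect to handle this through the telescoping identity above, exploiting the fact that $h = h(n)$ in \eqref{eq:block_size_constraint} is chosen polylogarithmic in $n$ so that $\sum_\ell \alpha_\ell^2$ remains substantially smaller than the contraction budget $\sum_\ell \alpha_\ell$ within each block, and that $\alpha_\ell$ varies by at most a multiplicative constant across the block by the polynomial schedule.
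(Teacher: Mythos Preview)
Your proposal has two genuine gaps that prevent the argument via \Cref{th:general_expectation} from going through.

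First, \Cref{th:general_expectation} requires the almost-sure bound $\norm{\Y_\ell - \PEb[\Y_\ell]}[Q] \leq \sigma_\ell$. Under $\PPb$ you have $\Y_\ell - \PEb[\Y_\ell] = -\alpha_\ell(w_\ell - 1)\funcAw_\ell$, whose $Q$-norm is $\alpha_\ell|w_\ell - 1|\sqrt{\qcond}\,\bConst{A}$. This is \emph{not} bounded almost surely unless the bootstrap weights are themselves bounded, which the paper does not assume (only $\PE[w_\ell]=1$, $\var[w_\ell]=1$, and $\PE[|w_\ell-1|^3]<\infty$ are required). So the hypothesis of \Cref{th:general_expectation} simply fails here.

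Second, even setting boundedness aside, the per-step contraction $\norm{\PEb[\Y_\ell]}[Q]^2 \leq 1 - m_\ell$ fails, as you noticed. Your proposed fix via the telescoping identity is circular: the identity you write contains the partial products $\prod_{j>k}(\Id - \alpha_j\funcAw_j)$, which are precisely the objects you are trying to control. On $\Omega_4$ only the \emph{sums} $\sum_\ell\alpha_\ell(\funcAw_\ell-\bA)$ are small; there is no a priori bound on those partial products, so Abel summation does not close. Treating the whole block as one random matrix in \Cref{th:general_expectation} does not help either: the conclusion would carry a prefactor $\qcond$ that the lemma does not allow, and the almost-sure deviation bound on the block again fails for unbounded weights.

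The paper takes a different and more direct route, bypassing \Cref{th:general_expectation} entirely. It expands the product explicitly as
\[
\ProdB^\boot_{m+1:m+h} \;=\; \Bigl(\Id - \sum_{\ell=m+1}^{m+h}\alpha_\ell\bA\Bigr) \;-\; \sum_{\ell=m+1}^{m+h}\alpha_\ell(\funcAw_\ell - \bA) \;-\; \Mat{S} \;+\; \Mat{R},
\]
where $\Mat{S} = \sum_\ell\alpha_\ell(w_\ell-1)\funcAw_\ell$ is the part linear in the weights and $\Mat{R}$ collects all higher-order cross terms of the product. The first piece is contractive in $Q$-norm by \Cref{prop:hurwitz_stability}, with the aggregate step $\sum_\ell\alpha_\ell$ treated as a single step (the definition of $h$ keeps it below $\alpha_\infty$). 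The second piece is small on $\Omega_4$. For the third, $\{\PEb[\|\Mat{S}\|_Q^2]\}^{1/2} \leq \bConst{A}\sqrt{\qcond}\,(\sum_\ell\alpha_\ell^2)^{1/2}$ using only $\var^\boot[w_\ell]=1$. Finally $\PEb[\|\Mat{R}\|_Q]$ is bounded by a direct combinatorial argument on the binomial expansion, using $\PEb[|w_\ell|]\leq\sqrt{2}$. The constraints in \Cref{assum:step-size-bootstrap} are calibrated so that the three remainder pieces together do not exceed $(a/4)\sum_\ell\alpha_\ell$, which is then absorbed into the contraction of the first piece by the triangle inequality in $L^2(\PPb)$.
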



\subsection{Proof of  \Cref{lem:sigma_n_bound}}
Before proceeding with the actual proof, we introduce a decomposition of $\Sigma_n - \Sigma_\infty$ that forms the backbone of the argument. This decomposition is built upon non-trivial identities involving both $Q_t - \bA^{-1}$ and the cumulative sum $\sum_{t=1}^{n-1} (Q_t - \bA^{-1})$, as outlined and established in \citep[pp. 26–30]{wu2024statistical}.
\begin{equation}
    \label{repr:qtminusa}
    Q_t - \bA^{-1} = S_t - \bA^{-1} G_{t:n},~ S_t = \sum_{j=t+1}^{n-1} (\alpha_t - \alpha_j) G_{t+1:j-1}
\end{equation}
\begin{equation}
    \label{repr:sumqtminusa}
    \sum_{t=1}^{n-1} (Q_t - \bA^{-1}) = -\bA^{-1} \sum_{j=1}^{n-1} G_{1:j}
\end{equation}
In the following, we will require a bound on the operator norm of the matrix $S_t$, which is provided below:
\begin{lemma}
    \label{lem:st_bound}
    Assume \Cref{assum:iid,assum:noise-level,assum:step-size} with $p= 2 \vee \log(d)$. Let $c_0 \in (0, \alpha_{\infty}]$ and $t \in \mathbb{N}$. Then    
    \begin{equation}
        \label{bound:stmatrix}
        \normop{S_t} \leq \sqrt{\qcond}  \ConstC_{\ref{lem:st_bound}}  (t+k_0)^{\gamma-1} \eqsp,
    \end{equation}
    where $\ConstC_{\ref{lem:st_bound}}$ is given by
    \begin{align}
        \label{def:const:lem:st_bound}
        \ConstC_{\ref{lem:st_bound}} =  \frac{c_0}{1-\gamma} \exp\biggl(ac_0+ \frac{ac_0}{2(1-\gamma)}\biggr) \biggl(\frac{ac_0}{2(1-\gamma)}\biggr)^{-\frac{1}{1-\gamma}}  \biggl(\max\{1, \phi(x_\gamma)\} \left(\frac{ac_0}{2} + x_\gamma\right) + \int_{x_\gamma}^{\infty} \phi(x)\biggr) \eqsp,
    \end{align}
    and $x_\gamma = \frac{\gamma}{1-\gamma}$, $\phi(x) = x^{\frac{\gamma}{1-\gamma}} \exp(-x)$. 
\end{lemma}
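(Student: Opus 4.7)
The plan is to bound $\normop{S_t}$ by combining (i) the Lyapunov contraction from \Cref{prop:hurwitz_stability} applied to the deterministic products $G_{t+1:j-1}$, (ii) a first-order Taylor estimate for the step-size difference $\alpha_t - \alpha_j$, and (iii) a change of variables that turns the resulting weighted sum into a Gamma-type integral. Since $\alpha_\ell \in (0,\alpha_\infty]$ under \Cref{assum:step-size} with $p = 2 \vee \log d$, \Cref{prop:hurwitz_stability} gives $\normop{\Id - \alpha_\ell \bA}[Q]^2 \leq 1 - a\alpha_\ell$; passing from the $Q$-norm to the operator norm via $\normop{B} \leq \sqrt{\qcond}\,\normop{B}[Q]$ and using $1 - a\alpha_\ell \leq \rme^{-a\alpha_\ell}$ yields $\normop{G_{t+1:j-1}} \leq \sqrt{\qcond}\exp\bigl(-\tfrac{a}{2}\sum_{\ell=t+1}^{j-1}\alpha_\ell\bigr)$. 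Applying the mean-value theorem to $s \mapsto c_0(s+k_0)^{-\gamma}$ on $[t,j]$ gives $0 \leq \alpha_t - \alpha_j \leq c_0\gamma(t+k_0)^{-\gamma-1}(j-t)$, so the triangle inequality produces
\begin{equation*}
\normop{S_t} \leq c_0\gamma\sqrt{\qcond}\,(t+k_0)^{-\gamma-1}\sum_{j=t+1}^{n-1}(j-t)\exp\Bigl(-\tfrac{a}{2}\sum_{\ell=t+1}^{j-1}\alpha_\ell\Bigr).
\end{equation*}

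It then suffices to prove that the weighted sum is at most a constant multiple of $(t+k_0)^{2\gamma}$. I would first lower bound the inner sum by an integral, $\sum_{\ell=t+1}^{j-1}\alpha_\ell \geq \frac{c_0}{1-\gamma}\bigl((j+k_0)^{1-\gamma} - (t+k_0)^{1-\gamma}\bigr) - \alpha_{t+1}$, the last term contributing a multiplicative factor $\exp(ac_0/2)$ to the exponential. After replacing the sum over $j$ by its integral counterpart (which contributes an additional $\exp(ac_0/(2(1-\gamma)))$ to absorb the discretization error at the left endpoint), I would perform the substitution
\begin{equation*}
u = \frac{ac_0}{2(1-\gamma)}\bigl((j+k_0)^{1-\gamma} - (t+k_0)^{1-\gamma}\bigr),
\end{equation*}
under which $j+k_0 = \bigl((t+k_0)^{1-\gamma} + \tfrac{2(1-\gamma)u}{ac_0}\bigr)^{1/(1-\gamma)}$ and $dj = \tfrac{2}{ac_0}(j+k_0)^\gamma\,du$. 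Using the crude inequality $j - t \leq j + k_0$ and dropping the nonnegative shift $(t+k_0)^{1-\gamma}$ inside the bracket raised to $1/(1-\gamma)$, the integrand becomes a scalar multiple of $\phi(u) = u^{\gamma/(1-\gamma)}\rme^{-u}$, while the multiplicative prefactors collect into $\bigl(\tfrac{ac_0}{2(1-\gamma)}\bigr)^{-1/(1-\gamma)}$. Splitting $[0,\infty)$ at the mode $x_\gamma = \gamma/(1-\gamma)$ of $\phi$, I would bound the piece on $[0,x_\gamma]$ by $\max\{1,\phi(x_\gamma)\}\,(ac_0/2 + x_\gamma)$ (with $ac_0/2$ accounting for the boundary correction at the lower endpoint of the substitution) and the piece on $[x_\gamma,\infty)$ by the finite Gamma-type integral $\int_{x_\gamma}^\infty \phi(x)\,dx$. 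Multiplying the resulting $O((t+k_0)^{2\gamma})$ bound by the prefactor $(t+k_0)^{-\gamma-1}$ delivers $\normop{S_t} \leq \sqrt{\qcond}\,\ConstC_{\ref{lem:st_bound}}(t+k_0)^{\gamma-1}$, and collecting all multiplicative factors reproduces the expression for $\ConstC_{\ref{lem:st_bound}}$ given in \eqref{def:const:lem:st_bound}.

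The main obstacle is the precise bookkeeping of the multiplicative constants through the change of variables. The prefactor $\exp\bigl(ac_0 + ac_0/(2(1-\gamma))\bigr)$ emerges only after carefully absorbing the two discretization errors mentioned above (one from extracting $\alpha_{t+1}$ out of the telescoping integral bound, one from replacing a Riemann sum by the corresponding integral), and the decomposition of the integration domain at the mode $x_\gamma$ is what produces the somewhat delicate combination of $\max\{1,\phi(x_\gamma)\}$ with the effective interval length $ac_0/2 + x_\gamma$; every constant must be tracked explicitly to match the stated form, which is what makes the computation tedious rather than conceptually difficult.
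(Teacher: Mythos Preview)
Your overall strategy---contraction of $G_{t+1:j-1}$, MVT on the step-size difference, and a Gamma-type change of variables---can deliver the correct order $(t+k_0)^{\gamma-1}$, but the detailed execution you describe breaks down at the key step.

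After your substitution, the integrand with respect to $du$ is $\tfrac{2}{ac_0}(j-t)(j+k_0)^\gamma e^{-u}$, and using $j-t\le j+k_0$ this is at most $\tfrac{2}{ac_0}(j+k_0)^{1+\gamma}e^{-u}$, where $(j+k_0)^{1+\gamma}=\bigl[(t+k_0)^{1-\gamma}+\tfrac{2(1-\gamma)}{ac_0}u\bigr]^{(1+\gamma)/(1-\gamma)}$. Since $x\mapsto x^{(1+\gamma)/(1-\gamma)}$ is increasing, ``dropping the nonnegative shift $(t+k_0)^{1-\gamma}$'' produces a \emph{lower} bound on this quantity, not an upper bound; the inequality goes the wrong way. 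Moreover, even if the drop were admissible, the resulting power of $u$ would be $(1+\gamma)/(1-\gamma)$ rather than $\gamma/(1-\gamma)$, so you would not land on $\phi(u)=u^{\gamma/(1-\gamma)}e^{-u}$, and neither the prefactor $\bigl(\tfrac{ac_0}{2(1-\gamma)}\bigr)^{-1/(1-\gamma)}$ nor the splitting at $x_\gamma$ would come out as stated.

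The paper's argument (following Wu et al.) takes a different route and avoids the MVT bound $\alpha_t-\alpha_j\le c_0\gamma(t+k_0)^{-\gamma-1}(j-t)$ altogether. It introduces the auxiliary variable $a_{t:j}=1+(1-\gamma)g_{t:j}$, with $g_{t:j}=\sum_{\ell=t}^j(\ell+k_0)^{-\gamma}$, and the rescaling $s_{t:j}=\tfrac{ac_0}{2(1-\gamma)}a_{t:j}$. A direct manipulation extracts the factor $(t+k_0)^{\gamma-1}$ \emph{before} any change of variables and yields a bound of the form $(t+k_0)^{\gamma-1}\sum_j(a_{t:j+1}-a_{t:j})\,a_{t:j}^{\gamma/(1-\gamma)}e^{-s_{t:j}}$, whose summand is essentially $\phi(s_{t:j})\,\Delta s_{t:j}$. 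Here the exponent $\gamma/(1-\gamma)$ and the prefactor arise structurally from $a_{t:j}$, not from discarding a shift, which is why the constant takes exactly the advertised form.
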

Since $\Sigma_\infty = \bA^{-1} \Sigma_\eps \bA^{-\top}$, elementary manipulations with $\Sigma_n - \Sigma_\infty$ imply the following equality:
\begin{multline}
\label{repr:Lstar_minus_Ln}
    \Sigma_n - \Sigma_{\infty} = \underbrace{\frac{1}{n}\sum_{t=1}^{n-1} (Q_t - \bA^{-1})\noisecov \bA^{-\top} + \frac{1}{n} \sum_{t=1}^{n-1} \bA^{-1} \noisecov (Q_t - \bA^{-1})^\top}_{D_1} \\ + \underbrace{\frac{1}{n} \sum_{t=1}^{n-1} (Q_t - \bA^{-1}) \noisecov (Q_t - \bA^{-1})^{\top}}_{D_2} - \frac{1}{n} \Sigma_{\infty}
\end{multline}
The decomposition \eqref{repr:Lstar_minus_Ln} is crucial to obtain the convergence rate of $\Sigma_n - \Sigma_\infty$. The proof would follow from estimating $D_1$ and $D_2$ separately by expressions of order $n^{\gamma-1}$.
For simplicity, we introduce the following notation: 
\begin{equation}
\label{def:gnm}
    g^{(\gamma)}_{k:m} = \sum_{\ell=k}^m (\ell+k_0)^{-\gamma},~ k \leq m \eqsp .
\end{equation}

\begin{proof}[Proof of \Cref{lem:sigma_n_bound}]
\label{appendix:proof_sigma_n_bound}
We first provide an expression for the constant $\ConstC_{\ref{lem:sigma_n_bound}}$:
\begin{equation}
\label{eq:def_mathcal_c_inf}
\ConstC_{\ref{lem:sigma_n_bound}} = \norm{\Sigma_\infty} + \frac{2^{1+\gamma} \norm{\Sigma_{\infty}}  \sqrt{\qcond}  \ConstC_{\ref{prop:Qell:bound}}}{c_0} + \frac{\normop{\noisecov} \qcond \left(\ConstC_{\ref{lem:st_bound}}\right)^2}{2\gamma - 1} + \frac{2^\gamma \qcond \normop{\Sigma_\infty}}{ac_0 - (ac_0/2)^2} + \frac{4\qcond \normop{\bA \Sigma_{\infty}} \ConstC_{\ref{lem:st_bound}}}{ac_0}\eqsp.
\end{equation}
Using \eqref{repr:Lstar_minus_Ln}, we get
\begin{equation}
    \normop{\Sigma_n - \Sigma_{\infty}} \leq \frac{1}{n} \normop{\Sigma_\infty} + \normop{D_1} + \normop{D_2}\eqsp.
\end{equation}
We first bound $D_1$. The operator norms of both terms are equal because one is a transposed version of another, so it is sufficient to bound only one of them. Note that $G_{n:m}$, $Q_t$, $\bA$, $\bA^{-1}$ commute as polynomials in $\bA$. Now we use \eqref{repr:sumqtminusa} and obtain
\begin{align}
    \norm{\frac{1}{n}\sum_{t=1}^{n-1} (Q_t - \bA^{-1})\noisecov \bA^{-\top}} &= \norm{-\frac{1}{n} \bA^{-1} \sum_{j=1}^{n-1} G_{1:j}\noisecov \bA^{-\top}} = \norm{n^{-1} \Sigma_{\infty}  \sum_{j=1}^{n-1} G_{1:j}} \leq n^{-1} \norm{\Sigma_{\infty}}  \norm{\sum_{j=1}^{n-1} G_{1:j}} \eqsp.
\end{align}
\Cref{prop:Qell:bound} directly implies the bound for $D_1$:
\begin{align}
    \frac{1}{n}\norm{\sum_{t=1}^{n-1} (Q_t - \bA^{-1})\noisecov \bA^{-\top}} \leq \frac{\norm{\Sigma_{\infty}}  \norm{\sum_{j=1}^{n-1} G_{1:j}}}{n} &\leq \frac{\norm{\Sigma_{\infty}}  \sqrt{\qcond}  \ConstC_{\ref{prop:Qell:bound}} (1+k_0)^\gamma}{nc_0} \leq  \frac{2^{\gamma} n^{\gamma-1} \norm{\Sigma_{\infty}}  \sqrt{\qcond}  \ConstC_{\ref{prop:Qell:bound}}}{c_0} \eqsp.
\end{align}
Hence,
\begin{align}
    \label{lemma:D1_bound}
    \normop{D_1} \leq \frac{2^{1+\gamma} n^{\gamma-1}\norm{\Sigma_{\infty}}  \sqrt{\qcond}  \ConstC_{\ref{prop:Qell:bound}}}{c_0} \eqsp.
\end{align}
We now consider $D_2$. Using \eqref{repr:qtminusa}, we get
\begin{align}
    \label{lemma:D2_expanded_repr}
    n^{-1}\sum_{t=1}^{n-1}(Q_t - \bA^{-1}) \noisecov (Q_t - \bA^{-1})^\top &= \underbrace{n^{-1} \sum_{t=1}^{n-1} S_t \noisecov S_t^\top}_{D_{21}} + \underbrace{n^{-1}\sum_{t=1}^{n-1}\bA^{-1} \prod_{k=t}^{n-1} (\Id - \alpha_k \bA) \noisecov \bA^{-\top} \prod_{k=t}^{n-1} (\Id - \alpha_k \bA)^\top}_{D_{22}} \\ 
    & \qquad  -\underbrace{n^{-1} \sum_{t=1}^{n-1} \bA^{-1} \prod_{k=t}^{n-1} (\Id - \alpha_k \bA)  \noisecov S_t^\top}_{D_{23}} - \underbrace{n^{-1}\sum_{t=1}^{n-1} S_t \noisecov \bA^{-\top} \prod_{k=t}^{n-1} (\Id - \alpha_k \bA)^{\top}}_{D_{24}}\eqsp.
\end{align}
{\Cref{lem:st_bound}}  reveal an evident bound for $\normop{D_{21}}$:
\begin{align}
    \label{lemma:D21_bound}
    \normop{D_{21}} = \norm{n^{-1} \sum_{t=1}^{n-1} S_t \noisecov S_t^\top} &\leq  n^{-1} \normop{\noisecov} \sum_{t=1}^{n-1} \qcond \left(\ConstC_{\ref{lem:st_bound}}\right)^2 t^{2(\gamma-1)} \leq n^{2(\gamma - 1)} \frac{\normop{\noisecov} \qcond \left(\ConstC_{\ref{lem:st_bound}}\right)^2}{2\gamma - 1} \eqsp.
\end{align}
Note that
\begin{align}
\sum_{t=1}^{n-1} \normop{G_{t:n-1}}^2 &\leq \qcond  \sum_{t=1}^{n-1} \prod_{k=t}^{n-1} (1 - \frac{ac_0}{2} (k+k_0)^{-\gamma})^2 
\leq  \qcond  \sum_{t=1}^{n-1} (1 - \frac{ac_0}{2} (n-1+k_0)^{-\gamma})^{n-t} \eqsp,
\end{align}
 The bound for $\normop{D_{22}}$ follows from the above inequality:
\begin{align} 
    \label{lemma:D22_bound}
    \normop{D_{22}} & \leq n^{-1} \normop{\Sigma_\infty} \sum_{t=1}^{n-1} \normop{G_{t:n-1}}^2 \leq n^{-1}  \frac{\qcond \normop{\Sigma_{\infty}}}{ac_0 (n+k_0)^{-\gamma} - (ac_0/2)^2 (n+k_0)^{-2\gamma}} \leq n^{\gamma - 1} \frac{2^\gamma \qcond \normop{\Sigma_\infty}}{ac_0 - (ac_0/2)^2} \eqsp.
\end{align}
Since $D_{23} = D_{24}^\top$, we concentrate on $\normop{D_{24}}$. {\Cref{lem:st_bound}} and \Cref{lem:summ_alpha_k}-\ref{lem:summ_alpha_k_first} imply the following bound:
\begin{align}
    \normop{D_{24}} &= \norm{n^{-1}\sum_{t=1}^{n-1} S_t \noisecov \bA^{-\top} \prod_{k=t}^{n-1} (\Id - \alpha_k \bA)^{\top}} \leq n^{-1} \normop{\noisecov \bA^{-\top}} \sum_{t=1}^{n-1} \normop{S_t} \norm{\prod_{k=t}^{n-1} (\Id - \alpha_k \bA)^{\top}} \\ &\leq n^{-1} \normop{\bA \Sigma_{\infty}} \sum_{t=1}^{n-1} \sqrt{\qcond} (t+k_0)^{\gamma-1} \ConstC_{\ref{lem:st_bound}} \sqrt{\qcond} \prod_{k=t+1}^{n-1} (1- \frac{ac_0}{2} (k+k_0)^{-\gamma}) \leq 
    n^{2(\gamma-1)} \frac{2\qcond \normop{\bA \Sigma_{\infty}} \ConstC_{\ref{lem:st_bound}}}{ac_0} \eqsp.
\end{align}
Hence, 
\begin{equation}
    \label{lemma:proof:d23_d24_bound}
    \normop{D_{23}} + \normop{D_{24}} \leq n^{2(\gamma-1)} \frac{4\qcond \normop{\bA \Sigma_{\infty}} \ConstC_{\ref{lem:st_bound}}}{ac_0} \eqsp.
\end{equation}
The needed result follows from \eqref{lemma:D1_bound}, \eqref{lemma:D21_bound}, \eqref{lemma:D22_bound}, \eqref{lemma:proof:d23_d24_bound}.
\end{proof}
\subsection{Technical Lemmas}
\begin{lemma}[Lemma 1 in \cite{samsonov2024gaussian}]
\label{lem:markov_inequality}
Fix $\delta \in (0,1/\rme^2)$ and let $Y$ be a positive random variable, such that 
$\PE^{1/p}[Y^{p}] \leq C_{1} + C_{2} p$  for any $2 \leq p \leq \log{(1/\delta)}$. Then it holds with probability at least $1-\delta$, that 
\begin{equation}
\label{eq:markov_ineqality_deviation}
Y \leq \rme C_{1} + \rme C_{2} \log{(1/\delta)}\eqsp.
\end{equation}
\end{lemma}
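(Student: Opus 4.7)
The plan is to apply the Markov inequality to the $p$-th moment of $Y$ with an optimally chosen exponent $p$. First I would observe that for any $t > 0$ and any admissible $p \in [2, \log(1/\delta)]$, the Markov inequality gives
\[
\PP(Y \geq t) = \PP(Y^p \geq t^p) \leq \frac{\PE[Y^p]}{t^p} \leq \left(\frac{C_1 + C_2 p}{t}\right)^{p}\eqsp,
\]
where I have used the moment hypothesis in the last step.

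Next, I would optimize this tail bound by choosing $t = \rme\,(C_1 + C_2 p)$, which immediately yields
\[
\PP\bigl(Y \geq \rme(C_1 + C_2 p)\bigr) \leq \rme^{-p}\eqsp.
\]
To calibrate this to the desired confidence level $\delta$, I would now set $p = \log(1/\delta)$. The hypothesis $\delta < 1/\rme^2$ ensures that $p = \log(1/\delta) > 2$, so that this choice lies in the admissible range $[2, \log(1/\delta)]$ required by the moment assumption. With this choice of $p$, the right-hand side becomes $\rme^{-\log(1/\delta)} = \delta$, and the tail bound reads
\[
\PP\bigl(Y \geq \rme\,C_1 + \rme\,C_2 \log(1/\delta)\bigr) \leq \delta\eqsp,
\]
which is equivalent to the stated high-probability estimate after passing to the complementary event.

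There is essentially no obstacle in this proof: the argument is a textbook moment-to-tail conversion (a Chernoff-type bound without exponential moments), and the only subtlety is ensuring that the optimal exponent $p = \log(1/\delta)$ falls within the range on which the moment bound is assumed to hold, which is exactly what the condition $\delta \in (0, 1/\rme^2)$ guarantees.
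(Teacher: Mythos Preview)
Your proof is correct and is the standard moment-to-tail argument. The paper does not supply its own proof of this lemma (it is quoted verbatim from \cite{samsonov2024gaussian}), so there is nothing further to compare.
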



\begin{lemma}
\label{lem:summ_alpha_k}
The following statement holds:
\begin{enumerate}[(i)]
    \item \; \label{lem:summ_alpha_k_first} Let $b > 0$ and $(\alpha_k)_{k \geq 0}$ be a non-increasing sequence such that $\alpha_0 \leq 1/b$. Then
    \[
    \sum_{j=1}^{k} \alpha_j \prod_{l=j+1}^{k} (1 - \alpha_l b) = \frac{1}{b} \biggl\{1  - \prod_{l=1}^{k} (1 - \alpha_l b) \biggr\} \eqsp.
    \]
    \item \; \label{lem:summ_alpha_k_p_item} 
    Let $b > 0$ and $\alpha_k = \frac{c_0}{(k+k_0)^\gamma}$, $\gamma \in (0,1)$, such that $c_0 \leq 1/b$ and $k_0^{1-\gamma} \geq \frac{8\gamma}{bc_0}$. Then for any $q \in (1, 4]$ it holds that
    \begin{align}
        \sum_{j=1}^k \alpha_j^q \prod_{\ell=j+1}^k (1 - \alpha_\ell b) \leq \frac{6}{b} \alpha_k^{q-1} \eqsp.
    \end{align}
    \item \; \label{lem:sum_as_Qell_item} Let $b, c_0, k_0 > 0$ and $\alpha_\ell = c_0 (\ell+k_0)^{-\gamma}$ for $\gamma \in (1/2, 1)$ and $\ell \in \nset$. Assume that $bc_0 < 1$ and $k_0^{1-\gamma} \geq \frac{1}{bc_0}$. Then, for any $\ell, n \in \nset$, $\ell \leq n$, it holds that
    \begin{equation}
    \label{eq:const_L_b_def}
    \sum_{k=\ell}^{n-1} \alpha_\ell\prod_{j=\ell+1}^{k} (1-b\alpha_j) \leq c_0 + \frac{1}{b(1-\gamma)} \eqsp.
    \end{equation}
\end{enumerate}
\end{lemma}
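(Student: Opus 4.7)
\textbf{Proof plan for \Cref{lem:summ_alpha_k}.} The three parts rely on different techniques but all are elementary and self-contained.

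For part (i), the identity is a pure telescoping. Writing $\alpha_j b = 1 - (1-\alpha_j b)$ and multiplying by $\prod_{l=j+1}^k(1-\alpha_l b)$ gives
\[
\alpha_j b \prod_{l=j+1}^k (1-\alpha_l b) = \prod_{l=j+1}^k(1-\alpha_l b) - \prod_{l=j}^k(1-\alpha_l b).
\]
Summing this identity over $j \in \{1,\ldots,k\}$ produces the claimed telescoping sum. (Monotonicity and the bound $\alpha_0 \le 1/b$ are not used in the identity itself; they merely ensure positivity of the factors.)

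For part (ii), I will argue by induction on $k$. The sum $S_k := \sum_{j=1}^k \alpha_j^q \prod_{\ell=j+1}^k(1-\alpha_\ell b)$ satisfies the one-step recursion $S_k = (1-b\alpha_k) S_{k-1} + \alpha_k^q$. The base case $S_1 = \alpha_1^q \le (6/b)\alpha_1^{q-1}$ is immediate from $\alpha_1 \le c_0 \le 1/b$. Assuming $S_{k-1} \le (6/b)\alpha_{k-1}^{q-1}$, after substituting into the recursion and using $\alpha_k^q \le \alpha_k \alpha_{k-1}^{q-1}$, the inductive step reduces to the single inequality
\[
1 - (\alpha_k/\alpha_{k-1})^{q-1} \;\le\; \frac{5b\alpha_k}{6}.
\]
Since $y := \alpha_k/\alpha_{k-1} = ((k-1+k_0)/(k+k_0))^\gamma$ satisfies $y \in [1/2, 1]$ for $k+k_0 \ge 2$, I will bound the left-hand side by $2\gamma(q-1)(1-y^{1/\gamma}) \le 6\gamma/(k+k_0)$, using the elementary inequality $1 - z^\beta \le 2\beta(1-z)$ for $z \in [1/2,1]$, $\beta > 0$, combined with $q \le 4$. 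The step-size condition $k_0^{1-\gamma} \ge 8\gamma/(bc_0)$ is precisely what is needed to close the loop.

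For part (iii), I will use a backward induction on $\ell$. Define $V_\ell := \alpha_\ell \sum_{k=\ell}^{n-1} \prod_{j=\ell+1}^k (1-b\alpha_j)$. Factoring out the $j=\ell+1$ factor of the inner product gives the clean recursion
\[
V_\ell \;=\; \alpha_\ell + \frac{\alpha_\ell}{\alpha_{\ell+1}}(1 - b\alpha_{\ell+1})\, V_{\ell+1}, \qquad V_{n-1} = \alpha_{n-1}.
\]
Set $M := c_0 + \frac{1}{b(1-\gamma)}$. The base case $V_{n-1} \le c_0 \le M$ is trivial. For the inductive step, assuming $V_{\ell+1} \le M$, the inequality $V_\ell \le M$ rearranges to $\alpha_\ell \le M\bigl[1 - (\alpha_\ell/\alpha_{\ell+1}) + b\alpha_\ell\bigr]$ (after expanding). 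Bernoulli's inequality $(1+y)^\gamma \le 1 + \gamma y$ for $y \ge 0$ yields $\alpha_\ell/\alpha_{\ell+1} \le 1 + \gamma/(\ell+k_0)$, so the bracket is at least $b\alpha_\ell - \gamma/(\ell+k_0)$. Substituting $\alpha_\ell = c_0/(\ell+k_0)^\gamma$ and multiplying through by $(\ell+k_0)$ reduces the required inequality to $c_0 x(Mb - 1) \ge M\gamma$ with $x = (\ell+k_0)^{1-\gamma}$. Using $x \ge k_0^{1-\gamma} \ge 1/(bc_0)$, this simplifies to $M b(1-\gamma) \ge 1$, which holds by the definition of $M$.

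The main obstacle is part (iii): a direct integral approach via $\prod(1-b\alpha_j) \le \exp(-b\sum \alpha_j)$ and change of variables $u = (k+k_0)^{1-\gamma}$ leads to incomplete Gamma functions and does not produce the clean constant $1/(b(1-\gamma))$. The backward induction bypasses these difficulties by converting the tail estimate into a one-step algebraic condition on $M$, where the factor $1/(1-\gamma)$ emerges exactly from the balance between the step-size offset bound and the Bernoulli correction to the ratio $\alpha_\ell/\alpha_{\ell+1}$.
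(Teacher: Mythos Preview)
Your arguments are correct. Part (i) is the standard telescoping; your inductive proofs of (ii) and (iii) are both clean and go through as written. For (ii), the inductive step only needs $z=(k-1+k_0)/(k+k_0)\ge 1/2$, which holds automatically for $k\ge 2$ since $k_0>0$, so the elementary inequality $1-z^\beta\le 2\beta(1-z)$ applies and the step-size condition $k_0^{1-\gamma}\ge 8\gamma/(bc_0)$ closes the loop with the constant $6/b$ exactly as you describe. For (iii), your backward recursion $V_\ell=\alpha_\ell+(\alpha_\ell/\alpha_{\ell+1})(1-b\alpha_{\ell+1})V_{\ell+1}$ together with Bernoulli's inequality reduces the inductive step to $Mb(1-\gamma)\ge 1$, which holds with equality spare the additive $c_0b(1-\gamma)\ge 0$.

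For comparison: the paper does not prove this lemma directly but cites \cite[Lemma~24]{durmus2021stability} for (i), \cite[Lemma~33]{samsonov2025statistical} for (ii), and simply declares (iii) ``elementary.'' Your write-up is therefore more self-contained than the paper itself. The backward induction for (iii) is a particularly nice device: a direct exponential bound $\prod(1-b\alpha_j)\le\exp(-b\sum\alpha_j)$ followed by an integral estimate would give the same $1/(b(1-\gamma))$ scaling but with a worse absolute constant and an incomplete-Gamma tail, whereas your recursion produces the sharp constant $c_0+1/(b(1-\gamma))$ used throughout the paper.
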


\begin{proof}
\Cref{lem:summ_alpha_k}-\ref{lem:summ_alpha_k_first} follows from Lemma~24 in \cite{durmus2021stability}. \Cref{lem:summ_alpha_k}-\ref{lem:summ_alpha_k_p_item} follows from Lemma~33 in \cite{samsonov2025statistical}. 
\eqref{lem:sum_as_Qell_item} is elementary.
\end{proof}

\begin{lemma}[Lemma 36 in \cite{samsonov2025statistical}]
\label{lem:bound_sum_exponent}
    For any $A >0$, any $1 \leq i \leq n-1$,   and $\gamma\in(1/2, 1)$ it holds
   \begin{equation}
        \sum_{j=i}^{n-1}\exp\biggl\{-A(j^{1-\gamma} - i^{1-\gamma})\biggr\} \leq
        \begin{cases}
            1 + \exp\bigl\{\frac{1}{1-\gamma}\bigr\}\frac{1}{A^{1/(1-\gamma)}(1-\gamma)}\Gamma(\frac{1}{1-\gamma})\eqsp, &\text{ if } Ai^{1-\gamma} \leq \frac{1}{1-\gamma}\eqsp;\\
            1 + \frac{1}{A(1-\gamma)^2}i^\gamma\eqsp,  &\text{ if } Ai^{1-\gamma} >\frac{1}{1-\gamma}\eqsp.
        \end{cases}
    \end{equation}
\end{lemma}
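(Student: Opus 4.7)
\textbf{Proof plan for \Cref{lem:bound_sum_exponent}.} Since the function $x \mapsto \exp\{-A(x^{1-\gamma}-i^{1-\gamma})\}$ is decreasing on $[i,\infty)$, a standard sum-to-integral comparison gives
\[
\sum_{j=i}^{n-1}\exp\{-A(j^{1-\gamma}-i^{1-\gamma})\} \leq 1 + \int_{i}^{\infty}\exp\{-A(x^{1-\gamma}-i^{1-\gamma})\}\,dx.
\]
The whole problem therefore reduces to estimating the integral on the right. I will perform the change of variables $u=A(x^{1-\gamma}-i^{1-\gamma})$, giving $x^\gamma = (i^{1-\gamma}+u/A)^{\gamma/(1-\gamma)}$ and $dx = \frac{x^\gamma}{A(1-\gamma)}du$, so that, writing $\beta = \gamma/(1-\gamma)$ (hence $\beta+1 = 1/(1-\gamma)$),
\[
\int_{i}^{\infty}\exp\{-A(x^{1-\gamma}-i^{1-\gamma})\}\,dx
=\frac{1}{A(1-\gamma)}\int_{0}^{\infty}e^{-u}\bigl(i^{1-\gamma}+u/A\bigr)^{\beta}\,du.
\]
The two cases of the lemma correspond to two different ways of bounding the last integral, depending on whether the constant term $i^{1-\gamma}$ or the growing term $u/A$ dominates the parenthesis.

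\textbf{Case 1 ($Ai^{1-\gamma}\leq 1/(1-\gamma)$).} The natural move here is to absorb the $i^{1-\gamma}$ back into the exponential: setting $v = u + Ai^{1-\gamma}$ gives
\[
\int_{0}^{\infty}e^{-u}(i^{1-\gamma}+u/A)^{\beta}du
= \frac{e^{Ai^{1-\gamma}}}{A^{\beta}}\int_{Ai^{1-\gamma}}^{\infty}e^{-v}v^{\beta}\,dv
\leq \frac{e^{Ai^{1-\gamma}}}{A^{\beta}}\,\Gamma(\beta+1).
\]
Using the Case~1 hypothesis to bound $e^{Ai^{1-\gamma}}\leq e^{1/(1-\gamma)}$, and recalling $\beta+1 = 1/(1-\gamma)$, yields the stated bound.

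\textbf{Case 2 ($Ai^{1-\gamma}> 1/(1-\gamma)$).} Here I pull out $i^{1-\gamma}$ as the dominant term and exploit the elementary inequality $(1+t)^{\beta}\leq e^{\beta t}$ for $t\geq 0$. Writing $s=Ai^{1-\gamma}$,
\[
\int_{0}^{\infty}e^{-u}(i^{1-\gamma}+u/A)^{\beta}du
= (i^{1-\gamma})^{\beta}\int_{0}^{\infty}e^{-u}(1+u/s)^{\beta}du
\leq (i^{1-\gamma})^{\beta}\int_{0}^{\infty}e^{-u(1-\beta/s)}du.
\]
The Case~2 assumption $s>1/(1-\gamma) > \gamma/(1-\gamma)=\beta$ is exactly what guarantees convergence: the integral equals $s/(s-\beta)$. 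Since $(i^{1-\gamma})^{\beta}\cdot i^{1-\gamma}=i$, collecting everything gives
\[
\int_{i}^{\infty}\exp\{-A(x^{1-\gamma}-i^{1-\gamma})\}\,dx
\leq \frac{i^{\gamma}}{A\bigl((1-\gamma)-\gamma/(Ai^{1-\gamma})\bigr)},
\]
and the Case~2 assumption again, applied as $\gamma/(Ai^{1-\gamma})< \gamma(1-\gamma)$, lowers the denominator to $(1-\gamma)^{2}$, producing the stated $i^{\gamma}/(A(1-\gamma)^{2})$.

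The only real subtlety is recognizing that the threshold $Ai^{1-\gamma}\lessgtr 1/(1-\gamma)$ is exactly the correct one: it is what makes the $(1+u/s)^{\beta}\leq e^{\beta u/s}$ trick in Case~2 give a convergent integral, and it is also the precise point at which the Case~1 factor $e^{Ai^{1-\gamma}}$ ceases to be universally bounded. Both cases then recombine with the initial $1$ coming from the $j=i$ term to produce the two displayed inequalities.
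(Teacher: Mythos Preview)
Your proof is correct. The paper does not actually prove this lemma; it is merely cited as Lemma~36 of \cite{samsonov2025statistical}, so there is no in-paper argument to compare against. Your approach---sum-to-integral comparison, the substitution $u=A(x^{1-\gamma}-i^{1-\gamma})$, and then the two-case analysis (incomplete Gamma bound vs.\ the $(1+t)^{\beta}\le e^{\beta t}$ trick)---is the standard way to obtain exactly these bounds, and every step checks out, including the identification of the threshold $Ai^{1-\gamma}=1/(1-\gamma)$ as precisely the point where the Case~2 integral $\int_0^\infty e^{-u(1-\beta/s)}du$ is guaranteed convergent and where the Case~1 factor $e^{Ai^{1-\gamma}}$ ceases to be uniformly bounded.
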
 

\section{Conclusion}
\label{sec:conclusion}
In this paper, we have obtained a novel bound for the Gaussian approximation of the distribution of the Polyak–Ruppert averaged LSA iterates in the sense of convex distance. Compared to the previous analysis established in \cite{samsonov2024gaussian}, the fastest achievable rate of normal approximation has been improved from $n^{-1/4}$ to $n^{-1/3}$. We also derived a bootstrap-based approximation for the distribution $\sqrt{n}(\bar{\theta}_n - \thetas)$ with an error of order up to $1/\sqrt{n}$. Importantly, this result does not rely on the Gaussian approximation with the limiting covariance matrix $\Sigma{\infty}$. Among further directions, we list the generalization of the randomized concentration approach of \cite{shao2022berry} to the Markov setting, which enables the analysis of stochastic approximation problems with Markov noise. Current approaches \cite{srikant2024rates,wu2025markovchains,samsonov2025statistical} rely on versions of the Berry–Esseen inequalities for martingales, which require additional step size constraints and introduce extra $\log{n}$ factors. Another research direction would be to tighten the lower bound \eqref{eq:lower_bound_kolmogorov} in the regime where the step size exponent $\gamma \in (1/2, 2/3)$. Establishing a counterpart of \eqref{eq:lower_bound_kolmogorov} with the term $n^{-\gamma/2}$ would imply the optimality of the rate $n^{-1/3}$ in \Cref{th:shao2022_berry}. 

\bibliography{references}

\begin{thebibliography}{10}

\bibitem{aguech2000perturbation}
Rafik Aguech, Eric Moulines, and Pierre Priouret.
\newblock On a perturbation approach for the analysis of stochastic tracking algorithms.
\newblock {\em SIAM Journal on Control and Optimization}, 39(3):872--899, 2000.

\bibitem{pmlr-v99-anastasiou19a}
Andreas Anastasiou, Krishnakumar Balasubramanian, and Murat~A. Erdogdu.
\newblock Normal {A}pproximation for {S}tochastic {G}radient {D}escent via {N}on-{A}symptotic {R}ates of {M}artingale {CLT}.
\newblock In Alina Beygelzimer and Daniel Hsu, editors, {\em Proceedings of the Thirty-Second Conference on Learning Theory}, volume~99 of {\em Proceedings of Machine Learning Research}, pages 115--137. PMLR, 25--28 Jun 2019.

\bibitem{BarUly86}
S.~{Barsov} and V.~{Ulyanov}.
\newblock {Estimates for the closeness of Gaussian measures}.
\newblock {\em Dokl. Akad. Nauk SSSR}, 291(2):273--277, 1986.

\bibitem{benveniste2012adaptive}
A.~Benveniste, M.~M{\'e}tivier, and P.~Priouret.
\newblock {\em Adaptive algorithms and stochastic approximations}, volume~22.
\newblock Springer Science \& Business Media, 2012.

\bibitem{bhandari2018finite}
J.~Bhandari, D.~Russo, and R.~Singal.
\newblock A finite time analysis of temporal difference learning with linear function approximation.
\newblock In {\em Conference On Learning Theory}, pages 1691--1692, 2018.

\bibitem{borkar:sa:2008}
Vivek~S Borkar.
\newblock {\em Stochastic Approximation: A Dynamical Systems Viewpoint}.
\newblock Cambridge University Press, 2008.

\bibitem{blm:2013}
Stéphane Boucheron, Gábor Lugosi, and Pascal Massart.
\newblock {\em Concentration inequalities: A nonasymptotic theory of independence}.
\newblock Oxford University Press, 2013.

\bibitem{chen2020aos}
Xi~Chen, Jason~D. Lee, Xin~T. Tong, and Yichen Zhang.
\newblock {Statistical inference for model parameters in stochastic gradient descent}.
\newblock {\em The Annals of Statistics}, 48(1):251 -- 273, 2020.

\bibitem{Chernozhukov2013}
Victor Chernozhukov, Denis Chetverikov, and Kengo Kato.
\newblock Gaussian approximations and multiplier bootstrap for maxima of sums of high-dimensional random vectors.
\newblock {\em Ann. Statist.}, 41(6):2786--2819, 2013.

\bibitem{Chernozhukov2015}
Victor Chernozhukov, Denis Chetverikov, and Kengo Kato.
\newblock Central limit theorems and bootstrap in high dimensions.
\newblock {\em Ann. Probab.}, 45(4):2309--2352, 2017.

\bibitem{Devroye2018}
Luc {Devroye}, Abbas {Mehrabian}, and Tommy {Reddad}.
\newblock {The total variation distance between high-dimensional Gaussians with the same mean}.
\newblock {\em arXiv e-prints}, page arXiv:1810.08693, October 2018.

\bibitem{durmus2022finite}
Alain Durmus, Eric Moulines, Alexey Naumov, and Sergey Samsonov.
\newblock Finite-time high-probability bounds for {P}olyak–{R}uppert averaged iterates of linear stochastic approximation.
\newblock {\em Mathematics of Operations Research}, 50(2):935--964, 2025.

\bibitem{durmus2021tight}
Alain Durmus, Eric Moulines, Alexey Naumov, Sergey Samsonov, Kevin Scaman, and Hoi-To Wai.
\newblock Tight high probability bounds for linear stochastic approximation with fixed stepsize.
\newblock In M.~Ranzato, A.~Beygelzimer, K.~Nguyen, P.~S. Liang, J.~W. Vaughan, and Y.~Dauphin, editors, {\em Advances in Neural Information Processing Systems}, volume~34, pages 30063--30074. Curran Associates, Inc., 2021.

\bibitem{durmus2021stability}
Alain Durmus, Eric Moulines, Alexey Naumov, Sergey Samsonov, and Hoi-To Wai.
\newblock On the stability of random matrix product with markovian noise: Application to linear stochastic approximation and td learning.
\newblock In Mikhail Belkin and Samory Kpotufe, editors, {\em Proceedings of Thirty Fourth Conference on Learning Theory}, volume 134 of {\em Proceedings of Machine Learning Research}, pages 1711--1752. PMLR, 15--19 Aug 2021.

\bibitem{efron1992bootstrap}
Bradley Efron.
\newblock Bootstrap methods: another look at the jackknife.
\newblock In {\em Breakthroughs in statistics: Methodology and distribution}, pages 569--593. Springer, 1992.

\bibitem{eweda:macchi:1983}
E.~Eweda and O.~Macchi.
\newblock Quadratic mean and almost-sure convergence of unbounded stochastic approximation algorithms with correlated observations.
\newblock {\em Ann. Inst. H. Poincar\'{e} Sect. B (N.S.)}, 19(3):235--255, 1983.

\bibitem{JMLR:v19:17-370}
Yixin Fang, Jinfeng Xu, and Lei Yang.
\newblock Online bootstrap confidence intervals for the stochastic gradient descent estimator.
\newblock {\em Journal of Machine Learning Research}, 19(78):1--21, 2018.

\bibitem{fort:clt:markov:2015}
{G. Fort}.
\newblock Central limit theorems for stochastic approximation with controlled {M}arkov chain dynamics.
\newblock {\em ESAIM: PS}, 19:60--80, 2015.

\bibitem{gaunt2023bounding}
Robert~E Gaunt and Siqi Li.
\newblock Bounding {K}olmogorov distances through {W}asserstein and related integral probability metrics.
\newblock {\em Journal of Mathematical Analysis and Applications}, 522(1):126985, 2023.

\bibitem{Bernolli2019}
Friedrich G\"{o}tze, Alexey Naumov, Vladimir Spokoiny, and Vladimir Ulyanov.
\newblock Large ball probabilities, {G}aussian comparison and anti-concentration.
\newblock {\em Bernoulli}, 25(4A):2538--2563, 2019.

\bibitem{guo1994stability}
L.~Guo.
\newblock Stability of recursive stochastic tracking algorithms.
\newblock {\em SIAM Journal on Control and Optimization}, 32(5):1195--1225, 1994.

\bibitem{huang2020matrix}
De~Huang, Jonathan Niles-Weed, Joel~A Tropp, and Rachel Ward.
\newblock Matrix concentration for products.
\newblock {\em Foundations of Computational Mathematics}, pages 1--33, 2021.

\bibitem{jirak2022quantitative}
Moritz Jirak and Martin Wahl.
\newblock Quantitative limit theorems and bootstrap approximations for empirical spectral projectors.
\newblock {\em Probability Theory and Related Fields}, 190(1):119--177, 2024.

\bibitem{kushner2003stochastic}
Harold Kushner and G~George Yin.
\newblock {\em Stochastic approximation and recursive algorithms and applications}, volume~35.
\newblock Springer Science \& Business Media, 2003.

\bibitem{lakshminarayanan2018linear}
C.~Lakshminarayanan and C.~Szepesvari.
\newblock Linear stochastic approximation: How far does constant step-size and iterate averaging go?
\newblock In {\em International Conference on Artificial Intelligence and Statistics}, pages 1347--1355, 2018.

\bibitem{li2024asymptotics}
Jiaqi Li, Johannes Schmidt-Hieber, and Wei~Biao Wu.
\newblock Asymptotics of {S}tochastic {G}radient {D}escent with {D}ropout {R}egularization in {L}inear {M}odels.
\newblock {\em arXiv preprint arXiv:2409.07434}, 2024.

\bibitem{pmlr-v178-li22b}
Xiang Li, Jiadong Liang, Xiangyu Chang, and Zhihua Zhang.
\newblock Statistical estimation and online inference via local sgd.
\newblock In Po-Ling Loh and Maxim Raginsky, editors, {\em Proceedings of Thirty Fifth Conference on Learning Theory}, volume 178 of {\em Proceedings of Machine Learning Research}, pages 1613--1661. PMLR, 02--05 Jul 2022.

\bibitem{liu2023statistical}
Ruiqi Liu, Xi~Chen, and Zuofeng Shang.
\newblock Statistical inference with {S}tochastic {G}radient {M}ethods under {$\phi$}-mixing {D}ata.
\newblock {\em arXiv preprint arXiv:2302.12717}, 2023.

\bibitem{mou2020linear}
Wenlong Mou, Chris~Junchi Li, Martin~J Wainwright, Peter~L Bartlett, and Michael~I Jordan.
\newblock On linear stochastic approximation: {F}ine-grained {P}olyak-{R}uppert and non-asymptotic concentration.
\newblock In {\em Conference on Learning Theory}, pages 2947--2997. PMLR, 2020.

\bibitem{mou2021optimal}
Wenlong Mou, Ashwin Pananjady, Martin~J Wainwright, and Peter~L Bartlett.
\newblock Optimal and instance-dependent guarantees for markovian linear stochastic approximation.
\newblock {\em Mathematical Statistics and Learning}, 7(1):41--153, 2024.

\bibitem{PTRF2019}
Alexey Naumov, Vladimir Spokoiny, and Vladimir Ulyanov.
\newblock Bootstrap confidence sets for spectral projectors of sample covariance.
\newblock {\em Probab. Theory Related Fields}, 174(3-4):1091--1132, 2019.

\bibitem{nemirovski2009robust}
Arkadi Nemirovski, Anatoli Juditsky, Guanghui Lan, and Alexander Shapiro.
\newblock Robust stochastic approximation approach to stochastic programming.
\newblock {\em SIAM Journal on optimization}, 19(4):1574--1609, 2009.

\bibitem{nourdin2022multivariate}
Ivan Nourdin, Giovanni Peccati, and Xiaochuan Yang.
\newblock Multivariate normal approximation on the wiener space: new bounds in the convex distance.
\newblock {\em Journal of Theoretical Probability}, 35(3):2020--2037, 2022.

\bibitem{osekowski:2012}
A.~Osekowski.
\newblock {\em Sharp Martingale and Semimartingale Inequalities}.
\newblock Monografie Matematyczne 72. Birkhäuser Basel, 1 edition, 2012.

\bibitem{pinelis_1994}
Iosif Pinelis.
\newblock {Optimum Bounds for the Distributions of Martingales in Banach Spaces}.
\newblock {\em The Annals of Probability}, 22(4):1679 -- 1706, 1994.

\bibitem{polyak1990new}
Boris~T Polyak.
\newblock New stochastic approximation type procedures.
\newblock {\em Automat. i Telemekh}, 7(98-107):2, 1990.

\bibitem{polyak1992acceleration}
Boris~T Polyak and Anatoli~B Juditsky.
\newblock Acceleration of stochastic approximation by averaging.
\newblock {\em SIAM journal on control and optimization}, 30(4):838--855, 1992.

\bibitem{rakhlin2012making}
Alexander Rakhlin, Ohad Shamir, and Karthik Sridharan.
\newblock Making gradient descent optimal for strongly convex stochastic optimization.
\newblock In {\em Proceedings of the 29th International Coference on International Conference on Machine Learning}, pages 1571--1578, 2012.

\bibitem{JASA2023}
Pratik Ramprasad, Yuantong Li, Zhuoran Yang, Zhaoran Wang, Will~Wei Sun, and Guang Cheng.
\newblock Online bootstrap inference for policy evaluation in reinforcement learning.
\newblock {\em J. Amer. Statist. Assoc.}, 118(544):2901--2914, 2023.

\bibitem{ruppert1988efficient}
David Ruppert.
\newblock Efficient estimations from a slowly convergent robbins-monro process.
\newblock Technical report, Cornell University Operations Research and Industrial Engineering, 1988.

\bibitem{samsonov2024gaussian}
Sergey Samsonov, Eric Moulines, Qi-Man Shao, Zhuo-Song Zhang, and Alexey Naumov.
\newblock Gaussian {A}pproximation and {M}ultiplier {B}ootstrap for {P}olyak-{R}uppert {A}veraged {L}inear {S}tochastic {A}pproximation with {A}pplications to {TD} {L}earning.
\newblock In A.~Globerson, L.~Mackey, D.~Belgrave, A.~Fan, U.~Paquet, J.~Tomczak, and C.~Zhang, editors, {\em Advances in Neural Information Processing Systems}, volume~37, pages 12408--12460. Curran Associates, Inc., 2024.

\bibitem{samsonov2025statistical}
Sergey Samsonov, Marina Sheshukova, Eric Moulines, and Alexey Naumov.
\newblock Statistical inference for {L}inear {S}tochastic {A}pproximation with {M}arkovian {N}oise.
\newblock {\em arXiv preprint arXiv:2505.19102}, 2025.

\bibitem{shao2003mathematical}
Jun Shao.
\newblock {\em Mathematical statistics}.
\newblock Springer Science \& Business Media, 2003.

\bibitem{shao2022berry}
Qi-Man Shao and Zhuo-Song Zhang.
\newblock Berry--{E}sseen bounds for multivariate nonlinear statistics with applications to {M}-estimators and stochastic gradient descent algorithms.
\newblock {\em Bernoulli}, 28(3):1548--1576, 2022.

\bibitem{sheshukova2025gaussian}
Marina Sheshukova, Sergey Samsonov, Denis Belomestny, Eric Moulines, Qi-Man Shao, Zhuo-Song Zhang, and Alexey Naumov.
\newblock Gaussian {A}pproximation and {M}ultiplier {B}ootstrap for {S}tochastic {G}radient {D}escent.
\newblock {\em arXiv preprint arXiv:2502.06719}, 2025.

\bibitem{spokoiny2015}
Vladimir Spokoiny and Mayya Zhilova.
\newblock {Bootstrap confidence sets under model misspecification}.
\newblock {\em The Annals of Statistics}, 43(6):2653 -- 2675, 2015.

\bibitem{srikant2024rates}
R~Srikant.
\newblock Rates of {C}onvergence in the {C}entral {L}imit {T}heorem for {M}arkov {C}hains, with an application to {TD} learning.
\newblock {\em arXiv preprint arXiv:2401.15719}, 2024.

\bibitem{srikant:1tsbounds:2019}
R.~{Srikant} and L.~{Ying}.
\newblock {Finite-Time Error Bounds For Linear Stochastic Approximation and TD Learning}.
\newblock In {\em Conference on Learning Theory}, 2019.

\bibitem{sutton1988learning}
R.~S Sutton.
\newblock Learning to predict by the methods of temporal differences.
\newblock {\em Machine learning}, 3(1):9--44, 1988.

\bibitem{sutton:book:2018}
R.~S. Sutton and Andrew~G. Barto.
\newblock {\em Reinforcement Learning: An Introduction}.
\newblock The MIT Press, second edition, 2018.

\bibitem{tropp2015introduction}
Joel~A Tropp et~al.
\newblock An introduction to matrix concentration inequalities.
\newblock {\em Foundations and Trends{\textregistered} in Machine Learning}, 8(1-2):1--230, 2015.

\bibitem{tsitsiklis:td:1997}
J.~N. {Tsitsiklis} and B.~{Van Roy}.
\newblock An analysis of temporal-difference learning with function approximation.
\newblock {\em IEEE Transactions on Automatic Control}, 42(5):674--690, May 1997.

\bibitem{zhu2023online_cov_matr}
Xi~Chen Wanrong~Zhu and Wei~Biao Wu.
\newblock Online {C}ovariance {M}atrix {E}stimation in {S}tochastic {G}radient {D}escent.
\newblock {\em Journal of the American Statistical Association}, 118(541):393--404, 2023.

\bibitem{wu2024statistical}
Weichen Wu, Gen Li, Yuting Wei, and Alessandro Rinaldo.
\newblock {S}tatistical {I}nference for {T}emporal {D}ifference {L}earning with {L}inear {F}unction {A}pproximation.
\newblock {\em arXiv preprint arXiv:2410.16106}, 2024.

\bibitem{wu2025markovchains}
Weichen Wu, Yuting Wei, and Alessandro Rinaldo.
\newblock Uncertainty quantification for {M}arkov chains with application to temporal difference learning.
\newblock {\em arXiv preprint arXiv:2502.13822}, 2025.

\bibitem{zolotarev1984probability}
Vladimir~Mikhailovich Zolotarev.
\newblock Probability metrics.
\newblock {\em Theory of Probability \& Its Applications}, 28(2):278--302, 1984.

\end{thebibliography}
\bibliographystyle{plain}
\newpage

\appendix
\begin{appendices}

\section{Definitions of integral probability metrics}
\label{appendix:proofs}
\label{sec:smooth_wasserstein_kolmogorov}
In this section we closely follow the exposition outlined in \cite{gaunt2023bounding}. Consider two $\rset^{d}$-valued random variables $X$ and $Y$. The integral probability metric \cite{zolotarev1984probability}, associated with a class of test functions $\H = \{h: \rset^{d} \to \rset : \PE[|h(X)|] < \infty, \PE[|h(Y)|] < \infty\}$, is defined as 
\begin{equation}
\label{eq:integral_prob_metrics_def}
\metricd[\H](X,Y) = \sup_{h \in \H}\bigl| \PE[h(X)] - \PE[h(Y)] \bigr|\eqsp.
\end{equation}
Different choices of the function class $\H$ induce different probability metrics. We consider the following important examples:
\begin{align} 
\H_{K} &= \{\indi{(-\infty,u_1] \times \cdots \times (-\infty,u_d]}, \quad u = (u_1,\ldots,u_d) \in \rset^{d}\} \\
\H_{Conv} &= \{\indi{B}, \quad B \in \Conv(\rset^{d})\} \\
\H_{W} &= \{h: \rset^{d} \to \rset, \quad \norm{h}[\operatorname{Lip}] \leq 1\} \\
\H_{[m]} &= \{h: \rset^{d} \to \rset, \quad h \in C^{m-1}(\rset^{d}) \text{ with } |h|_{j} \leq 1\text{ for } 1 \leq j \leq m\}\eqsp,
\end{align}
where $\Conv(\rset^{d})$ denotes the collection of all convex subsets of $\rset^{d}$, $\norm{h}[\operatorname{Lip}] = \sup_{x \neq y}\frac{\norm{h(x)-h(y)}}{\norm{x-y}}$ is the Lipschitz constant, $C^{m-1}(\rset^{d})$ represents the space of $(m-1)$-times continuously differentiable functions, and the seminorm $|h|_{j}$ is defined as
\[
|h|_{j} = \max_{i_1,\ldots,i_j \in \{1,\ldots,d\}} 
 \norm{\frac{\partial^{j} h}{\partial u_{i_1} \cdots \partial u_{i_j}}}[\infty]\eqsp.
\]
Thus, for each $m \in \nset$, the function class $\H_{[m]}$ consists of functions whose partial derivatives up to order $m$ are uniformly bounded.

These function classes generate well-established probability metrics in the literature. The class $\H_{K}$ induces the classical Kolmogorov metric between distributions \cite{zolotarev1984probability}, while $\H_{Conv}$ generates the convex distance $\kolmogorov$ defined for a pair of probability measures $\mu, \nu$ on $\rset^{d}$ as
\begin{equation}
\label{eq:berry-esseen}
\kolmogorov(\mu, \nu) = \sup_{B \in \Conv(\rset^{d})}\left|\mu(B) - \nu(B)\right|\,,
\end{equation} which is the primary focus of this paper. The class $\H_{W}$ yields the  Wasserstein-1 distance, and the classes $\H_{[m]}$ define the smoothed Wasserstein metrics of order $m$. We denote the corresponding metrics as $\metricd[K]$, $\kolmogorov$, $\metricd[W]$, and $\metricd[{[m]}]$, respectively. 

An important hierarchy exists among these metrics: for any pair of random vectors $X$ and $Y$, we have
\[
\metricd[K](X,Y) \leq \kolmogorov(X,Y),
\] 
since every rectangular set is convex, implying $\H_K \subset \H_{Conv}$.Other relations among these metrics are substantially more intricate. For instance, when $Y$ is a multivariate normal vector, it is well-established (see, e.g., \cite{nourdin2022multivariate}) that
\begin{align}
\kolmogorov(X,Y) \leq C \sqrt{\metricd[W](X,Y)}\eqsp,
\end{align}
where the constant $C$ depends explicitly on the covariance matrix of the vector $Y$. This inequality serves as the theoretical basis for comparing the bounds provided in \Cref{th:shao2022_berry} with the results obtained in \cite{srikant2024rates}. 

\section{Proofs of \Cref{sec:moment_bounds_lsa_pr}}
\label{supplement:moment_bounds}
\begin{proof}[Proof of \Cref{lem:matr_product_as_bound}]
Note that \Cref{assum:step-size}($p \vee \log d$) implies for all $\ell$ that
\begin{align}
    1 - a\alpha_\ell + (p-1) b_Q^2 \alpha_\ell^2 \leq 1 - \frac{a\alpha_\ell}{2} \eqsp. 
\end{align}
To finish the proof we combine the latter inequality with \Cref{cor:norm_Gamma_m_n}.
\end{proof}

\begin{proof}[Proof of \Cref{lem:last_moment_bound}.]

Using the decomposition \eqref{eq:lsa_error}, we obtain that, with $p \geq 2$, it holds
\begin{equation}
\label{eq:p-norm-minkowski}
\PE^{1/p}[\norm{\theta_k - \thetas}^p] \leq \PE^{1/p}[\norm{\ProdB_{1:k} \{ \theta_0 - \thetas \}}^{p}] + \PE^{1/p}[\norm{\sum_{j=1}^k \alpha_{j} \ProdB_{j+1:k} \funnoisew_j}^{p}]\eqsp,
\end{equation}
and we bound both terms separately. Applying \Cref{lem:matr_product_as_bound}, we get for $2 \leq p \leq \log{(5n^2)}$:
\begin{equation}
\label{eq:transient_term_bound}
\PE^{1/p}[\norm{\ProdB_{1:k} \{ \theta_0 - \thetas \}}^{p}] \leq \sqrt{\qcond} \rme \norm{\theta_0 - \thetas} \prod_{\ell=1}^k (1 - \frac{a}{2} \alpha_\ell) \eqsp.
\end{equation}
Now we proceed with the second term in \eqref{eq:p-norm-minkowski}. Applying Burholder's inequality \cite[Theorem 8.6]{osekowski:2012}, we obtain that 
\begin{align}
\PE^{1/p}[\norm{\sum\nolimits_{j=1}^k \alpha_{j} \ProdB_{j+1:k} \funnoisew_j}^{p}] &\leq p \left(\PE^{2/p}\left[\left(\sum\nolimits_{j=1}^{k}\alpha_{j}^2 \normop{\ProdB_{j+1:k} \funnoisew_j}^{2}\right)^{p/2}\right]\right)^{1/2}  \leq  p \left(\sum\nolimits_{j=1}^{k}\alpha_{j}^{2}\PE^{2/p}\bigl[\normop{\ProdB_{j+1:k} \funnoisew_j}^{p} \bigr]\right)^{1/2} \\
& \leq p \sqrt{\qcond} \rme \supconsteps \biggl(\sum\nolimits_{j=1}^{k} \alpha_{j}^{2} \prod_{\ell=j+1}^{k}\bigl(1 - \frac{a \alpha_{\ell}}{2}\bigr) \biggr)^{1/2}  \overset{(a)}{\leq} \ConstC_{\ref{lem:last_moment_bound}} p \sqrt{\alpha_{k}}\eqsp,
\end{align}
where in \textit{(a)} we additionally applied \Cref{lem:summ_alpha_k}-\ref{lem:summ_alpha_k_p_item}.
\end{proof}

\begin{proof}[Proof of \Cref{lem:Jk_ell:p_moment}.]
First, we derive a bound for $\Jnalpha{k}{0} = -\sum_{\ell=1}^{k} \alpha_\ell G_{\ell+1:k} \eps_\ell$ which is a sum of independent random vectors, satisfying $\norm{\alpha_\ell G_{\ell+1:k} \eps_\ell} \leq \alpha_{\ell} \qcond^{1/2} \prod_{j=\ell+1}^{k}(1-\alpha_j a)^{1/2} \supconsteps$. Hence, applying the  Pinelis inequality \cite[Theorem~3.5]{pinelis_1994}, we obtain that, for any $t \geq 0$, 
\[
\PP(\norm{\Jnalpha{k}{0}} \geq t) \leq 2 \exp\biggl( -\frac{t^2}{2\sigma_k^2}\biggr)\eqsp, \quad \text{ where } \sigma_k^2 = \qcond \supconsteps^2 \sum_{\ell=1}^{k} \alpha_{\ell}^2 \prod_{j=\ell+1}^{k} (1-\alpha_j a) \leq \alpha_{k} c_{1}\eqsp, 
\]
and  $c_1 = 24 \qcond \supconsteps^2/a$. Thus, applying \cite[Lemma~7]{durmus2022finite}, we obtain that, for $p \geq 2$, it holds 
\[
\expep{\norm{\Jnalpha{k}{0}}}{p} \leq 2^{1/p} \sqrt{p} \sqrt{\alpha_k} \sqrt{c_1}\eqsp,
\]
and the bound for $\Jnalpha{k}{0}$ follows. Now we bound $\Jnalpha{k}{\ell}$ by induction. Using the equation \eqref{eq:jn_allexpansion_main}, $\Jnalpha{k}{\ell}$, $\ell \geq 1$, can be represented as 
\begin{equation}
\Jnalpha{k}{\ell} = -\sum_{m=1}^k \alpha_m G_{m+1:k} \zmfuncAw[m] \Jnalpha{m-1}{\ell-1}\eqsp.
\end{equation}
Note that $\alpha_m G_{m+1:k} \Jnalpha{m-1}{\ell-1}$ is a martingale-difference sequence w.r.t. the filtration $\mathcal{F}_m = \sigma(Z_s:1\leq s\leq m)$. Hence, Burkholder's inequality \cite[Theorem 8.6]{osekowski:2012} implies that 
\begin{align}
\PE^{1/p}[\norm{\Jnalpha{k}{\ell}}^p] 
\leq p \biggl(\sum_{m=1}^k \PE^{2/p}[\normop{\alpha_m G_{m+1:k} \zmfuncAw[m]  \Jnalpha{m-1}{\ell-1}}^p]\biggr)^{1/2} &\leq  \bConst{A} p^{\ell+1/2} \ConstJ{\ell-1} \bigl(\sum_{m=1}^k \alpha_m^{\ell+2} \normop{G_{m+1:k}}^2 \bigr)^{1/2} \\
&\overset{(a)}{\leq}  \bConst{A} \ConstJ{\ell-1} \cdot \frac{2 \sqrt{6} \sqrt{\qcond}}{\sqrt{a}}  p^{\ell+1/2} \alpha_{k}^{(\ell+1)/2}  \eqsp,
\end{align}
where in (a) we used \Cref{lem:summ_alpha_k}. Now we prove the bound for $\Hnalpha{k}{\ell}$. Recall that
$\Hnalpha{k}{\ell} = -\sum_{m=1}^k \alpha_m \Gamma_{m+1:k} \Jnalpha{m}{\ell}$. 
Since $\Gamma_{m+1:k}$ and $\Jnalpha{m}{\ell}$ are independent for all $m$, the desired result follows from \Cref{lem:Jk_ell:p_moment}, \Cref{lem:summ_alpha_k}, and Minkowski's inequality:
\begin{align}
\PE^{1/2}[\norm{\Hnalpha{k}{\ell}}^p] 
 \leq \sum_{m=1}^{k} \alpha_m \PE^{1/p}[\norm{\Gamma_{m+1:k}}^p] \PE^{1/p}[\norm{\Jnalpha{m}{\ell}}^p] &\overset{(a)}{\leq} \sqrt{\qcond} \rme \ConstJ{\ell} p^{\ell+1/2} \sum_{m=1}^{k} \alpha_m^{(\ell+3)/2} \prod_{\ell=m}^k (1-\frac{a\alpha_\ell}{2}) \\
&\overset{(b)}{\leq} \ConstH{\ell}  p^{\ell+1/2} \alpha_{k}^{(\ell+1)/2}\eqsp,
\end{align}
where in $(a)$ we used the moment bound for $\Jnalpha{m}{\ell}$, and in $(b)$ we used \Cref{lem:summ_alpha_k}. 
\end{proof}

\begin{proof}[Proof of \Cref{prop:Qell:bound}.]
Using the triangle inequality we get:
    \begin{align}
        \label{prop:qell_bound:first}
        \normop{Q_\ell} &\leq \alpha_\ell \sum_{k=\ell}^{n-1} \normop{G_{\ell + 1:k}} \leq  \sqrt{\qcond} \sum_{k=\ell}^{n-1} \alpha_{\ell} \prod_{j=\ell+1}^k (1-\frac{a c_0}{2} j^{-\gamma}) 
    \end{align}
    The rest of the proof follows from \Cref{lem:summ_alpha_k}. 
\end{proof}

\section{Proofs of \Cref{sec:clt_lsa_pr}}
\label{supplement:normal_approximation}
\begin{proof}[Proof of \Cref{lem:sum_Gamma_1:k}.]
Applying Minkowski's inequality and \Cref{cor:norm_Gamma_m_n}, 
\begin{align}
\PE^{1/p}\bigl[\norm{\sum_{k=0}^{n-1} \Gamma_{1:k}}^p\bigr] \leq \sum_{k=0}^{n-1} \expep{\normop{\Gamma_{1:k}}}{p} \leq 1 +  \sqrt{\qcond} \rme \,  \sum_{k=1}^{n-1} \prod_{\ell=1}^k (1 - \frac{a\alpha_\ell}{2}) \eqsp.
\end{align}
Thus, applying \Cref{lem:summ_alpha_k}, 
$\sum_{k=1}^{n-1} \prod_{\ell=1}^k (1 - \frac{a\alpha_\ell}{2}) \leq   \frac{(1+k_0)^\gamma}{c_0} (c_0 + \frac{2}{a(1-\gamma)})$, and the statement follows.
\end{proof}

\begin{proof}[Proof of \Cref{lem:sigma_n_labmda_min_lowerbound}.]
Decomposing $\Sigma_n = \Sigma_{\infty} + (\Sigma_n - \Sigma_\infty)$ and then applying Lidskii's inequality, we obtain
$\lambda_{\min}(\Sigma_n) \geq \lambda_{\min}(\Sigma_{\infty}) - \normop{\Sigma_n - \Sigma_{\infty}}$.
The conclusion follows from \Cref{lem:sigma_n_bound} and \Cref{assum:step-size}, which imply $\normop{\Sigma_n - \Sigma_{\infty}} \leq \ConstC_{\ref{lem:sigma_n_bound}} n^{\gamma-1}  \leq \frac{\lambda_{\min}(\Sigma_{\infty})}{2}$.
\end{proof}

\begin{proof}[Proof of \Cref{lem:gamma_copies_difference}.]
First, we rewrite the sum:
\begin{align}
\sum_{k=1}^{n-1} (\Gamma_{1:k}-\Gamma_{1:k}^{(i)}) &= \sum_{k=i}^{n-1} \alpha_i \Gamma_{1:i-1} (\funcAw(Z_i) - \funcAw(Z_i')) \Gamma_{i+1:k} =  \Gamma_{1:i-1} (\funcAw(Z_i) - \funcAw(Z_i')) \sum_{k=i}^{n-1} \alpha_i \Gamma_{i+1:k}\eqsp.
\end{align}
Hence, it holds that
\begin{align}
\normop{\sum_{k=1}^{n-1} (\Gamma_{1:k}-\Gamma_{1:k}^{(i)})} \leq \bConst{A} \normop{\Gamma_{1:i-1}} \normop{\sum_{k=i}^{n-1} \alpha_{i} \Gamma_{i+1:k}}\eqsp.
\end{align}
\Cref{lem:matr_product_as_bound} implies that
$
\PE^{1/2}\bigl[\norm{\Gamma_{1:i-1}}^2\bigr] \leq \sqrt{\qcond} \rme \prod_{m=1}^{i-1} (1-a\alpha_m/2)
$.
On the other hand, combining Minkowski's inequality with \Cref{lem:summ_alpha_k}, we obtain
\begin{align}
\PE^{1/2}\bigl[\normop{\sum_{k=i}^{n-1} \alpha_i \Gamma_{i+1:k}}^{2}\bigr] \leq \sqrt{\qcond} \rme  \sum_{k=i}^{n-1} \alpha_i \prod_{m=i+1}^{k} (1-a\alpha_j/2) \leq \sqrt{\qcond} \rme (c_0 + \frac{2}{a(1-\gamma)})\eqsp.
\end{align}
To finish the proof, it remains to notice that $\Gamma_{1:i-1}$ is independent from $\Gamma_{i+1:k}$.
\end{proof}

\begin{proof}[Proof of \Cref{lem:Hk:copy_difference}.]
First, note that $\Hnalpha{k}{0} - \Hnalpha{k}{0, i} = 0, \text{ if } k < i$. On the other hand, for $k \geq i$ we get
\begin{align}
    \Hnalpha{k}{0} - \Hnalpha{k}{0, i} = \underbrace{\Gamma_{i+1:k} (\Hnalpha{i}{0} - \Hnalpha{i}{0, i})}_{T_1^{(k)}} - \underbrace{\sum_{j=i+1}^{k} \alpha_j \Gamma_{j+1:k} \zmfuncAw[j] (\Jnalpha{j-1}{0} - \Jnalpha{j-1}{0, i})}_{T_2^{(k)}} \eqsp.
\end{align}
Introduce $\eps_i' = \eps(Z_i')$ and $\funcAw_i' = \funcAw(Z_i')$. Then, for $\ell \geq i+1$: $\Jnalpha{\ell-1}{0} - \Jnalpha{\ell-1}{0, i} = \alpha_i G_{i+1:\ell-1} (\eps_i' - \eps_i)$. Thus, since $T_2^{(i)} = 0$, we obtain that
\begin{align}
    \sum_{k=i}^{n-1} T_2^{(k)} = \sum_{k=i+1}^{n-1} \sum_{j=i+1}^{k} \alpha_j \Gamma_{j+1:k} \zmfuncAw[k] \alpha_i G_{i+1:k-1} (\eps_i' - \eps_i) =  \sum_{j=i+1}^{n-1} \underbrace{\alpha_i \bigl(\sum_{k=j}^{n-1} \alpha_j \Gamma_{j+1:k}\bigr) \zmfuncAw[j] G_{i+1:j-1} (\eps_i' - \eps_i)}_{U_j} \eqsp.
\end{align}
Note that $U_j$ is a reverse martingale-difference sequence with respect to the filtration 
$\mathcal{F}_{j,i} = \sigma(Z_i, Z_i', Z_j, Z_{j+1}, \ldots, Z_{n-1})$. Hence, 
\begin{align}
    \PE^{1/2}\bigl[\norm{n^{-1/2}\sum_{\ell=i+1}^{n-1} U_j}^2\bigr] = n^{-1/2} \bigl(\sum_{j=i+1}^{n-1} \expe{\norm{U_j}^2}\bigr)^{1/2} \eqsp.
\end{align}
For simplicity we set $u_{\ell:m} = \prod_{t=\ell}^{m} (1-a\alpha_t/2)$. Applying \Cref{cor:norm_Gamma_m_n} and \Cref{lem:summ_alpha_k}, we obtain
\begin{align}
    \expe{\norm{U_j}^2} \leq \alpha_i^2 \biggl(\qcond \rme^2  \supconsteps \bConst{A} \biggr)^2 \left(\sum_{k=j}^{n-1} \alpha_j u_{j+1:k} \right)^2  u_{i+1:j-1}^2 \leq \alpha_i^2  \underbrace{\biggl(\qcond \rme^2  \supconsteps \bConst{A} (c_0 + \frac{2}{a(1-\gamma)}) \biggr)^2}_{R_U^2}u_{i+1:j-1} \eqsp.
\end{align}
Thus, it holds that
\begin{align}
    \PE^{1/2}\bigl[\norm{n^{-1/2}\sum_{\ell=i+1}^{n-1} U_j}^2\bigr] \leq \frac{1}{n^{1/2}} \sqrt{\alpha_i} R_U \bigl(\alpha_i \sum_{j=i}^{n-2} u_{i+1:j}\bigr)^{1/2} \leq \frac{R_U \sqrt{c_0 + \frac{2}{a(1-\gamma)}}}{n^{1/2}} \sqrt{\alpha_i} \eqsp.
\end{align}
The recurrent rule \eqref{eq:hn0_main} implies the following representation for $T_1^{(k)}$:
\begin{align}
T_1^{(k)} = \Gamma_{i+1:k}(-\alpha_i(\funcAw_i 
- \funcAw'_i)\Hnalpha{i-1}{0} - \alpha_i (\funcAw_i - \funcAw'_i) \Jnalpha{i-1}{0})\eqsp.
\end{align}
Therefore, \Cref{lem:Jk_ell:p_moment} together with $\alpha_{i-1} \leq 2 \alpha_i$ implies that 
\begin{align}
\PE^{1/2}\bigl[\norm{T_1^{(k)}}^2\bigr] \leq 2\bConst{A} \sqrt{\qcond} \rme u_{i+1:k} \alpha_i^{3/2} (\ConstJ{0} + \ConstH{0}) \eqsp.
\end{align}
Thus, using \Cref{lem:summ_alpha_k}, we get
\begin{align}
\PE^{1/2}\bigl[\norm{\sum_{k=i}^{n-1} T_1^{(k)}}^2\bigr] &\leq 2\bConst{A} \sqrt{\qcond} \rme  u_{i+1:k} \alpha_i^{1/2} (\ConstJ{0} + \ConstH{0}) \sum_{k=i}^{n-1} \alpha_i u_{i+1:k} \\
&\leq \alpha_i^{1/2} 2\bConst{A} \sqrt{\qcond} \rme  (\ConstJ{0} + \ConstH{0}) (c_0 + \frac{2}{a(1-\gamma)}) \eqsp.
\end{align}
It remains to note that 
\[
\sum_{k=1}^{n-1} (\Hnalpha{k}{0}-\Hnalpha{k}{0, i}) = \sum_{k=1}^{n-1} (T_1^{(k)} + T_2^{(k)}) = \sum_{k=i}^{n-1} (T_1^{(k)} + T_2^{(k)})\eqsp,
\]
and use Minkowski's inequality.
\end{proof}

\section{Proofs of \Cref{sec:bootstrap}}
\label{supplement:bootstrap}
\begin{proof}[Proof of \Cref{lem:tilde_eps_boot_bound}.]
Writing,  $\tilde \eps_\ell = \eps_\ell + \funcAw_\ell (\theta_{\ell-1} - \thetas)$,
the proof follows from the definition of $\Omega_1$ and \Cref{assum:step-size-bootstrap}.
\end{proof}

\begin{proof}[Proof of \Cref{lemma:EbJk1:bound}.]
Applying \Cref{lem:gamma_deviation_bound}, we get
\begin{align}
    \mathbb{E}^\boot \norm{\frac{1}{\sqrt{n}} \sum_{k=1}^{n-1} J_{k,1}^{\boot, 0}}^2 &=
    n^{-1} \mathbb{E}^\boot \norm{\sum_{\ell=1}^{n-1} \alpha_\ell (w_\ell - 1) \sum_{k=\ell}^{n-1} (\Gamma_{\ell+1:k} - G_{\ell+1:k}) \eps_\ell}^2 = n^{-1} \sum_{\ell=1}^{n-1} \norm{\alpha_\ell \sum_{k=\ell}^{n-1} (\Gamma_{\ell+1:k} - G_{\ell+1:k}) \eps_\ell}^2 \\
    &\leq n^{-1} \log^2(5n) \ConstC_{\ref{lem:gamma_deviation_bound}}^2 \supconsteps^2 \sum_{\ell=1}^{n-1} \alpha_\ell \leq \log^2(5n) \frac{c_0 \ConstC_{\ref{lem:gamma_deviation_bound}}^2 \supconsteps^2}{(1-\gamma) n^{\gamma}}\eqsp.
\end{align}
\end{proof}

\begin{proof}[Proof of \Cref{lemma:EbJk2:bound}.]
First we rewrite the sum and obtain
    \begin{align}
        \label{lem:EbJk2:first_bound}
        \mathbb{E}^\boot \norm{\frac{1}{\sqrt{n}} \sum_{k=1}^{n-1} J_{k,2}^{\boot, 0}}^2 &= \frac{1}{n} \mathbb{E}^\boot \norm{\sum_{k=1}^{n-1} \sum_{\ell=1}^{k} \alpha_\ell (w_\ell -1) \Gamma_{\ell+1:k} \funcAw_\ell (\theta_{\ell-1} - \thetas)}^2 \\
        &= \frac{1}{n} \sum_{\ell=1}^{n-1} \alpha_\ell^2 \norm{\sum_{k=\ell}^{n-1} \Gamma_{\ell+1:k} \funcAw_\ell (\theta_{\ell-1} - \thetas)}^2 \leq \frac{1}{n} \bConst{A}^2 \sum_{\ell=1}^{n-1} \alpha_\ell^2 \norm{\sum_{k=\ell}^{n-1} \Gamma_{\ell+1:k}}^2 \norm{\theta_{\ell-1} - \thetas}^2\eqsp. 
    \end{align}
Hence, using \Cref{lem:matr_product_as_bound} and \Cref{lem:summ_alpha_k} with $b = a/2$, we get that on the event $\Omega_2$, it holds that
\begin{align}
\mathbb{E}^\boot \norm{\frac{1}{\sqrt{n}} \sum_{k=1}^{n-1} J_{k,2}^{\boot, 0}}^2 
&\leq n^{-1} \bConst{A}^2 \ConstC_{\ref{lem:matr_product_as_bound}}^2 \sum_{\ell=1}^{n-1} \bigl( \alpha_\ell \sum_{k=\ell}^{n-1} \prod_{j=\ell+1}^{k}  (1 - \frac{a\alpha_j}{2}) \bigr)^2 \norm{\theta_{\ell-1} - \thetas}^2 \\
&\leq n^{-1} \bConst{A}^2 \ConstC_{\ref{lem:matr_product_as_bound}}^2  \bigl(c_0 + \frac{2}{a(1-\gamma)}\bigr)^2\eqsp  \sum_{\ell=1}^{n-1}\norm{\theta_{\ell-1} - \thetas}^2\eqsp. 
\end{align}
Using an elementary inequality $(a+b)^2 \leq 2a^2 + 2b^2$, we get that on the event $\Omega_1$, it holds
\begin{align}
\sum_{\ell=1}^{n-1}\norm{\theta_{\ell-1} - \thetas}^2 
&\leq \sum_{\ell=1}^{n-1} \bigl\{2 \qcond \rme^4 \norm{\theta_0 - \thetas}^2 \prod_{j=1}^{\ell-1} (1 - \frac{a\alpha_j}{2}) + 8 \rme^2 \log^2(5n) \ConstC_{\ref{lem:last_moment_bound}}^2 \alpha_{\ell-1}\bigr\} \\
&\leq \frac{2 \qcond \rme^4 (1 + k_0)^{\gamma} \bigl(c_0 + \frac{2}{a(1-\gamma)}\bigr)}{c_0} \norm{\theta_0-\thetas}^2 + 8 \rme^2 \log^2(5n) \ConstC_{\ref{lem:last_moment_bound}}^2 c_0 \frac{n^{1-\gamma}}{1-\gamma}\eqsp. 
\end{align}
It remains to combine the above bounds.
\end{proof}

\begin{proof}[Proof of \Cref{lem:expansion}.]
We start from the decomposition 
\begin{equation}
\label{eq:one_step_expand_supplement}
\theta_{k}^\boot - \theta_{k} = (\Id - \alpha_{k} w_k \funcAw_k)(\theta_{k-1}^\boot - \theta_{k-1})  - \alpha_{k} (w_k-1) \tilde \funnoisew_{k}.
\end{equation}
Expanding the recurrence above till $k = 0$, and using the fact that $\theta_{0}^\boot = \theta_{0}$, we get running the recurrence \eqref{eq:one_step_expand_supplement}, that 
\[
\theta_{k}^\boot - \theta_{k} = -\sum_{\ell=n+1}^k \alpha_\ell (w_\ell - 1)\ProdB^{\boot}_{\ell+1:k} \tilde \funnoisew_{\ell}\eqsp.
\]
Hence, proceeding as in \eqref{eq:jn0_main}, we obtain the representation
\begin{align}
\label{eq:jn0_bootstrap}
&\Jnalpha{k}{\boot,0} =\left(\Id - \alpha_{k} \funcAw_k \right) \Jnalpha{k-1}{\boot,0} - \alpha_{k}(w_k - 1)\tilde{\funnoisew_{k}}\eqsp, && \Jnalpha{0}{\boot,0}=0\eqsp, \\[.1cm]
\label{eq:hn0_bootstrap}
&\Hnalpha{k}{\boot,0} =\left( \Id - \alpha_{k} 
 w_{k} \funcAw_k \right) \Hnalpha{k-1}{\boot,0} - \alpha_{k} (w_k - 1) \funcAw_k \Jnalpha{k-1}{\boot,0}\eqsp, && \Hnalpha{0}{\boot,0}=0\eqsp.
\end{align}
Hence, using \Cref{lem:tilde_eps_boot_bound} together with the definition of $J_k^{\boot, 0}$, we obtain that
\begin{align}
\PEb[\|J_k^{\boot, 0}\|^2 ] & = \sum_{\ell=1}^k \alpha_\ell^2 \| \ProdB_{\ell+1:k} \tilde \funnoisew_{\ell}\|^2 \leq \alpha_k \ConstC_{\ref{lem:tilde_eps_boot_bound}}^2 \ConstC_{\ref{lem:matr_product_as_bound}}^2 \sum_{\ell=1}^k \alpha_\ell^2 \prod_{j=\ell+1}^k (1-a\alpha_\ell/2) \leq \alpha_k \underbrace{\frac{12 \ConstC_{\ref{lem:tilde_eps_boot_bound}}^2 \ConstC_{\ref{lem:matr_product_as_bound}}^2}{a}}_{(\ConstJb{0}{1})^2} \eqsp.
\end{align}
Assume now that the bound on $J_k^{\boot, j-1}$ has a form $
\{\PEb[\|J_k^{\boot,j-1}\|^2 ]\}^{1/2} \leq \ConstJb{j-1}{1} \alpha_k^{j/2}$. Then, using the martingale property of $J_k^{\boot, j}$, we write that
\begin{align}
\PEb[\|J_k^{\boot, j}\|^2 ] 
&= \sum_{\ell=1}^k \alpha_\ell^2 \PEb[\| \ProdB_{\ell+1:k} \funcAw_\ell J_{\ell-1}^{\boot, j-1}\|^2] \leq (\ConstJb{j-1}{1})^2 \sum_{\ell=1}^k \alpha_{\ell}^{j+2} \bConst{A}^2 \ConstC_{\ref{lem:matr_product_as_bound}}^2 \prod_{t=\ell+1}^{k} (1-a\alpha_t/2)^2 \eqsp.
\end{align}
Hence, applying \Cref{lem:summ_alpha_k} we get
\begin{align}
    \PEb[\|J_k^{\boot, j}\|^2 ] &\leq \alpha_{k}^{j+1} \underbrace{(\ConstJb{j-1}{1})^2 \ConstC_{\ref{lem:matr_product_as_bound}}^2 \bConst{A}^2 \frac{12}{a}}_{(\ConstJb{j}{1})^2} \qquad.
\end{align}
and, thus, the moment bound for $J_k^{\boot, j}$ is proved. Moreover, using the definition of $H_k^{\boot, L}$ and Minkowski's inequality, we obtain that 
\begin{align}
(\PEb[\|H_k^{\boot, L} \|^2] )^{1/2} 
\leq \bConst{A} \sum_{\ell=1}^k \alpha _\ell (\PEb[\|\ProdB^{\boot}_{\ell+1:k} \|^2] )^{1/2}  (\PEb[\|J_{\ell-1}^{\boot, L} \|^2] )^{1/2} &\leq \bConst{A} \ConstJb{L}{1} \ConstC_{\ref{prop:product_random_matrix_bootstrap}} \sum_{\ell=1}^k \alpha_{\ell}^{\frac{L+3}{2}} \prod_{t=\ell+1}^{k} (1-\frac{a\alpha_t}{8}) \\
&\leq \alpha_\ell^{(L+1)/2} \underbrace{\bConst{A} \ConstJb{L}{1} \ConstC_{\ref{prop:product_random_matrix_bootstrap}} \frac{48}{a}}_{\ConstHb{L}{1}} \eqsp.
\end{align}
and the moment bound for $H_k^{\boot, L}$ follows.
\end{proof}

\begin{proof}[Proof of \Cref{lem:gamma_deviation_bound}.]
For any matrix-valued sequences $(U_n)_{n\in \nset}$, $(V_n)_{n\in \nset}$ and any $M \in \nset$, it holds that:
\begin{equation}
\label{eq:matrix_products_identity}
\prod_{k=1}^M U_k - \prod_{k=1}^M V_k = \sum_{k=1}^M \biggl(\prod_{j=k+1}^M V_j \biggr) (U_k - V_k) \biggl(\prod_{j=1}^{k-1} U_j\biggr)\eqsp.
\end{equation}
Using \eqref{eq:matrix_products_identity} and changing the order of summation, we get
\begin{align}
\label{eq:matr_gamma_bgamma_difference}
\alpha_\ell \sum_{k=\ell}^{n-1} (\Gamma_{\ell+1:k} - G_{\ell+1:k}) \eps_\ell = \alpha_{\ell} \sum_{j=\ell+1}^{n-1} \underbrace{\bigl(\alpha_j \sum_{k=j}^{n-1} G_{j+1:k}\bigr) (\funcAw_j - \bA) \Gamma_{\ell+1:j-1} \eps_\ell}_{U_j}\eqsp.
\end{align}
    Applying \Cref{prop:Qell:bound}, we get $\norm{\alpha_j \sum_{k=j}^{n-1} G_{j+1:k}} = \norm{Q_j} \leq \ConstC_{\ref{prop:Qell:bound}}$, hence, $\normop{U_j} \leq 2\ConstC_{\ref{prop:Qell:bound}} \bConst{A} \normop{\Gamma_{\ell+1:j-1}}$. Consider the sigma-algebras
    \begin{equation}
        \mathcal{F}_{m:k} = \begin{cases}
            \sigma(Z_s: m \leq s \leq k), \qquad &\text{if } m \leq k\eqsp, \\
            \{\emptyset, \msz\}, &\text{otherwise.}
        \end{cases}
    \end{equation}
    Note that $U_j$ is a martingale-difference sequence w.r.t. the filtration
    $\mathcal{F}_{\ell+1:\ell+1} \subseteq \mathcal{F}_{\ell+1:\ell+2} \subseteq \ldots \subseteq \mathcal{F}_{\ell+1:2n}$, thus Burkholder's inequality \cite[Theorem 8.6]{osekowski:2012} implies 
    \[
    \PE^{1/p}\bigl[\norm{\sum_{j=\ell+1}^{n-1} U_j}^p\bigr]  \leq p   \bigl(\sum_{j=\ell+1}^{n-1} \PE^{2/p} \norm{U_j}^p \bigr)^{1/2} \leq 2p \sqrt{d} \ConstC_\eps  \bConst{A} \ConstC_{\ref{prop:Qell:bound}} \bigl(\sum_{j=\ell+1}^{n-1} \PE^{2/p} \norm{ \Gamma_{\ell+1:j-1}}^p \bigr)^{1/2}\eqsp.
    \]
    Applying now \Cref{lem:matr_product_as_bound} together with the fact $\alpha_{\infty} a \leq 1/2$, we get
    \begin{align}
    \alpha_\ell \sum_{j=\ell+1}^{n-1}   \PE^{2/p} \norm{ \Gamma_{\ell+1:j-1}}^p \leq \sum_{j=\ell+1}^{n-1} \alpha_\ell \qcond \rme^2 \prod_{t=\ell+1}^{j-1} (1 - \frac{a \alpha_t}{2}) &\leq (8/7) \qcond \rme^2 \sum_{j=\ell}^{n-1} \alpha_\ell \prod_{t=\ell+1}^j (1-\frac{a\alpha_m}{2}) \\
    & \overset{(a)}{\leq} (8/7) \qcond \rme^2 \left(c_0 + \frac{2}{a(1-\gamma)}\right)\eqsp.
    \end{align}
    In (a) we additionally used \Cref{lem:summ_alpha_k} with $b = a/2$. It remains to combine the above bounds in \eqref{eq:matr_gamma_bgamma_difference}. To conclude the proof, we need to apply \Cref{lem:markov_inequality} with $p = \log(5n^2)$.
\end{proof}

\begin{proof}[Proof of \Cref{lem:jkb1:bound}.]
First we rewrite the expression using the recurrent formula for $J_k^{\boot, 1}$ proven in \Cref{lem:expansion} and swapping the order of summation:
    \begin{align}
        \label{lem:Jkb1_bound_primary}
        \PE^\boot\bigl[\norm{\frac{1}{\sqrt{n}} \sum_{k=1}^{n-1} J_k^{\boot, 1}}^2\bigr] &= \frac{1}{n} \PE^\boot\bigl[\norm{\sum_{k=1}^{n-1} \sum_{\ell=1}^k \alpha_\ell (w_\ell - 1) \Gamma_{\ell+1:k} \funcAw_\ell J_{\ell-1}^{\boot, 0}}^2\bigr] = \frac{1}{n} \sum_{\ell=1}^{n-1} \alpha_{\ell}^2 \PE^{\boot} \bigl[\norm{\sum_{k=\ell}^{n-1} \Gamma_{\ell+1:k} \funcAw_\ell J_{\ell-1}^{\boot, 0}}^2\bigr] \\ 
        &\leq \frac{1}{n} \bConst{A}^2 \sum_{\ell=1}^{n-1} \alpha_{\ell}^2  \norm{\sum_{k=\ell}^{n-1} \Gamma_{\ell+1:k} }^2 \PE^{\boot}\bigl[\norm{J_{\ell-1}^{\boot, 0}}^2\bigr] \overset{(a)}{\leq} n^{-1} \bConst{A}^2 \ConstC_{\ref{lem:matr_product_as_bound}}^2 \bigl(c_0 + \frac{2}{a(1-\gamma)}\bigr)^2 \sum_{\ell=1}^{n-1} \PE^{\boot}\bigl[\norm{J_{\ell-1}^{\boot, 0}}^2\bigr]\eqsp. 
    \end{align}
    Here in (a) we applied \Cref{lem:matr_product_as_bound} and \Cref{lem:summ_alpha_k} with $b = a/2$.
    Now we will provide a bound for $\PE^{\boot}\bigl[\norm{J_{\ell-1}^{\boot, 0}}^2\bigr]$ using a technique similar to the written above. \Cref{lem:expansion} and \Cref{lem:tilde_eps_boot_bound} imply that
    \begin{align}
        \PE^{\boot}\bigl[\norm{J_{\ell-1}^{\boot, 0}}^2\bigr] = \PE^{\boot}\bigl[\norm{\sum_{j=1}^{\ell-1} \alpha_j (w_j - 1) \Gamma_{j+1:\ell-1} \tilde \eps_{j}}^2\bigr] &\leq \sum_{j=1}^{\ell-1} \alpha_j^{2} \norm{\Gamma_{j+1:\ell-1}}^2 \norm{\tilde \eps_j}^2
        \leq \ConstC_{\ref{lem:matr_product_as_bound}}^2 \ConstC_{\ref{lem:tilde_eps_boot_bound}}^2 \sum_{j=1}^{\ell-1} \alpha_{j}^2 \prod_{t=j+1}^{\ell-1} (1 - \frac{a\alpha_t}{2}) \eqsp.
    \end{align}
    Therefore, we obtain using \Cref{lem:summ_alpha_k}: 
    \begin{align}        \PE^{\boot}\bigl[\norm{J_{\ell-1}^{\boot, 0}}^2\bigr] \leq \ConstC_{\ref{lem:matr_product_as_bound}}^2 \ConstC_{\ref{lem:tilde_eps_boot_bound}}^2 \alpha_{\ell} \frac{12}{a} \eqsp.
    \end{align}
    Introduce the constant
    \begin{align}
        \ConstC_{\ref{lem:jkb1:bound}}^2 = \bConst{A}^2 \ConstC_{\ref{lem:matr_product_as_bound}}^2 \bigl(c_0 + \frac{2}{a(1-\gamma)}\bigr)^2 \frac{c_0}{1-\gamma} \ConstC_{\ref{lem:tilde_eps_boot_bound}}^2  \frac{12}{a} \eqsp.
    \end{align}
    Now we obtain that
    \begin{align}
        \label{lem:Jkb1_bount_secondary}
        \PE^\boot\bigl[\norm{\frac{1}{\sqrt{n}} \sum_{k=1}^{n-1} J_k^{\boot, 1}}^2\bigr] \leq \frac{\ConstC_{\ref{lem:jkb1:bound}}^2}{n} (1-\gamma) \sum_{k=1}^{n-1} k^{-\gamma} \leq \ConstC_{\ref{lem:jkb1:bound}}^2 n^{-\gamma}
    \end{align}
    which concludes the proof.
\end{proof}

\begin{proof}[Proof of \Cref{lem:Hk0b_copies_sum_bound}.]
First, note that $\Hnalpha{k}{\boot, 0} - \Hnalpha{k}{\boot, 0, i} = 0$ if  $k < i$.
On the other hand, for $k \geq i$ we get
\begin{align}
    \Hnalpha{k}{\boot, 0} - \Hnalpha{k}{\boot, 0, i} =\underbrace{\Gamma_{i+1:k}^\boot (\Hnalpha{i}{\boot, 0} - \Hnalpha{i}{\boot, 0, i})}_{T_1^{(k)}} - \underbrace{\sum_{\ell=i+1}^{k} \alpha_\ell (w_\ell - 1) \Gamma_{\ell+1:k}^\boot \funcAw_\ell (\Jnalpha{\ell-1}{\boot, 0} - \Jnalpha{\ell-1}{\boot, 0, i})}_{T_2^{(k)}}
\end{align}
For simplicity we introduce  $v_{m:k} = \prod_{j=m}^k (1-a\alpha_j/8)$.
Consider $T_1^{(k)}$. Note that the following decomposition holds:
\begin{align}
T_1^{(k)} = \Gamma_{i+1:k}^\boot (-\alpha_i \funcAw_i(w_i 
- w'_i)\Hnalpha{i-1}{\boot, 0} - \alpha_i \funcAw_i (w_i - w'_i) \Jnalpha{i-1}{\boot, 0})\eqsp.
\end{align}
Hence, we get
\begin{align}
    \PE^\boot\bigl[n^{-1/2}\norm{\sum_{k=i}^{n-1} T_{1}^{(k)}}\bigr] \leq \ConstC_{\ref{prop:product_random_matrix_bootstrap}} \bConst{A} n^{-1/2} \sqrt{\alpha_i} (\ConstJb{0}{1} + \ConstHb{0}{1}) \sum_{k=i}^{n-1} \alpha_{i} v_{i+1:k} \eqsp .
\end{align}
Recall the definition of $\ConstC_{\ref{def:bootstrap_constants}}^{(2)}$, $\ConstC_{\ref{def:bootstrap_constants}}^{(1)}$ \eqref{def:bootstrap_constants}.
Therefore, \Cref{lem:summ_alpha_k} and \Cref{prop:Qell:bound} imply that
\begin{align}
\PE^\boot\bigl[n^{-1/2}\norm{\sum_{k=i}^{n-1} T_{1}^{(k)}}\bigr] \leq \ConstC_{\ref{prop:product_random_matrix_bootstrap}} \bConst{A} n^{-1/2} \sqrt{\alpha_i} (\ConstJb{0}{1} + \ConstHb{0}{1}) \ConstC_{\ref{def:bootstrap_constants}}^{(1)} \eqsp .
\end{align}
Consider $T_2^{(k)}$. First, we note that
\begin{align}
    \Jnalpha{\ell-1}{\boot, 0} - \Jnalpha{\ell-1}{\boot, 0, i} = \alpha_i (w_i' -w_i) \Gamma_{i+1:\ell-1} \tilde\eps_i \eqsp.
\end{align}
Thus, we rewrite the sum and get
\begin{align}
    \sum_{k=i}^{n-1} T_2^{(k)} = \sum_{\ell=i+1}^{n-1} \underbrace{\alpha_i (w_\ell-1)(w_i'-w_i) \sum_{k=\ell}^{n-1} \alpha_\ell \Gamma_{\ell+1:k}^\boot \funcAw_\ell \Gamma_{i+1:\ell-1} \tilde \eps_i}_{U_\ell} \eqsp.
\end{align}
Note that $U_\ell$ is a martingale-difference sequence w.r.t. the filtration $
    \mathcal{F}_\ell = \sigma(W_i, W_i', W_{\ell}, \ldots, W_{n-1})$.
 Minkowski's inequality and \Cref{prop:product_random_matrix_bootstrap} reveal that
\begin{align}
    \{\PE^\boot \bigl[\norm{\sum_{k=\ell}^{n-1} \alpha_\ell \Gamma_{\ell+1:k}^\boot}^2\bigr]\}^{1/2} \leq \ConstC_{\ref{def:bootstrap_constants}}^{(1)} \ConstC_{\ref{prop:product_random_matrix_bootstrap}} \eqsp.
\end{align}
Therefore, we obtain
\begin{align}
    \PE^{\boot}\bigl[\normop{U_\ell}^2\bigr] \leq \alpha_i^2 (\ConstC_{\ref{def:bootstrap_constants}}^{(1)} \ConstC_{\ref{prop:product_random_matrix_bootstrap}})^2 \bConst{A}^2 \ConstC_{\ref{lem:matr_product_as_bound}}^2 \ConstC_{\ref{lem:tilde_eps_boot_bound}}^2 \prod_{j=i+1}^{\ell-1} (1-a\alpha_j/2)  \eqsp.
\end{align}
Hence, we get using \Cref{prop:Qell:bound}:
\begin{align}
    \PE^\boot \bigl[\norm{\sum_{k=i}^{n-1} T_2^{(k)}}^2\bigr] \leq \alpha_i (\ConstC_{\ref{def:bootstrap_constants}}^{(1)} \ConstC_{\ref{prop:product_random_matrix_bootstrap}})^2 \bConst{A}^2 \ConstC_{\ref{lem:matr_product_as_bound}}^2 \ConstC_{\ref{lem:tilde_eps_boot_bound}}^2 \sum_{\ell=i}^{n-1} \alpha_i \prod_{j=i+1}^{\ell} (1-a\alpha_j/2) \leq \alpha_i (\ConstC_{\ref{def:bootstrap_constants}}^{(1)} \ConstC_{\ref{prop:product_random_matrix_bootstrap}})^2 \bConst{A}^2 \ConstC_{\ref{lem:matr_product_as_bound}}^2 \ConstC_{\ref{lem:tilde_eps_boot_bound}}^2 \ConstC_{\ref{def:bootstrap_constants}}^{(1)} \eqsp.
\end{align}
Now we combine the obtained bounds with Minkowski's inequality and finish the proof:
\begin{align}
        \{\PE^\boot\bigl[\norm{ \sum_{k=1}^{n-1} \Hnalpha{k}{\boot, 0} - \sum_{k=1}^{n-1} \Hnalpha{k}{\boot, 0, i}}^2\bigr]\}^{1/2} &\leq \sum_{j=1}^2 \{\PE^\boot \bigl[\norm{\sum_{k=i}^{n-1} T_j^{(k)}}^2\bigr]\}^{1/2} \leq \sqrt{\alpha_i} \ConstC_{\ref{lem:Hk0b_copies_sum_bound}} \eqsp,
\end{align}
where we have set
\begin{align}
    \ConstC_{\ref{lem:Hk0b_copies_sum_bound}} = \bConst{A} \ConstC_{\ref{prop:product_random_matrix_bootstrap}}  (\ConstJb{0}{1} + \ConstHb{0}{1}) \ConstC_{\ref{def:bootstrap_constants}}^{(1)} + \bConst{A} (\ConstC_{\ref{def:bootstrap_constants}}^{(1)})^{3/2} \ConstC_{\ref{prop:product_random_matrix_bootstrap}}  \ConstC_{\ref{lem:matr_product_as_bound}} \ConstC_{\ref{lem:tilde_eps_boot_bound}}  \eqsp.
\end{align}
\end{proof}

\begin{proof}[Proof of \Cref{lem:D_boot_copies_diff_bound}.]

Applying Minkowski's inequality we obtain that
    \begin{multline}
        \{\PE^\boot[\norm{D^\boot - D^{\boot, i}}^2]\}^{1/2} \leq \frac{1}{\sqrt{n}\sqrt{\ConstC_{\ref{lem:sigma_n_boot_labmda_min_lowerbound}}}} \{\PE^\boot\bigl[\norm{\sum_{k=i}^{n-1} (\Jnalpha{k, 1}{\boot, 0} - \Jnalpha{k, 1}{\boot, 0, i})}^2\bigr]\}^{1/2} \\ + \frac{1}{\sqrt{n}\sqrt{\ConstC_{\ref{lem:sigma_n_boot_labmda_min_lowerbound}}}} \{\PE^\boot\bigl[\norm{\sum_{k=i}^{n-1} (\Jnalpha{k, 1}{\boot, 0} - \Jnalpha{k, 2}{\boot, 0, i})}^2\bigr]\}^{1/2} + \frac{1}{\sqrt{n}\sqrt{\ConstC_{\ref{lem:sigma_n_boot_labmda_min_lowerbound}}}} \{\PE^\boot\bigl[\norm{\sum_{k=i}^{n-1} (\Hnalpha{k, 1}{\boot, 0} - \Hnalpha{k, 1}{\boot, 0, i})}^2\bigr]\}^{1/2} \eqsp.
    \end{multline}
    Consider the first term. Note that
    \begin{align}
        \Jnalpha{k,1}{\boot, 0} - \Jnalpha{k,1}{\boot, 0, i} = \alpha_i (w_i' - w_i) (\Gamma_{i+1:k} - G_{i+1:k})\eps_i \eqsp.
    \end{align}
    Thus, using the definition of $\Omega_{5}$ we get
    \begin{align}
        \norm{\sum_{k=i}^{n-1} (\Jnalpha{k, 1}{\boot, 0} - \Jnalpha{k, 1}{\boot, 0, i})} \leq |w_i'-w_i| \ConstC_{\ref{lem:gamma_deviation_bound}}\sqrt{\alpha_i} \log(5n) \eqsp.
    \end{align}
    Hence, it holds that
    \begin{align}
         \{\PE^\boot\bigl[\norm{\sum_{k=i}^{n-1} (\Jnalpha{k, 1}{\boot, 0} - \Jnalpha{k, 1}{\boot, 0, i})}^2\bigr]\}^{1/2} \leq \sqrt{2}  \ConstC_{\ref{lem:gamma_deviation_bound}} \sqrt{\alpha_i} \log(5n) \eqsp.
    \end{align}
    On the other hand, we obtain the following representation for $\Jnalpha{k,2}{\boot, 0} - \Jnalpha{k,2}{\boot, 0, i}$:
    \begin{align}
        \Jnalpha{k,2}{\boot, 0} - \Jnalpha{k,2}{\boot, 0, i} &= \alpha_i (w_i' - w_i) \Gamma_{i+1:k} \funcAw_i (\theta_{i-1} - \thetas) \eqsp.
    \end{align}
    Introduce the following notation
    \begin{align}
        u_{m:k} = \prod_{j=m}^k (1-a\alpha_j/2) \eqsp.
    \end{align}
    Therefore, applying Minkowski's inequality, \Cref{prop:Qell:bound}, \Cref{lem:summ_alpha_k} and using the definition of $\Omega_1$, we obtain the following:
    \begin{align}
        \{\PE^\boot\bigl[\norm{\sum_{k=i}^{n-1} (\Jnalpha{k, 2}{\boot, 0} - \Jnalpha{k, 2}{\boot, 0, i})}^2\bigr]\}^{1/2} &\leq \sqrt{2} \ConstC_{\ref{lem:matr_product_as_bound}} \bConst{A} \sum_{k=i}^{n-1} \alpha_i u_{i+1:k} ( \qcond^{1/2} \rme^2 u_{1:i-2} \norm{\theta_0 - \thetas} + 2 \rme \log(5n) \qcond^{1/2} \supconsteps \sqrt{\alpha_i}) \\ &\leq \alpha_i \frac{\sqrt{2} \qcond^{1/2} \rme^2 \ConstC_{\ref{lem:matr_product_as_bound}}  \bConst{A} \norm{\theta_0 - \thetas}}{(1-a/2)^2} \ConstC_{\ref{def:bootstrap_constants}}^{(2)} + 2\sqrt{2} \rme \ConstC_{\ref{lem:matr_product_as_bound}}  \bConst{A}  \qcond^{1/2} \supconsteps \ConstC_{\ref{def:bootstrap_constants}}^{(1)}  \log(5n) \sqrt{\alpha_i} \eqsp.
    \end{align}
    Hence, applying \Cref{lem:Hk0b_copies_sum_bound} and gathering similar terms the proof follows.
\end{proof}

\section{Proofs of products of random matrices}
\label{supplement:random_matrices}
\begin{proof}[Proof of \Cref{prop:product_random_matrix_bootstrap}.]
Our proof relies on the auxiliary result of \Cref{lem:product_ramdom_matrix_aux} below together with the blocking technique. Indeed, let us represent 
\[
k - m = N h + r\eqsp,
\]
where $r < h$ and $h = h(n)$ is a block size defined in \eqref{eq:block_size_constraint}. Then we obtain, using the independence of bootstrap weights $w_{m+1},\ldots,w_{k}$, that 
\begin{align}
\{\PEb[\norm{\ProdB^\boot_{m+1:k}}^2]\}^{1/2} 
&\leq \sqrt{\qcond} \{\PEb[\norm{\ProdB^\boot_{m+1:k}}[Q]^2]\}^{1/2} \\
&= \sqrt{\qcond} \prod_{j=1}^{N}\bigl\{\PEb[\norm{\ProdB^\boot_{m+1+(j-1)h:m+1+jh}}[Q]^2]\bigr\}^{1/2} \bigl\{\PEb[\norm{\ProdB^\boot_{m+1+Nh:k}}[Q]^2]\bigr\}^{1/2} \\
&\leq \sqrt{\qcond} \exp\bigl\{-\frac{a}{4}\sum_{\ell=m+1}^{k}\alpha_{\ell} \bigr\} \bigl\{\PEb[\norm{\ProdB^\boot_{m+1+Nh:k}}[Q]^2]\bigr\}^{1/2} \exp\bigl\{\frac{a}{4}\sum_{\ell=m+1+Nh:k}^{k}\alpha_{\ell} \bigr\}\eqsp.
\end{align}
In the last inequality we applied \Cref{lem:product_ramdom_matrix_aux} to each of the blocks of length $h$ in the first bound. It remains to upper bound the residual terms. Since the remainder block has length less then $h$, we have due to \eqref{eq:sum_steps_bound_stability} (which holds according to \Cref{assum:step-size-bootstrap}), that
\[
\exp\bigl\{\frac{a}{4}\sum_{\ell=m+1+Nh:k}^{k}\alpha_{\ell} \bigr\} \leq \exp\left\{\frac{\alpha_{\infty} a}{4}\right\} \leq \rme^{1/8}\eqsp,
\]
where the last inequality is due to \Cref{prop:hurwitz_stability}. Next,
\begin{align}
\bigl\{\PEb[\norm{\ProdB^\boot_{m+1+Nh:k}}^2]\bigr\}^{1/2} 
&\leq \qcond \prod_{\ell=m+1+Nh:k}^{k}\{\PEb[\norm{(\Id - \alpha_{\ell} w_{\ell} \Am_{\ell})}^2]\}^{1/2} \\
&\leq \qcond \prod_{\ell=m+1+Nh:k}^{k} \{\PEb[1 + 2\alpha_{\ell}|w_{\ell}| \bConst{A} + \alpha_{\ell}^2 w_{\ell}^2\bConst{A}^2]\}^{1/2}\eqsp.
\end{align}
Since $\PE[|w_{\ell}|] \leq \PE^{1/2}[w_{\ell}^2] = \bigl\{(\PE[w_{\ell}])^2 + \var[w_{\ell}]\bigr\}^{1/2} = \sqrt{2}\eqsp$, we get from previous bound 
\begin{align}
\bigl\{\PEb[\norm{\ProdB^\boot_{m+1+Nh:k}}^2]\bigr\}^{1/2} 
&\leq \qcond \prod_{\ell=m+1+Nh:k}^{k} (1 + 2\sqrt{2}\alpha_{\ell} \bConst{A} + 2\alpha_{\ell}^2 \bConst{A}^2)^{1/2} \\
&\leq \qcond \exp\bigl\{\sqrt{2} \bConst{A} \sum_{\ell=m+1+Nh:k}^{k}\alpha_{\ell}\bigr\} \leq \qcond \exp\{\sqrt{2}\bConst{A} \frac{c_0h}{k_0^\gamma}\} \leq \qcond \rme\eqsp,
\end{align}
where in the last line we additionally used \Cref{assum:step-size-bootstrap} and the inequality
\begin{align}
    \label{eq:int_bound_sum_alpha_block}
    \sum_{\ell=m+1+Nh}^{k} \alpha_\ell \leq \sum_{\ell=1}^{h} \alpha_\ell \leq c_0 \int_{k_0}^{k_0+h} x^{-\gamma} \; dx = c_0 \frac{(h+k_0)^{1-\gamma}-k_0^{1-\gamma}}{1-\gamma} \leq c_0 \frac{h}{k_0^\gamma} \eqsp.
\end{align}
\end{proof}

\begin{proof}[Proof of \Cref{lem:product_ramdom_matrix_aux}.]

Let $h \in \nset$ be a block length given in \eqref{eq:block_size_constraint}. Then the product $\ProdB^\boot_{m+1:m+h}$ writes as 
\begin{equation} 
\label{eq:split_main}
\ProdB^\boot_{m+1:m+h} = \Id - \sum_{\ell = m+1}^{m+h} \alpha_\ell \Am_\ell   - \Mat{S}  + \Mat{R} = \Id - \sum_{\ell = m+1}^{m+h} \alpha_\ell \bA - \sum_{\ell = m+1}^{m+h} \alpha_\ell (\Am_\ell - \bA) - \Mat{S} + \Mat{R}\eqsp,
\end{equation}
where $\Mat{S} =  \sum_{\ell = m+1}^{m+h} \alpha_\ell (w_\ell - 1) \Am_\ell$ is a linear statistics in $\{w_{\ell}\}_{\ell=m+1}^{m+h}$, and the remainder $\Mat{R}$ collects the higher-order terms:
\begin{equation} 
\label{eq:RlRlbar_def}
\Mat{R} = \sum_{r=2}^{h}(-1)^{r}  \sum_{(i_1,\dots,i_r)\in\msi_r}\prod_{u=1}^{r} \alpha_{i_u} w_{i_u} \Am_{i_u}\eqsp.
\end{equation}
with $\msi_r = \{(i_1,\ldots,i_r) \in \{m+1,\ldots,m+h\}^r\, : \, i_1 < \cdots < i_r \}$. We first consider the contracting part in matrix $Q$-norm. Indeed, applying \eqref{eq:contractin_q_norm}, we obtain that 
\[
\label{eq:gamma_boot_first_term_bound}
\norm{\Id - \sum\nolimits_{\ell = m+1}^{m+h} \alpha_\ell \bA}[Q]^2 \leq 1 - a \sum\nolimits_{\ell = m+1}^{m+h} \alpha_\ell\eqsp,
\]
The definition of block size $h$ combined with an integral bound \eqref{eq:int_bound_sum_alpha_block} guarantees that that $\sum_{\ell = m+1}^{m+h} \alpha_\ell \leq \alpha_{\infty}$, where $\alpha_{\infty}$ is defined in \eqref{eq:alpha_infty_def}.
Thus, we get from \eqref{eq:gamma_boot_first_term_bound} that the following bound holds
\[
\textstyle \norm{\Id - \sum_{\ell = m+1}^{m+h} \alpha_\ell \bA}[Q] \leq 1 - (a/2) \sum_{\ell = m+1}^{m+h} \alpha_\ell\eqsp.
\]
Now we need to estimate the remainders in the representation \eqref{eq:split_main}. On the set $\Omega_4$, it holds that
\begin{equation}
\label{eq:lem_hoeffding_rd}
\norm{\sum\nolimits_{\ell = m+1}^{m+h} \alpha_\ell (\Am_\ell - \bA)}[Q] \leq 2\bConst{A} \sqrt{\qcond} \bigl\{\sum_{\ell=m+1}^{m+h}\alpha_\ell^{2}\bigr\}^{1/2} \log(10 n^3d)\eqsp.
\end{equation}
Moreover, it is straightforward to check that
\[
\PEb[\norm{ \Mat{S}}[Q]^2] \leq \bConst{A}^2 \qcond \sum_{\ell = m+1}^{m+h} \alpha_\ell^2\eqsp.
\]
In order to bound the remainder term $\Mat{R}$, we note that for any $i \in \{1,\ldots,n\}$, $\PEb[|w_{i_u}|] \leq \sqrt{2}$, and  
\begin{align}
\PEb[\norm{\Mat{R}}[Q]]
&\leq \sqrt{\qcond} \sum_{r=2}^{h}\binom{h}{r} \alpha_{m+1}^{r} 2^{r/2} \bConst{A}^{r} \leq 2\alpha_{m+1}^{2} \bConst{A} ^{2} \sqrt{\qcond} \sum_{r=0}^{h-2} \binom{h}{r+2} \alpha_{m+1}^{r} 2^{r/2} \bConst{A}^{r} \\
&\leq \alpha_{m+1}^{2} h^2 \bConst{A}^{2} \sqrt{\qcond} \exp\bigl\{\sqrt{2} h\alpha_{m+1} \bConst{A}\bigr\} \leq \alpha_{m+1}^{2} h^2 \bConst{A}^{2} \sqrt{\qcond} \rme\eqsp.
\end{align}
To complete the proof it remains to set the parameter $h$ in such a way that we can guarantee the following:
\begin{equation}
\label{eq:init_step_size_constr}
\bConst{A} \sqrt{\qcond} \bigl\{\sum_{\ell = m+1}^{m+h} \alpha_\ell^2\bigr\}^{1/2} \bigl(1 + 2\log(10n^3d)\bigr) + \alpha_{m+1}^{2} h^2 \bConst{A}^{2} \sqrt{\qcond} \rme \leq \frac{a}{4}\sum_{\ell=m+1}^{m+h}\alpha_{\ell}\eqsp.
\end{equation}
Now it remains to ensure that our choice of $h$ satisfies 
\begin{equation}
\label{eq:gamma_boot_blocks_cases}
\begin{cases}
\bConst{A} \sqrt{\qcond} \bigl\{\sum_{\ell = m+1}^{m+h} \alpha_\ell^2\bigr\}^{1/2} \bigl(1 + 2\log(10n^3d)\bigr) \leq \frac{a}{8}\sum_{\ell=m+1}^{m+h}\alpha_{\ell}  \\
\alpha_{m+1}^{2} h^2 \bConst{A}^{2} \sqrt{\qcond} \rme \leq \frac{a}{8}\sum_{\ell=m+1}^{m+h}\alpha_{\ell}\eqsp.
\end{cases}
\end{equation}
Using an integral bound, we get
\begin{align}
\label{eq:sum_steps_bound_stability}
\sum_{\ell=m+1}^{m+h}\alpha_{\ell} &\geq c_0 \frac{(m+k_0+h+1)^{1-\gamma} - (m+k_0+1)^{1-\gamma}}{1-\gamma} \geq c_0 (m+k_0+1)^{1-\gamma} \frac{(2^{1-\gamma}-1)h}{(m+k_0+1)} \geq \frac{c_0(2-2^\gamma)h}{(m+k_0)^\gamma} \eqsp,
\end{align}
Similarly, one can check that
\begin{align}
\label{eq:sum_steps_squared_bound_stability2}
    \sum_{\ell=m+1}^{m+h} \alpha_\ell^2 \leq c_0^2 \frac{(m+k_0)^{1-2\gamma}-(m+k_0+h)^{1-2\gamma}}{2\gamma-1} \leq c_0^2 \frac{h}{(m+k_0)(m+k_0+h)^{2\gamma-1}}
    \leq c_0^2 \frac{h}{(m+k_0)^{2\gamma}} \eqsp.
\end{align}
Hence, taking into account \eqref{eq:sum_steps_bound_stability} and \eqref{eq:sum_steps_squared_bound_stability2}, the inequality \eqref{eq:init_step_size_constr} would follow from the bounds  
\begin{equation}
\label{eq:gamma_boot_blocks_cases_new}
\begin{cases}
\bConst{A} \sqrt{\qcond} \bigl\{\sum_{\ell = m+1}^{m+h} \alpha_\ell^2\bigr\}^{1/2} \bigl(1 + 2\log(10n^3d)\bigr) \leq \frac{ac_0(2-2^\gamma)}{8} \frac{h}{(m+k_0)^\gamma}  \\
\alpha_{m+1}^{2} h^2 \bConst{A}^{2} \sqrt{\qcond} \rme \leq \frac{ac_0(2-2^\gamma)}{8} \frac{h}{(m+k_0)^\gamma}\eqsp. 
\end{cases}
\end{equation}
Note that the latter inequalities follow from \Cref{assum:step-size-bootstrap}.
Thus, all previous inequalities will be fulfilled. Hence, the following holds:
\[
\bigl\{\PEb[\norm{\ProdB^\boot_{m+1:m+h}}[Q]^2] \bigr\}^{1/2} \leq 1 - (a/4) \sum_{\ell = m+1}^{m+h} \alpha_\ell\eqsp,
\]
and the statement follows from an elementary inequality $1 + x \leq \rme^{x}$.
\end{proof}

\section{Proof of \Cref{lem:sigma_n_bound}}
\label{supplement:lem_sigma_n_bound}
\begin{proof}[Proof of \Cref{lem:st_bound}.]
We follow the approach of \citep[pp.~25–26]{wu2024statistical}.  By the definition of $S_t$
\[
S_t = \sum_{j=t+1}^{n-1} (\alpha_t - \alpha_j)\,G_{t+1:j-1},
\]
we have
\[
\|S_t\| \;\le\;
\sum_{j=t+1}^{n-1} (\alpha_t - \alpha_j)\,\|G_{t+1:j-1}\|
\;\le\;
\sqrt{\qcond}\,c_0
\sum_{j=t+1}^{n-1}
\Bigl[(j+k_0)^{-\gamma} - (j+1+k_0)^{-\gamma}\Bigr]
e^{-\tfrac{a c_0}{2}\,g_{t+1:j}},
\]
where 
\[
g_{t+1:j} \;=\;\sum_{\ell=t+1}^j (\ell + k_0)^{-\gamma}
\;\ge\;
\int_{t+k_0}^{j+1+k_0} x^{-\gamma}\,dx
=
\frac{(j+1+k_0)^{1-\gamma}-(t+k_0)^{1-\gamma}}{1-\gamma}.
\]
Set 
\[
a_{t:j} = 1 + (1-\gamma)\,g_{t:j},
\qquad
s_{t:j} = \frac{a c_0}{2(1-\gamma)}\,a_{t:j}.
\]
Then one checks
\[
(j+k_0)^{-\gamma} - (j+1+k_0)^{-\gamma}
\;\le\;
\frac{a_{t:j+1} - a_{t:j}}{1-\gamma}\,(t+k_0)^{-\gamma},
\qquad
e^{-\tfrac{a c_0}{2}\,g_{t+1:j}}
= e^{-s_{t:j}}\,e^{\tfrac{a c_0}{2(t+k_0)^\gamma}}.
\]
Hence
\[
\|S_t\|
\;\le\;
\frac{\sqrt{\qcond}\,c_0}{1-\gamma}\,
e^{\tfrac{a c_0}{2} + \tfrac{a c_0}{2(1-\gamma)}}
\,(t+k_0)^{\gamma-1}
\sum_{j=t}^{n-2}
\bigl(a_{t:j+1}-a_{t:j}\bigr)\,
a_{t:j}^{\tfrac{\gamma}{1-\gamma}}\,
e^{-s_{t:j}}.
\]
Since $\phi(x)=x^{\frac\gamma{1-\gamma}}e^{-x}$ attains its maximum at $x_\gamma=\frac\gamma{1-\gamma}$, a discrete summation‐by‐parts gives
\[
\sum_{j=t}^{n-2}
\bigl(a_{t:j+1}-a_{t:j}\bigr)\,
a_{t:j}^{\tfrac{\gamma}{1-\gamma}}e^{-s_{t:j}}
\;\le\;
\max\{1,\phi(x_\gamma)\}\Bigl(\tfrac{a c_0}{2}+x_\gamma\Bigr)
+\int_{x_\gamma}^\infty\phi(x)\,dx.
\]
Collecting constants yields
\[
\|S_t\|\;\le\;\sqrt{\qcond}\,(t+k_0)^{\gamma-1}\,\ConstC_{\ref{lem:st_bound}},
\]
as claimed.
\end{proof}

\end{appendices}

\end{document}